\def\mfd{{\MM}_{\boldsymbol{r}}^{\textsf{tt}}}
\def\r{\boldsymbol{r}}
\def\ambspace{\RR^{d_1\times \cdots \times d_m}}
\def\wt{\widetilde}
\def\hat{\widehat}
\def\reshape{\textsf{reshape}}
\def\ttsvd{\textsf{SVD}^{\textsf{tt}}}
\def\incoh{\textsf{Incoh}}
\def\spiki{\textsf{Spiki}}
\def\trim{\textsf{Trim}}
\def\no{\notag}
\def\tilde{\widetilde}
\def\sigmamin{\underline{\sigma}}
\def\sigmamax{\overline{\sigma}}
\def\rmax{\overline{r}}
\def\rmin{\underline{r}}
\def\dmax{\overline{d}}
\def\ranktt{\textsf{rank}_{\textsf{tt}}}
\def\rankcp{\textsf{rank}_{\textsf{cp}}}
\def\ranktk{\textsf{rank}_{\textsf{tucker}}}
\def\rank{\textsf{rank}}
\def\dof{\textsf{dof}}
\newcommand\fro[1]{\| #1 \|_{\rm{F}}}
\newcommand\nuc[1]{\| #1 \|_{*}}
\newcommand\op[1]{\|#1\|}
\newcommand\twoinf[1]{\| #1 \|_{2,\infty}}
\newcommand\inp[2]{\langle #1, #2 \rangle}
\newcommand{\mat}[1]{\begin{bmatrix}#1 \\ \end{bmatrix}}
\newcommand{\lrangle}[1]{\langle #1 \rangle}
\newcommand{\RN}[1]{%
	\textup{\uppercase\expandafter{\romannumeral#1}}%
}
\newcommand\lr[1]{^{\lrangle{#1}}}
\def\calA{{\mathcal A}}
\def\calB{{\mathcal B}}
\def\calD{{\mathcal D}}
\def\calE{{\mathcal E}}
\def\calG{{\mathcal G}}
\def\calI{{\mathcal I}}
\def\calJ{{\mathcal J}}
\def\calP{{\mathcal P}}
\def\calQ{{\mathcal Q}}
\def\calS{{\mathcal S}}
\def\calT{{\mathcal T}}
\def\calU{{\mathcal U}}
\def\calW{{\mathcal W}}
\def\calX{{\mathcal X}}
\def\calY{{\mathcal Y}}
\def\calZ{{\mathcal Z}}
\def\bcalE{{\boldsymbol{\mathcal E}}}
\def\EE{{\mathbb E}}
\def\KK{{\mathbb K}}
\def\MM{{\mathbb M}}
\def\NN{{\mathbb N}}
\def\OO{{\mathbb O}}
\def\PP{{\mathbb P}}
\def\RR{{\mathbb R}}
\def\SS{{\mathbb S}}
\def\TT{{\mathbb T}}
\def\a{{\boldsymbol a}}
\def\b{{\boldsymbol b}}
\def\r{{\boldsymbol r}}
\def\x{{\boldsymbol x}}
\def\y{{\boldsymbol y}}
\begin{document}

\title{Provable Tensor-Train Format Tensor Completion by Riemannian Optimization}

\author{\name Jian-Feng Cai \email jfcai@ust.hk \\
\name Jingyang Li \email jlieb@connect.ust.hk \\
\name Dong Xia \email madxia@ust.hk\\
\addr Department of Mathematics\\
       Hong Kong University of Science and Technology\\
       Hong Kong SAR, China}
\editor{}

\maketitle

\begin{abstract}
The tensor train (TT) format enjoys appealing advantages in handling structural high-order tensors. The recent decade has witnessed the wide applications of TT-format tensors from diverse disciplines, among which tensor completion has drawn considerable attention. Numerous fast algorithms, including the Riemannian gradient descent (RGrad),  have been proposed for the TT-format tensor completion. However, the theoretical guarantees of these algorithms are largely missing or sub-optimal, partly due to the complicated and recursive algebraic operations in TT-format decomposition. Moreover, existing results established for the tensors of other formats, for example, Tucker and CP, are inapplicable because the algorithms treating TT-format tensors are substantially different and more involved. In this paper, we provide, to our best knowledge, the first theoretical guarantees of the convergence of RGrad algorithm for TT-format tensor completion, under a nearly optimal sample size condition. The RGrad algorithm converges linearly with a constant contraction rate that is free of tensor condition number without the necessity of re-conditioning. We also propose a novel approach, referred to as the  sequential second-order moment method, to attain a warm initialization under a similar sample size requirement. As a byproduct, our result even significantly refines the prior investigation of RGrad algorithm for matrix completion.  Lastly,  statistically (near) optimal rate is derived for RGrad algorithm if the observed entries consist of random sub-Gaussian noise.  Numerical experiments confirm our theoretical discovery and showcase the computational speedup gained by the TT-format decomposition. 
\end{abstract}

\begin{keywords}
tensor completion, Riemannian gradient descent, tensor-train decomposition, tensor-train SVD, spectral initialization.
\end{keywords}
\section{Introduction}

An $m$-th order tensor is an $m$-dimensional array, that is, a matrix is a $2$nd-order tensor. Tensor completion refers to the task of recovering the whole tensor by observing only a {\it small} subset of its entries. Of course, this is possible only when the underlying tensor possesses certain structural properties such that the tensor of interest actually lies in a low-dimensional space. Throughout this paper, we assume that the tensor of interest is {\it low-rank}. 
There are diverse applications that drive the research of tensor completion: visual data in-painting \citep{liu2012tensor, li2017low}, medical imaging \citep{gandy2011tensor, semerci2014tensor, cheng2017comprehensive, wang2020learning}, seismic data analysis \citep{kreimer2013tensor, ely20135d}, personalized medicine \citep{soroushmehr2016transforming, pawlowski2019machine}, to name a few. It is a natural generalization of the well-explored matrix completion \citep{candes2009exact, candes2010power, cai2010singular, recht2011simpler,davenport2016overview, xia2021statistical,chen2019inference,cai2016structured,xia2016estimation,sun2016guaranteed,jain2013low,keshavan2010matrix}. While seeming, at least conceptually, a straightforward extension of matrix completion, the multi-linear nature of tensors poses unprecedented challenges to tensor completion from multiple fronts. For instance, convex relaxation by matrix nuclear norm is a prevailing approach for low-rank matrix completion, whereas tensor nuclear norm, also a convex function, is generally NP-hard \citep{hillar2013most}  to compute. This computation hardness exists in many tensor-related convex functions such as tensor operator norm. As a result, the trick of convex relaxation for tensor completion can be computationally infeasible in some cases. Moreover, another phenomenon making tensor completion fundamentally different from matrix completion is the gap between the information-theoretical sample complexity (the number of observed entries for example) and that required by polynomial-time algorithms. Indeed, it is known that matrix completion is solvable by fast algorithms with a nearly optimal sample complexity \citep{gross2011recovering, candes2009exact}. However, evidence \citep{barak2016noisy} has been found showing that the sample size required by polynomial-time algorithms for tensor completion is significantly larger than the information-theoretical sample complexity. This phenomenon is also observed in other tensor-related problems such as tensor PCA \citep{zhang2018tensor,brennan2018reducibility} and tensor clustering \citep{luo2020tensor}.

Since a multi-linear array can always be re-arranged into a matrix,  tensor completion can also be re-formulated as matrix completion.  Along this direction are some representable works \citep{liu2012tensor, song2019tensor, yuan2017incoherent, gandy2011tensor,mu2014square}. These methods,  while easy to implement (borrowing state-of-art conclusions from matrix completion),  suffer from unnecessarily high computational cost because the intrinsic tensor structure is abandoned and the resultant matrix lies in a much higher dimensional space.  
To overcome these issues,  methods for tensor completion better act on the low-rank tensor structure {\it directly}.  
Unlike matrices whose ranks are universally defined,  there exist multiple widely-accepted definitions of the rank(s) for tensors and,  hence,  different formats of tensor decomposition.  Existing literature of tensor completion, especially those with theoretical investigations, mainly focus on the CP format and Tucker format.  The CP decomposition of a tensor seeks a representation by the sum of a minimal number, called the CP rank,  of rank-one tensors.  Under the assumption that the underlying tensor is CP decomposable with $r$ {\it orthogonal} components,  \citet{jain2014provable} proposed a fast alternating minimization algorithm for {\it exactly} completing a $d\times d\times d$ tensor using merely $O(\kappa_0^4r^5d^{3/2}\cdot\textrm{\small Polylog(d)})$ randomly observed entries,  where $\kappa_0$ denotes the tensor condition number (see formal definition in Section \ref{sec:pre}).  Later,  \citet{barak2016noisy} introduced a semi-definite programming, referred to as the sum-of-squares (SOS) hierarchy,  and demonstrated that this polynomial-time method can {\it approximately} recover the tensor using a similar number of observed entries,  even if the CP components are not orthogonal.  Then,  in \citet{potechin2017exact},  the authors further proved that, with orthogonal CP components,  SOS method can {\it exactly} recover the tensor by observing $O(rd^{3/2}\cdot\textrm{\small Polylog(d)})$ randomly sampled entries.  
\cite{kivva2020exact} focused on the regime $r>d$ and proved that exact completion from $O(rd^{3/2}\cdot\textrm{\small Polylog(d)})$ samples for random tensors with CP rank less than $r$ is possible with high probability using SOS.
Even though SOS is a {\it provably} polynomial-time method,  it usually runs very slowly making it impractical for real-world applications.  Meanwhile,  the orthogonal decomposability is a rather restrictive assumption.  Still restricted to the CP format,  a spectral algorithm was proposed by \citet{montanari2018spectral} which {\it approximately} recovers the tensor under a sample size requirement comparable to \citet{potechin2017exact}.  The spectral algorithm runs very fast,  so it is more scalable to large tensors.  More recently,  based on many sample splittings,  \citet{liu2020tensor} proposed an algorithm,  combining both non-convex and convex ideas,  to {\it exactly} recover a tensor with robust linearly independent components. 
Other representable works include \cite{bi2018multilayer,ibriga2021covariate,sun2017provable,cai2020uncertainty} and etc.

The Tucker rank of a tensor refers to the rank of the matrices obtained by  tensor {\it unfolding}.  An $m$-th order tensor of size $d\times\cdots\times d$ admits $m$ ways of unfolding,  so the Tucker rank
consists of a collection of $m$ matrix ranks.  Tucker decomposition factorizes the tensor into the multi-linear product of a {\it core tensor} and $m$ orthonormal matrices,  usually referred to as the singular vectors.  The core tensor is usually small-sized but still of order $m$.  Tucker decomposition is {\it always} attainable via higher-order singular value decomposition (HOSVD).  Readers are suggested to \citet{kolda2009tensor} for more details.  Presuming low Tucker ranks,  \citet{huang2015provable} designed a polynomial-time algorithm,  by tensor unfolding and (matrix) nuclear norm minimization,  for tensor completion.  Due to the unbalanced unfolding of odd-order tensors,  their algorithm requires a random sample of $O(rd^{\lceil \frac{m}{2}\rceil}\cdot\textrm{\small Polylog(d)})$ entries for completing an $m$-th order tensor. \citet{zhang2019cross} proposed a special sampling scheme showing that $O(rd+r^m)$ entries suffice to exactly recover the unknown tensor. Convex approach by tensor nuclear norm minimization, albeit being computationally infeasible, was studied in \citet{yuan2016tensor} which completes the tensor by using $O(r^{1/2}d^{m/2})$ randomly sampled entries. Later,  \citet{xia2017polynomial} investigated a polynomial time algorithm for exact tensor completion with a sample complexity $O(\kappa_0^2r^{m}d^{m/2}\cdot\textrm{\small Polylog(d)}))$ via gradient descent on the Grassmannian manifold, but the iteration complexity was not provided. More recently,  \citet{tong2021scaling} introduced a scaled gradient descent algorithm and derived an iteration complexity that is free of the condition number. In \citet{xia2017statistically}, a fast higher-order orthogonal iteration algorithm was proposed for noisy tensor completion achieving a statistically optimal rate with a sample complexity $O(r^{m/2}d^{m/2}\cdot\textrm{\small Polylog(d)})$.  The rate was derived under sub-Gaussian noise and is optimal w.r.t.  the model complexity and sample size,  which is,  however,  not proportional to the noise variance.  

Despite the popularity of CP format and Tucker format in applications and theories, both these two formats have their pros and cons. The CP decomposition is more friendly for interpreting the principal components of tensors, and the required degree of freedom $O(mrd)$ grows linearly with respect to the order $m$ of a tensor of size $d\times\cdots\times d$ and CP rank $r$. Unfortunately, the set of tensors of a fixed CP rank is not even closed \citep{kolda2009tensor}, implying that the existence of a best rank $r$ approximation is not even guaranteed. Moreover, it is generally computationally NP-hard to determine the CP rank \citep{hillar2013most} of a given tensor. In contrast, the Tucker rank and decomposition of a tensor can be {\it always} and {\it easily} determined by HOSVD, and the tensors with Tucker ranks bounded by a constant constitute a manifold. However, when representing an $m$-way tensor of Tucker rank $\leq(r,\ldots,r)$, the required number of parameters is $O(r^m + mdr)$ and 
grows exponentially fast with respect to the order $m$. As a result, Tucker decomposition consumes a great deal of memory and computing resources for the tensors of very high orders. 

The recent decade has witnessed an increasing attraction in a new tensor format \citep{oseledets2009compact, oseledets2011tensor}, referred to as the {\it tensor train} (TT, see formal definition in Section~\ref{sec:pre}), which enjoys the advantages of both CP and Tucker formats. The TT format was inspired by the matrix product state (MPS, \citealt{perez2006matrix}), an extremely powerful method to represent the state of a large quantum system \citep{gross2011recovering, koltchinskii2015optimal}. The model complexity of TT formats grows {\it linearly} with respect to the tensor order. For instance,  the parameters needed to store an $m$-th order tensor of  TT rank $\leq(r,\ldots,r)$ is $O(mdr^2)$, saving significant space than the Tucker format. More importantly, TT rank is always and easily attainable like the Tucker rank. Indeed, similarly as the Tucker decomposition, an algorithm based on the sequential SVDs, named as tensor train SVD (TTSVD, \citealt{oseledets2011tensor}), is applicable to decide the TT rank, and hence the TT decomposition,  of a tensor. TTSVD can also be viewed as a quasi-optimal approximation of a given tensor \citep{oseledets2011tensor}. Since the tensors of fixed TT ranks construct a manifold \citep{holtz2012manifolds}, numerous manifold-based algorithms are readily adaptable to the TT formats. Due to the aforementioned advantages of TT formats, there has emerged a vast literature in tensor computation, application and theory exploring the TT-format decomposition. The earliest appearance of TT format or MPS can be traced back to the seminal works in physics, specifically in the simulations of quantum dynamics for very large systems \citep{vidal2003efficient, vidal2004efficient,perez2006matrix}. The formal definition of TT ranks is later proposed by \citet{oseledets2011tensor}, which has inspired a great many works for the computation and applications of low TT-rank tensors. \citet{bengua2017efficient} introduced the nuclear norm for TT-format tensors based on the unfolded matrices, and proposed simultaneous matrix factorization for tensor completion, showcasing its superior performances in the recovery of color images and videos. Inspired by their ideas and motivated by the local smoothness of image data, \citet{ding2019low} further proposed a total variation regularization for the image and video inpainting problems. Based on tensor factorization, \citet{wang2016tensor} investigated low TT-rank tensor completion by alternating minimization which updates the estimated components sequentially. This is a non-convex approach where the good initialization plays a critical role, and they adopted the spectral initialization by TTSVD. A gradient descent algorithm was proposed in the work of \citet{yuan2019high} for TT-format tensor completion with a random initialization, which often performs poorly when the sample size is small. More recently, \citet{ko2020fast} provided a novel but heuristic initialization method for the applications in recovering images and videos that is efficient in numerical experiments. These prior works mostly focus on the methodology and algorithm designs without, or with rather limited, theoretical justification. Towards that end, \citet{imaizumi2017tensor} introduced a new convex relaxation by the Schatten norm of matrices obtained by the separations (see definition in Section~\ref{sec:pre}) of a TT-format tensor. They investigated a convex method for tensor completion showing that the TT-format tensor is {\it approximately} recovered by observing $O(rd^{\lceil m/2\rceil}\cdot\textrm{\small Polylog(d)}))$ randomly sampled entries. More recently, \citet{zhou2020optimal} established the statistically optimal convergence rates of tensor SVD by the fast higher-order orthogonal iteration algorithm in the TT-format. 

A particularly important class of efficient algorithms for learning low-rank tensor decomposition is based on the Riemannian optimization. Different from projected gradient descent \citep{chen2019non}, in each step, the gradient is projected to the tangent space.
The gist of these algorithms is to view the tensor of interest as a point on the Riemannian manifold \citep{holtz2012manifolds}, for example, the collection of tensors with a bounded Tucker-rank or TT-rank.,  and then to adapt the Riemannian gradient descent algorithm (RGrad) for minimizing the associated objective function. An incomplete list of representable works of RGrad for matrix and tensor applications includes \citet{steinlechner2016riemannian, wei2016guarantees, wei2016guarantee,kressner2014low, cai2021generalized}. Similarly, the TT-format tensor completion can be recast to an unconstrained problem over the Riemannian manifold and is numerically solvable via RGrad \citep{wang2019tensor}. This algorithm is similar to most non-convex algorithms in the sense that it starts from a good initial point on the manifold, iteratively updates the new estimate by descending along the {\it Riemannian gradient} and retracts it back to the target manifold by TTSVD. Here Riemannian gradient is simply the projection of vanilla gradient onto the tangent space of the manifold. The Riemannian gradient is low-rank and hence can significantly speedup the downstream task of TTSVD. Fortunately, Riemannian gradient, as shown in the works of \citet{lubich2015time, steinlechner2016riemannian}, can be fast computed using QR decomposition and recursive SVD. All these foregoing properties of computational efficiency make RGrad a perfect choice for TT-format tensor completion, especially for tensors of a very high order. Interestingly, we  also observe considerable time saving by TT-format tensor completion compared with the Tucker-format tensor completion, even when the Riemannian gradient descent algorithm is applied for both scenario. See the numerical experiments in Section~\ref{sec:numerical}. 
 
\subsection{Our Contributions}
Despite the rich literature in algorithm designs for TT-format tensor completion and their empirical efficiency, the theoretical understanding, for example, sample size requirement, initialization condition, convergence behaviour and recovery guarantee, of those algorithms is relatively scarce. While abundant results are available for the CP-format and Tucker-format tensor completion, they cannot be directly translated into the case of TT-format for, at least, four reasons. First, the gap of model complexity between TT-format and other formats suggests that the sample size requirement can be different. Secondly, another fundamental condition making tensor completion possible is the so-called {\it incoherence condition}. It can be straightforwardly defined by the components of tensor decomposition in the CP-format and Tucker-format. Since the decomposition of a TT-format tensor is recursive, a suitable adaptation of the incoherence condition is necessary which causes additional complications in the theoretical understanding. Thirdly, the algorithm design (for instance, RGrad) for TT-format tensor completion is quite special, also due to the recursive nature of TT decomposition, involving the recursive reshapes by tensor separation. It poses extra challenges in analyzing the convergence behaviour of any iterative algorithms for TT-format tensor completion. Finally, obtaining a warm initialization is crucial. 
The naive spectral initialization suggested by \citet{wang2016tensor} requires a large sample size observed by empirical simulations. In addition, the second-order moment method \citep{xia2017statistically} of spectral initialization for Tucker-format is not directly applicable for the tensors of TT-format.

In this manuscript,  we investigate the Riemannian gradient descent algorithm for TT-format tensor completion and provide,  to our best knowledge,  the first theoretical guarantees of this algorithm under a nearly optimal sample size requirement.  More specifically,  for an $m$-th order tensor $\calT^{\ast}$ of size $d\times\cdots\times d$ in the TT-format with TT rank bounded by $r$,  the RGrad algorithm can exactly recover $\calT^{\ast}$ by observing $O(\kappa_0^{4m-4}r^{(5m-9)/2}d^{m/2}\cdot \textrm{\small Polylog(d)}+\kappa_0^{4m+8}r^{3m-3}d\cdot \textrm{\small Polylog(d)})$ randomly sampled entries where $\kappa_0$ denotes the condition number of $\calT^{\ast}$.  
Our contributions can be summarized into three folds.  First of all,  by a more sophisticated approach of analysis,  we show that the RGrad algorithm for tensor completion converges linearly as long as the initialization is just reasonably good,  namely $\|\calT_0-\calT^{\ast}\|_{\rm F}=o(1)\cdot \|\calT^{\ast}\|_{\rm F}$.  This significantly improves existing results \citep{wei2016guarantees, wei2016guarantee}  on the RGrad algorithm for matrix completion ($m=2$) which require a stringent initialization condition $\|\calT_0-\calT^{\ast}\|_{\rm F}=o(n^{1/2}/d)\cdot \|\calT^{\ast}\|_{\rm F}$.  Secondly,  we prove that the error of the iterates produced by RGrad algorithm contracts at a constant rate that is strictly smaller than $1$,  under a nearly optimal sample size condition.  The contract rate is free of the tensor condition number,  improving over prior works \citep{jain2014provable, cai2021nonconvex, xia2017polynomial} the iteration complexity for completing an  ill-conditioned tensor.  The attained iteration complexity matches the best one achieved by a recently proposed scaled gradient descent algorithm \citep{tong2021scaling}.  Finally,  inspired by the idea in \citet{xia2017polynomial},  we propose a novel initialization,  based on a {\it sequential} second order moment method,  as the input of RGrad algorithm for tensor completion.    This approach, unlike the one-pass spectral estimate in the work of \citet{xia2017polynomial},  involves a recursive estimate of the components of TT decomposition, posing additional challenges in the proof.  Our method guarantees a warm initialization with a nearly optimal sample size requirement.  Our result fills the void of guaranteed initialization for TT-format tensor completion.  Existing initialization approaches \citep{ko2020fast,wang2016tensor,yuan2019high} are either heuristic or miss theoretical justifications.  Finally,  we investigate the statistical property of RGrad and sequential second-order moment initialization assuming the observed entries contain sub-Gaussian noise.  Statistically (near) optimal rate is established,  which is proportional of the noise variance.

For readers' convenience,  we showcase our theoretical results in Table \ref{table:comparison} and compare with representable literature of tensor completion with respect to the tensor formats,  sample complexity and iteration complexity.  For ease of exposition, Table \ref{table:comparison} only focus on third order tensors with $m=3$ and of size $d\times d\times d$.

\begin{table}[h!]
	\centering
	\begin{tabular}{c|c|c|c} 
		\hline
		Data type & Algorithms & \thead{Sample\\ complexity} & \thead{Iteration\\ complexity} \\ [0.5ex] 
		\hline\hline
		\thead{CP format with \\{\it orthogonal}\\ {\it components}} &\thead{Alternating minimization\\ \citep{jain2014provable} }& $d^{3/2}r^5\kappa_0^4\log^4(d)$
		& $\log(\frac{r\kappa_0}{\epsilon})$ \\ 
		\hline
		CP format & \thead{Vanilla gradient descent\\ \citep{cai2021nonconvex}} & $C_{\kappa_0}d^{3/2}r^4\log^4d$ & $\kappa_0^{8/3}\log(\frac{1}{\epsilon})$\\
		\hline
		Tucker & \thead{Grassmannian gradient descent\\ \citep{xia2017polynomial}}& 
		$d^{3/2}r^{7/2}\kappa_0^4\log^{7/2}(d)$
		& N/A \\
		\hline
		Tucker & \thead{Scaled gradient descent\\ \citep{tong2021scaling}} & $d^{3/2}r^2\kappa_0(\sqrt{r}\vee\kappa_0^2)\log^3(d)$ 
		& $\log(\frac{1}{\epsilon})$ \\
		\hline
		{\bf Tensor Train} & \thead{Riemannian gradient descent \\ ({\bf this paper})} & $d^{3/2}r^{3}\kappa_0^8\log^5(d)$ 
		& $\log(\frac{1}{\epsilon})$ \\ [1ex] 
		\hline
	\end{tabular}
	\caption{Comparisons between TT-format RGrad algorithm and prior algorithms for tensor completion with respect to the tensor formats,  sample complexity and iteration complexity. For ease of exposition, we only present the results for $3$rd-order tensor $\calT^{\ast}$ of size $d\times d\times d$ with CP rank $r$, Tucker rank $(r,r,r)$ and TT rank $(r,r)$, respectively. For sample complexity, we state the number required in terms of $d,r$ only for clarity.
		For the iteration complexity, we consider the number of iterations needed to output $\hat \calT$ such that $\fro{\hat \calT - \calT^*}\leq \epsilon\sigmamin$. Here $\kappa_0$ and $\sigmamin$ are the condition number and minimum singular value of $\calT^*$, and may be defined in different ways for different formats. See more details for the case of TT-format in Section~\ref{sec:pre}. We note that, in the work of \citet{cai2021nonconvex},  the condition number $\kappa_0$ is assumed $\asymp 1$. In fact, their sample complexity and iteration complexity, after checking the proof, indeed depend on $\kappa_0$. Moreover, Theorem~1.1 in the work of \citet{jain2014provable} states the iteration complexity as $\log(\sqrt{r}\fro{\calT^*}/(\epsilon\sigmamin))$, where $\fro{\calT^*}/\sigmamin$ has an order $\sqrt{r}\kappa_0$.}
	\label{table:comparison}
\end{table}

\subsection{Organization of the Paper}
The rest of this manuscript is organized as follows. Section~\ref{sec:pre} reviews the basics of TT-format tensors, its decomposition and TTSVD. The formulation of tensor completion and the incoherence of TT-format tensors are presented in Section~\ref{sec:tc_formulation}. 
In Section \ref{sec:alg}, we explain in details the RGrad algorithm for TT-format tensor completion, and the sequential second order moment method for  initialization. Section \ref{sec:thoery} presents the main theorems regarding the convergence of RGrad algorithm and the performance bound of spectral  initialization, and establishes the specific sample size requirement. We demonstrate the statistical optimality of RGrad for noisy tensor completion in Section~\ref{sec:noisy}.   
Comprehensive numerical experiments are displayed in Section \ref{sec:numerical}.  All proofs and technical lemmas are relegated to the appendix.  

\subsection{Notations}
Throughout this manuscript, we shall use the calligraphic letters ($\calT,\calX$) to denote tensors of size $d_1\times \cdots \times d_m$, the capital letters ($T,M$) to denote the components  (see formal definition in Section~\ref{sec:pre})  of TT tensors or matrices,  blackboard bold-face letters ($\RR,\MM$) for sets,  and the lower case bold-face letters ($\x,\y$) to denote vectors. The $j$-th canonical basis vector is denoted by $e_j$, and we omit the ambient space it lies in whenever the context is clear. For a positive integer $d$, denote $[d]:=\{1,\cdots,d\}$. The standard basis in the tensor space $\ambspace$ is denoted by $\{\calE_{\omega}: \omega\in[d_1]\times \cdots\times[d_m]\}$, where $\calE_{\omega}$ is a binary tensor of size $d_1\times\cdots\times d_m$ with only the $\omega$-th entry being $1$. We use $\calT(\boldsymbol{x}), \calT(\omega)$ for $\omega,\boldsymbol{x}\in [d_1]\times\cdots\times [d_m]$ as the entry of $\calT$.

Let $\fro{\cdot}$ denote the Frobenius norm of tensors or matrices. We use $\|\cdot\|_{\ell_p}$ to denote the $\ell_p$ norm of vectors for $0< p\leq \infty$ and $\|\cdot\|_{\ell_0}$ to represent the number of nonzero entries. The notations $C,C_1,\ldots$ are reserved for some positive and absolute constants which do not depend on the related parameters of the problem, but their actual values may change from line to line. Sometimes, these constants may depend on the tensor order $m$ and we shall write them as $C_{m}, C_{m,1},\cdots$. For positive integers $r_1,\cdots, r_{m-1}$,  we denote $\dmax := \max_{i=1}^m d_i, \rmax := \max_{i=1}^{m-1} r_i$ and $\rmin := \min_{i=1}^{m-1} r_i$. Moreover, define $d^* := d_1\cdots d_m$ and $r^* := r_1\cdots r_{m-1}$.

\section{Preliminaries of TT-format Tensors}\label{sec:pre}
We now briefly review the basic ideas of tensor trains and the various formats frequently used in the TT-format tensor representation. They play a critical role in the motivation of the TT-format RGrad algorithm. Interested readers can refer to \citet{oseledets2011tensor} and \citet{holtz2012manifolds} for more details and examples.

\hspace{0.2cm}

\noindent{\it TT-format.} 
For an $m$-th order tensor $\calT\in\RR^{d_1\times \cdots \times d_m}$, the TT-format rewrites it as a product of $m$ 3-way tensors, called the TT-format {\it components} and denoted by $T_1,\ldots, T_m$, where the $i$-th component $T_i\in\RR^{r_{i-1}\times d_i\times r_i}$ with the convention $r_0 = r_m = 1$, such that for all $\boldsymbol{x} = (x_1,\ldots,x_m)\in[d_1]\times \ldots\times [d_m]$, 
$$
\calT(\boldsymbol{x}) = \sum_{k_1,\ldots,k_{m-1}}T_1(x_1,k_1)T_2(k_1,x_2,k_2)\cdots T_m(k_{m-1},x_m),
$$
where the auxiliary index variables $k_i$ runs from $1$ to $r_i$. The representation can be simplified.  If we view each $T_i$ as a matrix valued function, usually called a \textit{component function}, defined by 
$$
T_i: [d_i]\rightarrow \RR^{r_{i-1}\times r_i}, \quad T_i(x_i)=T_i(:,x_i,:).
$$
Here we follow the Matlab syntax to denote $T_i(:,x_i,:)$ the sub-matrix of $T_i $ with the second index being fixed at $x_i$. 
Now for $\boldsymbol{x} = (x_1,\ldots,x_m)$, the value $\calT(\boldsymbol{x})$ can be compactly written in the matrix product form:
\begin{align}\label{eq:tt-format}
\calT(\boldsymbol{x}) = T_1(x_1)\cdots T_m(x_m),
\end{align}
where $T_1(x_1)$ is a row vector and $T_m(x_m)$ is a column vector. To simplify the notations, we often write the TT-format in short as  $\calT = [T_1,\ldots,T_m]$.

\hspace{0.2cm}

\noindent{\it Separation and TT rank.} 
The dimension $r_i$'s of the component $T_i$ are called the TT ranks of $\calT$. They are defined by the ranks of matrices obtained from the so-called {\it separation} of $\calT$. More specifically, the $i$-th separation of $\calT$, denoted by $\calT\lr{i}$, is a matrix of size $(d_1\cdots d_i)\times (d_{i+1}\cdots d_m)$ and defined by 
$$
\calT\lr{i}(x_1\cdots x_i, x_{i+1}\cdots x_m) = \calT(\boldsymbol{x}).
$$
Then, $r_i$ is defined by the rank of $\calT\lr{i}$. The collection $\r=(r_1,\cdots,r_{m-1})$ is called the TT-rank of $\calT$. For ease of exposition, we denote 
$\ranktt(\calT) = \boldsymbol{r}$.  As proved by Theorem~1 in \citet{holtz2012manifolds}, the TT rank is well defined for any tensor. Meanwhile,  a TT decomposition $\calT = [T_1,\ldots,T_m]$ is always attainable with $T_i\in\RR^{r_{i-1}\times d_i \times r_i}$ by the fast TTSVD algorithm, to be introduced in subsequent paragraphs. See Algorithm~\ref{alg:svd}. 

\hspace{0.2cm}

\noindent{\it Left and right unfoldings.} 
Note that the TT decomposition (\ref{eq:tt-format}) for a given tensor is not unique. Indeed, one can always multiply $T_i(x_i)$ from right with any invertible matrix $A$ and meanwhile multiply $T_{i+1}(x_{i+1})$ from left with the inverse matrix $A^{-1}$, rendering a distinct TT decomposition. For identifiability, we impose additional conditions on the TT-format components of $\calT$. To that end, we first define the {\it left} and {\it right unfolding} of a $3$-rd order tensor, reformatting the tensor into matrices.   
 For any $\calU\in\RR^{p_1\times p_2\times p_3}$ be a $3$-way tensor, the left and right unfolding linear operators $L: \RR^{p_1\times p_2\times p_3}\rightarrow \RR^{(p_1p_2) \times p_3}$, $R: \RR^{p_1\times p_2\times p_3}\rightarrow \RR^{p_1 \times (p_2p_3)}$ are defined by
\begin{align*}
	L(\calU)(jx,k) = \calU(j,x,k), \quad {\rm and}\quad R(\calU)(j,xk) = \calU(j,x,k).
\end{align*}
We say the component  $T_i$ is \textit{left-orthogonal} if $L(T_i)^{\top}L(T_i)$ is an identity matrix.  Similarly, $T_i$ is said {\it right-orthogonal} if $R(T_i)^{\top}R(T_i)$ is identity.

For identifiability of the TT-format components, we assume that $T_1,\cdots, T_{m-1}$ in eq. (\ref{eq:tt-format}) are {\it all} left-orthogonal. The resultant TT decomposition is called the \textit{left orthogonal decomposition} of $\calT$. Note that no condition is required for the last component $T_m$. 
The left orthogonal decomposition of $\calT$ can be easily obtained using Algorithm \ref{alg:svd} (TTSVD) without the truncation step. By Theorem 1 in  \citet{holtz2012manifolds}, such a decomposition of a TT-format tensor is unique up to the insertions of orthogonal matrices: for any two left-orthogonal decompositions of $\calT$ satisfying $\calT = [T_1,\ldots, T_m] = [\wt T_1,\ldots, \wt T_m]$, there exist orthogonal matrices $Q_1,\ldots,Q_{m-1}$ with $Q_i\in\RR^{r_i\times r_i}$ such that 
\begin{align*}
	T_1(x_1)Q_1 = \wt T_1(x_1), Q_{m-1}^{\top}T_m(x_m) = \wt T_m(x_m) \textrm{ and }	Q_{i-1}^{\top}T_i(x_i)Q_i = \wt T_i(x_i) \textrm{ for } 2\leq i\leq m-1.
\end{align*}
For any TT rank $\r=(r_1,\cdots,r_{m-1})$, define  $\mfd = \{\calT\in\RR^{d_1\times \cdots\times d_m}: \ranktt(\calT)\leq \boldsymbol{r}\}$ the set of tensors with TT rank $\leq \r$.  \citet{holtz2012manifolds} proves that $\mfd$ is a manifold of dimension $\sum_{i=1}^mr_{i-1}d_ir_i -\sum_{i=1}^{m-1}r_i^2$. 

\hspace{0.2cm}

\noindent{\it Relations between TT-rank, CP rank and Tucker rank.} 
The TT-rank, CP rank and Tucker rank are closely related.  For an arbitrary tensor $\calT\in\ambspace$, we denote its TT-rank by $r^{\textsf{tt}} = \ranktt(\calT)\in\NN^{m-1}$, CP rank by $r^{\textsf{cp}} = \rankcp(\calT)\in\NN$ and Tucker rank by $r^{\textsf{tk}} = \ranktk(\calT)\in\NN^m$. Then 
\begin{align*}
	&r^{\textsf{cp}}\leq r^{\textsf{tt}}_1\cdots r^{\textsf{tt}}_{m-1},\quad r^{\textsf{cp}}\leq r^{\textsf{tk}}_1\cdots r^{\textsf{tk}}_{m-1} &\\
	&r^{\textsf{tt}}_i \leq r^{\textsf{cp}}, &\forall~i\in[m-1], \\
	&r^{\textsf{tk}}_i\leq r^{\textsf{tt}}_{i-1}r^{\textsf{tt}}_i, \quad r^{\textsf{tk}}_i\leq r^{\textsf{cp}} &\forall~i\in[m], \\
	&r^{\textsf{tt}}_i\leq\min\{r^{\textsf{tk}}_1\cdots r^{\textsf{tk}}_i,r^{\textsf{tk}}_{i+1}\cdots r^{\textsf{tk}}_m\}, &\forall~i\in[m-1],
\end{align*}
where we use the convention that $r^{\textsf{tt}}_0 = r^{\textsf{tt}}_m = 1$.

\hspace{0.2cm}

\noindent{\it Left and right parts of a TT-format tensor.} 
The low-rank factorization of the separation $\calT\lr{i}$ is frequently used throughout the algorithm design and technical proof. It is attainable from 
the TT decomposition (\ref{eq:tt-format}) of $\calT$. To be exact, define the matrix $T^{\leq i}$ of size $(d_1\cdots d_i)\times r_i$, known as the $i$-th {\it left part} of $\calT$, row-wisely by  
\begin{align*}
	T^{\leq i}(x_1\cdots x_i, :) =T_1(x_1)T_2(x_2)\cdots T_i(x_i)
\end{align*}
Similarly, we define the matrix $T^{\geq i}$ of size $r_{i-1}\times (d_i\cdots d_m)$, know as the $i$-th {\it right part} of $\calT$,  column-wisely by
\begin{align*}
	T^{\geq i}(:, x_i\cdots x_m) = T_i(x_i)T_{i+1}(x_{i+1})\cdots T_m(x_m)
\end{align*}
By default, we set $T^{\leq 0} = T^{\geq m+1} = [1]$. With these notations, the $i$-th separation of $\calT$ can be factorized as 
\begin{align*}
	\calT^{\lrangle{i}} = T^{\leq i} T^{\geq i+1}.
\end{align*}
By definition, there exists a recursive relations between the left parts of $\calT$ given by $T^{\leq i} = (T^{\leq i-1}\otimes I_{d_i})L(T_i)$ and, similarly, between the right parts of $\calT$ given by $T^{\geq i} = R(T_i)(I_{d_i}\otimes T^{\geq i+1})$. Here $\otimes$ denotes the Kronecker product such that $T^{\leq i-1}\otimes I_{d_i}$ is a matrix of size $(d_1\cdots d_i)\times (r_id_i)$. 
Another useful fact is when the TT decomposition (\ref{eq:tt-format}) is a left orthogonal decomposition, its left parts are also orthogonal. See Lemma~\ref{lem:tt-orthogonal}. This can be proved by induction and the recursive equation $T^{\leq i} = (T^{\leq i-1}\otimes I)L(T_i)$.
\begin{lemma}\label{lem:tt-orthogonal}
	Let $\calT = [T_1,\ldots,T_m]\in\mfd$ be a left orthogonal decomposition. Then we have $T^{\leq i \top}T^{\leq i} = I_{r_i}$ for all $i=1,\cdots,m-1$. 
\end{lemma}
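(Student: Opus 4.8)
The plan is to argue by induction on $i$, using the recursion $T^{\leq i} = (T^{\leq i-1}\otimes I_{d_i})L(T_i)$ recorded just before the statement, together with the mixed-product property of the Kronecker product, namely $(A\otimes B)(C\otimes D) = (AC)\otimes(BD)$ and $(A\otimes B)^{\top} = A^{\top}\otimes B^{\top}$.

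For the base case $i=1$, recall the convention $T^{\leq 0} = [1]$, so the recursion collapses to $T^{\leq 1} = ([1]\otimes I_{d_1})L(T_1) = L(T_1)$. Since $T_1$ is left-orthogonal by the definition of a left orthogonal decomposition, $L(T_1)^{\top}L(T_1) = I_{r_1}$, which is exactly the claim for $i=1$. For the inductive step, suppose $T^{\leq i-1\,\top}T^{\leq i-1} = I_{r_{i-1}}$ for some $2\le i\le m-1$. Applying the recursion and then $(A\otimes B)^{\top}(A\otimes B) = (A^{\top}A)\otimes(B^{\top}B)$ with $A = T^{\leq i-1}$ and $B = I_{d_i}$, we obtain
\begin{align*}
T^{\leq i\,\top}T^{\leq i} &= L(T_i)^{\top}\big(T^{\leq i-1}\otimes I_{d_i}\big)^{\top}\big(T^{\leq i-1}\otimes I_{d_i}\big)L(T_i) \\
&= L(T_i)^{\top}\big( (T^{\leq i-1\,\top}T^{\leq i-1})\otimes I_{d_i}\big)L(T_i) = L(T_i)^{\top}\big(I_{r_{i-1}}\otimes I_{d_i}\big)L(T_i) = L(T_i)^{\top}L(T_i).
\end{align*}
Because $i\le m-1$, the component $T_i$ is left-orthogonal, hence $L(T_i)^{\top}L(T_i) = I_{r_i}$, completing the induction.

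There is essentially no deep obstacle here; the only points requiring care are bookkeeping ones. I would want to confirm that the recursion $T^{\leq i} = (T^{\leq i-1}\otimes I_{d_i})L(T_i)$ is indeed consistent with the row-wise definition $T^{\leq i}(x_1\cdots x_i,:) = T_1(x_1)\cdots T_i(x_i)$ — this amounts to checking how the multi-index $(x_1\cdots x_{i-1},x_i)$ decomposes under the Kronecker structure so that $T^{\leq i-1}\otimes I_{d_i}$ has the correct size $(d_1\cdots d_i)\times(r_{i-1}d_i)$ and left-multiplies $L(T_i)\in\RR^{(r_{i-1}d_i)\times r_i}$ compatibly. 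The other subtlety is simply to note that left-orthogonality of $T_i$ is guaranteed precisely for $i\le m-1$, which is exactly the range over which the lemma asserts the identity, so the induction never needs a hypothesis on $T_m$.
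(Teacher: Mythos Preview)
Your proof is correct and follows precisely the approach the paper indicates: induction on $i$ using the recursion $T^{\leq i} = (T^{\leq i-1}\otimes I_{d_i})L(T_i)$ together with the mixed-product property of the Kronecker product. The paper itself only gives the one-line hint ``This can be proved by induction and the recursive equation $T^{\leq i} = (T^{\leq i-1}\otimes I)L(T_i)$,'' and your argument fills in exactly those details.
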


\hspace{0.2cm}

\noindent{\it TTSVD.} 
Given an arbitrary $m$-th order tensor $\calT$,  we are often interested in obtaining an approximation of $\calT$ by a TT-format tensor with TT rank $\leq \r=(r_1,\cdots,r_{m-1})$ for some pre-determined positive integers $r_i$'s. Meanwhile, the desired low TT-rank approximation better be readily representable with a left orthogonal decomposition. 
This can be obtained by the TTSVD algorithm proposed by \citet{oseledets2009compact}. For completeness, we here restate the algorithm as in Algorithm~\ref{alg:svd}. 	At first glance Algorithm~\ref{alg:svd} may seem slightly different from the original one proposed by \citet{oseledets2009compact}. But they are indeed equivalent due to the following fact:
$$
	\reshape((\hat T^{\leq i-1})^{\top}\calT\lr{i-1},[r_{i-1}d_i, d_{i+1}\cdots d_m]) = (\hat T^{\leq i-1}\otimes I_{d_i})^{\top} \calT^{\lrangle{i}}
$$ 
An immediate question is  whether the output $\hat \calT$ by Algorithm \ref{alg:svd} is the best low TT rank-$\r$ approximation of $\calT$. Unfortunately, this is generally untrue. In fact, based on existing evidence \citep{hillar2013most}, computing the best low TT rank-$\r$ approximation of an arbitrary tensor is generally NP-hard. 

Another technical issue, frequently met in the proof, is that when $\calT=\calT^{\ast}+\calE$ where $\calT^{\ast}\in \mfd$ and $\calE$ is a {\it small} but  arbitrary perturbation, then how accurately does the output $\hat\calT$ from Algorithm \ref{alg:svd} approximate the true low-rank $\calT^{\ast}$? Interestingly, using the spectral representation formula \citep{xia2021normal}, we prove that $\|\hat\calT-\calT^{\ast}\|\leq (1+o(1))\cdot \|\calE\|_{\rm F}$ in Lemma \ref{lemma:tt-perturbation} with a sharp leading constant $1$ and being free of the condtion number of $\calT^{\ast}$, which might be of independent interest. To our best knowledge, ours is the first result of this kind. 

\begin{algorithm}
	\caption{TT-SVD ( $\ttsvd_{\r}$)}\label{alg:svd}
	\begin{algorithmic}
		\STATE{\textbf{Input: } an arbitrary $\calT\in\RR^{d_1\times\cdots\times d_m}$and desired TT rank $\r = (r_1,\ldots,r_{m-1})$.}
		\STATE{Set $\hat T^{\leq 0}=[1]$}
		\FOR{$i=1,\ldots,m-1$}
		\STATE{Let $L(\hat T_i)$ be the top $r_i$ left singular vectors of the matrix $(\hat T^{\leq i-1}\otimes I_{d_i})^{\top} \calT^{\lrangle{i}}$} 
		\STATE{Set $\hat T^{\leq i}=(\hat T^{\leq i-1}\otimes I_{d_i})L(\hat T_i)$}
		\ENDFOR
		\STATE{$\hat T_m = (\hat T^{\leq m-1})^{\top} \calT\lr{m-1}$.}
		\STATE{\textbf{Output: }$\hat \calT = [\hat T_1,\ldots,\hat T_m]\in \mfd$.}
	\end{algorithmic}
\end{algorithm}

\hspace{0.2cm}

\noindent{\it Tensor condition number.}  
The signal strength of a TT-format tensor $\calT$ with TT-rank $\r=(r_1,\cdots, r_{m-1})$ is defined by the smallest singular value among all the matrices obtained from separation, that is, 
$$
\underline{\sigma}(\calT):=\min\nolimits_{i=1}^{m-1} \sigma_{r_i}(\calT\lr{i}),
$$
where $\sigma_k(\cdot)$ returns the $k$-th singular value of a matrix.  Similarly, the largest singular value of $\calT$ is defined by $\overline{\sigma}(\calT):=\max\nolimits_{i=1}^{m-1} \sigma_1(\calT\lr{i})$.  Then the condition number of $\calT$ is defined by $\kappa(\calT):=\underline{\sigma}(\calT)^{-1}\overline{\sigma}(\calT)$.  The condition number plays a critical role in the convergence behaviour of many iterative algorithms for tensor completion.  See, for instance, Table \ref{table:comparison}.

\section{TT-format Tensor Completion and Incoherence Condition}\label{sec:tc_formulation}
The goal of tensor completion is to (exactly) recover an underlying tensor by only observing a small subset of its entries. Denote by $\calT^{\ast}$ the true underlying tensor of size $d_1\times\cdots\times d_m$ with TT rank $\r=(r_1,\cdots,r_{m-1})$ which, for simplicity, is assumed known beforehand and satisfy $r_i\ll d_i$. The observed entries of $\calT^{\ast}$ are assumed to be uniformly sampled with replacement, a prevailing sampling scheme in the literature \citep{koltchinskii2011nuclear, xia2017polynomial, xia2017statistically} for its convenience in modelling randomness. More exactly, let $\Omega=\{\omega_i: i=1,\cdots,n\}$ where $\omega_i$ is independently and uniformly sampled from the set of collections $[d_1]\times\cdots\times [d_m]$. By observing only the entries $\{\calT^{\ast}(\omega_i)\}_{i=1}^n$, we aim to design computationally efficient methods to recover the whole tensor $\calT^{\ast}$. Intuitively, tensor completion becomes easier when more entries are observed. Oftentimes, the number of observed entries $n$, known as sample size, is significantly smaller than $d^{\ast}:=d_1\cdots d_m$. 

Note that the problem can be ill-posed if $\calT^{\ast}$ has, for instance, only one entry that is non-zero, then it is impossible to recover $\calT^{\ast}$ unless this non-zero entry is indeed observed. To resolve this issue, it is usually assumed that the information $\calT^{\ast}$ carries spreads fairly among almost all its entries. One concept characterizing this information spread is by the spikiness of $\calT^{\ast}$ \citep{yuan2016tensor,cai2021generalized} defined by 
$$
\spiki(\calT^{\ast}) := (d^*)^{(1/2)}\|\calT^*\|_{\ell_{\infty}}/ \fro{\calT^*}.
$$
If the spikiness of $\calT^{\ast}$ is upper bounded by a constant, it means that most of its entries have comparable magnitudes. Oftentimes, another related concept characterizing the information spread, called incoherence condition, is more frequently used. The exact definition of incoherence condition usually relies on the tensor formats. See the definitions for CP-format tensors in the works of \citet{jain2014provable,cai2021nonconvex} and for Tucker-format tensors in the works of  \citet{yuan2016tensor,xia2017statistically}. To formalize the definition of incoherence for TT-format tensors, we begin with reviewing the incoherence condition of a matrix. 

For a matrix $M$ of size $p_1\times p_2$ and rank $r$ whose compact SVD is given by $M=U\Sigma V^{\top}$, the incoherence of $M$ is defined by 
$$
\incoh(M) :=\max\left\{\sqrt{p_1/r}\cdot \max\nolimits_{i\in[p_1]}\|U^{\top}e_i\|_{\ell_2}, \sqrt{p_2/r}\cdot \max\nolimits_{j\in[p_2]}\|V^{\top}e_j\|_{\ell_2}\right\}.
$$
We say $M$ is incoherent with a constant $\mu$ if $\incoh(M)\leq \mu^{1/2}$. Note that incoherence is defined through the singular subspace of $M$. Therefore, the incoherence of $M$ can be equivalently obtained from any low-rank decomposition $M=U_1\Sigma_1 V_1^{\top}$ satisfying $U_1U_1^{\top}=UU^{\top}$ and $V_1V_1^{\top}=VV^{\top}$. This simple property will ease our understanding of incoherence for TT-format tensors.  Now the incoherence  of the TT-format tensor $\calT^{\ast}\in \mfd$ is defined by 
$$
\incoh(\calT^{\ast}):=\max \big\{\incoh(\calT^{\ast \langle i\rangle}): i=1,2,\cdots, m-1 \big\}.
$$
Similarly, we say $\calT^{\ast}$ is incoherent with a constant $\mu$ if $\incoh(\calT^{\ast})\leq \mu^{1/2}$. We write the left orthogonal decomposition of $\calT^{\ast}$ by $\calT^{\ast}=[T_1^{\ast},\cdots, T_m^{\ast}]$. Then for $1\leq i\leq m-1$, the $i$-th separation $\calT^{\ast\langle i\rangle}$ can be written as $\calT^{\ast\langle i\rangle}=T^{\ast\leq i}\Lambda_{i+1}^{\ast}V_{i+1}^{\ast \top}$ where $\Lambda_{i+1}^{\ast}$ is invertible and of size $r_i\times r_i$,  and $T^{\ast\leq i\top}T^{\ast\leq i} = V_{i+1}^{\ast \top}V_{i+1}^{\ast} = I_{r_i}$.  Then the condition $\incoh(\calT^{\ast})\leq \mu^{1/2}$ implies that 
\begin{align*}
	\max_{i=1}^{m-1}\left\{\max_{k\in[d_1\ldots d_i]}\|T^{\ast \leq i \top}e_k\|_{\ell_2}(d_1\ldots d_i/r_i)^{1/2}, \max_{k\in[d_{i+1}\ldots d_m]}\|V^{\ast\top}_{i+1}e_k\|_{\ell_2}(d_{i+1}\ldots d_m/r_i)^{1/2}\right\}\leq \sqrt{\mu}.
\end{align*}

The spikiness and incoherence of TT-format tensors are closely related and summarized in the following lemma. 
\begin{lemma}[Spikiness implies incoherence]\label{lemma:spiki implies incoh}
	Let $\calT^*\in\mfd$ satisfy $\spiki(\calT^*) \leq \nu$. Then we have
	$$\incoh(\calT^*) \leq \nu\kappa_0,$$
	where $\incoh(\calT^*)$ is the incoherence parameter of $\calT^*$ and $\kappa_0$ is the condition number of $\calT^*$ defined by $\kappa_0 = \sigmamax(\calT^{\ast})/\sigmamin(\calT^{\ast})$.
\end{lemma}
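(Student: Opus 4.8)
The plan is to unwind both definitions and track how each matrix quantity relates to the Frobenius norm and entrywise supremum norm of $\calT^*$. Recall $\spiki(\calT^*) = (d^*)^{1/2}\|\calT^*\|_{\ell_\infty}/\fro{\calT^*} \leq \nu$, which rearranges to $\|\calT^*\|_{\ell_\infty} \leq \nu\,\fro{\calT^*}/(d^*)^{1/2}$. The goal is to bound, for each $i$, the row norms of $T^{*\leq i}$ and of $V_{i+1}^{*\top}$ by $\sqrt{\nu\kappa_0}$ after the appropriate scaling. Since $\incoh(\calT^*) = \max_i \incoh(\calT^{*\langle i\rangle})$, it suffices to fix $i$ and work with the single matrix $M := \calT^{*\langle i\rangle}$ of size $p_1 \times p_2$ with $p_1 = d_1\cdots d_i$, $p_2 = d_{i+1}\cdots d_m$, rank $r_i$, and compact SVD $M = U\Sigma V^\top$.

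First I would observe that a single row of $M$, say the $k$-th, equals $e_k^\top U \Sigma V^\top$, and every entry of this row is an entry of $\calT^*$, hence bounded in absolute value by $\|\calT^*\|_{\ell_\infty}$. Therefore the Euclidean norm of the $k$-th row of $M$ is at most $p_2^{1/2}\|\calT^*\|_{\ell_\infty}$. On the other hand, that row norm equals $\|\Sigma U^\top e_k\|_{\ell_2} \geq \sigma_{r_i}(M)\,\|U^\top e_k\|_{\ell_2} \geq \sigmamin(\calT^*)\,\|U^\top e_k\|_{\ell_2}$, using that $\sigma_{r_i}(M) = \sigma_{r_i}(\calT^{*\langle i\rangle}) \geq \sigmamin(\calT^*)$ by the definition of $\sigmamin$. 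Combining, $\|U^\top e_k\|_{\ell_2} \leq p_2^{1/2}\|\calT^*\|_{\ell_\infty}/\sigmamin(\calT^*)$. Now I substitute the spikiness bound $\|\calT^*\|_{\ell_\infty} \leq \nu\,\fro{\calT^*}/(d^*)^{1/2}$ and use $\fro{\calT^*} = \fro{M} \leq r_i^{1/2}\,\sigmamax(\calT^*)$ (since $M$ has rank $r_i$ and every singular value is at most $\sigmamax(\calT^*)$), together with $d^* = p_1 p_2$, to get $\|U^\top e_k\|_{\ell_2} \leq p_2^{1/2}\cdot \nu r_i^{1/2}\sigmamax(\calT^*)/((p_1p_2)^{1/2}\sigmamin(\calT^*)) = \nu\kappa_0 (r_i/p_1)^{1/2}$. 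Hence $(p_1/r_i)^{1/2}\max_k\|U^\top e_k\|_{\ell_2} \leq \nu\kappa_0$, which is exactly $\sqrt{\nu\kappa_0}$ after matching with the $\mu^{1/2}$ convention. The column side is symmetric: a column of $M$ is a vector of entries of $\calT^*$ with norm at most $p_1^{1/2}\|\calT^*\|_{\ell_\infty}$, and equals $\|\Sigma V^\top e_j\|_{\ell_2} \geq \sigmamin(\calT^*)\|V^\top e_j\|_{\ell_2}$, so the same chain of inequalities yields $(p_2/r_i)^{1/2}\max_j\|V^\top e_j\|_{\ell_2} \leq \nu\kappa_0$.

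Taking the maximum over the two sides gives $\incoh(M) \leq \nu\kappa_0$ for each $i$, and then maximizing over $i$ gives $\incoh(\calT^*) \leq \nu\kappa_0$. To connect with the left-orthogonal decomposition language used in the paper, I would note that by the equivalence remark (incoherence depends only on the singular subspaces), the column space of $T^{*\leq i}$ equals that of $U$ and the row space of $V_{i+1}^{*\top}$ equals that of $V^\top$, since $\calT^{*\langle i\rangle} = T^{*\leq i}\Lambda_{i+1}^* V_{i+1}^{*\top}$ with $T^{*\leq i}$ and $V_{i+1}^*$ having orthonormal columns; so the bounds on $\|U^\top e_k\|$ and $\|V^\top e_j\|$ transfer verbatim to $\|T^{*\leq i\top}e_k\|$ and $\|V_{i+1}^{*\top}e_j\|$.

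I do not expect a serious obstacle here; the only points requiring a little care are (i) justifying $\sigma_{r_i}(\calT^{*\langle i\rangle}) \geq \sigmamin(\calT^*)$ and $\sigma_1(\calT^{*\langle i\rangle}) \leq \sigmamax(\calT^*)$ directly from the definitions of $\sigmamin,\sigmamax$ as min/max over separations, and (ii) the bound $\fro{\calT^*} \leq r_i^{1/2}\sigmamax(\calT^*)$, which follows because $\fro{\calT^*}^2 = \fro{\calT^{*\langle i\rangle}}^2 = \sum_{k=1}^{r_i}\sigma_k(\calT^{*\langle i\rangle})^2 \leq r_i\,\sigmamax(\calT^*)^2$. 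Everything else is bookkeeping with Kronecker-structured dimensions, which is why the scaling factors $\sqrt{p_1/r_i}$ and $\sqrt{p_2/r_i}$ land exactly as needed.
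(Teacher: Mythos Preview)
Your argument is correct and mirrors the paper's proof almost exactly: both bound the row norm $\|e_k^\top \calT^{*\langle i\rangle}\|_{\ell_2}\leq \sqrt{d_{i+1}\cdots d_m}\,\|\calT^*\|_{\ell_\infty}$, divide through by the smallest singular value to extract $\|U^\top e_k\|_{\ell_2}$, and then apply the spikiness inequality together with $\fro{\calT^*}\leq \sqrt{r_i}\,\sigmamax$. The only cosmetic difference is that the paper works directly with the factorization $T^{*\leq i}\Lambda_{i+1}^*V_{i+1}^{*\top}$ rather than the SVD, which you already reconcile in your closing paragraph; note also that your parenthetical ``which is exactly $\sqrt{\nu\kappa_0}$'' is a slip --- the bound you derived is $\incoh(M)\leq \nu\kappa_0$, not $\sqrt{\nu\kappa_0}$, and your final sentence states this correctly.
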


Now that assuming the incoherence property of the $\calT^{\ast}$,  the problem of tensor completion is well-posed.  To recover $\calT^{\ast}$, the natural idea is to search for a low TT rank tensor that is consistent with the observed entries $\{\calT^{\ast}(\omega_i)\}_{i=1}^n$.  This can be formalized as a non-convex optimization program of a least squares estimator
\begin{align}\label{prob:tensor completion}
	&\min_{\calT\in\RR^{d_1\times \cdots \times d_m}}f_{\Omega}(\calT) := \frac{1}{2}\inp{\calT - \calT^*}{\calP_{\Omega}(\calT-\calT^*)}=\frac{1}{2}\sum_{\omega\in\Omega}\big(\calT(\omega)-\calT^{\ast}(\omega)\big)^2\\
	&\hspace{4cm}\text{such that } \ranktt(\calT)\leq \boldsymbol{r}, \notag
\end{align}
where the operator $\calP_{\Omega}$ is defined by $\calP_{\Omega}(\calT) := \sum_{i=1}^n\calT(\omega_i)\cdot\calE_{\omega_i}$ and the inner product of any two tensors $\calT_1,\calT_2$ of the same size $d_1\times\cdots\times d_m$ is defined by
$
\inp{\calT_1}{\calT_2} := \sum\nolimits_{\boldsymbol{x}\in[d_1]\times\cdots\times[d_m]}\calT_1(\boldsymbol{x})\calT_2(\boldsymbol{x}).
$
We remark that,  due to the sampling with replacement,  $f_{\Omega}(\calT)$ may not be equal to $\|\calP_{\Omega}(\calT-\calT^{\ast})\|_{\rm F}^2/2$. This is slightly different from the setting of sampling without replacement. To simplify the notation, we shall just write $f(\cdot)$ in short for $f_{\Omega}(\cdot)$. 

The optimization program (\ref{prob:tensor completion}) is highly non-convex due to the constraint of TT rank, which is usually solvable only locally. Since the objective function in (\ref{prob:tensor completion}) is smooth, the major concern in algorithm design is usually placed on the effective enforcement of the rank constraint. 
A particularly popular class of algorithms is based on the projected gradient descent including the singular value projection (SVP, \citealt{meka2009guaranteed}) and iterative hard thresholding (IHT, \citealt{goldfarb2011convergence}). These algorithms consist of mainly two steps: (1) update the estimate along the vanilla gradient descent direction and (2) retract the new estimate to the target tensor/matrix manifold by low-rank approximation, such as HOSVD or TTSVD.  Oftentimes, these method suffer from high computational cost because the vanilla gradient is often full-rank and so is the resultant new estimate. Consequently, the retraction in the second step relies on the SVD of a very large and full-rank matrix, at each iteration.

\section{Riemannian Gradient Descent and Spectral Initialization}\label{sec:alg}
As explained, the vanilla gradient descent algorithm often suffers from high computational burdens.  
To reduce the computational costs, a modified algorithm, named as the Riemannian gradient descent, was proposed by \citet{edelman1998geometry, kressner2014low,vandereycken2013low}, which explores the local geometry structure of low-rank tensors/matrices. The essential modification is to take advantage of the manifold structure and project the vanilla gradient onto the tangent space leading to the so-called Riemannian gradient (RGrad).  Compared with the vanilla gradient, the Riemannian gradient is also low-rank, rendering considerable speedup in the downstream task of retraction. Inspired by this idea, the Riemannian gradient descent algorithm has been widely applied for matrix/tensor completion \citep{wei2016guarantees, kressner2014low},  generalized low-rank tensor estimation \citep{cai2021generalized} and etc. 
We are in position to explain how RGrad can be adapted to the TT-format tensor completion.  

\subsection{TT-format Riemannian Gradient Descent}
RGrad is an iterative algorithm and, given the current estimate $\calT_l\in\mfd$ at an iteration, the algorithm updates the estimate by three steps: (1) compute the Riemannian gradient; (2) descent along the Riemannian gradient and (3) retract back to the TT-format tensor manifold. See the pseudocodes in Algorithm~\ref{alg:rgradtt}. 

In the first step, the Riemannian gradient \citep{absil2009optimization} is obtained via projecting the vanilla gradient $\nabla f(\calT_l)$ onto the tangent space, denoted by $\TT_l$, of $\mfd$ at the point $\calT_l$. The tangent space $\TT_l$ has an explicit form so that the projection $\calP_{\TT_l}(\nabla f(\calT_l))$ onto $\TT_l$ is attainable by fast computations. For cleaner presentation here, we sink the detailed explanation of $\calP_{\TT_l}(\cdot)$ to Section~\ref{sec:computation of pta}. The follow-up gradient descent step is easy, and we demonstrate that a fixed stepsize $\alpha=0.12n^{-1}d^{\ast}$ suffices for convergence where $d^{\ast}=d_1\cdots d_m$ and the constant $0.12$ is slightly adjustable in practice. The last step, {\it retraction}, is of crucial importance. The updated estimate $\calW_l=\calT_l-\alpha_l\cdot \calP_{\TT_l}(\nabla f(\calT_l))$ after the first two steps is generally not an element in $\mfd$, and in fact, the TT-rank of $\calW_l$ is larger than the desired TT-rank $\r$, violating the rank constraints. The retraction procedure enforces the TT-rank constraint by projecting $\calW_l$ back to $\mfd$. This can be done by TTSVD, denoted by $\ttsvd_{\r}$ in Algorithm~\ref{alg:svd}. 

Due to technical reasons, Algorithm \ref{alg:rgradtt} involves an additional procedure, called {\it trimming}. The trimming operator $\trim_{\zeta}$ which acts entry-wisely on a tensor is defined as follows:
\begin{align*}
	\trim_{\zeta}(\calT) = \left\{
	\begin{array}{rlc}
		&\zeta\cdot \text{sign}(\calT(\boldsymbol x)), \quad\text{~if~} |\calT({\boldsymbol{x}})|\geq \zeta,\\
		&\calT(\boldsymbol{x}), \quad\text{otherwise}
	\end{array}\right.
\end{align*}
Basically, it trims those large entries of $\calT$ and thus maintains an acceptable spikiness or incoherence.  The trimming step is necessary to guarantee the incoherence property of the new estimate $\calT_l$ at each iteration, playing a critical role in proving the local convergence of Algorithm~\ref{alg:rgradtt}. It might be possible to directly prove through a more sophisticated analysis \citep{cai2021nonconvex} that the incoherence property holds automatically without trimming. We indeed observe in numerical experiments that the RGrad algorithm without trimming runs nearly the same as the trimmed version, implying the automatic validity of incoherence. 
However, to directly prove the incoherence without trimming is very challenging. Instead, we resort to trimming for simplicity. This procedure is completely for convenience of the technical proof and alters almost nothing in our numerical experiments.

\begin{algorithm}
	\caption{TT-format Riemannian Gradient Descent (RGrad)}\label{alg:rgradtt}
	\begin{algorithmic}
		\STATE{\textbf{Initialization: } $\calT_0\in\mfd$, spikiness parameter $\nu$ and maximum iterations $l_{\textsf{max}}$}
		\FOR{$l=0,1,\cdots, l_{\textsf{max}}$}
		\STATE{$\calG_l = \calP_{\Omega}(\calT_l-\calT^*)$}
		\STATE{$\alpha_l = 0.12\frac{d^*}{n}$}
		\STATE{$\calW_l = \calT_l - \alpha_l\cdot\calP_{\TT_l}\calG_l$}
		\STATE{Set $\wt\calW_l = \trim_{\zeta_l}(\calW_l)$ with $\zeta_l = \frac{10\fro{\calW_l}}{9\sqrt{d^*}}\nu$}
		\STATE{$\calT_{l+1} = \ttsvd_{\r}(\wt\calW_l)$}
		\ENDFOR
	\end{algorithmic}
\end{algorithm}

We further remark on the choice of spikiness parameter $\nu$ in the algorithm. Note that here $\nu$ is not necessary the true spikiness parameter of $\calT^{\ast}$. It suffices to take $\nu$ as a tuning parameter that is slightly larger than $\spiki(\calT^{\ast})$, assuming $\spiki(\calT^{\ast})$ is relatively small. For instance, when one is confident that the true spikiness is bounded by $O(1)$, then this tuning parameter can be set as $\nu\asymp \log \bar d$ in numerical implementation. 

Theorem~\ref{thm:localconvergence} in Section \ref{sec:thoery} demonstrates that, under mild conditions on the initialization and a suitable sample size requirement, Algorithm~\ref{alg:rgradtt} guarantees that $\|\calT_{l+1}-\calT^{\ast}\|_{\rm F}\leq \gamma\cdot \|\calT_{l}-\calT^{\ast}\|_{\rm F}$ for an absolute constant $\gamma\in(0,1)$, implying a linear convergence of Algorithm \ref{alg:rgradtt}. However, it requires the initialization $\calT_0$ to be close enough to the underlying tensor $\calT^{\ast}$. Existing literature \citep{wang2016tensor} suggests a naive spectral initialization by the observed tensor $n^{-1}d^{\ast}\calP_{\Omega}(\calT^{\ast})$. It turns out that a naive spectral initialization performs poorly and only works when the sample size $n$ is exceedingly large. We now propose a novel approach, inspired by \citet{xia2017statistically} and called sequential second-order moment method, to produce a good initialization requiring only an almost optimal sample size.

\subsection{Sequential Spectral Initialization}\label{sec:initialization}
Recall that the left orthogonal decomposition of $\calT^{\ast}$ is written as $\calT^{\ast}=[T_1^{\ast},\cdots, T_m^{\ast}]$. Our initialization method yields good estimates for $T_1^*,\ldots,T_m^*$ up to orthogonal rotations. For ease of illustration, we denote by $\wt\calT = n^{-1}d^*\calP_{\Omega}(\calT^*)$ the scaled observed tensor. Due to the uniform sampling scheme, it is clear that $\EE\wt\calT = \calT^*$. Since $T_1^*$ contains the left singular vectors of $\calT^{\ast \langle i\rangle}$, the naive spectral initialization, suggested by \citet{wang2016tensor}, takes the top $r_1$ left singular vectors of $\wt\calT\lr{1}$ as an estimation for $T_1^*$. Unfortunately, this method usually performs poorly because the matrix $\calT^{\ast\langle 1\rangle}$ is of size $d_1\times (d_2\cdots d_m)$ whose number of columns can be significantly larger than $d_1$. 
 While we are only interested in a parameter from a $d_1$-dimensional space, the quality of spectral estimate from $\tilde\calT\lr{1}$ is affected by the larger dimension between $d_1$ and $d_2\cdots d_m$.
 
The second-order spectral moment method is inspired \cite{xia2017polynomial} by the fact that $T_1^*$ also contains the eigenvectors of $\calT^{\ast\langle 1\rangle}\calT^{\ast\langle 1\rangle \top}$ which is a square matrix of size $d_1\times d_1$. Therefore, \citet{xia2017polynomial} proposed a U-statistic to estimate the square matrix $\calT^{\ast\langle 1\rangle}\calT^{\ast\langle 1\rangle \top}$ directly and then applied the spectral initialization. Unlike the Tucker-format where the components can be computed independent of each other (Section 4 of \citealt{xia2017polynomial}), the computation of components in TT-format decomposition depends recursively on each other, that is, $\hat T_{i+1}$ relies on the availability of $\hat T_i$, see the TTSVD procedure in Algorithm~\ref{alg:init}. Indeed, $L(T_{i+1}^*)$ is the top $r_{i+1}$ left singular vectors of the matrix $(T^{*\leq i}\otimes I)^{\top}\calT^{\ast \langle i+1\rangle}$, implying that we shall aim to estimate the eigenvectors of $(T^{*\leq i}\otimes I)^{\top}\calT^{\ast\langle i+1\rangle}\calT^{\ast\langle i+1\rangle \top}(T^{*\leq i}\otimes I)$. Conceptually, there is no difficulty to generalize the second-order moment method for this purpose. However, on the technical front, the dependence of $\hat T^{\leq i}$ on the original data creates substantial challenges in establishing a sharp spectral perturbation bound. For simplicity, we resort to the trick of sample splitting.

For ease of illustration, some additional notations are necessary.  Without loss of generality,  assume $n=(2m-1)n_0$ for some integer $n_0$.  We split the observed sample $[n]$ into equally-sized $2m-1$ sub-samples,  each of which is of size $n_0$\footnote{We note that the {\it minimal} sample size requirement for ensuring the effectiveness of $N_i$ is distinct for different $i$'s.  This is reasonable because the dimensions of $N_i$'s change with respect to $i$.  For ease of exposition,  we set all the $n_i$'s to be equal.}.  For each $1\leq j\leq 2m-1$,  define $\calP_{\Omega_j}(\calT^{\ast}):=n_0^{-1}\sum_{k=(j-1)n_0+1}^{jn_0} \calT^{\ast}(\omega_i)\cdot \calE_{\omega_i}$.  Due to i.i.d. sampling with replacement,  we have $\EE \calP_{\Omega_j}(\calT^{\ast})=\calT^{\ast}/d^{\ast}$.  
Basically, for each $i=1,\cdots,m-1$, we use the sub-samples $\Omega_{2i-1}$ and $\Omega_{2i}$, together with the estimates $\hat T_1, \cdots, \hat T_{i-1}$,  to estimate $T_i^{\ast}$. Finally, the last sub-sample $\Omega_{2m-1}$ is used to estimate $T_m^{\ast}$.  Now, for each $i=1,2,\cdots,m-1$, define a $(d_1\cdots d_i)\times (d_1\cdots d_i)$ matrix
$$
N_i
= \frac{(d^*)^2}{2n_0^2}\left(\calP_{\Omega_{2i-1}}(\calT^*)\lr{i}\big(\calP_{\Omega_{2i}}(\calT^*)\lr{i}\big)^{\top} + \calP_{\Omega_{2i}}(\calT^*)\lr{i}\big(\calP_{\Omega_{2i-1}}(\calT^*)\lr{i}\big)^{\top}\right)
$$
Due to the independence between $\Omega_{2i-1}$ and $\Omega_{2i}$, one can easily verify $\EE N_i=\calT^{\ast \langle i\rangle} \calT^{\ast \langle i\rangle\top}$ so  that $N_i$ is an unbiased estimator.  Note that the dimension of $N_i$ becomes larger when $i$ increases.  It turns out that a direct spectral initialization by $N_i$ still performs poorly unless the sample size is greater than $d_1\cdots d_i$.  Instead,  we multiply the left and right hand side of $N_i$ by the estimated left part $\hat T^{\leq i-1}\otimes I_{d_i}$ and its transpose,  respectively.  The resultant symmetric matrix is then used for estimating the $i$-th component $T_i^{\ast}$.  See the details in Algorithm \ref{alg:init}.  So the initialization is proceeded in an iterative fashion.  Note that an additional truncation procedure is applied to guarantee the incoherence property of the estimates $\hat T_i$'s,  where $\hat T_i^1$ denotes the $1$-st row of matrix $\hat T_i$.  

After obtaining the estimates $\hat T_1, \ldots, \hat T_{m-1}$ and the respectively constructed left part $\hat T^{\leq m-1}$ , the last component  $\hat T_m$ is estimated by the minimizer of 
$$
\min\nolimits_{T_m}\big\|\hat T^{\leq m-1} T_m - n_0^{-1}d^*\calP_{\Omega_{2m-1}}(\calT^*)\lr{m-1}\big\|_{\rm F},
$$
whose solution is explicitly given by $\hat T_m: = n_0^{-1}d^*\hat T^{\leq m-1 \top}(\calP_{\Omega_{2m-1}}(\calT^*))
\lr{m-1}$.  Finally,  a trimming treatment is implemented on the reconstructed low TT-rank tensor $\hat\calT$ to ensure the desired spikiness condition.  We again remark that the numbers $\mu$ and $\nu$ are not necessarily the true spikiness and incoherence parameter of $\calT^{\ast}$,  and they are treated as tuning parameters of Algorithm \ref{alg:init}.

\begin{algorithm}
	\caption{Initialization by Sequential Spectral Initialization}\label{alg:init}
	\begin{algorithmic}
		\STATE{\textbf{Input}: Spikiness parameter $\nu$ and incoherence parameter $\mu$}
		\STATE{Set $\wt T_1$ be the top $r_1$ left singular vectors of $N_1$}
		\STATE{Truncation: $\overline T_1^{i} = \frac{\wt T_1^{i}}{\|\wt T_1^i\|_{\ell_2}}\cdot\min\{\|\wt T_1^i\|_{\ell_2},  (\mu r_1/d_1)^{1/2}\}$}  
		\STATE{Re-normalization: $\hat T_1 = \overline T_1(\overline T_1^{\top}\overline T_1)^{-1/2}$} 
		\FOR{$i = 2,\ldots,m-1$}
		\STATE{Set $L(\wt T_{i})$ to be the top $r_i$ left singular vectors of $(\hat T^{\leq i-1}\otimes I)^{\top}N_i(\hat T^{\leq i-1}\otimes I)$}
		\STATE{Truncation: $L(\overline T_i)^{j} = \frac{L(\wt T_i)^j}{\|L(\wt T_i)^j\|_{\ell_2}}\cdot\min\{\|L(\wt T_i)^j\|_{\ell_2}, \sqrt{\mu r_i/d_i}\}$}
		\STATE{Re-normalization: $L(\hat T_i) = L(\overline T_i)\big(L(\overline T_i)^{\top}L(\overline T_i)\big)^{-1/2}$}
		\ENDFOR
		\STATE{The last component: $\hat T_m = (\hat T^{\leq m-1})^{\top}\Big(\frac{d^*}{n_{0}}\calP_{\Omega_{2m-1}}(\calT^*)\Big)\lr{m-1}$}
		\STATE{Reconstruction: $\hat\calT = [\hat T_1,\ldots,\hat T_m]$}
		\STATE{\textbf{Output}: $\calT_0 = \ttsvd_{\r}(\trim_{\zeta}(\hat\calT))$ with $\zeta = \frac{10\fro{\hat\calT}}{9\sqrt{d^*}}\nu$}
	\end{algorithmic}
\end{algorithm}

\subsection{Computation of Riemannian Gradient}\label{sec:computation of pta}
In this section,  we  provide more details regarding the computation of Riemannian gradient, that is, $\calP_{\TT}(\calA)$,  where $\calA$ is a given tensor of size $d_1\times \cdot\times d_m$ and  $\TT$ is the tangent space of the TT-format tensor manifold $\mfd$ at the point $\calT$ with a left orthogonal decomposition $\calT=[T_1,\cdots, T_m]$.  By definition,  $\calP_{\TT}(\calA)$ is the projection of $\calA$ onto the tangent space $\TT$.  

Let us begin with parametrizing the tangent space $\TT$.  By Theorem~2 of \citet{holtz2012manifolds},  the parametrization of $\TT$ depends on a gauge sequence.  For simplicity and ease of exposition,  we take the gauge sequence as a sequence of identity matrices.  By \citet{holtz2012manifolds},  for any element $\calX\in\TT$,  there exist a sequence of tensors $X_1,\cdots, X_m$ with $X_i$ being a tensor of size $r_{i-1}\times d_i\times r_i$ such that $\calX$ can be explicitly written in the form
\begin{align}\label{eq:TT-formula}
\calX=\sum_{i=1}^m \delta\calX_i\quad \textrm{where the TT-format tensor   } \delta\calX_i=[T_1,\cdots, T_{i-1}, X_i, T_{i+1},\cdots, T_m].
\end{align}
The tensor $X_i$ should satisfy that $L(T_i)^{\top}L(X_i)$ is an all-zero matrix of size $r_i\times r_i$ for all $i=1,\cdots, m-1$.  There is no constraint on the last component $X_m$.  

Based on the parametrization form (\ref{eq:TT-formula}),  for an arbitrary tensor $\calA$ of size $d_1\times\cdots\times d_m$,  the Riemannian gradient $\calP_{\TT}(\calA) $ must be of the following form
$$
\calP_{\TT}(\calA) = \delta\calA_1 + \ldots + \delta\calA_m \quad \textrm{where } \delta\calA_i=[T_1,\cdots, T_{i-1}, A_i, T_{i+1},\cdots, T_m]
$$
for some tensor $A_i$ of size $r_{i-1}\times d_i\times r_i$ satisfying $L(T_i)^{\top}L(A_i)$ is an all-zero matrix for all $i=1,\cdots, m-1$.   This suggests that,  for all $i\neq j$,  the tensor $\delta\calA_i$ is orthogonal to $\delta\calA_j$,  proved in the following lemma.    
\begin{lemma}\label{lem:TT}
	Let $\calT = [T_1,\ldots,T_m]\in\mfd$ be a left orthogonal decomposition of $\calT$.  For an arbitrary $\calA$ of size $d_1\times\cdots\times d_m$,  the components $\delta\calA_{i}$'s of $\calP_{\TT}(\calA)$ satisfy 
$
\inp{\delta\calA_i}{\delta\calA_j} = 0
$
 for all $1\leq i\neq j\leq m$.  
\end{lemma}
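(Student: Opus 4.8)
The plan is to show that for $i < j$ the two TT-format tensors $\delta\calA_i = [T_1,\ldots,T_{i-1},A_i,T_{i+1},\ldots,T_m]$ and $\delta\calA_j = [T_1,\ldots,T_{j-1},A_j,T_{j+1},\ldots,T_m]$ are orthogonal, exploiting the left-orthogonality of the shared components together with the gauge condition $L(T_k)^\top L(A_k) = 0$ for $k \le m-1$. The natural computational device is the separation at index $j-1$ (assuming $i \le j-1$, which holds since $i<j$): write $\langle \delta\calA_i, \delta\calA_j\rangle = \langle (\delta\calA_i)^{\langle j-1\rangle}, (\delta\calA_j)^{\langle j-1\rangle}\rangle$, where the inner product on the right is the matrix trace inner product. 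The key observation is that both separations factor through the left part $T^{\le j-1}$-type objects built from $T_1,\ldots,T_{j-1}$, with the modification that $\delta\calA_i$ has $A_i$ inserted in the $i$-th slot while $\delta\calA_j$ keeps all of $T_1,\ldots,T_{j-1}$.

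Concretely, I would first set up the factorizations of the separations. Using the recursive relation $T^{\le k} = (T^{\le k-1}\otimes I_{d_k})L(T_k)$ from the preliminaries, one gets $(\delta\calA_j)^{\langle j-1\rangle} = T^{\le j-1} \cdot B_j$ for a suitable matrix $B_j$ encoding $A_j$ and $T_{j+1},\ldots,T_m$ (the right part beyond index $j-1$), while $(\delta\calA_i)^{\langle j-1\rangle} = \widetilde{T}^{\le j-1}\cdot C$, where $\widetilde{T}^{\le j-1}$ is the same left-part construction but with $T_i$ replaced by $A_i$, and $C$ is the right part. Then $\langle(\delta\calA_i)^{\langle j-1\rangle},(\delta\calA_j)^{\langle j-1\rangle}\rangle = \mathrm{tr}\big(C^\top (\widetilde{T}^{\le j-1})^\top T^{\le j-1} B_j\big)$, so it suffices to prove $(\widetilde{T}^{\le j-1})^\top T^{\le j-1} = 0$. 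This last identity is the crux: by repeatedly applying $T^{\le k} = (T^{\le k-1}\otimes I)L(T_k)$ and the analogous expansion of $\widetilde T^{\le k}$, and using $L(T_k)^\top L(T_k) = I$ for the indices $k \ne i$ (Lemma~\ref{lem:tt-orthogonal} and left-orthogonality) to collapse all the matching blocks, one is left precisely with a factor $L(A_i)^\top L(T_i) = 0$ coming from the $i$-th slot. (A symmetric argument covering the case where one wants to peel from the right using $T^{\ge k} = R(T_k)(I\otimes T^{\ge k+1})$ handles any bookkeeping asymmetry, but one separation suffices since $i < j$ guarantees $i \le j-1$.)

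I expect the main obstacle to be purely notational rather than conceptual: carefully tracking the Kronecker-product structure of the left parts $T^{\le k}$ and $\widetilde T^{\le k}$ through the induction, making sure the gauge condition is applied to the correct slot $i$, and verifying that every other block telescopes cleanly via left-orthogonality. A clean way to organize this is to prove by downward/upward induction on $k$ (from $k=1$ up to $k = j-1$) the intermediate claim that $(\widetilde T^{\le k})^\top T^{\le k} = 0$ for all $k \ge i$, with the base case $k = i$ being exactly $L(A_i)^\top L(T_i) = 0$ after using $(T^{\le i-1})^\top T^{\le i-1} = I$, and the inductive step using $(\widetilde T^{\le k})^\top T^{\le k} = L(T_k)^\top\big((\widetilde T^{\le k-1})^\top T^{\le k-1}\otimes I_{d_k}\big)L(T_k)$. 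Once $(\widetilde T^{\le j-1})^\top T^{\le j-1} = 0$ is in hand, the conclusion $\langle \delta\calA_i,\delta\calA_j\rangle = 0$ is immediate, and by symmetry of the roles of $i$ and $j$ in the statement this covers all $i \ne j$.
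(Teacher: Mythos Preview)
Your argument is correct, but the paper's proof is shorter because it separates at index $i$ rather than $j-1$. With the $i$-th separation one has directly
\[
(\delta\calA_i)^{\langle i\rangle} = (T^{\le i-1}\otimes I)L(A_i)\,T^{\ge i+1},\qquad
(\delta\calA_j)^{\langle i\rangle} = (T^{\le i-1}\otimes I)L(T_i)\,\tilde A_j^{\ge i+1},
\]
so after collapsing the common orthonormal factor $(T^{\le i-1}\otimes I)$ the inner product contains $L(A_i)^\top L(T_i)=0$ immediately, with no induction needed. Your choice of separating at $j-1$ forces you to propagate the vanishing from slot $i$ up to slot $j-1$ via the inductive identity $(\widetilde T^{\le k})^\top T^{\le k} = L(T_k)^\top\big((\widetilde T^{\le k-1})^\top T^{\le k-1}\otimes I\big)L(T_k)$; this is perfectly valid and in fact proves the stronger auxiliary fact $(\widetilde T^{\le k})^\top T^{\le k}=0$ for all $k\ge i$, but it is more work than the statement requires. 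The takeaway: when two TT tensors differ only at slots $i$ and $j$ with $i<j$, separating at the \emph{smaller} index isolates the orthogonality condition in a single step.
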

\begin{proof}
	Without loss of generality,   assume $i<j$. Then we have
	\begin{align*}
		\inp{\delta\calA_i}{\delta\calA_j} = \inp{(\delta\calA_i)\lr{i}}{(\delta\calA_j)\lr{i}}= \inp{(T^{\leq i-1}\otimes I)L(A_i)T^{\geq i+1}}{(T^{\leq i-1}\otimes I)L(T_i)\tilde{A}_j^{\geq i+1}} = 0,
	\end{align*}
	where the last equality is due to the facts that $(T^{\leq i-1}\otimes I)$ has orthonormal columns and $L(A_i)^{\top}L(T_i)$ is an all-zero matrix.   Here,  for simplicity,  we denote $(\delta\calA_{j})\lr{i} = T^{\leq i}\tilde{A}_j^{\geq i+1}$ for some matrix $\tilde{A}_j^{\geq i+1}$.
\end{proof}

Due to this orthogonality property of Lemma~\ref{lem:TT},   for all $i\in[m-1]$,  determining $\delta\calA_i$ is equivalent to solving the following individual optimization problem
\begin{align}\label{tangent:leqm-1}
	\min_{A_i}\fro{\calA - \delta\calA_{i}}, \text{~~~s.t.~} \delta\calA_i = [T_1,\ldots,A_i,\ldots,T_m] \textrm{~~and~~} L(A_i)^{\top}L(T_i) = 0
\end{align}
For the last component,  it suffices to solve
\begin{align}\label{tangent:m}
	\min_{A_m}\fro{\calA - \delta\calA_{m}}, \text{~~~s.t.~} \delta\calA_m = [T_1,\ldots,T_{m-1},A_m]
\end{align}
Finally, based on (\ref{tangent:leqm-1}) and (\ref{tangent:m}),  we can obtain a closed-form solution of $A_i, i\in[m]$,  represented in the form of $L(A_i)$,  by
\begin{align}\label{tangent:W}
	L(A_i) = \left\{
	\begin{array}{rlc}
		&(I - L(T_i)L(T_i)^{\top})(T^{\leq i-1}\otimes I)^{\top}\calA\lr{i}(T^{\geq i+1})^{\top}(T^{\geq i+1}(T^{\geq i+1})^{\top})^{-1}, &i\in[m-1]\\
		&(T^{\leq m-1}\otimes I)^{\top}\calA\lr{m}, & \textrm{if~~} i = m.
	\end{array}
	\right.
\end{align}
This yields the way of computing the Riemannian gradient $\calP_{\TT}(\calA)$.

\section{Exact Recovery and Convergence Analysis}\label{sec:thoery}
In this section, we prove the validity of the sequential second-order spectral initialization and linear convergence behaviour of the RGrad algorithm  so that the underlying tensor can be exactly recovered with an almost optimal sample size of observed entries.  
For ease of exposition, we assume $r_1,\ldots, r_{m-1}$ are of the same order and denote $\rmax$ an upper bound for them. The sample size requirement in more general cases of $r_i$'s can be found from Theorem~\ref{thm:localconvergence:detail} and Theorem~\ref{thm:init:detail} in the Appendix. Recall the notations $\dmax=\max_j d_j$ and $d^{\ast}=d_1\cdots d_m$. The smallest singular value and condition number of $\calT^{\ast}$ are denoted by $\sigmamin$ and $\kappa_0$, respectively. Our main theorem of exact recovery is described as follows. 

\begin{theorem}\label{thm:main}
Suppose that $\calT^{\ast}$ is of size $d_1\times\cdots\times d_m$ with a TT-rank $\boldsymbol{r} = (r_1,\ldots,r_{m-1})$ whose spikiness is bounded by $\spiki(\calT^*) \leq \nu$. Let $\calT_0$ be initialized by the sequential second-order method as Algorithm \ref{alg:init} and $\{\calT_l\}_{l=1}^{l_{\textsf{max}}}$ be the iterates produced by Algorithm \ref{alg:rgradtt} where $l_{\textsf{max}}$ is the maximum number of iterations, and the stepsize $\alpha=0.12n^{-1}d^{\ast}$. There exist absolute constants $C_{m,1}, C_{m, 2}>0$ depending only on $m$ such that if the sample size $n$ satisfies 
$$
n \geq C_{m,1}\cdot \big(\kappa_0^{4m-4}\nu^{m+3} (d^{\ast})^{1/2}\rmax^{(5m-9)/2}\log^{m+2}\dmax + \kappa_0^{4m+8}\nu^{2m+2}\dmax\rmax^{3m-3}\log^{2m+4}\dmax\big),
$$
then with probability at least $1- (2m+4)\dmax^{-m}$, for any $\varepsilon\in(0,1)$, after $l_{\textsf{max}}=\lceil C_{m,2}\cdot\log(\sigmamin/\varepsilon) \rceil$ iterations, the final output achieves error $\|\calT_{l_{\textsf{max}}}-\calT^{\ast}\|_{\rm F}\leq \varepsilon$. 
\end{theorem}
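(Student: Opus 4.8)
The statement is the composition of two modular facts: (i) Algorithm~\ref{alg:init} returns a warm start, and (ii) Algorithm~\ref{alg:rgradtt} converges linearly from any warm start. Fact (ii) is Theorem~\ref{thm:localconvergence} (with Theorem~\ref{thm:localconvergence:detail} for general ranks): there are absolute constants $\gamma\in(0,1)$ and $c_0>0$, depending only on $m$, such that if $n$ obeys the displayed bound and the initial iterate satisfies $\fro{\calT_0-\calT^{\ast}}\leq c_0\,\sigmamin$ and $\spiki(\calT_0)\lesssim\nu$, then on a single event (uniform over all iterates, which lie on a bounded-rank manifold) every RGrad step contracts, $\fro{\calT_{l+1}-\calT^{\ast}}\leq\gamma\,\fro{\calT_{l}-\calT^{\ast}}$, and the trimming step preserves the spikiness bound. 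Granting this, $\fro{\calT_{l}-\calT^{\ast}}\leq\gamma^{l}c_0\sigmamin$, so with $l_{\textsf{max}}=\lceil C_{m,2}\log(\sigmamin/\varepsilon)\rceil$ and $C_{m,2}=1/\log(1/\gamma)$ (absorbing $\log c_0<0$) we get $\fro{\calT_{l_{\textsf{max}}}-\calT^{\ast}}\leq\varepsilon$. It thus remains to verify that $\calT_0$ from Algorithm~\ref{alg:init} meets the two requirements, which is the content of Theorem~\ref{thm:init:detail}.

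\textbf{Analyzing the initialization.} Since $\Omega$ splits into $2m-1$ groups of common size $n_0\asymp n/(2m-1)$, the hypothesis on $n$ forces each $n_0$ to satisfy the per-stage requirement once $2m-1$ is absorbed into $C_{m,1}$, and $\spiki(\calT^{\ast})\leq\nu$ together with Lemma~\ref{lemma:spiki implies incoh} gives $\incoh(\calT^{\ast})\leq\nu\kappa_0$. For each $i$ one first bounds $\op{N_i-\calT^{\ast\langle i\rangle}\calT^{\ast\langle i\rangle\top}}$ by a matrix-Bernstein argument, using the independence of $\Omega_{2i-1}$ and $\Omega_{2i}$ and the incoherence bound; this is where the $\dmax\rmax^{3m-3}$-type term in $n$ enters. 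One then inducts on $i$: assuming $\hat T^{\leq i-1}$ is close, up to an orthogonal rotation, to $T^{\ast\leq i-1}$, the compressed matrix $(\hat T^{\leq i-1}\otimes I)^{\top}N_i(\hat T^{\leq i-1}\otimes I)$ differs from its population version by the sum of the $N_i$ fluctuation and a previous-stage error term, while its $r_i$-th eigengap stays $\gtrsim\sigma_{r_i}^2(\calT^{\ast\langle i\rangle})\geq\sigmamin^2$; a Davis--Kahan / $\sin\Theta$ bound propagates the error to $\hat T_i$, and the truncation and re-normalization steps are shown to preserve the subspace accuracy while enforcing $\|L(\hat T_i)^{j}\|_{\ell_2}\leq\sqrt{\mu r_i/d_i}$. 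The last component $\hat T_m$ is handled by its closed-form least-squares expression. Feeding the component-wise errors into the TT perturbation bound (Lemma~\ref{lemma:tt-perturbation}) yields $\fro{\hat\calT-\calT^{\ast}}$; since $\nu\gtrsim\spiki(\calT^{\ast})$ the trimming is essentially non-expansive on $\calT^{\ast}$ and the final $\ttsvd_{\r}$ inflates the error only by a factor $1+o(1)$ (again Lemma~\ref{lemma:tt-perturbation}). Taking $C_{m,1}$ large enough forces $\fro{\calT_0-\calT^{\ast}}\leq c_0\sigmamin$, and trimming gives $\spiki(\calT_0)\lesssim\nu$.

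\textbf{Assembling the probability bound.} Intersect the event from Theorem~\ref{thm:localconvergence} with the $2m-1$ events controlling the initialization stages (each of failure probability $\lesssim\dmax^{-m}$) and the event for the last component; a union bound caps the total failure probability at $(2m+4)\dmax^{-m}$, and on its complement the chain of inequalities above holds, giving the claim.

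\textbf{Main obstacle.} The crux is the recursive initialization analysis. The matrices $N_i$ have dimension $d_1\cdots d_i$, which grows with $i$, so the argument is only viable after compressing by the \emph{data-dependent} map $\hat T^{\leq i-1}\otimes I$; one must show this random compression simultaneously preserves a $\sigmamin^2$-order eigengap and a controlled perturbation, and that the interleaved truncation / re-normalization maintains a bounded running incoherence without sacrificing accuracy. This coupling of stages — absent in the one-pass Tucker spectral estimate of \citet{xia2017polynomial} — is what propagates the $\kappa_0$ and $\rmax$ powers through the bound and constitutes the technically heaviest part of the proof.
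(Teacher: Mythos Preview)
Your proposal is correct and follows essentially the same route as the paper: the proof of Theorem~\ref{thm:main} is just the assembly of Lemma~\ref{thm:init} (warm start) and Lemma~\ref{thm:localconvergence} (linear contraction), followed by the iteration count $l_{\textsf{max}}=\lceil C_{m,2}\log(\sigmamin/\varepsilon)\rceil$ and a union bound giving $1-(2m+4)\dmax^{-m}$.

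Two small points of alignment. First, the warm-start hypothesis in Lemma~\ref{thm:localconvergence} is stated as $\fro{\calT_0-\calT^{\ast}}\leq \sigmamin/(Cm\kappa_0\rmax^{1/2})$ and $\incoh(\calT_0)\leq 2\kappa_0^2\nu$, not $\fro{\calT_0-\calT^{\ast}}\leq c_0\sigmamin$ and $\spiki(\calT_0)\lesssim\nu$; the initialization lemma is calibrated to deliver exactly those (incoherence, not spikiness, and with the $\kappa_0\rmax^{1/2}$ factor), so make sure your ``$c_0$'' carries those factors through. Second, in the initialization sketch the paper does not bound $\op{N_i-\calT^{\ast\langle i\rangle}\calT^{\ast\langle i\rangle\top}}$ directly (that matrix is too large when $i\geq 2$); it bounds the already-compressed quantity $\op{(\hat T^{\leq i-1}\otimes I)^{\top}(N_i-N_i^{\ast})(\hat T^{\leq i-1}\otimes I)}$ via Lemma~\ref{lemma:init:1}, exploiting the incoherence of $\hat T^{\leq i-1}$ and the sample splitting to decouple $\hat T^{\leq i-1}$ from $\Omega_{2i-1},\Omega_{2i}$. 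Your ``main obstacle'' paragraph identifies this correctly, but your earlier sentence ``one first bounds $\op{N_i-\ldots}$ by a matrix-Bernstein argument'' should be adjusted to reflect that the concentration is established only after compression.
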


If the tensor dimension is balanced $d_j\asymp d$ for all $j$, rank $\rmax=O(1)$ and $\calT^{\ast}$ is well conditioned in that $\kappa_0, \nu=O(1)$, the sample size requirement of Theorem~\ref{thm:main} simplifies to $O_m(d^{m/2}\cdot \textsf{\small Polylog(d)})$. This improves the existing result \citep{imaizumi2017tensor} based on matricization and matrix nuclear norm penalization, which requires sample size $O_{m}(d^{\frac{m+1}{2}}\cdot \textsf{\small Polylog(d)})$ when $m$ is odd.  Moreover, for the case $m=3$, \citet{barak2016noisy} conjectures that, based on the reduction to Boolean satisfiability problem, $O(d^{3/2})$ is a lower bound for the sample size such that polynomial-time algorithm exists for exact tensor completion. Therefore, suppose $\rmax = O(1)$, the sample size requirement of Theorem \ref{thm:main} is likely optimal, at least for $m=3$, up the logarithmic factors if only polynomial-time algorithms are sought. 

We note that, in the sample size requirement, the exponent on rank $r$ still depends on the order of tensor, due to technical difficulties. The recursive nature of computing a TT decomposition substantially complicates the theoretical analysis, involving repeated appearances of the incoherence parameters and TT rank. Interestingly, the required number of iterations $l_{\textsf{max}}$ is free of the condition number, which often appears in decomposition-based algorithms \citep{cai2021nonconvex, han2020optimal}, except the recently proposed scaled gradient descent algorithm \citep{tong2021scaling}. 

The proof of Theorem \ref{thm:main} relies on two essential parts: the local convergence of warm-initialized Algorithm \ref{alg:rgradtt} and  the validity of initialization by Algorithm~\ref{alg:init}, which are separately dealt with in the subsequent sections. 

\subsection{Local Convergence of Riemannian Gradient Descent}
We study the local convergence of Algorithm~\ref{alg:rgradtt} in a small neighbourhood of the global minimizer, namely $\calT^{\ast}$ for our case. 
Lemma \ref{thm:localconvergence} dictates that Algorithm \ref{alg:rgradtt} converges linearly to $\calT^{\ast}$ as long as the initialization, be it obtained from our proposed Algorithm~\ref{alg:init} or other approaches,  $\calT_0$ is sufficiently close to $\calT^{\ast}$ and a sample size condition holds.  The proof of Lemma~\ref{thm:localconvergence} is postponed to the Appendix. 
\begin{lemma}\label{thm:localconvergence}
Suppose the conditions on $\calT^{\ast}$ from Theorem~\ref{thm:main} hold, and the initialization $\calT_0$ satisfies 
$$
	\fro{\calT_0 - \calT^*}\leq \frac{\sigmamin}{Cm\kappa_0\rmax^{1/2}}\quad {\rm and}\quad  \incoh(\calT_0)\leq 2\kappa_0^2\nu
$$
for a sufficiently large but absolute constant $C>0$. There exists an absolute constant $C_m>0$ depending only on $m$ such that if the sample size $n$ satisfies
\begin{align*}
		n \geq C_m\cdot \Big(\kappa_0^{2m+4}\nu^{m+1}(d^*)^{1/2}\rmax^{\frac{m+3}{2}\vee (m-1/2)}\log^{m+2}\dmax
		+\kappa_0^{4m+8}\nu^{2m+2}\dmax\rmax^{(m+3)\vee (2m-1)}\log^{2m+4}\dmax\Big),
\end{align*}
then with probability at least $1-(m+4)\dmax^{-m}$, the sequence $\{\calT_l\}_{l=1}^{\infty}$ generated by Algorithm \ref{alg:rgradtt} with a constant step size $\alpha = 0.12n^{-1}d^*$ satisfy
	$$
	\fro{\calT_{l} - \calT^*}^2 \leq 0.975\cdot \fro{\calT_{l-1} - \calT^*}^2.
	$$
for all $l=1,2,\cdots$. 
\end{lemma}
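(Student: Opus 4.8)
\textbf{Proof proposal for Lemma~\ref{thm:localconvergence}.}

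The plan is to analyze one step of the iteration and show a contraction in the Frobenius error, while simultaneously carrying along the incoherence invariant so that the argument can be iterated. Fix an iterate $\calT_l\in\mfd$ with $\fro{\calT_l-\calT^{\ast}}$ small and $\incoh(\calT_l)\leq 2\kappa_0^2\nu$. Write $\calW_l = \calT_l - \alpha\calP_{\TT_l}\calG_l$ with $\calG_l = \calP_{\Omega}(\calT_l-\calT^{\ast})$, and decompose the error of the pre-retraction iterate as
\begin{align*}
\calW_l - \calT^{\ast} = \big(\calT_l - \calT^{\ast} - \alpha\calP_{\TT_l}(\calT_l-\calT^{\ast})\big) + \alpha\calP_{\TT_l}\big((\calT_l-\calT^{\ast}) - \calP_{\Omega}(\calT_l-\calT^{\ast})\big) + \big(\calT_l - \calT^{\ast} - \calP_{\TT_l}(\calT_l-\calT^{\ast})\big).
\end{align*}
Wait --- better to group it as the ``deterministic'' contraction term $\calT_l-\calT^{\ast} - \alpha\calP_{\TT_l}(\calT_l-\calT^{\ast})$ plus the ``statistical fluctuation'' term $\alpha\calP_{\TT_l}(\calI - \calP_{\Omega})(\calT_l-\calT^{\ast})$. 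For the first term, since $\alpha = 0.12 n^{-1}d^{\ast}$ and $\EE\calP_{\Omega} = (n/d^{\ast})\calI$, the operator $\alpha\calP_{\TT_l}\calP_{\Omega}\calP_{\TT_l}$ concentrates around $0.12\,\calP_{\TT_l}$, so on the tangent space the map $\calT_l-\calT^{\ast}\mapsto \calT_l-\calT^{\ast}-\alpha\calP_{\TT_l}\calG_l$ behaves like multiplication by $(1-0.12)$ on the part of $\calT_l-\calT^{\ast}$ lying near $\TT_l$, and the part orthogonal to $\TT_l$ is quadratically small because $\calT_l,\calT^{\ast}$ are both on the manifold and close (a standard curvature/second-order bound for the tangent space of a low-rank manifold, here $\mfd$). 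This requires a restricted-isometry-type estimate for $\calP_{\TT_l}\calP_{\Omega}\calP_{\TT_l}$ on the (low-dimensional) tangent space, which follows from a matrix Bernstein / $\epsilon$-net argument, using incoherence of $\calT_l$ to control $\twoinf{\cdot}$-type norms of elements of $\TT_l$ --- this is where the sample size lower bound with the $(d^{\ast})^{1/2}\rmax^{\cdots}$ term enters.

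Next I would handle the retraction. Since $\calT_{l+1} = \ttsvd_{\r}(\wt\calW_l)$ with $\wt\calW_l = \trim_{\zeta_l}(\calW_l)$, I use Lemma~\ref{lemma:tt-perturbation} (the sharp TTSVD perturbation bound with leading constant $1$, free of condition number) to get $\fro{\calT_{l+1}-\calT^{\ast}}\leq (1+o(1))\fro{\wt\calW_l - \calT^{\ast}}$, and then I must argue that trimming does not increase the error much, i.e. $\fro{\wt\calW_l - \calT^{\ast}}\leq (1+o(1))\fro{\calW_l-\calT^{\ast}}$; this uses that $\calT^{\ast}$ itself satisfies the spikiness bound so its entries are below (a constant times) the threshold $\zeta_l$, hence trimming only moves $\calW_l$ closer to the feasible ``box'' containing $\calT^{\ast}$, and $\fro{\calW_l}\approx\fro{\calT^{\ast}}$ because $\calW_l-\calT^{\ast}$ is small. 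Combining, $\fro{\calT_{l+1}-\calT^{\ast}}^2 \leq (1+o(1))^2\big((1-0.12)^2 + o(1)\big)\fro{\calT_l-\calT^{\ast}}^2$, and choosing the constants so that $(1+o(1))^2(0.88^2 + o(1)) \leq 0.975$ gives the claimed contraction. I also need to verify the incoherence invariant propagates: the trimming step forces $\spiki(\wt\calW_l)=O(\nu)$ directly, and then $\incoh(\calT_{l+1})\leq 2\kappa_0^2\nu$ follows from Lemma~\ref{lemma:spiki implies incoh} applied to $\calT_{l+1}=\ttsvd_{\r}(\wt\calW_l)$ together with a perturbation bound showing $\ttsvd_{\r}(\wt\calW_l)$ has spikiness comparable to that of $\wt\calW_l$ (again using $\fro{\wt\calW_l-\calT^{\ast}}$ small and $\sigmamin(\calT^{\ast})$ not too small), closing the induction.

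The main obstacle, I expect, is the fluctuation term $\alpha\calP_{\TT_l}(\calI-\calP_{\Omega})(\calT_l-\calT^{\ast})$, specifically getting its Frobenius norm below $o(1)\cdot\fro{\calT_l-\calT^{\ast}}$ under a sample size that is only $\widetilde O((d^{\ast})^{1/2}\cdot\mathrm{poly}(r,\kappa_0,\nu))$ rather than $\widetilde O(d^{\ast})$. The naive bound $\op{\calI-\calP_{\Omega}}$ on the full ambient space is far too weak; instead one must exploit that both $\calT_l-\calT^{\ast}$ \emph{and} the output live in (or near) the union of low-rank tangent spaces, so that the relevant quantity is a supremum of $\langle \calP_{\Omega}\calX, \calY\rangle - \langle\calX,\calY\rangle$ over $\calX$ near $\TT_l$ and $\calY\in\TT_l$ in the unit ball. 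This is where the recursive TT structure bites: an element of $\TT_l$ is a sum of $m$ terms $[T_1,\dots,X_i,\dots,T_m]$, each of which, after separation, is a product involving left parts $T^{\leq j}$ and right parts $T^{\geq j}$, and bounding the $\ell_\infty$/row-norm of such products requires chaining the incoherence bounds on \emph{all} the components --- which is exactly the source of the $\rmax^{(m-1/2)}$ and $\kappa_0^{2m+4}$ factors. I would set this up as a matrix-Bernstein bound on $\sum_{i}\big(\EE[\langle\calE_{\omega_i},\calX\rangle\langle\calE_{\omega_i},\calY\rangle] - \langle\calE_{\omega_i},\calX\rangle\langle\calE_{\omega_i},\calY\rangle\big)$ after fixing a net over the (low-dimensional) parameter spaces of $\TT_l$ and over the TT-decomposition of $\calT_l$, with the per-term variance and $\ell_\infty$ bounds controlled via Lemma~\ref{lem:tt-orthogonal} and the propagated incoherence; passing from the net back to all of $\TT_l$ uses a Lipschitz/perturbation argument for how $\TT_l$ and its elements depend on $\calT_l$. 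A secondary but genuine difficulty is keeping the leading constants sharp enough (the $1+o(1)$ in Lemma~\ref{lemma:tt-perturbation}, the $0.12$ stepsize, the trimming blow-up) so that the product of all the $(1+o(1))$ factors still lands below $\sqrt{0.975}$; this forces the ``$o(1)$'' slack in each estimate to be made quantitative in terms of the sample size, which is why the theorem needs the stated $\mathrm{poly}\log\dmax$ powers rather than a single $\log$.
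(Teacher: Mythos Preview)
Your high-level skeleton (one-step contraction + retraction bound + trimming + incoherence propagation) matches the paper, and your treatment of trimming and of the TTSVD retraction via Lemma~\ref{lemma:tt-perturbation} is essentially what the paper does. But the heart of the argument --- controlling the ``fluctuation'' term at the near-optimal sample size --- is where your proposal diverges, and there is a real gap.

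You propose a restricted-isometry estimate for $\calP_{\TT_l}\calP_{\Omega}\calP_{\TT_l}$ via matrix Bernstein plus an $\varepsilon$-net over $\TT_l$, and you gesture at a Lipschitz argument to handle the dependence of $\TT_l$ on $\Omega$. This is precisely the approach of \citet{wei2016guarantees}, and it is exactly what forces the stringent initialization $\fro{\calT_0-\calT^{\ast}}=o((n/d^{\ast})^{1/2})\sigmamin$ that the paper explicitly sets out to avoid. The issue is that a uniform RIP over a neighborhood of tangent spaces (which is what the net-plus-Lipschitz argument buys) costs you a factor of $d^{\ast}/n$ in the fluctuation bound, and this cannot be absorbed unless the initialization is already that small. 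Your proposal does not contain a mechanism to break this barrier.

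The paper's route is genuinely different. It never proves an RIP on the random space $\TT_l$. Instead it expands $\fro{\calW_l-\calT^{\ast}}^2$ as a quadratic in $\alpha_l$, isolating the three scalar quantities
\[
\RN{1}=\inp{\calT_l-\calT^{\ast}}{\calP_{\Omega}(\calT_l-\calT^{\ast})},\quad
\RN{2}=\inp{\calP_{\TT_l}^{\perp}(\calT_l-\calT^{\ast})}{\calP_{\Omega}(\calT_l-\calT^{\ast})},\quad
\RN{3}=\fro{\calP_{\TT_l}\calP_{\Omega}(\calT_l-\calT^{\ast})}^2,
\]
and bounds each using only \emph{fixed} (non-random) concentration events: an RIP on $\TT^{\ast}$ (not $\TT_l$); a uniform empirical-process bound $\beta_n(\gamma_1,\gamma_2)$ over all tensors with prescribed $\ell_\infty$ and nuclear norm; and the single deterministic spectral bound $\op{(\calP_{\Omega}-\tfrac{n}{d^{\ast}}\calI)(\calJ)}$ on the all-ones tensor. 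The key device for $\RN{3}$ is a telescoping decomposition of $\calT_l-\calT^{\ast}$ into TT-format pieces $\calY_{i,j}$ (changing one component at a time) and of each $\delta\calX_i\in\TT_l$ into pieces $\calX_{i,k}$, so that every cross term $\inp{(\calP_\Omega-\tfrac{n}{d^{\ast}}\calI)\calY_{i,j}}{\calX_{i,k}}$ is a bilinear form in two \emph{explicit} low-rank TT tensors; Lemma~\ref{lemma:inp:lowrank} then bounds each such term by the fixed spectral quantity times a product of componentwise norms controlled by incoherence. The ``diagonal'' terms ($j=i$, $k=0$) are handled separately via fixed projections $\calP^{(i)}$ built from $\calT^{\ast}$ alone. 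This decomposition is the missing idea; it is what decouples the randomness of $\calT_l$ from the randomness of $\Omega$ and yields the improved initialization condition.

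A smaller point: your heuristic that the tangent-space part contracts like $(1-0.12)$ is not how the numbers work out. Because $\RN{3}\approx (n/d^{\ast})^2\fro{\calT_l-\calT^{\ast}}^2$ with constant near $1$ (not $o(1)$), the actual contraction factor on $\fro{\calW_l-\calT^{\ast}}^2$ is of the form $1-c_1\alpha(n/d^{\ast})+c_2\alpha^2(n/d^{\ast})^2$ with $c_1\approx 0.48$ and $c_2\approx 1$, which at $\alpha(n/d^{\ast})=0.12$ gives only $\approx 0.96$, not $0.77$. This is why the final constant is $0.975$ rather than something much smaller.
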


Lemma~\ref{thm:localconvergence} dictates that the error contracts at a constant rate which is strictly smaller than $1$.  One interesting fact of our results is that the contraction rate is independent of the condition number,   which is the reason that $l_{\textsf{max}}$ in Theorem \ref{thm:main} is free of $\kappa_0$. It suggests that the Riemannian gradient descent algorithm converges fast even for very ill-conditioned tensors, improving the existing results \citep{jain2014provable} and \citep{cai2021nonconvex}. Recently, \citet{tong2021scaling} introduced a scaled gradient descent algorithm to remove the dependence on the condition number, where the rescaling procedure plays a role of re-conditioning. Interestingly, Riemannian gradient descent algorithm automatically achieves this performance without the need to re-scaling. This is perhaps an intrinsic advantage of the manifold-type algorithms (see also \citealt{cai2021generalized}). We note that the contraction rate $0.975$ is improvable but no further efforts are made for that purpose. 

In the case $\rmax, \kappa_0, \nu=O(1)$, the sample size required by Lemma \ref{thm:localconvergence} is $O_m(d^{m/2}\cdot \textsf{\small Polylog(d)})$, which matches that required by Theorem \ref{thm:main}. However, if $\rmax$ grows with $\dmax$, the exponent on $\rmax$ is slightly better than that of Theorem \ref{thm:main}. 
	
If $m=2$, a TT-format tensor is merely a matrix and the left orthogonal decomposition reduces a decomposition with an orthogonal matrix on left hand side.  The convergence analysis of Riemannian gradient descent algorithm for matrix completion was investigated by \citet{wei2016guarantees}, showing that the algorithm converges linearly if the initialization is so good that $\|\calT_0-\calT^{\ast}\|_{\rm F}=o\big((n/d^{\ast})^{1/2}\big)\cdot \sigmamin$. This is very restrictive since, oftentimes, the desired sample size $n$ is only of order $d^{\ast 1/2}$. It is speculated that this gap is due to technical reasons, deriving from the special form of Riemannian gradient where the existing strategy \citep{candes2009exact} simply fails, but the issue has never been really resolved. Taking a more sophisticated approach, by integrating prior tools from \citet{xia2017polynomial, cai2021generalized} and \citet{tong2021scaling}, we finally provide an affirmative answer that the initialization condition can indeed be relaxed to the typical ones required by other rivalry algorithms. Lemma \ref{thm:localconvergence} suggests that, if $m, \rmax, \kappa_0$ are bounded, we only require the initialization satisfies $\|\calT_0-\calT^{\ast}\|\leq c\cdot \sigmamin$ for a small enough but absolute constant $c>0$. This holds for higher-order TT-format tensors and is not restricted to matrices.

\subsection{Initialization by Sequential Second-Order Spectral Method}
As stated in Lemma~\ref{thm:localconvergence}, the warm initialization is of crucial importance to ensure the convergence of Algorithm \ref{alg:rgradtt}. 
Now we show that our proposed sequential second-order spectral initialization, stated in Algorithm \ref{alg:init}, can indeed, with high probability, deliver an estimate close to $\calT^*$ under a nearly optimal sample size condition. While our algorithm is inspired by \citet{xia2017polynomial}, the theoretical investigation turns out to be more challenging due to the recursive nature of Algorithm \ref{alg:init}. Compared with \citet{xia2017polynomial}, a major challenge in our proof is to establish the concentration of $(\hat T^{\leq i-1}\otimes I)^{\top}N_i (\hat T^{\leq i-1}\otimes I)$ rather than the concentration of $N_i$ itself. Actually, the concentration of $N_i$ is poor because its dimension can be quite large. By multiplying both sides with the incoherent matrix $\hat T^{\leq i-1}$, the resultant smaller matrix enjoys much better concentration.  The proof of Lemma~\ref{thm:init} is relegated to the Appendix. 

\begin{lemma}\label{thm:init}
Suppose the conditions of $\calT^{\ast}$ from Theorem \ref{thm:main} hold. For any absolute constant $C>0$, there exists an absolute constant $C_m>0$ depending only on $m$ such that if 
$$
n\geq C_m\nu^{m+3}\kappa_0^{4m-4}((d^\ast)^{1/2}\rmax^{(5m-9)/2} + \dmax\rmax^{3m-4})\log^2 \dmax,
$$
then with probability at least $1 - m\dmax^{-m}$, the output of Algorithm \ref{alg:init} satisfies 
$$
\fro{\calT_0-\calT^*} \leq \frac{\sigmamin}{Cm\kappa_0^2\rmax^{1/2}}\quad \text{~and~}\quad \incoh(\calT_0)\leq 2\kappa_0^2\nu.
$$
\end{lemma}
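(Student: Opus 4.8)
\textbf{Proof proposal for Lemma~\ref{thm:init}.}

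The plan is to analyze Algorithm~\ref{alg:init} one component at a time, propagating a perturbation bound from $\hat T_1$ through $\hat T_{m-1}$ and finally to $\hat T_m$, and then to convert the accumulated componentwise errors into a bound on $\fro{\hat\calT-\calT^\ast}$ via the multilinear structure of the TT reconstruction. First I would set up the spectral-perturbation machinery at the base step: since $\EE N_1 = \calT^{\ast\langle 1\rangle}\calT^{\ast\langle 1\rangle\top}$ and $N_1$ is an unbiased U-statistic built from two independent subsamples $\Omega_1,\Omega_2$, I would bound $\op{N_1 - \EE N_1}$ using a Bernstein-type / matrix-concentration argument (as in \citet{xia2017polynomial}), controlling the variance proxy and the uniform entrywise bound via the incoherence/spikiness of $\calT^\ast$. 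Given $\op{N_1-\EE N_1}\le \sigmamin^2/(\text{poly})$, the Davis–Kahan $\sin\Theta$ theorem yields a bound on the subspace distance between $L(\wt T_1)$ and $L(T_1^\ast)$; the truncation and re-normalization steps then need to be checked to only improve (or at worst preserve up to constants) both the incoherence and the subspace error — this is a standard but slightly delicate argument showing $\incoh(\hat T_1)\lesssim\kappa_0^2\nu$ and $\|\hat T_1\hat T_1^\top - T_1^\ast T_1^{\ast\top}\| = o(1)$ simultaneously.

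The inductive step is the technical heart. Assuming good control of $\hat T^{\leq i-1}$ — meaning both a small subspace error $\|\hat T^{\leq i-1}(\hat T^{\leq i-1})^\top - T^{\ast\leq i-1}(T^{\ast\leq i-1})^\top\|$ and incoherence $\incoh(\hat T^{\leq i-1})\lesssim\kappa_0^2\nu$ — I would analyze the matrix $(\hat T^{\leq i-1}\otimes I)^\top N_i(\hat T^{\leq i-1}\otimes I)$. The key decomposition is to compare this to $(T^{\ast\leq i-1}\otimes I)^\top \EE N_i (T^{\ast\leq i-1}\otimes I)$, splitting the error into (a) the ``plug-in'' error from replacing $T^{\ast\leq i-1}$ by $\hat T^{\leq i-1}$ inside a population quantity, controlled by the induction hypothesis together with $\op{\EE N_i}\le\sigmamax^2$ and the fact that $(T^{\ast\leq i}\otimes I)^\top\EE N_i(T^{\ast\leq i}\otimes I)$ has smallest relevant singular value $\gtrsim\sigmamin^2$; and (b) the stochastic error $(\hat T^{\leq i-1}\otimes I)^\top(N_i-\EE N_i)(\hat T^{\leq i-1}\otimes I)$. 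For (b) I would \emph{not} bound $\op{N_i-\EE N_i}$ directly (as the paper stresses, that concentration is poor because $N_i$ is $(d_1\cdots d_i)$-dimensional); instead I would exploit that $\hat T^{\leq i-1}$ is incoherent, so the relevant matrix is effectively $r_{i-1}d_i$-dimensional, and establish concentration for this compressed matrix. Because $\hat T^{\leq i-1}$ depends on the data in groups $\Omega_1,\dots,\Omega_{2i-2}$ but $N_i$ is built from the \emph{fresh} independent groups $\Omega_{2i-1},\Omega_{2i}$, I can condition on $\hat T^{\leq i-1}$ and treat it as a fixed incoherent matrix — this is exactly why sample splitting is used, and it is what makes the compressed concentration bound go through. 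Davis–Kahan again gives the subspace error for $L(\wt T_i)$, and the truncation/re-normalization preserves the incoherence and error bounds up to constants, closing the induction. The main obstacle I anticipate is precisely bookkeeping the \emph{compounding} of the constants and rank/incoherence factors across the $m-1$ recursive steps: each step multiplies the error by a factor involving $\kappa_0^2$ and powers of $\rmax,\nu$, which is what produces the $\kappa_0^{4m-4}$ and $\rmax^{(5m-9)/2}$-type exponents, and getting the exponents tight (rather than merely finite) requires care in how the subspace error of $\hat T^{\leq i-1}$ enters the perturbation of the $i$-th eigenproblem.

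For the last component, $\hat T_m = (\hat T^{\leq m-1})^\top\big(\tfrac{d^\ast}{n_0}\calP_{\Omega_{2m-1}}(\calT^\ast)\big)^{\langle m-1\rangle}$: here I would write $\tfrac{d^\ast}{n_0}\calP_{\Omega_{2m-1}}(\calT^\ast) = \calT^\ast + \calE_m$ with $\EE\calE_m = 0$, and bound $(\hat T^{\leq m-1})^\top(\calE_m)^{\langle m-1\rangle}$ by conditioning on $\hat T^{\leq m-1}$ (fresh subsample again) and using incoherence to get $\op{\cdot}$ and $\fro{\cdot}$ control; the deterministic part $(\hat T^{\leq m-1})^\top(\calT^\ast)^{\langle m-1\rangle} = (\hat T^{\leq m-1})^\top T^{\ast\leq m-1}\Lambda_m^\ast V_m^{\ast\top}$ is close to the target because $(\hat T^{\leq m-1})^\top T^{\ast\leq m-1}$ is near an orthogonal matrix by the induction. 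Finally, to assemble $\fro{\hat\calT-\calT^\ast}$ I would use a telescoping / hybrid argument over the components: write $\hat\calT - \calT^\ast = \sum_{i=1}^m [\hat T_1,\dots,\hat T_{i-1},\hat T_i - (\text{rotated }T_i^\ast),T_{i+1}^\ast,\dots]$-type terms (after aligning rotations), and bound each using $\sigmamax$, the orthonormality of the left parts, and the per-component error bounds. Tracking that $\fro{\cdot}\le\sigmamin/(Cm\kappa_0^2\rmax^{1/2})$ requires choosing the constant $C_m$ large enough to absorb the $m$ telescoped terms and the accumulated $\kappa_0,\rmax,\nu$ factors; the incoherence conclusion $\incoh(\calT_0)\le 2\kappa_0^2\nu$ follows because $\ttsvd_\r\circ\trim_\zeta$ applied to a tensor already close to an incoherent tensor produces an incoherent output (invoking Lemma~\ref{lemma:tt-perturbation} and the spikiness-implies-incoherence bound of Lemma~\ref{lemma:spiki implies incoh}), together with the trimming threshold $\zeta$ being calibrated to $\nu$.
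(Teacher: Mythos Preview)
Your proposal is correct and follows essentially the same approach as the paper's proof: the base step via U-statistic concentration (the paper cites \citealt{xia2017polynomial}, their Lemma~\ref{lemma:init}), the inductive step via the compressed concentration of $(\hat T^{\leq i-1}\otimes I)^\top N_i(\hat T^{\leq i-1}\otimes I)$ exploiting sample splitting (the paper's Lemma~\ref{lemma:init:1}), and the final component via Lemma~\ref{lemma:sample bound}, with the incoherence conclusion handled by Lemma~\ref{lemma:ttsvd+trim}. The only cosmetic differences are that the paper tracks the chordal distance $d_c(\hat T^{\leq i},T^{\ast\leq i})$ through an explicit linear recursion $x_i\le q_{i-1}x_{i-1}+b_{i-1}$ with $q_{i-1}=80\sqrt{r_i}\kappa_0^2$ (which makes the compounding of the $\kappa_0$ and $\rmax$ exponents transparent), and assembles $\fro{\hat\calT-\calT^\ast}$ directly from $d_c(\hat T^{\leq m-1},T^{\ast\leq m-1})$ and $\fro{R_{m-1}^\top T_m^\ast-\hat T_m}$ rather than a full telescoping over all components; both routes lead to the same bound.
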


Therefore,  the output of Algorithm \ref{alg:init} indeed satisfies the initialization condition required by Lemma \ref{thm:localconvergence}. If the tensor dimension is balanced $d_j\asymp d$ and $\nu, \kappa_0, \rmax =O(1)$, the sample size requirement for warm initialization matches, with a slightly better dependence on $\log \dmax$, that required by the algorithmic convergence of Lemma \ref{thm:localconvergence}. Thus our initialization method is valid requiring a nearly optimal sample size. 

Theorem \ref{thm:main} can be readily proved by combining Lemma \ref{thm:localconvergence} and Lemma \ref{thm:init}. 
\begin{proof}
Under the sample size condition of Theorem \ref{thm:main}, both Lemma \ref{thm:localconvergence} and Lemma \ref{thm:init} hold. Therefore, from the contraction property in Lemma \ref{thm:localconvergence}, we get 
$$
\|\calT_{l_{\textsf{max}}}-\calT^{\ast}\|_{\rm F}\leq (0.975)^{l_{\textsf{max}}}\cdot \|\calT_0-\calT^{\ast}\|_{\rm F}\leq (0.975)^{l_{\textsf{max}}}\cdot \sigmamin,
$$
where the last inequality is guaranteed by Lemma \ref{thm:init} and the above bound holds with probability at least $1-(2m
+4)\dmax^{-m}$. Then, for any $\varepsilon>0$, there exists a constant $C_2>0$ such that if $l_{\textsf{max}}=\lceil C_2\log(\sigmamin/\varepsilon)\rceil$, the final output achieves the error $\|\calT_{l_{\textsf{max}}}-\calT^{\ast}\|\leq \varepsilon$, which concludes the proof. 
\end{proof}

\section{Statistically Optimal Noisy TT-format Tensor Completion}\label{sec:noisy}
Oftentimes,  the observed data are contaminated by noise.  
Suppose we observe i.i.d.  random pairs $\{ (\omega_i, \calY_i)\}_{i=1}^n$, where $\calY_i=\calT^{\ast}(\omega_i)+\xi_i$  with the noise $\{\xi_i\}_{i=1}^n$ being  i.i.d.  $\sigma_s$-sub-Gaussian.  Clearly,  the case $\sigma_s=0$  reduces to the problem of  exact tensor completion studied above.  

The loss function is re-written by 
\begin{align}\label{prob:tensor completion:noise}
	&\min_{\calT\in\RR^{d_1\times \cdots \times d_m}}g_{\Omega}(\calT) := \frac{1}{2}\sum_{i=1}^n\big(\calT(\omega_i)-\calY_i\big)^2
	\hspace{0.5cm}\text{such that } \ranktt(\calT)\leq \boldsymbol{r}. 
\end{align}
The initialization procedure is essentially identical to the noiseless case but shall be re-written using new notations.  Here the sample splitting is applied to $[n]$ as in Section~\ref{sec:initialization} and recall $n=(2m-1)n_0$.  With a slight abuse of notation,  we re-define the matrices $\{N_i\}_{i = 1}^{m-1}$ by
$$
N_i
= \frac{(d^*)^2}{2n_0^2}\sum_{j=(2i-2)n_0+1}^{(2i-1)n_0}\sum_{k=(2i-1)n_0+1}^{2in_0}\calY_j\calY_k(\calE_{\omega_j}\lr{i}(\calE_{\omega_k}\lr{i})^{\top} + \calE_{\omega_k}\lr{i}(\calE_{\omega_j}\lr{i})^{\top})
$$
Similarly,  for the component $\hat T_m$,  it is attained by
$$
\hat T_m = n_0^{-1}d^*\hat T^{\leq m-1 \top}\sum_{i=(2m-2)n_0+1}^{n}\calY_i\calE_{\omega_i}\lr{m-1}
$$
The initialization $\calT_0$ is then obtained by applying Algorithm \ref{alg:init} with the newly defined $\{N_i\}_{i=1}^{m-1}$ and $\hat T_m$.  Lemma~\ref{lemma:init:noise} affirms that,  under a suitable signal-to-noise ratio condition,  Algorithm \ref{alg:init} outputs an initialization that is sufficiently close to $\calT^*$.  For ease of exposition, we assume $d_1\asymp\cdots\asymp d_{m}\asymp \bar d$ and $r_1\asymp \cdots \asymp r_{m-1}\asymp \bar r$.  We also provide a version of Lemma~\ref{lemma:init:noise} with general $d_j$'s and $r_j$'s in Section \ref{sec:pf:init:noise}.  Recall that $d_j\leq \bar d,  r_j\leq \bar r,  r^{\ast}=r_1\cdots r_m$ and $d^{\ast}=d_1\cdots d_m$.  

\begin{lemma}\label{lemma:init:noise}
Suppose $\spiki(\calT^{\ast})\leq \nu,  \kappa(\calT^{\ast})\leq \kappa_0$ and $\{\xi_i\}_{i=1}^n$ are i.i.d. $\sigma_s$ sub-Gaussian with variance $\text{Var~}\xi_1^2\leq C_1\sigma_s^2$ for some absolute constant $C_1>0$.  Suppose the sample size requirement in Lemma \ref{thm:init} holds,  also assume the signal-to-noise ratio (SNR) condition: 
\begin{align*}
	&\sigmamin/\sigma_s\geq
	C_2\cdot\frac{(d^*)^{3/4}(r^*\rmax)^{1/4}\log^{3/2}(\dmax)}{n^{1/2}}
\end{align*}
for some $C_2>0$ depending only on $m,\kappa_0,\nu$. 
Then with probability at least $1 - 10m\dmax^{-m}$, the output of Algorithm \ref{alg:init} satisfies 
$$
\fro{\calT_0-\calT^*} \leq \frac{\sigmamin}{Cm\kappa_0^2\rmax^{1/2}}\quad \text{~and~}\quad \incoh(\calT_0)\leq 2\kappa_0^2\nu,
$$
where $C>0$ is the same as in Lemma~\ref{thm:init}.  
\end{lemma}

In particular, to fix ideas,  consider the case when $d_1 = \cdots = d_m = d$ and $\kappa_0,\nu, \{r_i\}_{i=1}^{m-1}$ are all $O(1)$,  then there exist $C_1,C_2>0$ independent of $d$,  under the sample size condition $n\geq C_1d^{m/2}\cdot\textsf{\small Polylog(d)}$,  the SNR requirement of Lemma~\ref{lemma:init:noise}  can be simplified as 
$$
\sigmamin/\sigma_s\geq C_2\frac{d^{3m/4}}{n^{1/2}}\cdot\textsf{\small Polylog(d)}.
$$
This SNR requirement matches the previous work in \cite{xia2017statistically} where the noisy Tucker-format tensor completion was investigated.  While it remains mysterious in the minimal SNR requirement for noisy tensor completion,  convincing evidences have appeared showing that the above SNR requirement is likely (near) minimal if only polynomial time algorithms are sought,   at least for the special case $n\asymp d^{\ast}$.  See,  e.g.,  \cite{zhang2018tensor,lyu2022optimal,kunisky2019notes}. 

Once a faithful initialization is available,   the convergence of RGrad Algorithm~\ref{alg:rgradtt} and its statistical performance are guaranteed by the following lemma.  

\begin{lemma}\label{lemma:localconvergence:noise}
Suppose $\spiki(\calT^{\ast})\leq \nu,  \kappa(\calT^{\ast})\leq \kappa_0$ and $\{\xi_i\}_{i=1}^n$ are i.i.d. $\sigma_s$ sub-Gaussian with variance $\text{Var~}\xi_1^2\leq C_1\sigma_s^2$ for some absolute constant $C_1>0$.  
Assume the same condition on $\calT_0$ and sample size $n$ as in Lemma \ref{thm:localconvergence}.  Further suppose the signal-to-noise ratio (SNR) satisfies 
$$\sigmamin/\sigma_s \geq C_m\max\Bigg\{\left(\sqrt{\frac{d^*\dmax}{n}} + \frac{d^*}{n}\right)r^*\log^{m+2}(\dmax), \kappa_0^4\nu^2\sqrt{\frac{d^*\rmax\cdot\dof}{n}}\Bigg\}
$$
for some  constant $C_m>0$ depending only on $m$ and $\dof = \sum_{i=1}^mr_{i-1}d_ir_i$ is the degree of freedom of a TT-rank $\boldsymbol{r}$ tensor. 
Then  with probability exceeding $1-(2m+5)\dmax^{-m}$, after $O_m\big(\log(n\sigmamin^2\dof^{-1}\sigma_s^{-2})\big)$ iterations,  Algorithm \ref{alg:rgradtt} outputs an estimator $\hat\calT$ such that 
$$
\frac{1}{d^*}\fro{\hat\calT - \calT^*}^2 \leq C\frac{\rmax\cdot\dof}{n}\sigma_s^2\log(\dmax),
$$
 where the constant $C>0$ depends only on $m,\kappa_0,\nu$.
\end{lemma}

Consider the case $d_j\asymp d$,  $r_j\asymp r$,  and $\kappa_0, \nu=O(1)$,  by Lemma~\ref{lemma:init:noise} and Lemma~\ref{lemma:localconvergence:noise},  if the sample size and SNR satisfy 
$$
n\gg d^{m/2}r^{5m/2}\log^{m+2}d\quad {\rm and}\quad \frac{\sigmamin}{\sigma_s}\gg \frac{d^{3m/4}r^{m/2}\log^{m+2}d}{n^{1/2}}
$$
,  then with sequential second-order moment initialization and after $O_m\big(\log(n\sigmamin^2d^{-1}\sigma_s^{-2})\big)$ Riemannian gradient descent iterations,  we end up with 
$$
\frac{1}{d^{\ast}}\|\hat\calT-\calT^{\ast}\|_{\rm F}^2=O_m\left(\frac{r^3d\log d}{n}\sigma_s^2\right),
$$ 
which holds with high probability.  Note that the model complexity is $mr^2d$.  Therefore,  when $m=O(1)$,  $\hat \calT$ is statistically optimal up to an additional factor of $r$ and the logarithmic factor.  To our best knowledge,  this is the first and sharp statistical bound for noisy TT-format tensor completion.

\section{Numerical Experiments}\label{sec:numerical}
We perform several numerical experiments for both synthetic data and real data. We also compare the RGrad-TT algorithm with RGrad-Tucker format to show the computation efficiency of the tensor train format. Throughout this section, we shall use relative error frequently that is defined by
$$\textsf{relative error } = \frac{\fro{\hat{\calT} - \calT^*}}{\fro{\calT^*}}$$
where $\hat{\calT}$ is the output of the algorithm and $\calT^*$ is the original tensor. And all for the numerical experiments, the stopping criteria is chosen when $\fro{\calT_{l+1} - \calT_l}/\fro{\calT_l} \leq 0.001$.

\subsection{Synthetic Data}
In this section,  we present some synthetic experiments.
\subsubsection{Phase Transition}
The number of measurements required for an algorithm to reliably rebuild a low TT rank tensor is an essential question in tensor completion. We explore the recovery abilities of the proposed algorithm in the framework of phase transition, which compares the number of measurements, $n$, the size of a cubic $d\times d\times d$ tensor of TT rank $(r,r)$. 

In the first test, we fix the rank $(r,r) = (2,2)$ and change the size of the tensor $d$ and the number of measurements $n$. For each $(d,n)$ tuple, we test 10 random instances. The true low TT-rank $(r,r)$ tensor $\calT^*$ is generated from truncating a random Gaussian tensor using TT-SVD. The measurements tensor $\calP_{\Omega}(\calT^*)$ is obtained by sampling $n$ entries of $\calT^*$ uniformly at random. A test is considered to be successful if the returned tensor $\calT$ satisfies $\fro{\calT-\calT^*}/\fro{\calT^*} \leq 0.01$. The dimension of the tensor are ranging from 110 to 190, and the measurements are from 5000 to 55000. The probabilities of successful recovery for the RGrad-TT is displayed in Figure \ref{fig:phase1}. In this figure, white color means that the algorithm can recover $\calT^{\ast}$ in all 10 repeated simulations, whereas the black color means that the algorithm fails in all 10 simulations.  A clear phase transition can be observed from the figure.

In the second test, we fix the dimension $d = 100$. For each $(n,r)$ tuple, we conduct 10 random instances and a test is considered to be successful if the returned tensor $\hat\calT$ satisfies $\fro{\hat\calT-\calT^*}/\fro{\calT^*} \leq 0.01$.
We plot the curves of successful recovery rate against $nd^{-3/2}$ of tensors with TT-ranks from $(4,4)$ to $(8,8)$ in Figure \ref{fig:phase2}.

\begin{figure}
	\centering
	\begin{subfigure}[b]{0.48\textwidth}
		\centering
		\includegraphics[width=\textwidth]{./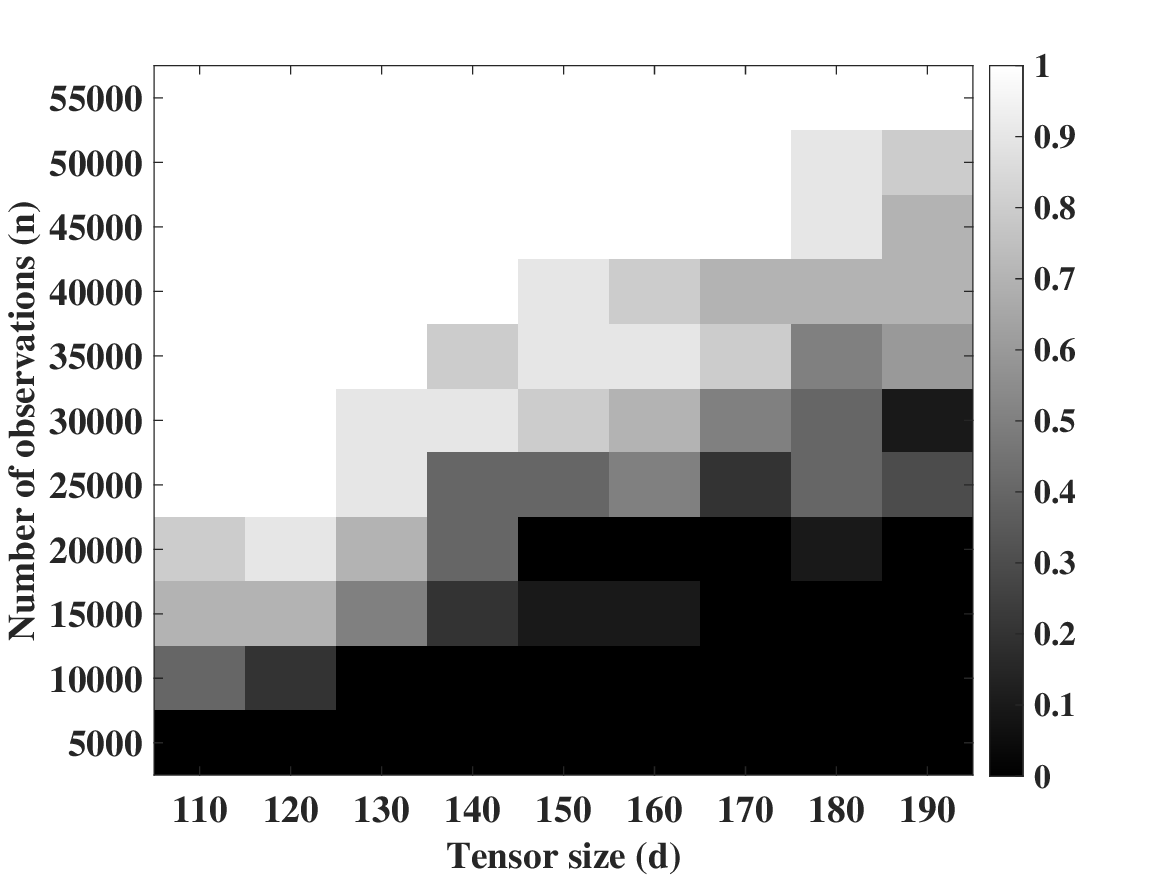}
		\caption{Fix $r = 2$}
		\label{fig:phase1}
	\end{subfigure}
	\hfill
	\begin{subfigure}[b]{0.48\textwidth}
		\centering
		\includegraphics[width=\textwidth]{./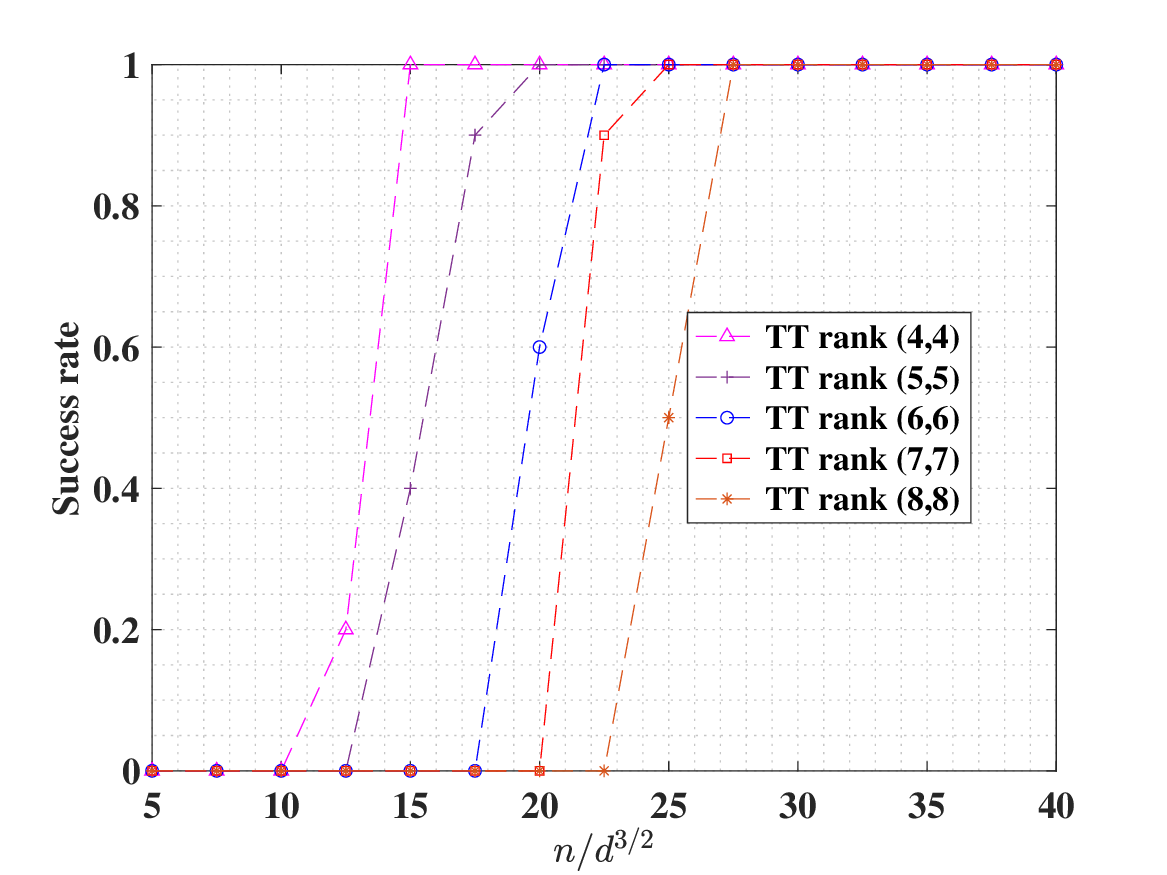}
		\caption{Fix $d = 100$}
		\label{fig:phase2}
	\end{subfigure}
	\caption{Left: Empirical phase transition using RGrad-TT. Successful recovery rate of tensors with a fixed rank $(r,r) = (2,2)$ from noiseless data is shown. White denotes successful recovery in all ten random experiments,  and black denotes failure in all experiments.  Right: Successful recovery rate of tensors with fix dimension $d = 100$ of different TT rank.}
	\label{fig:phase}
\end{figure}

\subsubsection{Efficiency of RGrad-TT}
We also conduct experiments to illustrate the efficiency of RGrad-TT against RGrad-Tucker.  We fix the dimension $d = 300$ and consider cubic tensor $\calT^*$ of size  $d\times d\times d$. We modify the parameter $r$ and generate a tensor with CP rank $r$. Then the TT rank of this tensor is bounded by $(r,r)$ and the Tucker rank of the tensor is bounded by $(r,r,r)$. And we use Algorithm \ref{alg:init} for initialization for TT format and use the second order moment method proposed by \citet{xia2017polynomial} as initialization for Tucker format.
To eliminate the impact of the randomness, for each fixed $r$, we conduct experiments on 10 instances and count the total runtime and runtime for per iteration. The stopping criteria is satisfied when $\fro{\calT_{l+1} - \calT_l}/\fro{\calT_l} \leq 0.001$.
The results are shown in Figure \ref{fig:runtime}.

\begin{figure}
	\centering
	\begin{subfigure}[b]{0.48\textwidth}
		\centering
		\includegraphics[width=\textwidth]{./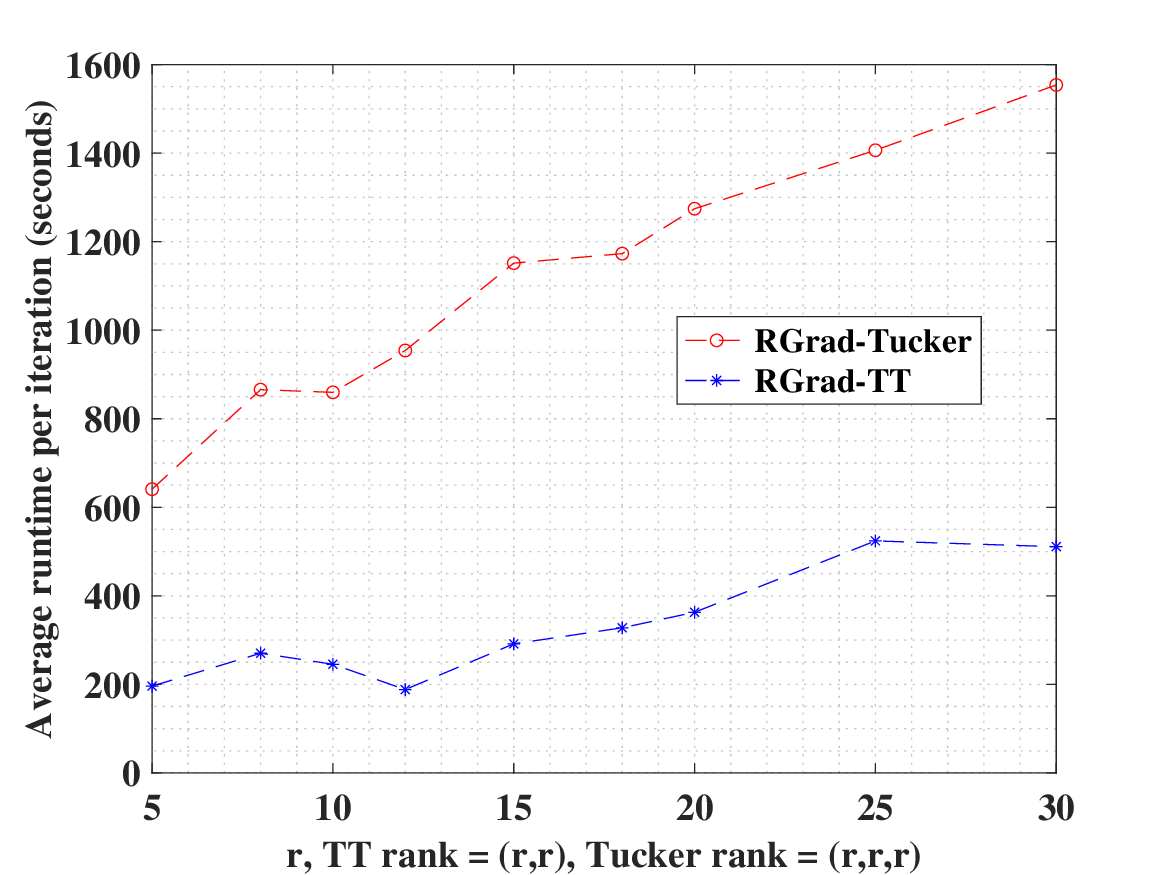}
		\caption{Total runtime of 10 instances}
		\label{fig:runtime1}
	\end{subfigure}
	\hfill
	\begin{subfigure}[b]{0.48\textwidth}
		\centering
		\includegraphics[width=\textwidth]{./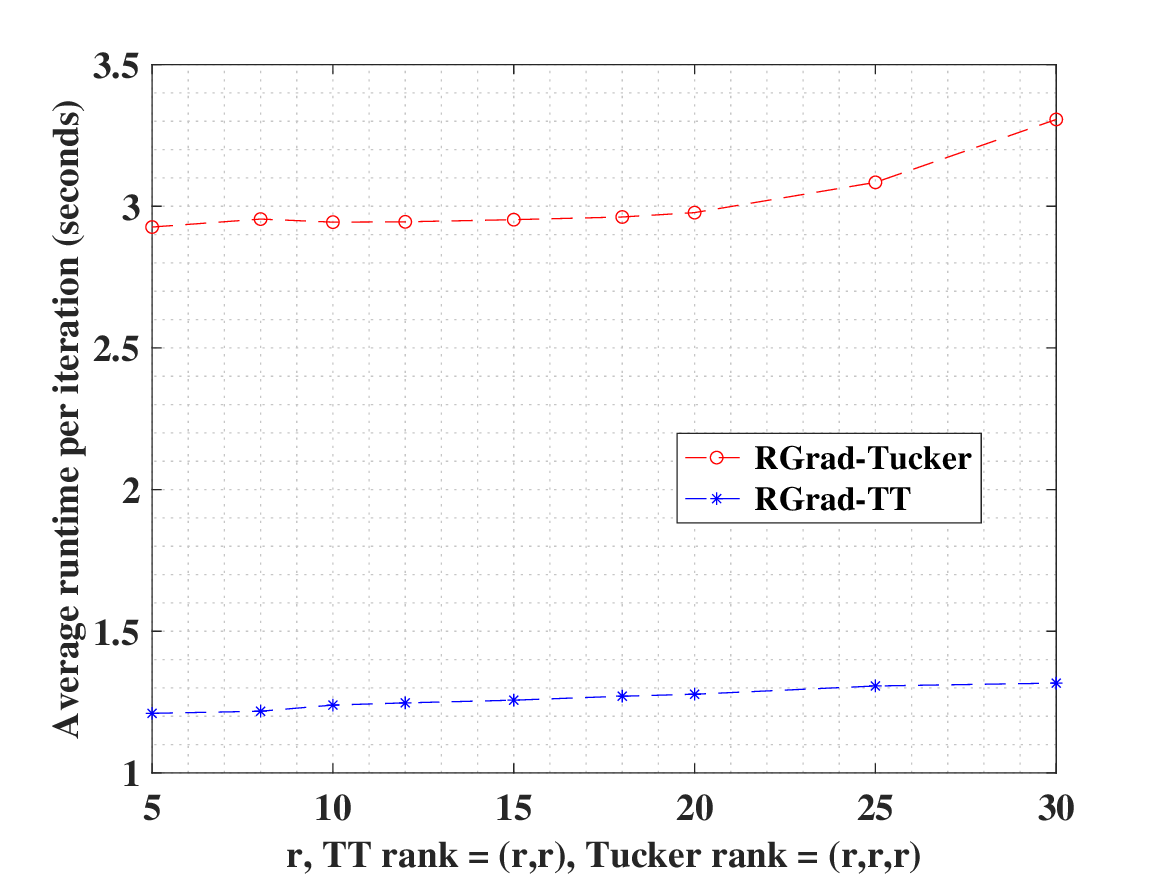}
		\caption{Average runtime per iteration}
		\label{fig:runtime2}
	\end{subfigure}
	\caption{Left: Total runtime of 10 instances; Right: average runtime for per iteration. In both cases we fix the tensor of size 300$\times$300$\times$300.}
	\label{fig:runtime}
\end{figure}

From the figures, we can see that when the TT rank is $(r,r)$ and Tucker rank is $(r,r,r)$, RGrad-TT is much faster in terms of both total runtime and per iteration runtime. 

\subsubsection{Low TT-rank Completion with Noise}
In this section, we present the performance of proposed algorithm when there exists noise. We randomly generate the tensor $\calT^*\in\RR^{100\times 100\times 100}$ with TT-rank $(r,r) = (2,2)$ and $\sigmamin(\calT^*)\approx 5$. We consider varying sampling numbers from 25000 to 75000. For each fixed sampling number, i.i.d. Gaussian noise is generated with variance from 0.001 to 0.009. For each setting, we repeat 10 i.i.d. times and the results are displayed in Figure \ref{fig:noise}. 
\begin{figure}
	\centering
	\includegraphics[width = 0.75\textwidth]{./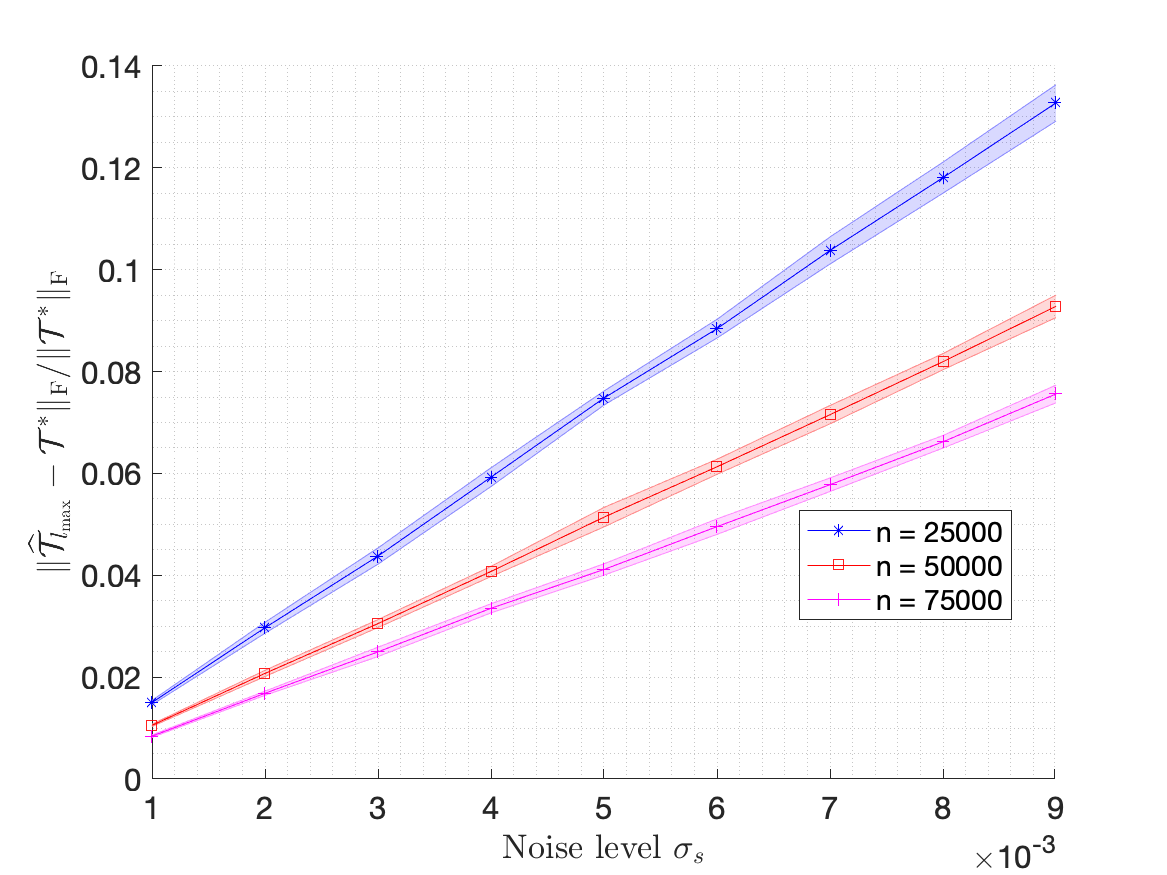}
	\caption{Low TT-rank completion with Gaussian noise. $\calT^*\in\RR^{100\times 100\times 100}$ with TT-rank $(r,r) = (2,2)$ and $\sigmamin(\calT^*)\approx 5$. Each setting is repeated 10 times.}
	\label{fig:noise}
\end{figure}

From Figure \ref{fig:noise}, we can see the estimation error depends linearly in the noise level $\sigma_s$ and is inversely proportional to the $\sqrt{n}$, which consists with our theoretical analysis.

\subsection{Real Data: Video Completion}
We consider a video of a tomato of size (242,320,167), where the third dimension is the number of frames. Since the video is an RGB one, we concatenate one channel after another along the third direction and the size of the true tensor $\calT^*$ is (242,320,501). 

To demonstrate the represent-ability of TT format against Tucker format, we apply TTSVD and HOSVD to the original video with TT rank $(r,r)$ and Tucker rank $(r,r,r)$. The approximation error is measured in the relative error and is plotted in the red and pink curves. From these curves, we can see that TTSVD has a better performance in approximating this video. 

Then we conduct video completion for this data in both TT format and Tucker format. Suppose $90\%$ of the pixels are missing and we would like to recover the original video. We use RGrad-TT (Algorithm \ref{alg:rgradtt}) with Algorithm \ref{alg:init} as initialization. To make the initialization comparable, we use second order method introduced in \citet{xia2017polynomial} for Tucker format. We change the rank parameter $r$, and the corresponding TT rank is $(r,r)$ and the Tucker rank is $(r,r,r)$. The recovered accuracy is measured in terms of relative error as shown in Figure \ref{fig:realdata}\footnote{The results using RGrad-Tucker is better than using HOSVD for small $r$ since HOSVD is only a quasi-optimal approximation. We show the results for HOSVD only for reference.}. From this result, we can see when we fix TT rank to be $(r,r)$ and Tucker rank to be $(r,r,r)$, the accuracy using RGrad-TT is better than RGrad-Tucker. Also, we can see that the recovered accuracy is almost the same as the approximation using TTSVD, which is a quasi-optimal approximation as shown in \citet{oseledets2011tensor}.

\begin{figure}
	\centering
	\includegraphics[width = 0.75\textwidth]{./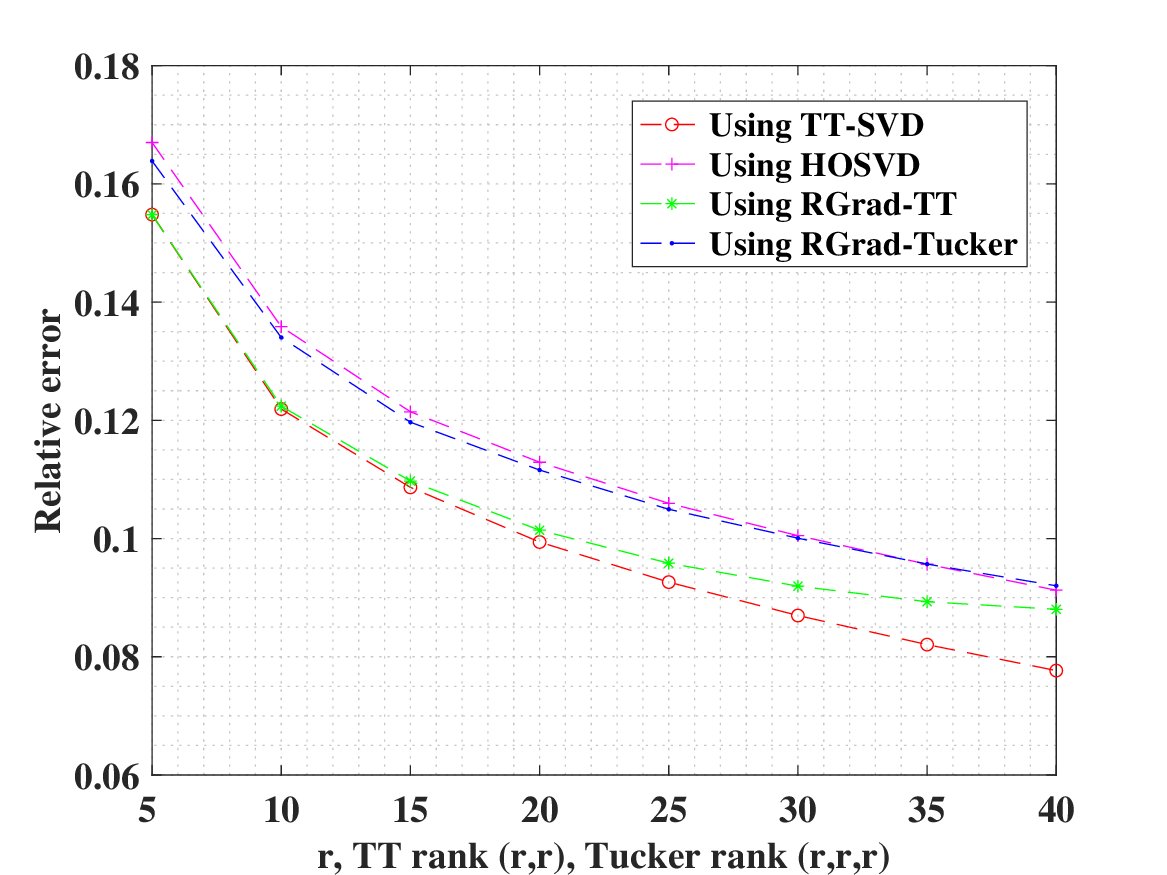}
	\caption{Results using TTSVD, HOSVD to approximate the original video with the given rank and using RGrad-TT, RGrad-Tucker to solve the tensor completion problem in the corresponding format with given rank and 10 percent of observed entries.}
	\label{fig:realdata}
\end{figure}

\acks{JFC was supported by Hong Kong RGC Grant GRF 16310620 and GRF 16309219. DX was supported by Hong Kong RGC Grant ECS 26302019 and GRF 16303320.}

\newpage

\bibliography{sample}

\newpage
\appendix
\section{Proofs of Main Lemmas}
In this section, we will present the proofs for Lemma \ref{thm:localconvergence} and Lemma \ref{thm:init}. Before we start the proof, we shall introduce some notations and definitions that will be used throughout in the proof.

Following \citet{yuan2016tensor}, we define the spectral norm of $\calT\in\ambspace$ as 
$$
\op{\calT} :=\sup_{u_i\in\SS^{d_i-1}} \inp{\calT}{u_1\otimes \cdots\otimes u_{m}},
$$
where $\SS^{d-1} = \{x\in\RR^d: \|x\|_{\ell_2} = 1\}$. The nuclear norm is the dual of spectral norm:
$$
\nuc{\calT} := \max_{\op{\calY}\leq 1}\inp{\calT}{\calY}.
$$
The relation between nuclear norm and Frobenius norm for tensors of low TT rank is summarized in Lemma \ref{lemma:nuclear norm and frobenius norm}.  

We shall use the semicolon $;$ to separate the row and column indices of a matrix, for example, we shall write $\calT\lr{i}(x_1,\ldots,x_i;x_{i+1},\ldots,x_m)$. The operator $\reshape(M,[p_1,\ldots,p_m])$ reshapes the data $M$ of size $p_1\cdots p_m$ to a tensor of size $p_1\times \cdots\times p_m$ in the Matlab way.

We also define some norms for matrices. For a matrix $A\in\RR^{p_1\times p_2}$, the $\|\cdot\|_{2,\infty}$ of $A$ is defined as $\|A\|_{2,\infty} = \max_{i = 1}^{p_1}\|A(i,:)\|_{\ell_2}$ and the $\|\cdot\|_{\ell_{\infty}}$ norm of $A$ is defined to be $\|A\|_{\ell_{\infty}} = \max_{i,j}|A(i,j)|$.

We also define the projection distance and the chordal distance between two orthogonal matrices $U,V\in\RR^{p\times r}$ as 
\begin{align*}
	d_p(U,V) = \fro{UU^T-VV^T},\quad
	d_c(U,V) = \min_{Q\in\OO_r}\fro{UQ - V}.
\end{align*}
Then we have $\frac{1}{\sqrt{2}}d_c(U,V) \leq d_p(U,V) \leq d_c(U,V)$.
\subsection{Proof of Lemma \ref{thm:localconvergence}}
We first restate a more detailed version of the lemma.
\begin{lemma}[Restate of Lemma \ref{thm:localconvergence}]\label{thm:localconvergence:detail}
	Suppose the conditions on $\calT^{\ast}$ from Theorem~\ref{thm:main} hold, and the initialization $\calT_0$ satisfies 
	$$
	\fro{\calT_0 - \calT^*}\leq \frac{\sigmamin}{Cm\kappa_0\rmax^{1/2}}\quad {\rm and}\quad  \incoh(\calT_0)\leq 2\kappa_0^2\nu
	$$
	for a sufficiently large but absolute constant $C>0$. There exists an absolute constant $C_m>0$ depending only on $m$ such that if the sample size $n$ satisfies
	\begin{align*}
	n \geq C_m\bigg(\kappa_0^{2m+4}\nu^{m+1}\log^{m+2}(\dmax)\cdot(d^*)^{1/2}((r^*)^{1/2}\rmax^2\vee r^*\rmax^{1/2})\\
	+\kappa_0^{4m+8}\nu^{2m+2}\log^{2m+4}(\dmax)\cdot\dmax(r^*\rmax^4\vee (r^*)^2\rmax)\bigg),
	\end{align*}
	then with probability at least $1-(m+4)\dmax^{-m}$, the sequence $\{\calT_l\}_{l=1}^{\infty}$ generated by Algorithm \ref{alg:rgradtt} with a constant step size $\alpha = 0.12n^{-1}d^*$ satisfy
	$$
	\fro{\calT_{l} - \calT^*}^2 \leq 0.975\cdot \fro{\calT_{l-1} - \calT^*}^2.
	$$
	for all $l=1,2,\cdots$. 
\end{lemma}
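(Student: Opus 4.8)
The plan is to induct on $l$, carrying two invariants: $\fro{\calT_l-\calT^*}\le\sigmamin/(Cm\kappa_0\rmax^{1/2})$ and $\incoh(\calT_l)\le 2\kappa_0^2\nu$, with the base case $l=0$ being the hypotheses on $\calT_0$. Assuming the invariants at step $l$, I would obtain $\fro{\calT_{l+1}-\calT^*}^2\le 0.975\,\fro{\calT_l-\calT^*}^2$ by chaining three estimates: (i) the descent step satisfies $\fro{\calW_l-\calT^*}\le 0.9\,\fro{\calT_l-\calT^*}$; (ii) trimming is non-expansive towards $\calT^*$, so $\fro{\widetilde\calW_l-\calT^*}\le\fro{\calW_l-\calT^*}$; and (iii) the retraction obeys $\fro{\calT_{l+1}-\calT^*}\le(1+o(1))\fro{\widetilde\calW_l-\calT^*}$ by the sharp TTSVD perturbation bound of Lemma~\ref{lemma:tt-perturbation}, the $o(1)$ being of order $\fro{\widetilde\calW_l-\calT^*}/\sigmamin$ and hence negligible on the neighbourhood. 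Composing gives $\fro{\calT_{l+1}-\calT^*}\le(1+o(1))\cdot 0.9\,\fro{\calT_l-\calT^*}$, which yields $0.975$ with margin and makes transparent why the rate is free of $\kappa_0$: the $0.9$ comes from the step size and an isometry constant, never from a spectral gap. For (ii), the point is that $\calT^*$ lies in the entrywise $\ell_\infty$-ball of radius $\zeta_l=\tfrac{10}{9}\fro{\calW_l}\,\nu/\sqrt{d^*}$: since $\fro{\calW_l}\ge\fro{\calT^*}-\fro{\calW_l-\calT^*}\ge\tfrac{9}{10}\fro{\calT^*}$ one gets $\zeta_l\ge\nu\fro{\calT^*}/\sqrt{d^*}\ge\|\calT^*\|_{\ell_\infty}$ from $\spiki(\calT^*)\le\nu$, so $\trim_{\zeta_l}$ is the Euclidean projection onto a convex set containing $\calT^*$ and is therefore non-expansive -- which is exactly why the constant $10/9$ appears.

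The core is step (i). Writing $\calE_l:=\calT_l-\calT^*$ and $\calR:=\tfrac{d^*}{n}\calP_\Omega$ (so $\EE\calR=\mathrm{Id}$ and $\alpha_l\calP_\Omega=0.12\,\calR$), I would decompose
\begin{align*}
\calW_l-\calT^* \;=\; \calE_l-0.12\,\calP_{\TT_l}\calR\calE_l \;=\; 0.88\,\calP_{\TT_l}\calE_l \;+\; \calP_{\TT_l}^{\perp}\calE_l \;-\; 0.12\,\calP_{\TT_l}(\calR-\mathrm{Id})\calE_l.
\end{align*}
The middle summand is second order: a curvature estimate for the manifold $\mfd$ gives $\fro{\calP_{\TT_l}^{\perp}\calE_l}\le C'\fro{\calE_l}^2/\sigmamin\le 0.01\,\fro{\calE_l}$ once the constant $C$ in the neighbourhood radius is taken large enough relative to $C'$. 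The third summand is where probability enters: $\calE_l$ has TT-rank at most $2\r$ and is incoherent (both $\calT_l$ and $\calT^*$ are) and $\TT_l$ is an incoherent tangent space, so a restricted-isometry estimate (stated next) gives $\fro{\calP_{\TT_l}(\calR-\mathrm{Id})\calE_l}\le\epsilon\,\fro{\calE_l}$ for a small absolute $\epsilon$. Since the first and third summands lie in $\TT_l$ while the second, $\calP_{\TT_l}^{\perp}\calE_l$, lies in $\TT_l^{\perp}$, orthogonality gives $\fro{\calW_l-\calT^*}^2\le(0.88+0.12\epsilon)^2\fro{\calE_l}^2+(0.01)^2\fro{\calE_l}^2\le(0.9)^2\fro{\calE_l}^2$, which is (i).

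Everything thus reduces to one concentration statement, and this I expect to be the main obstacle: for every tangent space $\TT$ of $\mfd$ at a point that is $O(\kappa_0^2\nu)$-incoherent and $O(\sigmamin/(\kappa_0\rmax^{1/2}))$-close to $\calT^*$, one has $\op{\calP_{\TT}(\calR-\mathrm{Id})\calP_{\TT}}\le\epsilon$ and $\fro{\calP_{\TT}(\calR-\mathrm{Id})\calX}\le\epsilon\,\fro{\calX}$ for all incoherent $\calX$ of TT-rank at most $2\r$, with probability at least $1-(m+4)\dmax^{-m}$ whenever $n$ meets the stated bound. I would prove this by matrix Bernstein applied to $\calR-\mathrm{Id}$, a sum of $n$ i.i.d.\ mean-zero rank-one operators, with the variance proxy and uniform bound controlled by the coherence of $\TT$. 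The new difficulty over the matrix case is that $\TT$ splits as $\bigoplus_{i=1}^m\{[T_1,\ldots,X_i,\ldots,T_m]\}$ (Lemma~\ref{lem:TT}), so the coherence of each block must be traced -- via the recursions $T^{\le i}=(T^{\le i-1}\otimes I)L(T_i)$ and $T^{\ge i}=R(T_i)(I\otimes T^{\ge i+1})$ -- back to the incoherence of the separations $\calT_l^{\langle i\rangle}$; this recursion is precisely what produces the powers of $\rmax$ and $\kappa_0$ in the sample complexity. Since $\TT_l$ itself depends on $\Omega$, the estimate must moreover be made uniform over a net of tangent spaces near $\calT^*$ (equivalently, a neighbourhood in $\mfd$), and controlling $\op{\calP_{\TT}-\calP_{\TT'}}$ as the base point moves -- for which I would reuse the left/right-part perturbation bounds developed for the TTSVD analysis -- is the delicate ingredient, the net cardinality contributing the logarithmic factors.

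Finally, to close the induction I would re-establish $\incoh(\calT_{l+1})\le 2\kappa_0^2\nu$. By the choice of $\zeta_l$, $\widetilde\calW_l$ has $\|\widetilde\calW_l\|_{\ell_\infty}\le\zeta_l$ and $\fro{\widetilde\calW_l}\ge(1-o(1))\fro{\calT^*}$, hence $\spiki(\widetilde\calW_l)\le(\tfrac{10}{9}+o(1))\nu$; since $\calT_{l+1}=\ttsvd_{\r}(\widetilde\calW_l)$ is $o(\sigmamin)$-close to $\calT^*$, Weyl's inequality on the separations gives $\kappa(\calT_{l+1})\le(1+o(1))\kappa_0$, while an $\ell_{2,\infty}$ perturbation bound for the left parts produced by TTSVD yields $\spiki(\calT_{l+1})\le(1+o(1))\spiki(\widetilde\calW_l)$; then Lemma~\ref{lemma:spiki implies incoh} gives $\incoh(\calT_{l+1})\le\spiki(\calT_{l+1})\,\kappa(\calT_{l+1})\le 2\kappa_0^2\nu$. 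With both invariants restored the contraction holds at every step, and iterating gives $\fro{\calT_l-\calT^*}^2\le 0.975^{\,l}\,\fro{\calT_0-\calT^*}^2$ for all $l\ge 1$.
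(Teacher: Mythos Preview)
Your induction scheme, the trimming non-expansiveness argument (via $\zeta_l\ge\|\calT^*\|_{\ell_\infty}$), and the use of Lemma~\ref{lemma:tt-perturbation} for the retraction all match the paper, as does the observation that the contraction rate is free of $\kappa_0$. The substantive divergence is in how you handle the dependence of $\TT_l$ and $\calE_l$ on $\Omega$. You propose a covering net over tangent spaces near $\calT^*$; the paper avoids nets entirely. It fixes five events $\bcalE_1,\dots,\bcalE_5$ depending only on $\Omega$ and $\calT^*$, and on $\bcalE=\bigcap_i\bcalE_i$ bounds every term deterministically. The mechanism is a telescoping decomposition: write $\calT_l-\calT^*=\sum_j\calY_{i,j}$ and $\calP_{\TT_l}\calX_0=\sum_i\delta\calX_i$, each summand differing from a pure $\calT^*$-product in exactly one TT slot. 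Off-diagonal pairings are handled by Lemma~\ref{lemma:inp:lowrank}, which bounds $|\langle(\calP_\Omega-\tfrac{n}{d^*}\calI)\calA,\calB\rangle|$ by $\op{(\calP_\Omega-\tfrac{n}{d^*}\calI)\calJ}\cdot\nuc{\calA\odot\calB}$; the first factor is the \emph{fixed} event $\bcalE_3$ and the second is a deterministic coherence quantity. The single diagonal pairing $\langle(\calP_\Omega-\tfrac{n}{d^*}\calI)\calY_{i,i},\calX_{i,0}\rangle$ lands in the range of the fixed projection $\calP^{(i)}$ built from $\calT^*$ alone, and event $\bcalE_5$ (Lemma~\ref{lemma:inp:nodependence}) disposes of it. The lower bound on $\langle\calE_l,\calP_\Omega\calE_l\rangle$ comes from the empirical-process Lemma~\ref{lemma:beta_n} over the class $\{\fro{\cdot}\le1,\ \|\cdot\|_{\ell_\infty}\le\gamma_1,\ \nuc{\cdot}\le\gamma_2\}$ (event $\bcalE_4$), again data-independent. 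Your net approach is plausible in principle, but the Lipschitz constant of $\TT\mapsto\calP_{\TT}(\tfrac{d^*}{n}\calP_\Omega-\calI)$ is of order $d^*\log(\dmax)/n$ on $\bcalE_2$, forcing a mesh $\asymp n/(d^*\log\dmax)$ and a net of size roughly $(d^*\log\dmax/n)^{\dim\mfd}$; carrying the union bound through cleanly to the stated sample complexity would take real work, whereas the paper's decomposition sidesteps this entirely and is the source of the precise rank exponents in the statement.

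One genuine gap: your closing step for $\incoh(\calT_{l+1})\le 2\kappa_0^2\nu$ routes through $\spiki(\calT_{l+1})$ and Lemma~\ref{lemma:spiki implies incoh}, but there is no reason $\spiki(\calT_{l+1})\le(1+o(1))\spiki(\widetilde\calW_l)$ --- TTSVD need not preserve entrywise spikiness, and an $\ell_{2,\infty}$ bound on left parts yields incoherence, not spikiness. The paper proves the incoherence of $\calT_{l+1}$ directly (Lemma~\ref{lemma:ttsvd+trim}): from $\|\widetilde\calW_l\|_{\ell_\infty}\le\zeta_l$ and the closeness-induced bounds $\sigma_{\min}(S_{i+1}),\sigma_{\min}(\Sigma_{i+1})\ge\tfrac{9}{10}\sigmamin$, it estimates $\|e_j^{\top}\hat W^{\le i}\|_{\ell_2}$ and $\|N_{i+1}^{\top}e_j\|_{\ell_2}$ by hand, never invoking the spikiness of the output.
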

Now we present the proof of this lemma. First we introduce several events that will be useful in the proof. And the randomness of these events are from the sampling set $\Omega$.
\begin{align*}
	\bcalE_1 &= \{\op{\frac{d^*}{n}\calP_{\TT^*}\calP_{\Omega}\calP_{\TT^*} - \calP_{\TT^*}}\leq \frac{1}{2}\},\\
	\bcalE_2 &= \{\max_{x\in[d^*]}\sum_{i=1}^nI(\omega_i = x) \leq 2m\log(\dmax)\},\\
	\bcalE_3 &= \{\op{(\calP_{\Omega} - \frac{n}{d^*}\calI)(\calJ)} \leq C_m\left(\sqrt{\frac{n\dmax}{d^*}} + 1\right)\log^{m+2}(\dmax)\},
\end{align*}
where $\calJ\in\ambspace$ is the tensor with all its entries one and $\calI$ is the identity operator from $\ambspace$ to $\ambspace$.
From Lemma \ref{lemma:concentration}, $\bcalE_1$ holds with probability exceeding $1-\dmax^{-m}$. From Lemma \ref{lemma:repetition} $\bcalE_2$ holds with probability exceeding $1-\dmax^{-m}$. From Lemma~\ref{lemma:operatornorm}, $\bcalE_3$ holds with probability exceeding $1-\dmax^{-m}$.

Also we consider the following empirical process:
\begin{align}\label{emp}
	\beta_n(\gamma_1,\gamma_2) := \sup_{\calA\in\KK_{\gamma_1,\gamma_2}}\left|\inp{\calP_{\Omega}\calA}{\calA} - \frac{n}{d^*}\fro{\calA}^2\right|,
\end{align}
where 
$$\KK_{\gamma_1,\gamma_2} = \{\calA\in\RR^{d_1\times\ldots\times d_m}: \fro{\calA}\leq 1, \|\calA\|_{\ell_{\infty}}\leq \gamma_1, \nuc{\calA}\leq \gamma_2\}.$$
The following lemma states gives the upper bound for $\beta_n(\gamma_1,\gamma_2)$ with high probability.
\begin{lemma}\label{lemma:beta_n}
	Given $0<\delta_1^-<\delta_1^+, 0<\delta_2^-<\delta_2^+$ and $t\geq 1$, let
	$$t = s + \log 2 + \log(\log_2(\frac{\delta_1^+}{\delta_1^-})) + \log(\log_2(\frac{\delta_2^+}{\delta_2^-})).$$
	Then there exists a universal constant $C_m>0$ such that with probability at least $1-e^{-s}$, the following bound holds for all $\gamma_1\in[\delta_1^-, \delta_1^+]$ and all $\gamma_2\in[\delta_2^-, \delta_2^+]$,
	$$\beta_n(\gamma_1,\gamma_2) \leq C_m\gamma_1\gamma_2\left(\sqrt{\frac{n\dmax}{d^*}} + 1\right)\log^{m+2}(\dmax) + 4\gamma_1\sqrt{\frac{nt}{d^*}} + 8\gamma_1^2t.$$
\end{lemma}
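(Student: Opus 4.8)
The plan is to treat $\beta_n(\gamma_1,\gamma_2)$ by the classical machinery for suprema of empirical processes: first control it for a fixed pair $(\gamma_1,\gamma_2)$, in expectation and then in probability, and afterwards remove the restriction to one pair by a peeling argument. The starting point is that $\inp{\calP_{\Omega}\calA}{\calA}=\sum_{i=1}^n\calA(\omega_i)^2$ has expectation $(n/d^*)\fro{\calA}^2$, so
\begin{align*}
\beta_n(\gamma_1,\gamma_2)=\sup_{\calA\in\KK_{\gamma_1,\gamma_2}}\Big|\sum_{i=1}^n\big(\calA(\omega_i)^2-\EE\,\calA(\omega_i)^2\big)\Big|
\end{align*}
is a genuine centered empirical process, whose summands obey $0\le\calA(\omega_i)^2\le\gamma_1^2$ and $\sum_{i=1}^n\Var\big(\calA(\omega_i)^2\big)\le n\gamma_1^2/d^*$ on $\KK_{\gamma_1,\gamma_2}$. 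For the expectation I would symmetrize with Rademacher signs $\epsilon_i$, apply the contraction principle using that $x\mapsto x^2$ vanishes at $0$ and is $2\gamma_1$-Lipschitz on $[-\gamma_1,\gamma_1]$, and then use the duality of the tensor spectral and nuclear norms together with $\nuc{\calA}\le\gamma_2$:
\begin{align*}
\EE\,\beta_n(\gamma_1,\gamma_2)\ &\le\ 2\,\EE\sup_{\calA\in\KK_{\gamma_1,\gamma_2}}\Big|\sum_{i=1}^n\epsilon_i\calA(\omega_i)^2\Big|\ \le\ 8\gamma_1\,\EE\sup_{\calA\in\KK_{\gamma_1,\gamma_2}}\Big|\inp{\calA}{\textstyle\sum_{i=1}^n\epsilon_i\calE_{\omega_i}}\Big|\\
&\le\ 8\gamma_1\gamma_2\,\EE\,\op{\textstyle\sum_{i=1}^n\epsilon_i\calE_{\omega_i}},
\end{align*}
so everything is reduced to bounding the expected spectral norm of a random tensor carrying $\pm1$ values at the uniformly sampled (possibly repeated) positions $\omega_1,\dots,\omega_n$.

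This expected spectral norm is the Rademacher counterpart of the quantity controlled in the event $\bcalE_3$, and I would bound it by the same $\varepsilon$-net argument that proves Lemma~\ref{lemma:operatornorm}: cover each of the $m$ unit spheres by a $1/2$-net, bound the resulting Rademacher chaos $\sum_i\epsilon_i\prod_j u_j(\omega_{i,j})$ at every net point by a Bernstein/Bennett inequality (using, on $\bcalE_2$, that no coordinate is sampled more than $O(m\log\dmax)$ times), and union bound over the net; this yields $\EE\,\op{\sum_{i=1}^n\epsilon_i\calE_{\omega_i}}\le C_m\big(\sqrt{n\dmax/d^*}+1\big)\log^{m+2}(\dmax)$ and hence the first term of the claimed bound in expectation. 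To pass from mean to tail at fixed $(\gamma_1,\gamma_2)$, I would apply Bousquet's form of Talagrand's concentration inequality with variance proxy $n\gamma_1^2/d^*$ and envelope $\gamma_1^2$; splitting the cross term $\sqrt{2t(n\gamma_1^2/d^*+2\gamma_1^2\,\EE\beta_n)}$ by the arithmetic--geometric inequality and folding the resulting multiple of $\EE\beta_n$ into the first term gives, for every $t\ge1$, with probability at least $1-e^{-t}$,
\begin{align*}
\beta_n(\gamma_1,\gamma_2)\ \le\ C_m\gamma_1\gamma_2\big(\sqrt{n\dmax/d^*}+1\big)\log^{m+2}(\dmax)+2\gamma_1\sqrt{nt/d^*}+3\gamma_1^2 t.
\end{align*}

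It remains to make the bound uniform over $\gamma_1\in[\delta_1^-,\delta_1^+]$ and $\gamma_2\in[\delta_2^-,\delta_2^+]$, which I would do by peeling: partition $[\delta_1^-,\delta_1^+]$ into $\lceil\log_2(\delta_1^+/\delta_1^-)\rceil$ subintervals whose endpoints form a geometric progression of ratio $2$, and likewise $[\delta_2^-,\delta_2^+]$; apply the fixed-pair tail bound with deviation parameter $t$ at each of the $\lceil\log_2(\delta_1^+/\delta_1^-)\rceil\cdot\lceil\log_2(\delta_2^+/\delta_2^-)\rceil$ pairs of right endpoints, and union bound. Since $\KK_{\gamma_1,\gamma_2}$ is nondecreasing in each of $\gamma_1,\gamma_2$, the bound at the right endpoint of a block dominates $\beta_n(\gamma_1,\gamma_2)$ throughout that block, and as the interior points differ from that endpoint by at most a factor $2$ the deviation constants inflate from $(2,3)$ to the stated $(4,8)$ while the expectation term is reabsorbed into $C_m$. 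The prescribed $t=s+\log 2+\log\big(\log_2(\delta_1^+/\delta_1^-)\big)+\log\big(\log_2(\delta_2^+/\delta_2^-)\big)$ is exactly what forces the total failure probability over the grid to be at most $e^{-s}$.

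The step I expect to be the real obstacle is the bound on $\EE\,\op{\sum_{i=1}^n\epsilon_i\calE_{\omega_i}}$: unlike the matrix case there is no clean matrix-Bernstein inequality for the tensor operator norm, so one must run the $\varepsilon$-net argument over a product of $m$ spheres and control a heavy-tailed Rademacher chaos at each net point, which is precisely where the polylogarithmic factor $\log^{m+2}(\dmax)$ originates; I would isolate this in Lemma~\ref{lemma:operatornorm} and merely cite it here. A minor but genuine point is that the Bousquet step must be carried out with slightly sharper constants than a careless application would produce, so that after the factor-$2$ loss in the peeling the explicit constants in the final bound stay no larger than $4$ and $8$.
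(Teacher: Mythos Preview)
Your proposal is correct and follows essentially the same route as the paper: Bousquet's version of Talagrand's inequality at a fixed pair $(\gamma_1,\gamma_2)$, symmetrization plus the Ledoux--Talagrand contraction principle, duality $\langle\calA,\calY\rangle\le\nuc{\calA}\,\op{\calY}$, the bound $\EE\op{\sum_i\epsilon_i\calE_{\omega_i}}\le C_m(\sqrt{n\dmax/d^*}+1)\log^{m+2}(\dmax)$, and finally a geometric-grid peeling to make the bound uniform in $(\gamma_1,\gamma_2)$.

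One small remark: the paper does not redo the $\varepsilon$-net chaos argument for $\EE\op{\sum_i\epsilon_i\calE_{\omega_i}}$ but simply cites Theorem~1 of \citet{xia2017statistically}, which delivers the expectation bound directly; Lemma~\ref{lemma:operatornorm} as stated in this paper is the high-probability version of that same result, so when you write ``cite Lemma~\ref{lemma:operatornorm}'' you should instead point to the underlying expectation bound (or integrate the tail), and in particular you should not invoke the event $\bcalE_2$ inside an expectation.
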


Now we consider for any $\calA\in\RR^{d_1\times\ldots\times d_m}$, we have $\frac{\|\calA\|_{\ell_{\infty}}}{\fro{\calA}}\in[1/d^*,1]$. Also from (\citealt{hu2015relations}, Lemma 5.1), $\frac{\nuc{\calA}}{\fro{\calA}}\in[1,\dmax^{(m-1)/2}]$. So we use Lemma~\ref{lemma:beta_n} with $\delta_1^- = 1/d^*, \delta_1^+ = 1, \delta_2^- =1, \delta_2^+ = \dmax^{(m-1)/2}$ and $s = \alpha \log(\dmax)$, then $t = m\log(\dmax) + \log 2+ \log(\log_2(d^*)) + \log(\log_2(\dmax^{(m-1)/2})) \leq 4m\log(\dmax)$. So with probability exceeding $1-\dmax^{-m}$, for all $\gamma_1\in[1/d^*,1]$ and $\gamma_2\in[1,\dmax^{(m-1)/2}]$, 
$$\beta_n(\gamma_1,\gamma_2) \lesssim_{m} \gamma_1\gamma_2\left(\sqrt{\frac{n\dmax}{d^*}} + 1\right)\log^{m+2}(\dmax) + \gamma_1\sqrt{\frac{n\log(\dmax)}{d^*}} + \gamma_1^2\log(\dmax).$$
Denote this event by $\bcalE_4$.

Now we denote the event 
$\bcalE_5^{i} = \left\{\op{\calP^{(i)}(\calP_{\Omega}-\frac{n}{d^*}\calI)\calP^{(i)}} \leq C_m\sqrt{\frac{\mu^2\rmax^2\dmax n\log(\dmax)}{(d^*)^2}}\right\}$ for all $i\in[m]$, where $\calP^{(i)}:\RR^{d_1\times\ldots\times d_m}\rightarrow\RR^{d_1\times\ldots\times d_m}$ in terms of its $i$-th separation:
$$(\calP^{(i)}\calX)\lr{i} = (T^{*\leq i-1}T^{*\leq i-1T}\otimes I)\calX\lr{i}V_{i+1}^*V_{i+1}^{*T}.$$
It is easy to see that $\calP^{(i)}$ is a projection and this operator is independent of the choice of left orthogonal representation of $\calT^*$.
Set $\bcalE_5 = \cap_{i=1}^m\bcalE_5^i$, then $\bcalE_5$ holds with probability exceeding $1-m\dmax^{-m}$ from the following lemma.
\begin{lemma}\label{lemma:inp:nodependence}
	Suppose that $\calT^* = [T_1^*,\ldots,T_m^*]\in\mfd$ satisfies $\incoh(\calT^*)\leq \sqrt{\mu}$. And $\Omega$ is sampled uniformly with replacement such that $|\Omega| = n$. Then we have with probability exceeding $1-\dmax^{-m}$,
	$$
	\op{\calP^{(i)}(\calP_{\Omega}-\frac{n}{d^*}\calI)\calP^{(i)}} \leq C_m\sqrt{\frac{\mu^2\rmax^2\dmax n\log(\dmax)}{(d^*)^2}}
	$$
	holds as long as $n\geq C\mu^2\rmax^2\dmax\log(\dmax)$.
\end{lemma}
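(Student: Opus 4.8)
The plan is to view $\calP^{(i)}\bigl(\calP_{\Omega}-\tfrac{n}{d^*}\calI\bigr)\calP^{(i)}$ as a centered sum of $n$ i.i.d.\ rank-one self-adjoint operators and to invoke a matrix Bernstein inequality. Since the entries are sampled with replacement, writing $\calP_{\Omega}=\sum_{k=1}^n Z_k$ with $Z_k(\calX)=\inp{\calE_{\omega_k}}{\calX}\calE_{\omega_k}$ makes the $Z_k$ i.i.d.\ with $\EE Z_k=\tfrac{1}{d^*}\calI$ (the identity operator). Because $\calP^{(i)}$ is an orthogonal projection — it is the Kronecker product of the two commuting orthogonal projections $T^{*\leq i-1}T^{*\leq i-1\top}\otimes I$ and $V_{i+1}^*V_{i+1}^{*\top}$ acting on the two factors of the $i$-th separation, hence self-adjoint, idempotent, and independent of the chosen left-orthogonal decomposition — conjugating by it gives
$$\calP^{(i)}\Bigl(\calP_{\Omega}-\tfrac{n}{d^*}\calI\Bigr)\calP^{(i)}=\sum_{k=1}^n\bigl(W_k-\EE W_k\bigr),\qquad W_k:=\bigl(\calP^{(i)}\calE_{\omega_k}\bigr)\bigl(\calP^{(i)}\calE_{\omega_k}\bigr)^{*},$$
where each $W_k$ is a PSD rank-one operator on the tensor space equipped with the Frobenius inner product and $\EE W_k=\tfrac{1}{d^*}\calP^{(i)}$.

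The crucial deterministic input is a uniform bound on $\fro{\calP^{(i)}\calE_{\omega}}$. Using the explicit action of $\calP^{(i)}$ on the $i$-th separation together with $\calE_{\omega}\lr{i}=e_{x_1\cdots x_i}e_{x_{i+1}\cdots x_m}^{\top}$, the image $(\calP^{(i)}\calE_{\omega})\lr{i}$ is a rank-one matrix whose Frobenius norm factorizes, giving
$$\fro{\calP^{(i)}\calE_{\omega}}^2=\bigl\|(T^{*\leq i-1\top}\otimes I_{d_i})e_{x_1\cdots x_i}\bigr\|_{\ell_2}^2\cdot\bigl\|V_{i+1}^{*\top}e_{x_{i+1}\cdots x_m}\bigr\|_{\ell_2}^2,$$
where I used that $T^{*\leq i-1}$ and $V_{i+1}^*$ have orthonormal columns (Lemma~\ref{lem:tt-orthogonal}). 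Since the row index of the $i$-th separation factors as $e_{x_1\cdots x_i}=e_{x_1\cdots x_{i-1}}\otimes e_{x_i}$, the first factor equals $\|T^{*\leq i-1\top}e_{x_1\cdots x_{i-1}}\|_{\ell_2}^2\leq\mu r_{i-1}/(d_1\cdots d_{i-1})$ by the incoherence of the $(i-1)$-th separation, and the second factor is $\leq\mu r_i/(d_{i+1}\cdots d_m)$ by the incoherence of the $i$-th separation. Hence $\fro{\calP^{(i)}\calE_{\omega}}^2\leq\mu^2 r_{i-1}r_i d_i/d^*\leq\mu^2\rmax^2\dmax/d^*$ uniformly in $\omega$, the cases $i=1$ and $i=m$ being trivial because $T^{*\leq 0}$ and $V_{m+1}^*$ are scalars.

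With this bound the matrix Bernstein parameters are immediate. Each $\|W_k\|=\fro{\calP^{(i)}\calE_{\omega_k}}^2\leq\mu^2\rmax^2\dmax/d^*$ and $\|\EE W_k\|=1/d^*$, so $\max_k\|W_k-\EE W_k\|\lesssim\mu^2\rmax^2\dmax/d^*$; moreover the identity $W_k^2=\fro{\calP^{(i)}\calE_{\omega_k}}^2\,W_k$ gives $\EE W_k^2\preceq\tfrac{\mu^2\rmax^2\dmax}{d^*}\,\EE W_k=\tfrac{\mu^2\rmax^2\dmax}{(d^*)^2}\,\calP^{(i)}$, so the variance proxy satisfies $\bigl\|\sum_{k=1}^n\EE(W_k-\EE W_k)^2\bigr\|\leq n\mu^2\rmax^2\dmax/(d^*)^2$. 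Applying the matrix Bernstein inequality on the range of $\calP^{(i)}$ — whose dimension $r_{i-1}d_ir_i$ contributes only an $O_m(\log\dmax)$ factor — then yields, with probability at least $1-\dmax^{-m}$,
$$\op{\calP^{(i)}\Bigl(\calP_{\Omega}-\tfrac{n}{d^*}\calI\Bigr)\calP^{(i)}}\lesssim_m\sqrt{\frac{\mu^2\rmax^2\dmax\,n\log\dmax}{(d^*)^2}}+\frac{\mu^2\rmax^2\dmax\log\dmax}{d^*}.$$
The hypothesis $n\geq C\mu^2\rmax^2\dmax\log\dmax$ forces the second term to be dominated by the first, which is exactly the claimed estimate.

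The step I expect to be the main obstacle is the second paragraph: checking carefully that $\calP^{(i)}$ is a genuine self-adjoint projection independent of the chosen left-orthogonal decomposition, matching the Kronecker-product index ordering so that $\fro{\calP^{(i)}\calE_{\omega}}$ really splits into a product of a \emph{left} and a \emph{right} factor, and invoking the incoherence inequality at separation index $i-1$ (rather than $i$) to control $\|T^{*\leq i-1\top}e_k\|_{\ell_2}$. Once this is pinned down, the remainder is a routine matrix Bernstein computation.
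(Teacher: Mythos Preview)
Your proposal is correct and follows essentially the same approach as the paper: both identify $\calP^{(i)}\calP_{\omega}\calP^{(i)}$ as the rank-one operator $(\calP^{(i)}\calE_{\omega})(\calP^{(i)}\calE_{\omega})^*$, bound $\fro{\calP^{(i)}\calE_{\omega}}^2\leq\mu^2\rmax^2\dmax/d^*$ via incoherence (the paper just states the case-by-case bound without the factorization details you spell out), use the identity $W_k^2=\fro{\calP^{(i)}\calE_{\omega_k}}^2 W_k$ to control the variance, and then apply the operator/matrix Bernstein inequality. Your presentation is in fact a bit more explicit about why the Frobenius norm splits as a product and why the $(i-1)$-th separation incoherence is the relevant one for the left factor.
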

When $\calA = [T_1^*,\ldots,T_{i-1}^*,A,T_{i+1}^*,\ldots,T_m^*], \calB = [T_1^*,\ldots,T_{i-1}^*,B,T_{i+1}^*,\ldots,T_m^*]$, then we have $\calP^{(i)}\calA = \calA$ and $\calP^{(i)}\calB = \calB$. Applying Cauchy-Schwartz leads to the following corollary of Lemma \ref{lemma:inp:nodependence}.
\begin{corollary}\label{coro:inp:nodependence}
	For any $i\in[m]$, set
	$$
	\calA = [T_1^*,\ldots,T_{i-1}^*,A,T_{i+1}^*,\ldots,T_m^*], \quad\calB = [T_1^*,\ldots,T_{i-1}^*,B,T_{i+1}^*,\ldots,T_m^*]
	$$
	for arbitrary $A,B\in\RR^{r_{i-1}\times d_i \times r_i}$. Then under the event $\bcalE_5^i$, we have 
	$$
	\inp{\calA}{(\calP_{\Omega}-\frac{n}{d^*}\calI)\calB} \leq C_m\sqrt{\frac{\mu^2\rmax^2\dmax n\log(\dmax)}{(d^*)^2}}\fro{\calA}\fro{\calB}.
	$$
\end{corollary}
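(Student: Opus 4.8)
The plan is to reduce the bilinear form to the operator norm controlled in Lemma~\ref{lemma:inp:nodependence}, by observing that $\calA$ and $\calB$ are fixed points of the projection $\calP^{(i)}$, and then to invoke Cauchy--Schwarz.

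First I would verify $\calP^{(i)}\calA = \calA$; the same computation applies to $\calB$. Take $1\le i\le m-1$ (the case $i=m$ is analogous, with the right projection absent). Since the first $i-1$ and the last $m-i$ components of $\calA$ coincide with those of $\calT^*$, its $i$-th separation factors as $\calA\lr{i} = (T^{*\leq i-1}\otimes I_{d_i})L(A)\,T^{*\geq i+1}$. The columns of this matrix lie in $\mathrm{col}(T^{*\leq i-1}\otimes I_{d_i})$, which is exactly the range of the left projection in the definition of $\calP^{(i)}$, so that projection fixes $\calA\lr{i}$. For the right side, combining $\calT^{*\langle i\rangle} = T^{*\leq i}T^{*\geq i+1} = T^{*\leq i}\Lambda_{i+1}^{*}V_{i+1}^{*\top}$ with $T^{*\leq i\top}T^{*\leq i} = I_{r_i}$ (Lemma~\ref{lem:tt-orthogonal}) yields $T^{*\geq i+1} = \Lambda_{i+1}^{*}V_{i+1}^{*\top}$, hence $T^{*\geq i+1}V_{i+1}^{*}V_{i+1}^{*\top} = T^{*\geq i+1}$, so the right projection fixes $\calA\lr{i}$ as well. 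Therefore $(\calP^{(i)}\calA)\lr{i} = \calA\lr{i}$, i.e. $\calP^{(i)}\calA = \calA$.

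Next I would use that $\calP^{(i)}$ is a well-defined orthogonal projection, independent of the chosen left-orthogonal representation of $\calT^*$ (this was noted when $\calP^{(i)}$ was introduced): it is the composition of two commuting orthogonal projections acting on the column and row spaces of the $i$-th separation, and in particular it is self-adjoint. Writing $\calM := \calP_{\Omega} - \frac{n}{d^*}\calI$, the invariance just established gives
$$\inp{\calA}{\calM\calB} = \inp{\calP^{(i)}\calA}{\calM\calP^{(i)}\calB} = \inp{\calA}{\calP^{(i)}\calM\calP^{(i)}\calB},$$
and Cauchy--Schwarz, viewing $\calP^{(i)}\calM\calP^{(i)}$ as an operator applied to $\calB$, bounds the right-hand side in absolute value by $\fro{\calA}\,\op{\calP^{(i)}\calM\calP^{(i)}}\,\fro{\calB}$. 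On the event $\bcalE_5^{i}$ the operator norm is at most $C_m\sqrt{\mu^2\rmax^2\dmax n\log(\dmax)/(d^*)^2}$, which is exactly the claimed bound.

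Once Lemma~\ref{lemma:inp:nodependence} is available this corollary is essentially bookkeeping; the only step calling for a little care is the invariance $\calP^{(i)}\calA=\calA$, and within it the identification of $\mathrm{row}(T^{*\geq i+1})$ with $\mathrm{col}(V_{i+1}^{*})$ via the left-orthogonality of $T_1^*,\ldots,T_i^*$. I anticipate no genuine obstacle here.
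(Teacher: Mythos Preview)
Your proposal is correct and follows exactly the approach the paper uses: the paper states just before the corollary that $\calP^{(i)}\calA=\calA$ and $\calP^{(i)}\calB=\calB$, and then says ``Applying Cauchy-Schwartz leads to the following corollary of Lemma~\ref{lemma:inp:nodependence}.'' Your write-up simply unpacks these two sentences, including a careful verification of the invariance $\calP^{(i)}\calA=\calA$ via the factorization $\calA\lr{i}=(T^{*\leq i-1}\otimes I)L(A)T^{*\geq i+1}$ and the identification $T^{*\geq i+1}=\Lambda_{i+1}^*V_{i+1}^{*\top}$.
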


And we denote $\bcalE = \bcalE_1\cap\bcalE_2\cap\bcalE_3\cap\bcalE_4\cap\bcalE_5$. Then $\bcalE$ holds with probability exceeding $1-(m+4)\dmax^{-m}$. Now we proceed assuming $\bcalE$ holds.

Using the idea of induction, we start the proof assuming $\fro{\calT_l - \calT^*} \leq \frac{\sigmamin}{600000m\kappa_0\sqrt{\rmax}}$ and $\incoh(\calT_l) \leq 2\kappa_0^2\nu$. For simplicity we drop the subscript and denote $\calT = \calT_l$ and $\TT = \TT_l$ in the following.

Now suppose we fix a left orthogonal decomposition of $\calT = [T_1,\ldots, T_m]$, we choose a left orthogonal decomposition for $\calT^*$ accordingly. First let $\calT^* = [T_1',\ldots,T_m']$ be a left orthogonal decomposition. Define $R_1 = \arg\min_{R\in\OO_{r_1}}\fro{T_1 - T_1'R}$. Now suppose we obtain $R_1,\ldots, R_{i-1}$, define $$R_i = \arg\min_{R\in\OO_{r_i}}\fro{T^{\leq i} - T^{'\leq i}R}.$$
In this way we obtain $R_1,\ldots,R_{m-1}$. And we define $L(T_i^*) = (R_{i-1}\otimes I)^TL(T_i')R_i$ for $i\in[m-1]$ using the convention $R_0 = [1]$ and $T_m^* = R_{m-1}^TT_m'$. Now we can prove by induction that $[T_1',\ldots,T_m'] = [T_1^*,\ldots, T_m^*]$ and $T^{'\leq i}R_i = T^{*\leq i}$ 
So we take $\calT^* = [T_1^*,\ldots, T_m^*]$ to be the left orthogonal decomposition, and it is the one such that $T^{*\leq i}$ and $T^{\leq i}$ are aligned in the sense that $d_c(T^{\leq i},T^{*\leq i}) = \fro{T^{\leq i} -T^{*\leq i}}$. And we can write $T^{\geq j+1} = \Lambda_{j+1}V_{j+1}^T$ and $T^{*\geq j+1} = \Lambda^*_{j+1}V_{j+1}^{*T}$ such that $\Lambda_{j+1},\Lambda_{j+1}\in\RR^{r_j}$ are invertible and $V_{j+1},V_{j+1}^*$ are orthogonal and  $d_c(V_{j+1},V_{j+1}^*) = \fro{V_{j+1}-V_{j+1}^*}$.

As a result from the above alignment process and Wedin's theorem, we have for all $i\in[m-1]$,
\begin{align}\label{chordaldistance}
	\max\{\fro{T^{\leq i}- T^{*\leq i}}, \fro{V_{i+1}-V_{i+1}^*}\} \leq \frac{2\fro{\calT- \calT^*}}{\sigmamin}.
\end{align}
We first derive the upper bounds for the terms in $\fro{\calT_{l+1} - \calT^*}^2$ in the following subsections.
\subsubsection{Estimation of $\inp{\calT-\calT^*}{\calP_{\Omega}(\calT - \calT^*)}$.}
Since the operator $\calP_{\Omega}$ is SPD, we have
\begin{align}\label{eq:p(t-t^*)}
	\inp{\calT-\calT^*}{\calP_{\Omega}(\calT - \calT^*)}\geq \frac{1}{2}\inp{\calP_{\Omega}\calP_{\TT^*}(\calT - \calT^*)}{\calP_{\TT^*}(\calT - \calT^*)} - \inp{\calP_{\Omega}\calP_{\TT^*}^{\perp}(\calT)}{\calP_{\TT^*}^{\perp}(\calT)}.
\end{align}
Since $\bcalE$ holds, we have
\begin{align}\label{pomegapt}
	\inp{\calP_{\Omega}\calP_{\TT^*}(\calT - \calT^*)}{\calP_{\TT^*}(\calT - \calT^*)} &\geq \frac{n}{2d^*}\fro{\calP_{\TT^*}(\calT-\calT^*)} = \frac{n}{2d^*}\fro{\calT-\calT^*}^2 - \frac{n}{2d^*}\fro{\calP_{\TT^*}^{\perp}(\calT)}^2\no\\
	&\overset{\text{Coro.}~\ref{coro:pt*perp}}{\geq}\frac{n}{2d^*}\fro{\calT-\calT^*}^2 - \frac{n}{d^*}\frac{200m^2\fro{\calT-\calT^*}^4}{\sigmamin^2}.
\end{align}

On the other hand, we consider the upper bound for $\inp{\calP_{\Omega}\calP_{\TT^*}^{\perp}\calT}{\calP_{\TT^*}^{\perp}\calT}$.
Since $\calT$ and $\Omega$ are dependent, we need to consider the empirical process \eqref{emp}.
First notice that
\begin{align*}
	\inp{\calP_{\Omega}\calP_{\TT^*}^{\perp}\calT}{\calP_{\TT^*}^{\perp}\calT} \leq \frac{n}{d^*}\fro{\calP_{\TT^*}^{\perp}\calT}^2 + \fro{\calP_{\TT^*}^{\perp}\calT}^2\beta_n\left(\frac{\|\calP_{\TT^*}^{\perp}\calT\|_{\ell_{\infty}}}{\fro{\calP_{\TT^*}^{\perp}\calT}},\frac{\nuc{\calP_{\TT^*}^{\perp}\calT}}{\fro{\calP_{\TT^*}^{\perp}\calT}}\right).
\end{align*}
Under $\bcalE_4$, we have for any $\calA\in\RR^{d_1\times\ldots\times d_m}$, 
\begin{align*}
	\fro{\calA}^2\beta_n\left(\frac{\|\calA\|_{\ell_{\infty}}}{\fro{\calA}}, \frac{\nuc{\calA}}{\fro{\calA}}\right) \lesssim_m \|\calA\|_{\ell_{\infty}}\nuc{\calA}&\left(\sqrt{\frac{n\dmax}{d^*}} + 1\right)\log^{m+2}(\dmax) \\
	&+ \|\calA\|_{\ell_{\infty}}\fro{\calA}\sqrt{\frac{n\log(\dmax)}{d^*}} + \|\calA\|_{\ell_{\infty}}^2\log(\dmax).
\end{align*}
And this implies that 
\begin{align}\label{inp pomega a,a}
	\inp{\calP_{\Omega}\calA}{\calA} \leq \frac{n}{d^*}\fro{\calA}^2 + C_m \|\calA\|_{\ell_{\infty}}\nuc{\calA}\left(\sqrt{\frac{n\dmax}{d^*}} + 1\right)\log^{m+2}(\dmax).
\end{align}
Now we focus on $\|\calP_{\TT^*}^{\perp}\calT\|_{\ell_{\infty}}, \nuc{\calP_{\TT^*}^{\perp}\calT}$.

\hspace{0.2cm} 

\noindent{\it Estimation of $\|\calP_{\TT^*}^{\perp}\calT\|_{\ell_{\infty}}$.} Notice that we have $\incoh(\calT)\leq 2\kappa_0^2\nu$ and $\incoh(\calT^*)\leq \kappa_0\nu$ from Lemma~\ref{lemma:spiki implies incoh}. Using triangle inequality, we have 
$\|\calP_{\TT^*}^{\perp}\calT\|_{\ell_{\infty}} \leq \|\calT\|_{\ell_{\infty}} + \|\calP_{\TT^*}\calT\|_{\ell_{\infty}}$. 
Meanwhile, for all $1\leq i\leq m-1$, we have 
\begin{align}\label{estimation of Lambda}
	\sigma_{\max}(\Lambda_{i+1}) \leq \sigma_{\max}(\Lambda_{i+1}^*)+\fro{\calT - \calT^*} \leq\frac{11}{10}\sigmamax.
\end{align}
As a result of the above inequality and Lemma~\ref{lemma:twoinf relation}, we have
\begin{align*}
	\|\calT\|_{\ell_{\infty}} = \|\calT\lr{i}\|_{\ell_{\infty}} = \|T^{\leq i}\Lambda_{i+1}V_{i+1}^T\|_{\ell_{\infty}} \leq \frac{11}{10}\sigmamax\twoinf{T^{\leq i}}\twoinf{V_{i+1}}\leq \frac{22}{5}\sigmamax\frac{\kappa_0^4\nu^2r_i}{\sqrt{d^*}}.
\end{align*}
As this holds for all $i\in[m-1]$, we have
\begin{align}\label{estimation of infinity norm of T}
	\|\calT\|_{\ell_{\infty}}\leq \frac{22}{5}\frac{\rmin\kappa_0^4\nu^2}{\sqrt{d^*}}\sigmamax.
\end{align}
On the other hand, we have $\calP_{\TT^*}\calT = \delta\calT_1 + \ldots + \delta\calT_m$, where $$\delta\calT_i = [T_1^*,\ldots,T_{i-1}^*,X_i, T_{i+1}^*,\ldots,T_m^*]$$ 
and the expression of $X_i$ are give in \eqref{tangent:W}. Now we would like to estimate $\|\delta\calT_i\|_{\ell_{\infty}}$. Since the reshape operation remains the infinity norm unchanged, we have for all $i\in[m-1]$,
\begin{align*}
	\|\delta\calT_i\|_{\ell_{\infty}} &= \|\delta\calT_i\lr{i}\|_{\ell_{\infty}}\\ 
	&= \|(T^{*\leq i-1}\otimes I)(I- L(T_i^*)L(T_i^*)^T)(T^{*\leq i-1}\otimes I)^T\calT\lr{i}V_{i+1}^*V_{i+1}^{*T}\|_{\ell_{\infty}}\\
	&\leq \|(T^{*\leq i-1}\otimes I)(T^{*\leq i-1}\otimes I)^T\calT\lr{i}V_{i+1}^*V_{i+1}^{*T}\|_{\ell_{\infty}} + \|T^{*\leq i}T^{*\leq iT}\calT\lr{i}V_{i+1}^*V_{i+1}^{*T}\|_{\ell_{\infty}},
\end{align*}
and for $i=m$,
\begin{align*}
	\|\delta\calT_m\|_{\ell_{\infty}} &= \|\delta\calT_m\lr{m}\|_{\ell_{\infty}}= \|(T^{*\leq m-1}\otimes I)(T^{*\leq m-1}\otimes I)^T\calT\lr{m}\|_{\ell_{\infty}}\\ &\overset{\text{Lemma}~\ref{lemma:reshape}}{=}\|T^{*\leq m-1}T^{*\leq m-1T}\calT\lr{m-1}\|_{\ell_{\infty}}.
\end{align*}

We check the term $\|(T^{*\leq i-1}\otimes I)(T^{*\leq i-1}\otimes I)^T\calT\lr{i}V_{i+1}^*V_{i+1}^{*T}\|_{\ell_{\infty}}$.
\begin{align}\label{infinity norm est:1}
	&~~~~\|(T^{*\leq i-1}\otimes I)(T^{*\leq i-1}\otimes I)^T\calT\lr{i}V_{i+1}^*V_{i+1}^{*T}\|_{\ell_{\infty}} \no\\
	&\overset{(a)}{\leq} \sqrt{d_{i+1}\ldots d_m}\|(T^{*\leq i-1}T^{*\leq i-1T}\otimes I)\calT\lr{i}\|_{\ell_{\infty}}\twoinf{V_{i+1}^*}\no\\
	&\overset{(b)}{\leq} \|T^{*\leq i-1}T^{*\leq i-1T}\calT\lr{i-1}\|_{\ell_{\infty}}\kappa_0\nu\sqrt{r_i},
\end{align}
where in $(a)$ we use Lemma~\ref{lemma:twoinf relation} and $\twoinf{V_{i+1}^*} = \twoinf{V_{i+1}^*V_{i+1}^{*T}}$; in $(b)$ we use Lemma~\ref{lemma:reshape} and $\incoh(\calT^*) \leq \kappa_0\nu$. Also, when $2\leq i\leq m$
\begin{align*}
	\|T^{*\leq i-1}T^{*\leq i-1T}\calT\lr{i-1}\|_{\ell_{\infty}} &= \|T^{*\leq i-1}T^{*\leq i-1T}T^{\leq i-1}\Lambda_iV_i^T\|_{\ell_{\infty}}\no\\
	&\leq \sigma_{\max}(\Lambda_{i})\twoinf{V_i}\cdot\max_j \|T^{\leq i-1T}T^{*\leq i-1}T^{*\leq i-1T}e_j\|_{\ell_2}\no\\
	&\overset{(a)}{\leq} \sigma_{\max}(\Lambda_{i})\twoinf{V_i} \twoinf{T^{\leq i-1}}\twoinf{T^{*\leq i-1}}\sqrt{d_1\ldots d_{i-1}}\no\\
	&\overset{(b)}{\leq} \frac{22}{5} \frac{\kappa_0^5\nu^3 r_{i-1}^{3/2}}{\sqrt{d^*}}\sigmamax,
\end{align*}
where in $(a)$ we use Lemma~\ref{lemma:twoinf relation} and in $(b)$ we use $\incoh(\calT^*)\leq \kappa_0\nu, \incoh(\calT)\leq 2\kappa_0^2\nu$ and \eqref{estimation of Lambda}. And when $i=1$, 
\begin{align*}
	\|T^{*\leq 0}T^{*\leq 0T}\calT\lr{0}\|_{\ell_{\infty}} = \|\calT\|_{\ell_{\infty}} \overset{\eqref{estimation of infinity norm of T}}{\leq}\frac{22}{5}\frac{\rmin\kappa_0^4\nu^2}{\sqrt{d^*}}\sigmamax.
\end{align*}
Combine these with \eqref{infinity norm est:1} and we get 
\begin{align*}
	\|(T^{*\leq i-1}\otimes I)(T^{*\leq i-1}\otimes I)^T\calT\lr{i}V_{i+1}^*V_{i+1}^{*T}\|_{\ell_{\infty}} &\leq
	\left\{\begin{array}{rlc}
		&\frac{22}{5}\frac{\kappa_0^5\nu^3 \rmin r_1^{1/2}}{\sqrt{d^*}}\sigmamax, \quad i=1\\
		&\frac{22}{5}\frac{\kappa_0^6\nu^4 r_{i-1}^{3/2}r_i^{1/2}}{\sqrt{d^*}}\sigmamax, \quad 2\leq i\leq m
	\end{array}
	\right.\\
	&\leq \frac{22}{5}\frac{\kappa_0^6\nu^4 \rmax^2}{\sqrt{d^*}}\sigmamax.
\end{align*}
Now for all $i\in [m-1]$, we check $\|T^{*\leq i}T^{*\leq iT}\calT\lr{i}V_{i+1}^*V_{i+1}^{*T}\|_{\ell_{\infty}}$.
\begin{align*}
	\|T^{*\leq i}T^{*\leq iT}\calT\lr{i}V_{i+1}^*V_{i+1}^{*T}\|_{\ell_{\infty}} &= \max_{j,k}|e_j^TT^{*\leq i}T^{*\leq iT}T^{\leq i}\Lambda_{i+1}V_{i+1}^TV_{i+1}^*V_{i+1}^{*T}e_k|\\
	&\leq \max_{j,k}\sigma_{\max}(\Lambda_{i+1})\|T^{\leq iT}T^{*\leq i}T^{*\leq iT}e_j\|_{\ell_2}\|V_{i+1}^TV_{i+1}^*V_{i+1}^{*T}e_k\|_{\ell_2}\\
	&\leq \sigma_{\max}(\Lambda_{i+1})\twoinf{T^{*\leq i}}\twoinf{T^{\leq i}}\twoinf{V_{i+1}}\twoinf{V_{i+1}^*}\sqrt{d^*}\\
	&\leq \frac{22}{5}\frac{\kappa_0^6\nu^4r_i^2}{\sqrt{d^*}}\sigmamax.
\end{align*}
Put these together and we have 
\begin{align}\label{ptperpt infinity}
	\|\calP_{\TT^*}^{\perp}\calT\|_{\ell_{\infty}} \leq \frac{44m}{5}\frac{\kappa_0^6\nu^4\rmax^2}{\sqrt{d^*}}\sigmamax.
\end{align}

\noindent{\it Estimation of $\nuc{\calP_{\TT^*}^{\perp}\calT}$.} First notice $\calP_{\TT^*}^{\perp}\calT = \calT - \calP_{\TT^*}\calT$. From Lemma~\ref{lemma:ttrank} and $\ranktt(\calT) \leq (r_1,\ldots,r_{m-1})$, we know $\ranktt(\calP_{\TT^*}^{\perp}\calT) \leq (3r_1, \ldots, 3r_{m-1})$. Now we use Lemma~\ref{lemma:nuclear norm and frobenius norm} and we get
\begin{align}\label{ptperpt nuclear}
	\nuc{\calP_{\TT^*}^{\perp}\calT} \leq 3^{(m-1)/2}\sqrt{r_1\ldots r_{m-1}}\fro{\calP_{\TT^*}^{\perp}\calT}.
\end{align}
Now we combine \eqref{ptperpt infinity}, \eqref{ptperpt nuclear}, \eqref{inp pomega a,a} and Lemma~\ref{lemma:estimationofptperp}, since $\fro{\calT-\calT^*}\leq\frac{\sigmamin}{20m}$,
\begin{align}\label{inp pomegaptperp ptperp}
	\inp{\calP_{\Omega}\calP_{\TT^*}^{\perp}\calT}{\calP_{\TT^*}^{\perp}\calT} 
	\leq &400m^2\frac{n}{d^*}\frac{\fro{\calT-\calT^*}^4}{\sigmamin^2} \no\\
	&+ C_m\frac{\kappa_0^7\nu^4\rmax^2}{\sqrt{d^*}}\sqrt{r_1\ldots r_{m-1}}\fro{\calT-\calT^*}^2\left(\sqrt{\frac{n\dmax}{d^*}}+1\right)\log^{m+2}(\dmax).
\end{align}
So as long as 
$n\geq C_m\left(\kappa_0^7\nu^4\sqrt{d^*}\rmax^2(r^*)^{1/2}\log^{m+2}(\dmax) + \kappa_0^{14}\nu^8\dmax\rmax^4r^*\log^{2m+4}(\dmax)\right)$,
we have from \eqref{pomegapt} and \eqref{inp pomegaptperp ptperp}, we have
$$\inp{\calP_{\Omega}\calP_{\TT^*}^{\perp}\calT}{\calP_{\TT^*}^{\perp}\calT} \leq \frac{1}{100}\inp{\calP_{\Omega}\calP_{\TT^*}(\calT-\calT^*)}{\calP_{\TT^*}(\calT-\calT^*)}.$$
Together with \eqref{eq:p(t-t^*)} and \eqref{pomegapt}, we get
\begin{align}\label{P(T-T^*)}
	\inp{\calP_{\Omega}\calP_{\TT^*}\calT}{\calP_{\TT^*}\calT}&\geq \frac{49}{100}\inp{\calP_{\Omega}\calP_{\TT^*}(\calT-\calT^*)}{\calP_{\TT^*}(\calT-\calT^*)}\no\\
	&\geq\frac{49}{100}(\frac{n}{2d^*}\fro{\calT-\calT^*}^2 - \frac{n}{d^*}\frac{200m^2\fro{\calT-\calT^*}^4}{\sigmamin^2})\no\\
	&\geq\frac{6n}{25d^*}\fro{\calT-\calT^*}^2,
\end{align}
where the last inequality holds since $\fro{\calT-\calT^*} \leq \frac{1}{600m}\sigmamin$.

\subsubsection{Estimation of $\fro{\calP_{\TT}\calP_{\Omega}(\calT - \calT^*)}^2$}
First notice that we have both $\calT$ and $\calT^*$ are of TT rank $(r_1,\ldots,r_{m-1})$. And $\incoh(\calT) \leq 2\kappa_0^2\nu$, $\incoh(\calT^*)\leq \kappa_0\nu \leq 2\kappa_0^2\nu =: \sqrt{\mu}$. First notice that
\begin{align}\label{P_TP t - t^*}
	\fro{\calP_{\TT}\calP_{\Omega}(\calT - \calT^*)}^2 \leq 1001\fro{\calP_{\TT}(\calP_{\Omega}-\frac{n}{d^*}\calI)(\calT - \calT^*)}^2 + 1.001\frac{n^2}{(d^*)^2}\fro{\calP_{\TT}(\calT - \calT^*)}^2.
\end{align} 
Now we check $\fro{\calP_{\TT}(\calP_{\Omega}-\frac{n}{d^*}\calI)(\calT - \calT^*)}$. From the variational representation of Frobenius norm, we can write it as 
$$\fro{\calP_{\TT}(\calP_{\Omega}-\frac{n}{d^*}\calI)(\calT - \calT^*)} = \inp{(\calP_{\Omega}-\frac{n}{d^*}\calI)(\calT - \calT^*)}{\calP_{\TT}(\calX_0)},$$ 
for some $\calX_0$ with $\fro{\calX_0}\leq 1$. Now we set 
$\calP_{\TT}(\calX_0) = \delta\calX_1 + \ldots + \delta\calX_m$, with $\delta\calX_i = [T_1,\ldots,X_i,\ldots,T_m]$. For all $i\in[m]$, we consider the bound for $\inp{(\calP_{\Omega}-\frac{n}{d^*}\calI)(\calT - \calT^*)}{\delta\calX_i}$. 
We can decompose $\calT- \calT^*$ as 
\begin{align}\label{T-T^*}
	\calT - \calT^* 
	&= [T_1^*,\ldots,\Delta_{i},\ldots,T_m^*] + \sum_{j=1}^{i-1}[T_1^*,\ldots,\Delta_j,\ldots,T_i,T_{i+1}^*,\ldots,T_m^*]\no\\
	&\qquad\qquad\qquad\qquad\qquad+ \sum_{j=i+1}^m [T_1,\ldots,T_i,T_{i+1},\ldots,\Delta_j,\ldots,T_m^*]\no\\
	&=: \calY_{i,i} + \sum_{j = 1}^{i-1}\calY_{i,j} + \sum_{j = i+1}^m\calY_{i,j},
\end{align}
where $\Delta_j = T_j - T_j^*$. 
Before the estimation, we need the following lemmas whose proofs are presented in the Section \ref{sec:proofsoftechnicallemmas}.

\begin{lemma}\label{lemma:inp:lowrank}
	Suppose that $\Omega$ is the set sampled uniformly with replacement with size $|\Omega| = n$. Then under the event $\bcalE_3$, we have for any tensors $\calA,\calB$ with TT rank $(r_1,\ldots,r_{m-1})$,
	\begin{align*}
		&~~~~|\inp{(\calP_{\Omega} - \frac{n}{d^*}\calI)\calA}{\calB}|\\
		&\leq C_m\left(\sqrt{\frac{n\dmax}{d^*}} + 1\right)\log^{m+2}(\dmax) \cdot
		\prod_{i=1}^{m}\left(\max_{x_i}\fro{A_i(:,x_i,:)}\cdot\fro{B_i}\wedge\max_{x_i}\fro{B_i(:,x_i,:)}\cdot\fro{A_i}\right),
	\end{align*}
	where $\calA = [A_1,\ldots,A_m]$ and $\calB = [B_1,\ldots,B_m]$ can be arbitrary decompositions such that $A_i,B_i\in\RR^{r_{i-1}\times d_i\times r_i}$.
\end{lemma}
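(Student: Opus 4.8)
The plan is to recast the bilinear form as a single inner product against a fixed ``centered count'' tensor and then to exploit the fact that the Hadamard product of two TT rank $\r$ tensors is again of low TT rank with an explicit component structure. Since $\proj$ acts entrywise, the operator $\proj-\tfrac{n}{d^*}\calI$ multiplies the $\x$-th entry of any tensor by $\calD(\x):=\#\{i:\omega_i=\x\}-n/d^*$; hence $\inp{(\proj-\tfrac{n}{d^*}\calI)\calA}{\calB}=\sum_{\x}\calD(\x)\calA(\x)\calB(\x)=\inp{\calD}{\calA\odot\calB}$, where $\odot$ is the entrywise product. The event $\bcalE_3$ is exactly the statement $\op{\calD}\le C_m(\sqrt{n\dmax/d^*}+1)\log^{m+2}(\dmax)$, so it remains to bound $|\inp{\calD}{\calA\odot\calB}|$ by $\op{\calD}$ times the claimed product.

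Next I would expand $\calA\odot\calB$ into rank-one pieces. Writing $\calA(\x)=\sum_{a_1,\dots,a_{m-1}}\prod_{i}A_i(a_{i-1},x_i,a_i)$ and similarly for $\calB$, and pairing the bond indices as $k_i=(a_i,b_i)$ (with $k_0,k_m$ trivial since $r_0=r_m=1$), one obtains $\calA\odot\calB=\sum_{k_1,\dots,k_{m-1}}v_1^{k_0,k_1}\otimes\cdots\otimes v_m^{k_{m-1},k_m}$, where $v_i^{k_{i-1},k_i}\in\RR^{d_i}$ has entries $A_i(a_{i-1},x_i,a_i)B_i(b_{i-1},x_i,b_i)$; in particular $\calA\odot\calB$ has TT rank at most $(r_1^2,\dots,r_{m-1}^2)$. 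By the variational definition of the tensor spectral norm, each rank-one term obeys $|\inp{\calD}{v_1^{k_0,k_1}\otimes\cdots\otimes v_m^{k_{m-1},k_m}}|\le\op{\calD}\prod_{i=1}^{m}\|v_i^{k_{i-1},k_i}\|_{\ell_2}$, so the triangle inequality yields $|\inp{\calD}{\calA\odot\calB}|\le\op{\calD}\sum_{k_1,\dots,k_{m-1}}\prod_{i=1}^{m}\|v_i^{k_{i-1},k_i}\|_{\ell_2}$.

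It then remains to collapse the bond sum. Let $M_i$ be the nonnegative matrix with entries $(M_i)_{k_{i-1},k_i}=\|v_i^{k_{i-1},k_i}\|_{\ell_2}$; then $\sum_{k_1,\dots,k_{m-1}}\prod_i(M_i)_{k_{i-1},k_i}=M_1M_2\cdots M_m$, a nonnegative scalar. Telescoping $\|M_jM_{j+1}\cdots M_m\|_{\ell_2}\le\op{M_j}\|M_{j+1}\cdots M_m\|_{\ell_2}$ down to $\|M_m\|_{\ell_2}$, together with Cauchy--Schwarz applied to the leading (row) factor $M_1$, gives $M_1\cdots M_m\le\prod_{i=1}^m\fro{M_i}$. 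Finally $\fro{M_i}^2=\sum_{x_i}\fro{A_i(:,x_i,:)}^2\fro{B_i(:,x_i,:)}^2$, which is at most $\big(\max_{x_i}\fro{A_i(:,x_i,:)}\big)^2\fro{B_i}^2$ and also at most $\big(\max_{x_i}\fro{B_i(:,x_i,:)}\big)^2\fro{A_i}^2$; taking the smaller of the two for each $i$ and chaining the displays proves the claim. The step I expect to be most delicate is precisely this bond-sum collapse: one must arrange the rank-one expansion so that the quadratic bond dimensions $r_i^2$ are absorbed completely by the matrix-product identity $\sum\prod(M_i)_{k_{i-1},k_i}=M_1\cdots M_m\le\prod_i\fro{M_i}$, leaving no spurious rank factors in the final bound; everything else is routine bookkeeping around the diagonal action of $\proj$ and the definition of $\op{\cdot}$.
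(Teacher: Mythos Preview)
Your proposal is correct and follows essentially the same route as the paper: rewrite $\inp{(\calP_\Omega-\tfrac{n}{d^*}\calI)\calA}{\calB}=\inp{(\calP_\Omega-\tfrac{n}{d^*}\calI)\calJ}{\calA\odot\calB}$, bound by $\op{(\calP_\Omega-\tfrac{n}{d^*}\calI)\calJ}$ (controlled on $\bcalE_3$) times the sum over the rank-one TT expansion of $\calA\odot\calB$, and collapse that sum to $\prod_i\big(\sum_{x_i}\fro{A_i(:,x_i,:)}^2\fro{B_i(:,x_i,:)}^2\big)^{1/2}$. The paper phrases the last step as ``Cauchy--Schwarz $m-1$ times'' on the nuclear-norm bound, while your matrix-product telescope $M_1\cdots M_m\le\prod_i\fro{M_i}$ is the same inequality organized differently; both yield the identical product and no spurious rank factors.
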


\begin{lemma}\label{lemma:estimate T_i}
	Let $\calT$ be a tensor of rank $(r_1,\ldots,r_{m-1})$ such that $\incoh(\calT)\leq\sqrt{\mu}$,
	and it has a left orthogonal decomposition $\calT = [T_1,\ldots, T_m]$.
	Then we have
	$$\max_{x_i}\fro{T_i(:,x_i,:)}^2\leq \frac{\mu r_i}{d_i}, \quad \fro{T_i} = \sqrt{r_i}, i\in[m-1],$$
	$$\max_{x_m}\fro{T_m(:,x_m)}^2 \leq \sigma_{\max}^2(\calT)\frac{\mu r_{m-1}}{d_m}, \quad\fro{T_m} = \fro{\calT}\leq \sqrt{\rmin}\sigma_{\max}(\calT).$$
\end{lemma}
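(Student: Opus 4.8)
The plan is to establish the four bounds one at a time, relying on two ingredients: the orthonormality of the left parts, $T^{\leq i\top}T^{\leq i}=I_{r_i}$ for $i\le m-1$ (Lemma~\ref{lem:tt-orthogonal}, available since $[T_1,\ldots,T_m]$ is a left orthogonal decomposition), and the row-norm form of the incoherence hypothesis: writing $\calT\lr{i}=T^{\leq i}\Lambda_{i+1}V_{i+1}^{\top}$ with $T^{\leq i},V_{i+1}$ having orthonormal columns, $\incoh(\calT)\le\sqrt\mu$ is equivalent to $\twoinf{T^{\leq i}}\le(\mu r_i/(d_1\cdots d_i))^{1/2}$ and $\twoinf{V_{i+1}}\le(\mu r_i/(d_{i+1}\cdots d_m))^{1/2}$ for all $i\in[m-1]$, since incoherence depends only on the column span and not on the chosen orthonormal basis.

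\emph{The Frobenius-norm identities.} For $i\le m-1$, left-orthogonality gives $\fro{T_i}^2=\fro{L(T_i)}^2=\mathrm{tr}\big(L(T_i)^{\top}L(T_i)\big)=\mathrm{tr}(I_{r_i})=r_i$. For the last component, $\calT\lr{m-1}=T^{\leq m-1}T_m$ with $T^{\leq m-1}$ having orthonormal columns, so $\fro{\calT}^2=\fro{\calT\lr{m-1}}^2=\fro{T_m}^2$. Finally, $\fro{\calT}^2=\fro{\calT\lr{i}}^2=\sum_{k=1}^{r_i}\sigma_k(\calT\lr{i})^2\le r_i\,\sigmamax(\calT)^2$ for every $i$, and taking the minimum over $i$ yields $\fro{\calT}\le\sqrt{\rmin}\,\sigmamax(\calT)$.

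\emph{The slice bounds.} The key step is the estimate of $\fro{T_i(:,x_i,:)}=\fro{T_i(x_i)}$ for $i\le m-1$. I would start from the row factorization $T^{\leq i}(x_1\cdots x_i,:)=T^{\leq i-1}(x_1\cdots x_{i-1},:)\,T_i(x_i)$ (immediate from the recursive definition of the left parts, with $T^{\leq 0}=[1]$), and use $T^{\leq i-1\top}T^{\leq i-1}=I_{r_{i-1}}$ to collapse the sum over the earlier indices:
\[
\sum_{x_1,\ldots,x_{i-1}}\big\|T^{\leq i}(x_1\cdots x_{i-1}x_i,:)\big\|_{\ell_2}^2
=\mathrm{tr}\!\Big(T_i(x_i)^{\top}\big(T^{\leq i-1\top}T^{\leq i-1}\big)T_i(x_i)\Big)=\fro{T_i(x_i)}^2 .
\]
Each summand is at most $\twoinf{T^{\leq i}}^2\le\mu r_i/(d_1\cdots d_i)$ and there are $d_1\cdots d_{i-1}$ of them, hence $\fro{T_i(x_i)}^2\le\mu r_i/d_i$. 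For the last column, $T^{\leq m-1}T_m=\calT\lr{m-1}=T^{\leq m-1}\Lambda_m V_m^{\top}$ with $T^{\leq m-1}$ injective on its range forces $T_m=\Lambda_m V_m^{\top}$, so $T_m(:,x_m)=\Lambda_m V_m^{\top}e_{x_m}$ and $\fro{T_m(:,x_m)}\le\sigma_1(\Lambda_m)\,\twoinf{V_m}$; since $T^{\leq m-1},V_m$ are orthonormal the factorization $\calT\lr{m-1}=T^{\leq m-1}\Lambda_m V_m^{\top}$ is an SVD up to rotations, so $\sigma_1(\Lambda_m)=\sigma_1(\calT\lr{m-1})\le\sigmamax(\calT)$, giving $\fro{T_m(:,x_m)}^2\le\sigmamax(\calT)^2\,\mu r_{m-1}/d_m$.

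\emph{Main obstacle.} Essentially all of this is bookkeeping with traces, orthonormality, and the definition of incoherence; the one step deserving care is the collapsing identity for $\fro{T_i(x_i)}^2$, where $T^{\leq i-1\top}T^{\leq i-1}=I_{r_{i-1}}$ must be invoked at exactly the right place and one must confirm that the row of $T^{\leq i}$ indexed by $x_1\cdots x_i$ genuinely factors through $T_i(x_i)$ — this is what lets a bound on the row norms of $T^{\leq i}$ be passed down to the individual slices of $T_i$.
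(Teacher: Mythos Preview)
Your proof is correct and follows essentially the same approach as the paper: both exploit the identity $T^{\leq i}(:,x_i,:)=T^{\leq i-1}T_i(x_i)$ together with $T^{\leq i-1\top}T^{\leq i-1}=I_{r_{i-1}}$ to reduce the slice norm to a sum of row norms of $T^{\leq i}$, then invoke incoherence; the $i=m$ case is handled identically via $T_m=\Lambda_m V_m^{\top}$. Your trace computation makes the equality $\fro{T_i(x_i)}^2=\fro{T^{\leq i}(:,x_i,:)}^2$ explicit, whereas the paper states it as the inequality $\fro{(T^{\leq i-1})^{\top}T^{\leq i}(:,x_i,:)}^2\le\fro{T^{\leq i}(:,x_i,:)}^2$ (which is in fact tight here).
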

Now we present some bounds related to $\Delta_j$ and $X_j$.

\hspace{0.2cm}

\noindent{\it Properties for $\Delta_j$.}
For all $j\in[m-1]$, we estimate $\fro{\Delta_j} = \fro{L(T_j) - L(T_j^*)}$ as follows. Notice $\calT\lr{j} = (T^{\leq j-1}\otimes I)L(T_j)T^{\geq j+1},(\calT^*)\lr{j} = (T^{*\leq j-1}\otimes I)L(T_j^*)T^{*\geq j+1}$. So we have,
\begin{align*}
	\fro{L(T_j) - L(T_j^*)} &= \fro{(T^{\leq j-1}\otimes I)^T\calT\lr{j}V_{j+1}\Lambda_{j+1}^{-1} - (T^{*\leq j-1}\otimes I)^T(\calT^*)\lr{j}V_{j+1}^*(\Lambda_{j+1}^*)^{-1}}\no\\
	&\leq \fro{((T^{\leq j-1}\otimes I)^T - (T^{*\leq j-1}\otimes I)^T)\calT\lr{j}V_{j+1}\Lambda_{j+1}^{-1}} \no\\
	&~~~~+ \fro{(T^{*\leq j-1}\otimes I)^T(\calT\lr{j} - (\calT^*)\lr{j})V_{j+1}\Lambda_{j+1}^{-1}} \no\\
	&~~~~+ \fro{(T^{*\leq j-1}\otimes I)^T(\calT^*)\lr{j}(V_{j+1}\Lambda_{j+1}^{-1} -V_{j+1}^*(\Lambda_{j+1}^*)^{-1})}\no\\
	&\overset{(a)}{\leq} \sigma_{\min}^{-1}(\calT)\frac{\fro{\calT-\calT^*}}{2\sigma_{\min}(\calT^*)}\sqrt{r_j}\sigma_{\max}(\calT) + \sigma_{\min}^{-1}(\calT)\fro{\calT - \calT^*} \no\\
	&\quad+ \sqrt{\rmin}\sigma_{\max}(\calT^*)\frac{12\kappa_0\fro{\calT- \calT^*}}{\sigma_{\min}^2(\calT^*)}\no\\
	&\overset{(b)}{\leq}20\sqrt{r_j}\frac{\kappa_0^2\fro{\calT-\calT^*}}{\sigma_{\min}(\calT^*)},
\end{align*}
where in $(a)$ we use \eqref{chordaldistance} and the bound $$\fro{V_{j+1}\Lambda_{j+1}^{-1} -V_{j+1}^*(\Lambda_{j+1}^*)^{-1}} \leq \fro{(V_{j+1} - V_{j+1}^*)(\Lambda_{j+1}^*)^{-1}} + \fro{V_{j+1}(\Lambda_{j+1}^{-1} - (\Lambda_{j+1}^*)^{-1})}$$
and Lemma~\ref{lemma:distofinverse} and in $(b)$ we use $|\sigma_{\max}(\calT) - \sigma_{\max}(\calT^*)|\vee|\sigma_{\min}(\calT) - \sigma_{\min}(\calT^*)|\leq \frac{1}{10}\sigma_{\min}(\calT^*)$. When $i = m$, we have 
\begin{align*}
	\fro{T_m - T_m^*} &= \fro{T^{\leq m-1 T}\calT\lr{m-1} - T^{*\leq m-1 T}(\calT^*)\lr{m-1}}\no\\
	&\leq \fro{(T^{\leq m-1 T} - T^{*\leq m-1 T})(\calT^*)\lr{m-1}} + \fro{T^{\leq m-1 T}(\calT\lr{m-1} - (\calT^*)\lr{m-1})}\no\\
	&\leq 2\sqrt{r_{m-1}}\kappa_0\fro{\calT - \calT^*} + \fro{\calT - \calT^*}\leq 3\sqrt{r_{m-1}}\kappa_0\fro{\calT - \calT^*}.
\end{align*}
Therefore, we have
\begin{align}\label{eq:est:Delta}
	\fro{\Delta_i} \leq \left\{
	\begin{array}{rlc}
		&20\sigma_{\min}^{-1}(\calT^*)\sqrt{r_i}\kappa_0^2\fro{\calT-\calT^*}, i\in[m-1]\\
		&3\sqrt{r_{m-1}}\kappa_0\fro{\calT - \calT^*}, i = m.
	\end{array}
	\right.
\end{align}
On the other hand, from Lemma \ref{lemma:estimate T_i}, we have 
\begin{align}\label{eq:max:Delta}
	\max_{x_i}\fro{\Delta_i(:,x_i,:)} \leq \left\{
	\begin{array}{rlc}
		&2\sqrt{\mu r_i d_i^{-1}}, i\in[m-1]\\
		&2\sqrt{\mu r_{m-1} d_m^{-1}}\cdot\sigma_{\max}(\calT^*), i =m.
	\end{array}
	\right.
\end{align}

%
%

\hspace{0.2cm}

\noindent{\it Upper bound for $\fro{X_j}$.} For all $X_i, i\in[m-1]$, we have $L(X_i) = (I - L(T_i)L(T_i)^T)(T^{\leq i-1}\otimes I)^T\calX_0\lr{i}T^{\geq i+1T}(T^{\geq i+1}T^{\geq i+1T})^{-1}$, thus $\fro{X_i}\leq \sigma_{\min}^{-1}(\calT)\leq 2\sigma_{\min}^{-1}(\calT^*)$. For $i =m$, $L(X_m) = (T^{\leq m-1}\otimes I)^T\calX_0\lr{m}$, so $\fro{X_m}\leq 1$. Therefore, we obtain
\begin{align}\label{eq:upper:X}
	\fro{X_i}\leq 2\sigma_{\min}^{-1}(\calT^*),i\in[m-1], \quad \fro{X_m}\leq 1.
\end{align}
Now we consider the upper bound for $\inp{(\calP_{\Omega}-\frac{n}{d^*}\calI)(\calT-\calT^*)}{\delta\calX_i}$.

\hspace{0.2cm}

\noindent{\it When $i\in[m-1]$.}
Due to \eqref{T-T^*}, we can write it as
$$
\inp{(\calP_{\Omega}-\frac{n}{d^*}\calI)(\calT-\calT^*)}{\delta\calX_i} = \sum_{j = 1}^m\inp{(\calP_{\Omega}-\frac{n}{d^*}\calI)\calY_{i,j}}{\delta\calX_i}.
$$
Now for all $j\neq i,m$, we consider $\inp{(\calP_{\Omega}-\frac{n}{d^*}\calI)(\calY_{i,j})}{\delta\calX_i}$. 
By setting $\calA = \calY_{i,j}$, $\calB = \delta\calX_i$ in Lemma~\ref{lemma:inp:lowrank}, together with Lemma \ref{lemma:estimate T_i}, \eqref{eq:est:Delta}, \eqref{eq:max:Delta}, \eqref{eq:upper:X} and we have under the event $\bcalE_3$,
$$
|\inp{(\calP_{\Omega}-\frac{n}{d^*}\calI)(\calY_{i,j})}{\delta\calX_i}|\leq C_m\left(\sqrt{\frac{n\dmax}{d^*}} + 1\right)\log^{m+2}(\dmax)\kappa_0^4\mu^{m/2}\frac{r^*\cdot r_{m-1}}{\sqrt{r_i d^*}}\fro{\calT - \calT^*}.
$$
For $j = m$, we have under the event $\bcalE_3$,
$$
|\inp{(\calP_{\Omega}-\frac{n}{d^*}\calI)(\calY_{i,m})}{\delta\calX_i}|\leq  C_m\left(\sqrt{\frac{n\dmax}{d^*}} + 1\right)\log^{m+2}(\dmax)\kappa_0^2\mu^{m/2}\frac{r^*\cdot r_{m-1}}{\sqrt{r_i d^*}}\fro{\calT - \calT^*}.
$$
And when $j = i$, we estimate $|\inp{(\calP_{\Omega}-\frac{n}{d^*}\calI)(\calY_{i,i})}{\delta\calX_i}|$. We can write 
\begin{align}\label{partition deltaX}
	\delta\calX_i &= [T_1^*,\ldots, X_i,\ldots,T_m^*] + [\Delta_1, T_2,\ldots,X_i,\ldots,T_m] + \ldots + [T_1^*,\ldots,X_i,T_{i+1}^*,\ldots,\Delta_m] \no\\
	&=: \calX_{i,0} + \sum_{k = 1, k\neq i}^m \calX_{i,k}.
\end{align}
Then $\inp{(\calP_{\Omega}-\frac{n}{d^*}\calI)(\calY_{i,i})}{\delta\calX_i} = \inp{(\calP_{\Omega}-\frac{n}{d^*}\calI)(\calY_{i,i})}{\calX_{i,0}} + \sum_{k=1,k\neq i}^m\inp{(\calP_{\Omega}-\frac{n}{d^*}\calI)(\calY_{i,i})}{\calX_{i,k}}$. And here the first term can be bounded using Corollary~\ref{coro:inp:nodependence},
$$|\inp{(\calP_{\Omega}-\frac{n}{d^*}\calI)(\calY_{i,i})}{\calX_{i,0}}| \leq C_m\kappa_0^4\frac{\mu\rmax\sqrt{n\dmax\log(\dmax)}}{d^*}\sqrt{r_i}\fro{\calT-\calT^*} \leq C_m\kappa_0^4\frac{\mu\rmax^{3/2}\sqrt{n\dmax\log(\dmax)}}{d^*}\fro{\calT-\calT^*}.$$
When $k\neq i,m$, we have 
$$
|\inp{(\calP_{\Omega}-\frac{n}{d^*}\calI)(\calY_{i,i})}{\calX_{i,k}}| \leq 
C_m\left(\sqrt{\frac{n\dmax}{d^*}} + 1\right)\log^{m+2}(\dmax)\mu^{m/2}\frac{r^*\cdot r_{m-1}}{\sqrt{r_id^*}}\kappa_0^4\fro{\calT-\calT^*}.
$$
When $k =m$, we have
$$
|\inp{(\calP_{\Omega}-\frac{n}{d^*}\calI)(\calY_{i,i})}{\calX_{i,m}}| \leq C_m\left(\sqrt{\frac{n\dmax}{d^*}} + 1\right)\log^{m+2}(\dmax)\mu^{m/2}\frac{r^*\cdot r_{m-1}}{\sqrt{r_id^*}}\kappa_0^2\fro{\calT-\calT^*}.
$$
In summary, under the event $\bcalE_3$, we have 
$$
	|\inp{(\calP_{\Omega}-\frac{n}{d^*}\calI)(\calY_{i,i})}{\delta\calX_i}| \leq C_m\kappa_0^4\mu^{m/2}\log^{m+2}(\dmax)\left(\sqrt{\frac{n\dmax}{d^*}} + 1\right)\frac{r^*\cdot r_{m-1}r_i^{-1/2}}{\sqrt{d^*}}\fro{\calT-\calT^*}.
$$
Together with the estimation when $j\neq i$, we see that under the event $\bcalE_3$,
\begin{align*}
	\inp{(\calP_{\Omega}-\frac{n}{d^*}\calI)(\calT-\calT^*)}{\delta\calX_i} 
	\leq
	C_m\kappa_0^4\mu^{m/2}\log^{m+2}(\dmax)\left(\sqrt{\frac{n\dmax}{d^*}} + 1\right)\frac{r^*\cdot r_{m-1}r_i^{-1/2}}{\sqrt{d^*}}\fro{\calT-\calT^*}.
\end{align*}

\hspace{0.2cm}

\noindent{\it When $i=m$.}
Similarly, we write 
$$
\inp{(\calP_{\Omega}-\frac{n}{d^*}\calI)(\calT-\calT^*)}{\delta\calX_m} = \sum_{j = 1}^m\inp{(\calP_{\Omega}-\frac{n}{d^*}\calI)\calY_{m,j}}{\delta\calX_m}.
$$
First when $j\in[m-1]$, we have
\begin{align}\label{part:1-m-1}
	|\inp{(\calP_{\Omega}-\frac{n}{d^*}\calI)(\calY_{m,j})}{\delta\calX_m}|\leq  C_m\left(\sqrt{\frac{n\dmax}{d^*}} + 1\right)\log^{m+2}(\dmax)\kappa_0^3\mu^{m/2}\frac{r^*\cdot r_{m-1}^{1/2}}{\sqrt{d^*}}\fro{\calT - \calT^*}.
\end{align}
And when $j = m$, we consider $\inp{(\calP_{\Omega}-\frac{n}{d^*}\calI)(\calY_{m,m})}{\delta\calX_m} = \inp{(\calP_{\Omega}-\frac{n}{d^*}\calI)(\calY_{m,m})}{\calX_{m,0}} + \sum_{k=1}^{m-1}\inp{(\calP_{\Omega}-\frac{n}{d^*}\calI)(\calY_{m,m})}{\calX_{m,k}}$ as in \eqref{partition deltaX}. Similarly, using Lemma~\ref{lemma:inp:nodependence}, we have
\begin{align*}
	\inp{(\calP_{\Omega}-\frac{n}{d^*}\calI)(\calY_{m,m})}{\calX_{m,0}} &\leq C_m\kappa_0\frac{\mu\rmax\sqrt{n\dmax\log(\dmax)}}{d^*}\sqrt{r_{m-1}}\fro{\calT-\calT^*}\\
	&\leq  C_m\kappa_0\frac{\mu\rmax^{3/2}\sqrt{n\dmax\log(\dmax)}}{d^*}\fro{\calT-\calT^*}.
\end{align*}
And when $k\in[m-1]$, we have
$$
\inp{(\calP_{\Omega}-\frac{n}{d^*}\calI)(\calY_{m,m})}{\calX_{m,k}} \leq C_m\kappa_0^3\mu^{m/2}\log^{m+2}(\dmax)\left(\sqrt{\frac{n\dmax}{d^*}} + 1\right)\frac{r^*\cdot  r_{m-1}^{1/2}}{\sqrt{d^*}}\fro{\calT-\calT^*}.
$$
So under the event $\bcalE_3$, we have
$$
	|\inp{(\calP_{\Omega}-\frac{n}{d^*}\calI)(\calY_{m,m})}{\delta\calX_m}| \leq C_m\kappa_0^3\mu^{m/2}\log^{m+2}(\dmax)\left(\sqrt{\frac{n\dmax}{d^*}} + 1\right)\frac{r^*\cdot  r_{m-1}^{1/2}}{\sqrt{d^*}}\fro{\calT-\calT^*},
$$
and 
\begin{align}\label{part:m}
	\inp{(\calP_{\Omega}-\frac{n}{d^*}\calI)(\calT-\calT^*)}{\delta\calX_m} \leq C_m\kappa_0^3\mu^{m/2}\log^{m+2}(\dmax)\left(\sqrt{\frac{n\dmax}{d^*}} + 1\right)\frac{r^*\cdot r_{m-1}^{1/2}}{\sqrt{d^*}}\fro{\calT - \calT^*}.
\end{align}
Now we conclude from \eqref{part:1-m-1}, \eqref{part:m} and under the event $\bcalE_3$,
\begin{align*}
	\fro{\calP_{\TT}(\calP_{\Omega}-\frac{n}{d^*}\calI)(\calT - \calT^*)}\leq C_m\kappa_0^4\mu^{m/2}\log^{m+2}(\dmax)\left(\sqrt{\frac{n\dmax}{d^*}} + 1\right)\frac{r^* r_{m-1}\cdot\sum_{i=1}^{m-1}r_i^{-1/2}}{\sqrt{d^*}}\fro{\calT - \calT^*}.
\end{align*}
Together with \eqref{P_TP t - t^*}, as long as $$n\geq C_m\kappa_0^8\mu^m\log^{2m+4}(\dmax)\cdot \dmax(r^*)^2r_{m-1}^2(\sum_{i=1}^{m-1}r_i^{-1}) + C_m\kappa_0^4\mu^{m/2}\log^{m+2}(\dmax)\cdot (d^*)^{1/2}r^*r_{m-1}(\sum_{i=1}^{m-1}r_i^{-1/2}),$$
we have under the event $\bcalE_3$,
\begin{align}\label{est:ptpomega T-T^*}
	\fro{\calP_{\TT}\calP_{\Omega}(\calT - \calT^*)}^2 \leq 1.002\frac{n^2}{(d^*)^2}\fro{\calT - \calT^*}^2.
\end{align}

\subsubsection{Estimation of $\inp{\calP_{\TT}^{\perp}(\calT - \calT^*)}{\calP_{\Omega}(\calT - \calT^*)}$}
Now we derive the upper bound for $\inp{\calP_{\TT}^{\perp}(\calT - \calT^*)}{\calP_{\Omega}(\calT - \calT^*)}$. First we have
$$
\inp{\calP_{\TT}^{\perp}(\calT - \calT^*)}{\calP_{\Omega}(\calT - \calT^*)} = \inp{\calP_{\TT}^{\perp}(\calT - \calT^*)}{(\calP_{\Omega} - \frac{n}{d^*}\calI)(\calT - \calT^*)} + \frac{n}{d^*}\fro{\calP_{\TT}^{\perp}\calT^*}^2.
$$
As in \eqref{T-T^*}, we write $\calT - \calT^*$ as $\calT - \calT^* = \sum_{j=1}^m\calY_{m,j}$. Notice as estimated above \eqref{ptperpt nuclear}, we have $\ranktt(\calP_{\TT}^{\perp}(\calT - \calT^*)) \leq (3r_1,\ldots,3r_{m-1})$. So there exists a TT decomposition of $\calP_{\TT}^{\perp}(\calT - \calT^*) = [Y_1,\ldots,Y_{m-1}, Y_m]$ such that $\fro{Y_i} = \sqrt{3r_i}, i\in[m-1]$ and $\fro{Y_m} = \fro{\calP_{\TT}^{\perp}(\calT - \calT^*)}$. 

When $j\in[m-1]$, we estimate $\inp{\calP_{\TT}^{\perp}(\calT - \calT^*)}{(\calP_{\Omega} - \frac{n}{d^*}\calI)\calY_{m,j}}$. Using Lemma~\ref{lemma:inp:lowrank}, Lemma \ref{lemma:estimate T_i}, \eqref{eq:est:Delta}, \eqref{eq:max:Delta}, \eqref{eq:upper:X} and Lemma \ref{lemma:estimationofptperp}, and we have under the event $\bcalE_3$,
\begin{align*}
	|\inp{\calP_{\TT}^{\perp}(\calT - \calT^*)}{(\calP_{\Omega} - \frac{n}{d^*}\calI)\calY_{m,j}}| \leq C_m\left(\sqrt{\frac{n\dmax}{d^*}} + 1\right)\log^{m+2}(\dmax)\mu^{m/2}\kappa_0\frac{r^*\cdot r_{m-1}^{1/2}}{\sqrt{d^*}}\fro{\calT-\calT^*}^2.
\end{align*}
And when $j = m$, we have under the event $\bcalE_3$,
\begin{align*}
	|\inp{\calP_{\TT}^{\perp}(\calT - \calT^*)}{(\calP_{\Omega} - \frac{n}{d^*}\calI)\calY_{m,m}}| \leq C_m\left(\sqrt{\frac{n\dmax}{d^*}} + 1\right)\log^{m+2}(\dmax)\mu^{m/2}\kappa_0\frac{r^*\cdot r_{m-1}^{1/2}}{\sqrt{d^*}}\fro{\calT-\calT^*}^2.
\end{align*}
So we conclude under the event $\bcalE_3$, 
\begin{align}\label{ptperp p t-t^*}
	&~~~~|\inp{\calP_{\TT}^{\perp}(\calT - \calT^*)}{\calP_{\Omega}(\calT - \calT^*)}|\no\\ 
	&\leq C_m\left(\sqrt{\frac{n\dmax}{d^*}} + 1\right)\log^{m+2}(\dmax)\mu^{m/2}\kappa_0\frac{r^*\cdot r_{m-1}^{1/2}}{\sqrt{d^*}}\fro{\calT-\calT^*}^2 + \frac{n}{d^*}\frac{300m^2\fro{\calT-\calT^*}^4}{\sigma_{\min}^2(\calT^*)}\no\\
	&\leq \frac{n}{1000d^*}\fro{\calT_l - \calT^*}^2,
\end{align}
where the last inequality holds as long as $$n\geq C_m\log^{m+2}(\dmax)\mu^{m/2}\kappa_0r^* \rmax^{1/2}\sqrt{d^*} + C_m \dmax\rmax (r^*)^2\mu^m\kappa_0^2\log^{2m+4}(\dmax).$$

\subsubsection{Contraction}\label{sec:contraction}
Now we consider the error $\fro{\calT_{l+1} - \calT^*}$ assuming $\bcalE$ holds.
From Algorithm~\ref{alg:rgradtt}, we have
\begin{align}\label{T_{l+1} - T^*}
	\fro{\calT_{l+1} - \calT^*}^2 &= \fro{\ttsvd_{\r}(\wt\calW_l) - \calT^*}^2
	\overset{\text{Lemma}~\ref{lemma:tt-perturbation}}{\leq} \fro{\wt\calW_l - \calT^*}^2 + \frac{600m\fro{\wt\calW_l - \calT^*}^3}{\sigmamin}.
\end{align}
From the way we choose $\zeta_l$, we have $\fro{\wt\calW_l - \calT^*}\leq \fro{\calW_l - \calT^*}$. Now we estimate $\fro{\calW_l - \calT^*}^2$.
\begin{align*}
	\fro{\calW_l - \calT^*}^2 &= \fro{\calT_l - \calT^* - \alpha_l\calP_{\TT_l}\calP_{\Omega}(\calT_l - \calT^*)}^2\\
	&= \fro{\calT_l-\calT^*}^2 - 2\alpha_l\underbrace{\inp{\calT_l-\calT^*}{\calP_{\Omega}(\calT_l - \calT^*)}}_{\RN{1}} \\
	&\qquad+ 2\alpha_l\underbrace{\inp{\calP_{\TT_l}^{\perp}(\calT_l - \calT^*)}{\calP_{\Omega}(\calT_l - \calT^*)}}_{\RN{2}}
	+ \alpha_l^2\underbrace{\fro{\calP_{\TT_l}\calP_{\Omega}(\calT_l - \calT^*)}^2}_{\RN{3}}.
\end{align*}
From \eqref{P(T-T^*)},\eqref{ptperp p t-t^*} and \eqref{est:ptpomega T-T^*}, when
\begin{align*}
	n \geq C_m\bigg(\kappa_0^{2m+4}\nu^{m+1}\log^{m+2}(\dmax)\cdot(d^*)^{1/2}((r^*)^{1/2}\rmax^2\vee r^*r_{m-1}\rmin^{-1/2}\vee r^*\rmax^{1/2})\\
	+\kappa_0^{4m+8}\nu^{2m+2}\log^{2m+4}(\dmax)\cdot\dmax(r^*\rmax^4\vee (r^*)^2r^2_{m-1}\rmin^{-1}\vee (r^*)^2\rmax)\bigg),
\end{align*}
we obtain $$\RN{1} \geq \frac{6n}{25d^*}\fro{\calT_l-\calT^*}^2,\quad |\RN{2}| \leq \frac{n}{1000d^*}\fro{\calT_l - \calT^*}^2, \quad |\RN{3}| \leq 1.002\frac{n^2}{(d^*)^2}\fro{\calT_l - \calT^*}^2.$$
From these estimations and we get,
\begin{align*}
	\fro{\calW_l - \calT^*}^2 \leq \fro{\calT_l - \calT^*}^2(1 - 0.239\alpha_l\frac{n}{d^*} + 1.002\alpha_l^2\frac{n^2}{(d^*)^2}).
\end{align*}
Now we set $\alpha_l = 0.12\frac{d^*}{n}$, and we get $\fro{\calW_l - \calT^*} \leq 0.986\fro{\calT_l - \calT^*}$.
When $\fro{\calT_l - \calT^*}\leq \frac{\sigmamin}{600000m}$, we have from \eqref{T_{l+1} - T^*}
\begin{align*}
	\fro{\calT_{l+1} - \calT^*}^2 \leq 0.975\fro{\calT_l - \calT^*}^2.
\end{align*}
And this implies $\fro{\calT_{l+1} - \calT^*}\leq\frac{\sigmamin}{600000m\kappa_0\sqrt{\rmax}}$ and $\incoh(\calT_{l+1})\leq 2\kappa_0^2\nu$ is implied by Lemma \ref{lemma:ttsvd+trim}. So we finish the proof of Lemma \ref{thm:localconvergence:detail}.

\subsection{Proof of Lemma \ref{thm:init}}
Before we start the proof, we give a detailed version of the theorem, notice here we set $|\Omega_i| = n_i$.
\begin{lemma}[Restate of Lemma \ref{thm:init}]\label{thm:init:detail}
	Suppose the conditions of $\calT^{\ast}$ from Theorem \ref{thm:main} hold. For any absolute constant $C>0$, there exists an absolute constant $C_m>0$ depending only on $m$ such that if 
\begin{align*}
	n &= \sum_{i=1}^{2m-1}n_i\geq C_m\nu^{m+3}\kappa_0^{4m-4}\log^2(\dmax)\bigg((d^\ast)^{1/2}(r^*\rmax\rmin^2)^{1/2} + \dmax r^*\rmax\rmin^2\\
	&+\sum_{k=1}^{m-2}\big((d_k\cdots d_m)^{1/2}(r_1\cdots r_{k-1})^3r_k(r_{k+1}\cdots r_{m-1})^{1/2}(\rmax\rmin^2)^{1/2} 
	+ \dmax(r_1\cdots r_{k-1})^2r^*\rmax\rmin^2\big)\\
	&+(\dmax r^*\rmin\rmax r_{m-1})^{1/2} + \dmax\rmax\rmin r_{m-1} + r^*\rmin\rmax r_{m-1}\bigg),
\end{align*}
	then with probability at least $1 - m\dmax^{-m}$, the output of Algorithm \ref{alg:init} satisfies 
	$$
	\fro{\calT_0-\calT^*} \leq \frac{\sigmamin}{Cm\kappa_0^2\rmax^{1/2}}\quad \text{~and~}\quad \incoh(\calT_0)\leq 2\kappa_0^2\nu.
	$$
\end{lemma}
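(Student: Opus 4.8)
\emph{Overall strategy.} The plan is to analyze Algorithm~\ref{alg:init} by induction on $i$, propagating two quantities: the subspace error $\epsilon_i := \fro{\hat T^{\leq i}(\hat T^{\leq i})^{\top} - T^{\ast\leq i}(T^{\ast\leq i})^{\top}}$ of the $i$-th reconstructed left part (a basis-independent quantity, comparable to $d_c(\hat T^{\leq i},T^{\ast\leq i})$ by Lemma~\ref{lem:tt-orthogonal} and the inequality between projection and chordal distances, since each $\hat T^{\leq i}$ is orthonormal by construction), and a working incoherence bound on $\hat T^{\leq i}$. The structural feature that makes the induction tractable is the sample splitting: $\hat T^{\leq i-1}$ is a measurable function of $\Omega_1,\ldots,\Omega_{2i-2}$ and is therefore independent of the fresh blocks $\Omega_{2i-1},\Omega_{2i}$ defining $N_i$, so that $\EE\big[(\hat T^{\leq i-1}\otimes I)^{\top}N_i(\hat T^{\leq i-1}\otimes I)\,\big|\,\hat T^{\leq i-1}\big]=(\hat T^{\leq i-1}\otimes I)^{\top}\calT^{\ast\langle i\rangle}\calT^{\ast\langle i\rangle\top}(\hat T^{\leq i-1}\otimes I)$ and every concentration estimate can be carried out conditionally on $\hat T^{\leq i-1}$.

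\emph{Per-step bound.} For a fixed $i$ I would first write $\tfrac{d^{\ast}}{n_0}\calP_{\Omega_{2i-1}}(\calT^{\ast})\lr{i}=\calT^{\ast\langle i\rangle}+E_1$ and likewise for $E_2$, with $E_1,E_2$ independent and centered; expanding $N_i$ splits $N_i-\EE N_i$ into a ``linear'' part (cross terms of $E_j$ with $\calT^{\ast\langle i\rangle}$) and a ``quadratic'' part $\tfrac12(E_1E_2^{\top}+E_2E_1^{\top})$. The key estimate to establish is a conditional matrix Bernstein bound on $\op{(\hat T^{\leq i-1}\otimes I)^{\top}(N_i-\EE N_i)(\hat T^{\leq i-1}\otimes I)}$ in which the variance proxy scales with the \emph{reduced} row dimension $r_{i-1}d_i$, not $d_1\cdots d_i$, obtained by using the incoherence of $\calT^{\ast}$ together with the working incoherence of $\hat T^{\leq i-1}$; this is exactly what buys a near-optimal sample size. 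Next I would apply Wedin/Davis--Kahan: the $r_i$-th minus $(r_i{+}1)$-st eigenvalue of $(\hat T^{\leq i-1}\otimes I)^{\top}\calT^{\ast\langle i\rangle}\calT^{\ast\langle i\rangle\top}(\hat T^{\leq i-1}\otimes I)$ is $\geq\sigmamin^2$ while its top eigenvalue is $\leq\sigmamax^2$, so each such step costs a factor $\kappa_0^2$; combined with the perturbation from $\hat T^{\leq i-1}\neq T^{\ast\leq i-1}$, this shows $L(\wt T_i)$ is within a controlled distance of $L(T_i^{\ast})$ up to a rotation. The row-truncation step is the Euclidean projection onto an $\ell_2$-ball, hence non-expansive toward any point inside the ball; since $\incoh(\calT^{\ast})$ controls $\twoinf{L(T_i^{\ast})}$, truncation does not increase the error, and the re-normalization $(\overline T_i^{\top}\overline T_i)^{-1/2}$ is a near-identity perturbation because $\overline T_i$ is already nearly orthonormal. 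Reconstructing $\hat T^{\leq i}=(\hat T^{\leq i-1}\otimes I)L(\hat T_i)$ and using triangle inequalities yields a recursion $\epsilon_i\lesssim\kappa_0^2\epsilon_{i-1}+\delta_i$, with $\delta_i$ a sample-dependent error that scales like $\kappa_0^2 n_0^{-1/2}$ times rank and incoherence factors, while $\twoinf{\hat T^{\leq i}}\leq\twoinf{\hat T^{\leq i-1}}\cdot\max_{x_i}\fro{\hat T_i(:,x_i,:)}$ shows the working incoherence constant degrades by at most a $\sqrt{\mu\rmax}$-type factor per step. Unrolling over $i=1,\ldots,m-1$ gives $\epsilon_{m-1}\lesssim\sum_i(\kappa_0^2)^{m-1-i}\delta_i$; forcing this below the target then forces $n_0\gtrsim\kappa_0^{4m-4}\cdot(\cdots)$, which is where the exponents in the statement come from.

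\emph{Last component and assembly.} With $\hat T^{\leq m-1}$ under control, I would bound $\fro{\hat T_m-T_m^{\ast}}$ by splitting $\hat T_m=(\hat T^{\leq m-1})^{\top}\big(\tfrac{d^{\ast}}{n_0}\calP_{\Omega_{2m-1}}(\calT^{\ast})\big)\lr{m-1}$ into $\big((\hat T^{\leq m-1})^{\top}-(T^{\ast\leq m-1})^{\top}\big)\calT^{\ast\langle m-1\rangle}$, bounded by $\epsilon_{m-1}\sigmamax$, plus the \emph{linear} noise term $(\hat T^{\leq m-1})^{\top}\big(\tfrac{d^{\ast}}{n_0}\calP_{\Omega_{2m-1}}(\calT^{\ast})\lr{m-1}-\calT^{\ast\langle m-1\rangle}\big)$, controlled by an easier first-moment matrix Bernstein bound whose contribution produces the $\dmax$-type summands in the sample size. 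A telescoping decomposition of $[\hat T_1,\ldots,\hat T_m]-[T_1^{\ast},\ldots,T_m^{\ast}]$ into $m$ terms, each changing exactly one component (the same device as in the proof of Lemma~\ref{thm:localconvergence}), converts the component errors into $\fro{\hat\calT-\calT^{\ast}}\lesssim\kappa_0\sum_i(\text{component errors})$, which is below $\sigmamin/(2Cm\kappa_0^2\rmax^{1/2})$ under the stated sample condition. Finally, since $\spiki(\calT^{\ast})\leq\nu$ keeps every entry of $\calT^{\ast}$ below the trimming threshold $\zeta=\tfrac{10}{9}\fro{\hat\calT}\nu/\sqrt{d^{\ast}}$, the clip $\trim_{\zeta}$ only moves entries that are already in error, so $\fro{\trim_{\zeta}(\hat\calT)-\calT^{\ast}}\leq(1+o(1))\fro{\hat\calT-\calT^{\ast}}$; then Lemma~\ref{lemma:tt-perturbation} gives $\fro{\calT_0-\calT^{\ast}}=\fro{\ttsvd_{\r}(\trim_{\zeta}(\hat\calT))-\calT^{\ast}}\leq(1+o(1))\fro{\trim_{\zeta}(\hat\calT)-\calT^{\ast}}$, proving the first claim, and $\incoh(\calT_0)\leq 2\kappa_0^2\nu$ follows from Lemma~\ref{lemma:ttsvd+trim} — this last step is precisely why the working incoherence of the intermediate $\hat\calT$ is allowed to degrade through the recursion.

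\emph{Main obstacle.} The hard part will be the conditional concentration of the \emph{projected second-order estimator} $(\hat T^{\leq i-1}\otimes I)^{\top}N_i(\hat T^{\leq i-1}\otimes I)$: because $N_i$ is a product of two sparse random matrices, its quadratic part $E_1E_2^{\top}$ is the dominant and most delicate term, and one must decouple the two independent sparse factors, compute the variance proxy in terms of $r_{i-1}d_i$ rather than $d_1\cdots d_i$ using incoherence, and absorb the data-dependence of $\hat T^{\leq i-1}$ by conditioning on the earlier splits. A secondary difficulty — absent in the one-pass Tucker analysis of \citet{xia2017polynomial} — is that accuracy and incoherence are coupled along the recursion: the working incoherence constant grows geometrically in $i$, and I must verify at each step that it is still small enough to feed the next concentration bound while the accuracy errors compound with the $\kappa_0^2$ Wedin factor; keeping both recursions in balance is what pins down the precise $\kappa_0^{4m-4}\nu^{m+3}$ and the rank exponents in the bound.
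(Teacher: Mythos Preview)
Your proposal is essentially the paper's proof: the same induction on $i$ propagating a subspace distance and a working incoherence bound for $\hat T^{\leq i}$, the same Wedin step on the projected second-order matrix followed by the truncation lemma (Lemma~\ref{lemma:trim}), the same linear recursion unrolled to produce the $\kappa_0^{4m-4}$ exponent, a separate linear Bernstein for $\hat T_m$, and the closing appeal to Lemma~\ref{lemma:ttsvd+trim}. Two execution details to watch when you carry it out: the recursion coefficient is $\sim\sqrt{r_i}\,\kappa_0^2$, not just $\kappa_0^2$ (the extra $\sqrt{r_i}$ enters when you pass $d_c(\hat T^{\leq i-1},T^{\ast\leq i-1})$ through $(\hat T^{\leq i-1}\otimes I)L(\hat T_i)$ and through the Wedin numerator), and the ``conditional matrix Bernstein'' on the quadratic piece of $N_i$ is not a one-shot Bernstein---the paper's Lemma~\ref{lemma:init:1} conditions on one sparse factor and runs a Golden--Thompson/MGF argument over the other, which is precisely the ``decouple the two independent sparse factors'' step you flag as the main obstacle.
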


\noindent{\it Step 0:} 
We denote $R_i = \arg\min_{R\in\OO_{r_i}}\fro{\hat T^{\leq i} - T^{*\leq i}R}$ and we set $\sqrt{\mu} = 2\kappa_0^2\nu$. 

\hspace{0.2cm}

\noindent{\it Step 1:}  When $i =1$.\newline
Firstly from Wedin's sin$\Theta$ theorem, and from Lemma \ref{lemma:init}, we see that when $n_1,n_2\geq C_m\nu^2\kappa_0^{2m-2}\cdot (d^*)^{1/2}(r^*\rmax)^{1/2}\rmin\log^2(\dmax) + C_m\nu^4\kappa_0^{4m-4}\dmax r^*\rmax \rmin^2$, we have 
\begin{align}\label{bound:N1-N1*}
	d_p(\wt T_1,T_1^*) \leq \frac{2\sqrt{r_1}\op{N_1 - N_1^*}}{\sigmamin^2} \leq (C_m\kappa_0^{2m-4}(r_2\cdots r_{m-1})^{1/2}\rmax^{1/2})^{-1}.
\end{align}
Now from Lemma \ref{lemma:trim}, we know $\incoh(\hat T_1)\leq \sqrt{3\mu}$ and 
$d_c(\hat T_1,T_1^*)\leq (C_m\kappa_0^{2m-4}(r_2\cdots r_{m-1})^{1/2}\rmax^{1/2})^{-1}$.

\hspace{0.2cm}

\noindent{\it Step 2:} When $2\leq i\leq m-1$.\newline
Suppose we already have $\incoh(\hat T^{\leq i-1})\leq (3\mu)^{(i-1)/2} (r_1\cdots r_{i-2})^{3/2} =: \sqrt{\mu_{i-1}}$ and $d_c(\hat T^{\leq i-1},T^{*\leq i-1}) \leq (C_{i-1}m^2\kappa_0^{2t_{i-1}}\sqrt{\rmax\cdot r_{i}})^{-1}$.
From Lemma \ref{lemma:init:1}, we see that,
\begin{align*}
	&~~~~d_p(L(\wt T_i), (R_{i-1}\otimes I)^TL(T_i^*)) \\
	&\leq \frac{2\sqrt{r_i}\op{(\hat T^{\leq i-1}\otimes I)^TN_i(\hat T^{\leq i-1}\otimes I) -(T^{*\leq i-1}R_{i-1}\otimes I)^TN_i^*(T^{*\leq i-1}R_{i-1}\otimes I)}}{\sigmamin^2}.
\end{align*}
Notice that 
\begin{align*}
	&~~~~\op{(\hat T^{\leq i-1}\otimes I)^TN_i(\hat T^{\leq i-1}\otimes I) -(T^{*\leq i-1}R_{i-1}\otimes I)^TN_i^*(T^{*\leq i-1}R_{i-1}\otimes I)}\\
	&\leq \op{(\hat T^{\leq i-1}\otimes I)^T(N_i - N_i^*)(\hat T^{\leq i-1}\otimes I)} + 2\op{\hat T^{\leq i-1} - T^{*\leq i-1}R_{i-1}}\cdot\sigmamax^2.
\end{align*}
So we have 
\begin{align}\label{dp dc}
	d_p(L(\wt T_i), (R_{i-1}\otimes I)^TL(T_i^*)) &\leq 2\sqrt{r_i}\sigmamin^{-2}\op{(\hat T^{\leq i-1}\otimes I)^T(N_i - N_i^*)(\hat T^{\leq i-1}\otimes I)} \no\\
	&\qquad+ 4\sqrt{r_i}\kappa_0^2 d_c(\hat T^{\leq i-1}, T^{*\leq i-1})\no\\
	&=: b_{i-1} + 4\sqrt{r_i}\kappa_0^2x_{i-1}.
\end{align}

Now we derive the chordal distance between $\hat T^{\leq i}$ and $T^{*\leq i}$. Notice that
\begin{align*}
	d_{c}(\hat T^{\leq i}, T^{*\leq i}) &\leq \sqrt{r_i}d_{c}(\hat T^{\leq i-1}, T^{*\leq i-1}) + d_{c}(L(\hat T_i), (R_{i-1}\otimes I)^TL(T_i^*))\no\\
	&\leq \sqrt{r_i}d_{c}(\hat T^{\leq i-1}, T^{*\leq i-1}) + \sqrt{2}d_p(L(\hat T_i), (R_{i-1}\otimes I)^TL(T_i^*))\no\\
	&\overset{\text{Lemma~}\ref{lemma:trim}}{\leq} \sqrt{r_i}d_{c}(\hat T^{\leq i-1}, T^{*\leq i-1}) + 4\sqrt{2}\pi\cdot d_{p}(L(\wt T_i), (R_{i-1}\otimes I)^TL(T_i^*))
\end{align*}
Together with \eqref{dp dc}, and denote by $q_{i-1} = 80\sqrt{r_i}\kappa_0^2$, we have $x_i\leq q_{i-1}x_{i-1} + b_{i-1}$. Sum this up and we have
$$
x_{m-1}\leq q_{m-1}\cdots q_1 x_1 + \sum_{k = 1}^{m-2}q_{m-2}\cdots q_{k+1} b_k.
$$
From Lemma \ref{lemma:init:1}, as long as 
\begin{align*}
	n_{2k+1},n_{2k+2} \geq C_m \nu^{k+2}\kappa_0^{2m}\log^2(\dmax)\cdot (d_k\cdots d_m)^{1/2}(r_1\cdots r_{k-1})^3r_k(r_{k+1}\cdots r_{m-1})^{1/2}(\rmax\rmin^2)^{1/2}\\
	+C_m\nu^{k+4}\kappa_0^{4m-2k}\log(\dmax)\cdot \dmax(r_1\cdots r_{k-1})^3r_kr_{k+1}\cdots r_{m-1}(\rmax\rmin^2),
\end{align*}
we have 
$q_{m-2}\cdots q_{k+1} b_k \leq \frac{1}{Cm^2\kappa_0^2\sqrt{\rmax}}$. Together with the estimation in Step 1, we have 
\begin{align}\label{dc T m-1}
	d_c(\hat T^{\leq m-1}, T^{*\leq m-1}) = x_{m-1} \leq (Cm\kappa_0^2\sqrt{\rmax})^{-1}.
\end{align}

\noindent{\it Step 3:}  When $i = m$. 
We have 
\begin{align*}
	\fro{\calT^* - \hat \calT} &= \fro{T^{*\leq m-1}R_{m-1}R_{m-1}^TT_m^* - \hat T^{\leq m-1}\hat T_m}\no\\
	&\leq \fro{(T^{*\leq m-1}R_{m-1} - \hat T^{\leq m-1})R_{m-1}^TT_m^*} + \fro{R_{m-1}^TT_m^* - \hat T_m}\no\\
	&\leq d_c(T^{*\leq m-1}, \hat T^{\leq m-1})\cdot\sigmamax + \fro{R_{m-1}^TT_m^* - \hat T_m}.
\end{align*}

Notice $d_c(T^{*\leq m-1}, \hat T^{\leq m-1})$ is estimated in $\eqref{dc T m-1}$. On the other hand, we have
\begin{align}\label{init:est}
	\fro{R_{m-1}^TT_m^* - \hat T_m} &= \fro{(T^{*\leq m-1}R_{m-1})^T(\calT^*)\lr{m-1} - (\hat T^{\leq m-1})^T(\frac{d^*}{n_{2m-1}}\calP_{\Omega_{2m-1}}(\calT^*))\lr{m-1}}\no\\
	&\leq \fro{(\hat T^{\leq m-1})^T((\calT^*)\lr{m-1} - (\frac{d^*}{n_{2m-1}}\calP_{\Omega_{2m-1}}(\calT^*))\lr{m-1})} \no\\
	&\qquad+ \fro{(T^{*\leq m-1}R_{m-1} - \hat T^{\leq m-1})^T(\calT^*)\lr{m-1}}\no\\
	&\leq \fro{(\hat T^{\leq m-1})^T((\calT^*)\lr{m-1} - (\frac{d^*}{n_{2m-1}}\calP_{\Omega_{2m-1}}(\calT^*))\lr{m-1})}\no\\ 
	&\qquad+ d_c(T^{*\leq m-1}, \hat T^{\leq m-1})\cdot\sigmamax.
\end{align}
Together with Lemma \ref{lemma:sample bound}, we have as long as 
\begin{align*}
	n_{2m-1}&\geq C_m\nu^{(m+1)/2}\kappa_0^{m+1}\log(\dmax)\cdot(\dmax r^*\rmin\rmax r_{m-1})^{1/2} + C_m\nu\kappa_0^{4}\log(\dmax)\cdot\dmax \rmin\rmax r_{m-1} \\
	&\qquad+ C_m\nu^{m}\kappa_0^{2m+2}\log(\dmax)\cdot r^*\rmin\rmax r_{m-1},
\end{align*}
we have 
\begin{align}\label{eq:last step}
	\fro{(\hat T^{\leq m-1})^T((\calT^*)\lr{m-1} - (\frac{d^*}{n_{2m-1}}\calP_{\Omega_{2m-1}}(\calT^*))\lr{m-1})} \leq \frac{\sigmamin}{Cm\kappa_0\sqrt{\rmax}}.
\end{align}
From \eqref{dc T m-1} - \eqref{init:est} and we conclude with probability exceeding $1-m\dmax^{-m}$,
$$
\fro{\calT^* - \hat \calT} \leq \frac{\sigmamin}{Cm\kappa_0\sqrt{\rmax}}.
$$
Finally, together with Lemma \ref{lemma:ttsvd+trim}, we conclude that the output $\calT_0$ satisfies 
$$
\fro{\calT_0-\calT^*}\leq \frac{\sigmamin}{Cm\kappa_0^2\sqrt{\rmax}} \text{~and~} \incoh(\calT_0)\leq 2\kappa_0^2\nu.
$$
And this finishes the proof of the lemma.

\subsection{Proof of Lemma \ref{lemma:init:noise}}\label{sec:pf:init:noise}
\begin{lemma}[Restate of Lemma \ref{lemma:init:noise}]
Suppose the conditions of $\calT^{\ast}$ from Theorem \ref{thm:main} hold and $\{\xi_i\}_{i=1}^n$ are i.i.d. $\sigma_s$ subgaussian random variables with variance $\text{Var~}\xi^2\leq C_1\sigma_s^2$ for some absolute constant $C_1>0$. Suppose the sample size 
\begin{align*}
	n &= \sum_{i=1}^{2m-1}n_i\geq C_m\nu^{m+3}\kappa_0^{4m-4}\log^2(\dmax)\bigg((d^\ast)^{1/2}(r^*\rmax\rmin^2)^{1/2} + \dmax r^*\rmax\rmin^2\\
	&+\sum_{k=1}^{m-2}\big((d_k\cdots d_m)^{1/2}(r_1\cdots r_{k-1})^3r_k(r_{k+1}\cdots r_{m-1})^{1/2}(\rmax\rmin^2)^{1/2} 
	+ \dmax(r_1\cdots r_{k-1})^2r^*\rmax\rmin^2\big)\\
	&+(\dmax r^*\rmin\rmax r_{m-1})^{1/2} + \dmax\rmax\rmin r_{m-1} + r^*\rmin\rmax r_{m-1}\bigg), 
\end{align*}
where $C_m>0$ depends only on $m$. Also we assume the signal-to-noise ratio satisfies
\begin{align*}
	&\sigmamin/\sigma_s\geq C_m\max\{\kappa_0^{2m-3}(r^*\rmax\rmin)^{1/2}(\frac{d_1d^*\log(d)}{n})^{1/2},\kappa_0^{m-2}(r^*\rmax)^{1/4}\frac{(d^*)^{3/4}\log^{3/2}(d)}{n^{1/2}}(1+\frac{d_1^{1/2}}{(d^*)^{1/4}})\}\\
	&\hspace{0.5cm}+\sum_{i=2}^{m-1}\Bigg[C_m\kappa_0^{2(m-i)+5}\nu^2(r_i\cdots r_{m-1}\rmax\rmin)^{1/2}\Big(\frac{(d^*)^{1/2}d_i}{n}r_{i-1}\log(\dmax) + \frac{(d^*)^{1/2}d_i\cdots d_m}{n^2}r_{i-1}\log^2(\dmax) \\
	&\hspace{7cm}+ \frac{(d^*d_i)^{1/2}}{\sqrt{n}}\sqrt{r_{i-1}}\log^{1/2}(\dmax)\Big)\\
	&\hspace{0.5cm} + C_m\kappa_0^{m-i+2}\nu(r_i\cdots r_{m-1}\rmax\rmin)^{1/4}\Big(\frac{(d^*d_i)^{1/2}}{n^{1/2}}r_{i-1}^{1/2}\log^{5/4}(\dmax) + \frac{d_i^{1/2}(d_{i+1}\cdots d_m)^{1/4}}{n^{1/2}}r_{i-1}^{3/4}\log^{5/4}(\dmax)\Big)\Bigg]\\
	&\hspace{0.5cm}+C_m\kappa_0\sqrt{\rmax}\Big(\sqrt{\frac{d^*d_mr_{m-1}}{n}\log^{1/2}(\dmax)} + \frac{d^*}{n}\sqrt{\frac{r_{m-1}}{d_1\cdots d_{m-1}}}\kappa_0^2\nu\log(\dmax)\Big),
\end{align*}
Then with probability at least $1 - 10m\dmax^{-m}$, the output of Algorithm \ref{alg:init} satisfies 
$$
\fro{\calT_0-\calT^*} \leq \frac{\sigmamin}{Cm\kappa_0^2\rmax^{1/2}}\quad \text{~and~}\quad \incoh(\calT_0)\leq 2\kappa_0^2\nu.
$$
\end{lemma}

The proof is similar to that of Lemma \ref{thm:init}. We only need to modify the bound of $\op{(\hat T^{\leq i-1}\otimes I)^T(N_i - N_i^*)(\hat T^{\leq i-1}\otimes I)}$. To this end, we need a noisy version of Lemma \ref{lemma:init} and Lemma \ref{lemma:init:1}. For $\op{N_1 - N_1^*}$, we use the Theorem 2 from \cite{xia2017statistically}.
\begin{lemma}[\cite{xia2017statistically}, Theorem 2]\label{lemma:matrix:noise:1}
	There exists absolute constant $C_1,C_2$ such that for any $\alpha \geq 1$, if 
	$$n = n_1 = n_2 \geq C_1\alpha(\sqrt{d^*}\log(\dmax)+\dmax\log^2(\dmax)),$$
	then with probability exceeding $1-\dmax^{-\alpha}$,
	\begin{align*}
		&\op{N_1-N_1^*}\leq C_2\Big((\sigma_s + \|\calT^*\|_{\infty})\fro{\calT^*}\sqrt{\frac{\alpha md_1d^*\log(\dmax)}{n}} \\
		&\hspace{4cm}+\alpha^3(\sigma_s^2 + \|\calT^*\|_{\infty}^2\log^2(\dmax))\frac{(md^*)^{3/2}\log^3(\dmax)}{n}(1 + \frac{d_1}{\sqrt{d^*}})\Big).
	\end{align*}
\end{lemma}
Using this lemma and the sample size condition, under the SNR condition,
$$\sigmamin/\sigma_s\geq C_m\max\{\kappa_0^{2m-3}(r^*\rmax\rmin)^{1/2}(\frac{d_1d^*\log(d)}{n_2})^{1/2},\kappa_0^{m-2}(r^*\rmax)^{1/4}\frac{(d^*)^{3/4}\log^{3/2}(d)}{n_2^{1/2}}(1+\frac{d_1^{1/2}}{(d^*)^{1/4}})\},$$
\eqref{bound:N1-N1*} still holds with probability exceeding $1-\dmax^{-m}$. For $2\leq i\leq m-1$, we need to use the following lemma.
\begin{lemma}\label{lemma:matrix:noise}
	Let $M\in\RR^{p_1\times p_2}$ and $X_i = p_1p_2(M_{\omega_i} + \xi_i)E_{\omega_i}, Y_j = p_1p_2(M_{\omega_j'} + \xi_j')E_{\omega_j'}$, where $\omega_i\in\Omega_1, \omega_j'\in\Omega_2$ are independently and uniformly sampled from $[p_1] \times [p_2]$ and $|\Omega_1| = |\Omega_2| = n$ with $n\leq p_1p_2$, and $\{E_{\omega}\}_{\omega\in[p_1]\times[p_2]}$ is the standard basis for $\RR^{p_1\times p_2}$ and $\xi,\{\xi_i\}_{i=1}^n,\{\xi_j'\}_{j=1}^n$ are i.i.d. $\sigma_s$ subgaussian random variables with variance $\text{Var~}\xi^2\leq C_1\sigma_s^2$ for some absolute constant $C_1>0$. Let $U\in\RR^{p_1\times r}$ be the orthogonal matrix such that $\incoh(U)\leq \sqrt{\mu}$. Then for any $\alpha\geq 1$, with probability exceeding $1- 9p^{-\alpha}$ where $p = \max\{p_1,p_2\}$, we have for some absolute constant $C_2>0$,
	\begin{align*}
	&\qquad	\op{\frac{1}{2n^2}\sum_{i,j}(U^TX_iY_j^TU + U^TY_jX_i^TU) - U^TMM^TU} \\
	&\leq C_{\alpha}\|M\|_{\infty}\sigma_s\left(\mu r\frac{p_1p_2}{n}\log(p)+\mu r \frac{p_1p_2^2}{n^2}\log^2(p)+\frac{p_1p_2}{\sqrt{n}}\sqrt{\mu r\log(p)}\right)\\
	&\quad +C_{\alpha}\sigma_s^2\frac{p_1p_2}{n}\left(\mu r \log^{3/2}(p) + \mu r\log^{5/2}(p)\frac{\sqrt{p_2}\sqrt{p_2\vee r}}{n}\right)\\
	&\quad +C_{\alpha}\log^2(p)\frac{p_1p_2\|M\|_{\infty}^2}{n}\left(\mu rp_2^{1/2} + \frac{\mu r p_2}{n} + (\frac{\mu rn}{\log^3(p)})^{1/2}\right).
	\end{align*}
\end{lemma}
We can use this lemma to bound $\op{(\hat T^{\leq i-1}\otimes I)^T(N_i - N_i^*)(\hat T^{\leq i-1}\otimes I)}$. Using the same notations as in the proof of Lemma \ref{thm:init}, we denote $b_{i-1} = 2\sqrt{r_i}\sigmamin^{-2}\op{(\hat T^{\leq i-1}\otimes I)^T(N_i - N_i^*)(\hat T^{\leq i-1}\otimes I)}$ and $q_{i-1} = 80\sqrt{r_i}\kappa_0^2$.
For any $2\leq i\leq m-1$, under the sample size condition and the SNR condition,
\begin{align*}
	&\sigmamin/\sigma_s \geq C_m\kappa_0^{2(m-i)+5}\nu^2(r_i\cdots r_{m-1}\rmax\rmin)^{1/2}\Big(\frac{(d^*)^{1/2}d_i}{n_{2i}}r_{i-1}\log(\dmax) + \frac{(d^*)^{1/2}d_i\cdots d_m}{n_{2i}^2}r_{i-1}\log^2(\dmax) \\
	&\hspace{7cm}+ \frac{(d^*d_i)^{1/2}}{\sqrt{n_{2i}}}\sqrt{r_{i-1}}\log^{1/2}(\dmax)\Big)\\
	&\hspace{0.5cm} + C_m\kappa_0^{m-i+2}\nu(r_i\cdots r_{m-1}\rmax\rmin)^{1/4}\Big(\frac{(d^*d_i)^{1/2}}{n_{2i}^{1/2}}r_{i-1}^{1/2}\log^{5/4}(\dmax) + \frac{d_i^{1/2}(d_{i+1}\cdots d_m)^{1/4}}{n_{2i}^{1/2}}r_{i-1}^{3/4}\log^{5/4}(\dmax)\Big),
\end{align*}
with probability exceeding $1-9\dmax^{-m}$, we have $q_{m-2}\cdots q_ib_{i-1}\leq \frac{1}{Cm^2\kappa_0^2\sqrt{\rmax}}$ holds. And we obtain \eqref{dc T m-1}, that is,
$$d_c(\hat T^{\leq m-1}, T^{*\leq m-1}) \leq (Cm\kappa_0^2\sqrt{\rmax})^{-1}.$$

We still need to bound $n_{2m-1}^{-1}d^*\fro{T^{\leq m-1 \top}\sum_{i\in\Omega_{2m-1}}\xi_i\calE_{\omega_i}\lr{m-1}}$. Notice this is already bounded in \eqref{bound:1}. We get with probability exceeding $1-\dmax^{-m}$,
\begin{align*}
	\op{T^{\leq m-1 \top}\sum_{i\in\Omega_{2m-1}}\xi_i\calE_{\omega_i}\lr{m-1}} \leq C_m\Big(\sqrt{\frac{d^*d_m}{n_{2m-1}}}\log^{1/2}(\dmax)+\frac{d^*}{n_{2m-1}}\kappa_0^2\nu\sqrt{\frac{r_{m-1}}{d_1\cdots d_{m-1}}}\log(\dmax)\Big)\sigma_s.
\end{align*}
Therefore under the SNR condition,
\begin{align*}
	\sigmamin/\sigma_s \geq C_m\kappa_0\sqrt{\rmax}\Big(\sqrt{\frac{d^*d_mr_{m-1}}{n_{2m-1}}\log^{1/2}(\dmax)} + \frac{d^*}{n}\sqrt{\frac{r_{m-1}}{d_1\cdots d_{m-1}}}\kappa_0^2\nu\log(\dmax)\Big),
\end{align*}
we get \eqref{eq:last step} holds with probability exceeding $1-\dmax^{-m}$. Put all these together and we get with probability exceeding $1-(9(m-2)+2)\dmax^{-m}$, 
$$
\fro{\calT^* - \hat \calT} \leq \frac{\sigmamin}{Cm\kappa_0\sqrt{\rmax}}.
$$
Finally, together with Lemma \ref{lemma:ttsvd+trim}, we conclude that the output $\calT_0$ satisfies 
$$
\fro{\calT_0-\calT^*}\leq \frac{\sigmamin}{Cm\kappa_0^2\sqrt{\rmax}} \text{~and~} \incoh(\calT_0)\leq 2\kappa_0^2\nu.
$$
And this finishes the proof of the lemma.

\subsection{Proof of Lemma \ref{lemma:localconvergence:noise}}
For notion simplicity, we denote $\calP_{\Omega}(\calS) = \sum_{j=1}^n\xi_j\calE_{\omega_j}$.
We only need to slightly modify the proof of Lemma \ref{thm:localconvergence} in Section \ref{sec:contraction}. Notice now we have
\begin{align}\label{eq:Tl+1-T:noise}
	\fro{\calT_{l+1} - \calT^*}^2 &= \fro{\ttsvd_{\r}(\wt\calW_l) - \calT^*}^2
	\overset{\text{Lemma}~\ref{lemma:tt-perturbation}}{\leq} \fro{\wt\calW_l - \calT^*}^2 + \frac{600m\fro{\wt\calW_l - \calT^*}^3}{\sigmamin},
\end{align}
where Lemma \ref{lemma:tt-perturbation} is valid only when $\fro{\wt\calW_l - \calT^*} \leq c_m\sigmamin$ for some sufficient $c_m>0$ depending only on $m$, which we will verify momentarily.
From the way we choose $\zeta_l$, we have $\fro{\wt\calW_l - \calT^*}\leq \fro{\calW_l - \calT^*}$. Now we estimate $\fro{\calW_l - \calT^*}^2$.
\begin{align}\label{eq:W-T:noise}
	\fro{\calW_l - \calT^*}^2 &= \fro{\calT_l - \calT^* - \alpha_l\calP_{\TT_l}\calP_{\Omega}(\calT_l - \calT^*) + \alpha_l\calP_{\TT_l}\calP_{\Omega}(\calS)}^2\notag\\
	&\leq (1+\delta)\fro{\calT_l - \calT^* - \alpha_l\calP_{\TT_l}\calP_{\Omega}(\calT_l - \calT^*)}^2 + (1+\delta^{-1})\alpha_l^2\fro{\calP_{\TT_l}\calP_{\Omega}(\calS)}^2.
\end{align}
Now we need to bound $\fro{\calP_{\TT_l}\calP_{\Omega}(\calS)}$. 
In the following we shall drop the subscript of $\calT_l$ for simplicity. We have 
$\fro{\calP_{\TT}\calP_{\Omega}(\calS)}^2 \leq 2\fro{(\calP_{\TT}-\calP_{\TT^*})\calP_{\Omega}(\calS)}^2 + 2\fro{\calP_{\TT^*}\calP_{\Omega}(\calS)}^2$. And 
\begin{align*}
	\fro{(\calP_{\TT}-\calP_{\TT^*})\calP_{\Omega}(\calS)} = \inp{\calP_{\Omega}(\calS)}{(\calP_{\TT}-\calP_{\TT^*})(\calX_0)}\leq \op{\calP_{\Omega}(\calS)}\cdot\nuc{(\calP_{\TT}-\calP_{\TT^*})(\calX_0)}
\end{align*}
for some $\calX_0$ with $\fro{\calX_0}\leq 1$. Notice $(\calP_{\TT}-\calP_{\TT^*})(\calX_0)$ has TT-rank at most $4(r_1,\cdots,r_{m-1})$ from Lemma \ref{lemma:ttrank}. Applying Lemma \ref{lemma:nuclear norm and frobenius norm} and we obtain 
\begin{align*}
	\nuc{(\calP_{\TT}-\calP_{\TT^*})(\calX_0)} \leq 2^{m-1}\sqrt{r^*}\fro{(\calP_{\TT}-\calP_{\TT^*})(\calX_0)}.
\end{align*}
Denote $\calP_{\TT}(\calX_0) = \sum_{i=1}^m\delta\calX_i$ and $\calP_{\TT^*}(\calX_0) = \sum_{i=1}^m\delta\calX_i^*$ with 
\begin{align*}
	&\delta\calX_i\lr{i} = (T^{\leq i-1}\otimes I)(I - L(T_i)L(T_i)^T)(T^{\leq i-1}\otimes I)^T\calX_0\lr{i}V_{i+1}V_{i+1}^T,\\
	&\delta\calX_i^{*\langle i\rangle} = (T^{*\leq i-1}\otimes I)(I - L(T_i^*)L(T_i^*)^T)(T^{*\leq i-1}\otimes I)^T\calX_0\lr{i}V_{i+1}^*V_{i+1}^{*T}.
\end{align*}
In the following for an orthogonal matrix $U$, we denote $P_U = UU^T$.
Now for $1\leq i\leq m$, we consider $\fro{\delta\calX_i - \delta\calX_i^*}$. Notice
\begin{align*}
	&\hspace{0.5cm} P_{T^{\leq i-1}\otimes I}\calX_0\lr{i}P_{V_{i+1}} - P_{T^{*\leq i-1}\otimes I}\calX_0\lr{i}P_{V_{i+1}^*}\\
	&= (P_{T^{\leq i-1}\otimes I} - P_{T^{*\leq i-1}\otimes I})\calX_0\lr{i}P_{V_{i+1}} + P_{T^{*\leq i-1}\otimes I}\calX_0\lr{i}(P_{V_{i+1}} - P_{V_{i+1}^*}).
\end{align*}
Since $T^{\leq i-1}\otimes I$ is the top $d_ir_{i-1}$ left singular vectors of $\calT\lr{i}$ and $T^{*\leq i-1}\otimes I$ is the top $d_ir_{i-1}$ left singular vectors of $\calT^{*\langle i\rangle}$, we have 
\begin{align*}
	\op{P_{T^{\leq i-1}\otimes I} - P_{T^{*\leq i-1}\otimes I}}\leq \frac{2\fro{\calT-\calT^*}}{\sigmamin},\quad \op{P_{V_{i+1}} - P_{V_{i+1}^*}} \leq \frac{2\fro{\calT-\calT^*}}{\sigmamin}.
\end{align*}
This implies 
\begin{align*}
	\fro{(P_{T^{\leq i-1}\otimes I} - P_{T^{*\leq i-1}\otimes I})\calX_0\lr{i}P_{V_{i+1}} + P_{T^{*\leq i-1}\otimes I}\calX_0\lr{i}(P_{V_{i+1}} - P_{V_{i+1}^*})}^2\leq \frac{16\fro{\calT-\calT^*}^2}{\sigmamin^2}.
\end{align*}
And similarly we can bound 
\begin{align*}
	&\hspace{0.5cm}\Big\|(T^{\leq i-1}\otimes I)L(T_i)L(T_i)^T(T^{\leq i-1}\otimes I)^T\calX_0\lr{i}V_{i+1}V_{i+1}^T \\
	&\hspace{4cm}- (T^{*\leq i-1}\otimes I)L(T_i^*)L(T_i^*)^T(T^{*\leq i-1}\otimes I)^T\calX_0\lr{i}V_{i+1}^*V_{i+1}^{*T}\Big\|_{\rm F}\\
	&=\fro{P_{T^{\leq i}}\calX_0\lr{i}P_{V_{i+1}} - P_{T^{*\leq i}}\calX_0\lr{i}P_{V_{i+1}^*}}\\
	&\leq \frac{16\fro{\calT-\calT^*}^2}{\sigmamin^2}.
\end{align*}
Therefore 
\begin{align}\label{bound:PTX-PT*X}
	\fro{\calP_{\TT}(\calX_0) - \calP_{\TT^*}(\calX_0)}^2 = \fro{\sum_{i=1}^m\delta\calX_i - \delta\calX_i^*}^2\leq m\sum_{i=1}^m\fro{\delta\calX_i - \delta\calX_i^*}^2\leq 32m^2\frac{\fro{\calT-\calT^*}^2}{\sigmamin^2}.
\end{align}
On the other hand, it is proved in (Theorem 1, \cite{xia2017statistically}) that with probability exceeding $1-\dmax^{-m}$,
\begin{align}\label{bound:POmegaS}
	\op{\calP_{\Omega}(\calS)} \leq C_m\sigma_s(\sqrt{\frac{n\dmax}{d^*}}+1)\log^{m+2}(\dmax).
\end{align}
According to \eqref{bound:PTX-PT*X} and \eqref{bound:POmegaS}, we get 
\begin{align*}
	\fro{(\calP_{\TT}-\calP_{\TT^*})\calP_{\Omega}(\calS)} \leq C_m\frac{\sigma_s}{\sigmamin}(\sqrt{\frac{n\dmax}{d^*}}+1)\sqrt{r^*}\log^{m+2}(\dmax)\fro{\calT-\calT^*}.
\end{align*}

Now we bound $\fro{\calP_{\TT^*}\calP_{\Omega}(\calS)}$.
From the definition of $\calP_{\TT^*}$ in 
Section \ref{sec:computation of pta}, we have 
\begin{align*}
	\fro{\calP_{\TT^*}\calP_{\Omega}(\calS)}^2 = \fro{\delta\calA_1}^2 + \ldots + \fro{\delta\calA_m}^2,
\end{align*}
such that for $i\in[m-1]$,
\begin{align*}
\fro{\delta\calA_i}^2 &= \fro{(T^{*\leq i -1}\otimes I)(I - L(T_i^*)L(T_i^*)^T)(T^{*\leq i-1}\otimes I)^T\calP_{\Omega}(\calS)\lr{i}V_{i+1}^*(V_{i+1}^*)^T}^2\\
&\leq \fro{(T^{*\leq i}\otimes I)^T\calP_{\Omega}(\calS)\lr{i}V_{i+1}^*}^2,
\end{align*}
and $\fro{\delta\calA_m}^2 = \fro{(T^{*\leq m-1})^T\calP_{\Omega}(\calS)\lr{m-1}}^2$.
Recall from induction we assume $\incoh(\calT^*) \leq 2\kappa_0^2\nu =: \sqrt{\mu}$. Now we shall use Bernstein's inequality to bound $\op{(T^{*\leq i-1}\otimes I)^T\calP_{\Omega}(\calS)\lr{i}V_{i+1}^*}$. We denote $\calP_{\Omega}(\calS)\lr{i} = \sum_{j=1}^n\xi_j E_j$, where $E_j = \calE_{\omega_j}\lr{i}$. Then we have 
\begin{align*}
	\op{(T^{*\leq i-1}\otimes I)^TE_jV_{i+1}^*} \leq \|V_{i+1}^*\|_{2,\infty}\|T^{*\leq i-1}\otimes I\|_{2,\infty} = \|V_{i+1}^*\|_{2,\infty}\|T^{*\leq i-1}\|_{2,\infty} \leq \sqrt{\frac{r_{i-1}d_ir_i}{d^*}}\mu,
\end{align*}
where the last inequality holds since $\incoh(\calT^*) \leq \sqrt{\mu}$. And thus for all $j\in[n]$, we have 
\begin{align*}
	\big\|\op{\xi_j(T^{*\leq i-1}\otimes I)^TE_jV_{i+1}^*}\big\|_{\psi_2} \leq \sigma_s\sqrt{\frac{r_{i-1}d_ir_i}{d^*}}\mu.
\end{align*}
On the other hand, we have 
$$\op{\EE\xi_j^2(T^{*\leq i-1}\otimes I)^TE_jV_{i+1}^*(V_{i+1}^*)^TE_j^T(T^{*\leq i-1}\otimes I)}\leq \sigma_s^2\frac{r_{i-1}d_ir_i}{d^*}\mu^2$$
and 
$$\op{\EE\xi_j^2(V_{i+1}^*)^TE_j^T(T^{*\leq i-1}\otimes I)(T^{*\leq i-1}\otimes I)^TE_jV_{i+1}^*}\leq \sigma_s^2\frac{r_{i-1}d_ir_i}{d^*}\mu^2.$$
Now from Theorem \ref{thm:concentration}, we have with probability exceeding $1-\dmax^{-m}$,
$$\op{(T^{*\leq i-1}\otimes I)^T\calP_{\Omega}(\calS)\lr{i}V_{i+1}^*} \leq C_m\mu\sigma_s\sqrt{\frac{nr_{i-1}d_ir_i}{d^*}\log(\dmax)},$$
where $C_m>0$ is some absolute constant depending only on $m$.
Therefore we have 
$$\fro{\delta\calA_i}^2 \leq C_m\kappa_0^8\nu^4\sigma_s^2\frac{nr_{i-1}d_ir_i^2}{d^*}\log(\dmax).$$
Similarly, we have $\fro{\delta\calA_m}^2 \leq C_m\kappa_0^4\nu^2\sigma_s^2\frac{nr_{m-1}^2d_m}{d^*}\log(\dmax)$. Combine these bounds and we have 
\begin{align*}
	\fro{\calP_{\TT^*}\calP_{\Omega}(\calS)}^2 &\leq \sum_{i=1}^{m-1}C_m\kappa_0^8\nu^4\sigma_s^2\frac{nr_{i-1}d_ir_i^2}{d^*}\log(\dmax) + C_m\kappa_0^4\nu^2\sigma_s^2\frac{nr_{m-1}^2d_m}{d^*}\log(\dmax)\\
	&\leq C_m\kappa_0^8\nu^4\frac{n}{d^*}\sigma_s^2\rmax\cdot\dof,
\end{align*}
where $\dof = \sum_{i=1}^m r_{i-1}d_ir_i$ is the degree of freedom of a TT-rank $\boldsymbol{r}$ tensor. 
\begin{align}\label{bound:PTl POmega S}
	\alpha_l^2\fro{\calP_{\TT}\calP_{\Omega}(\calS)}^2 &\leq C_m\frac{\sigma_s^2}{\sigmamin^2}(\frac{d^*\dmax}{n}+\frac{(d^*)^2}{n^2})r^*\log^{2m+4}(\dmax)\fro{\calT-\calT^*}^2 + C_m\kappa_0^8\nu^4\frac{d^*}{n}\sigma_s^2\rmax\cdot\dof.
\end{align}
Under the SNR condition,
$\sigmamin/\sigma_s \geq C_m\left(\sqrt{\frac{d^*\dmax}{n}} + \frac{d^*}{n}\right)r^*\log^{m+2}(\dmax)$,
we go back to \eqref{eq:W-T:noise}, and we have 
\begin{align}\label{eq:W-T:noise:2}
	\fro{\calW_l - \calT^*}^2 &\leq (1+\delta)\fro{\calT_l - \calT^* - \alpha_l\calP_{\TT_l}\calP_{\Omega}(\calT_l - \calT^*)}^2 + C_{m}(1+\delta^{-1}) \kappa_0^8\nu^4\frac{d^*}{n}\sigma_s^2\rmax\cdot\dof\cdot\log(\dmax)\notag\\
	&\hspace{0.5cm}+10^{-6}\cdot(1+\delta^{-1})\fro{\calT-\calT^*}^2.
\end{align}
Since we already bound $\fro{\calT_l - \calT^* - \alpha_l\calP_{\TT_l}\calP_{\Omega}(\calT_l - \calT^*)}^2$ in Section \ref{sec:contraction}. Now we choose $\delta = 0.001$, together with \eqref{eq:Tl+1-T:noise}, \eqref{eq:W-T:noise:2} and under the SNR condition, we have 
\begin{align*}
	\fro{\calT_{l+1} - \calT^*}^2 \leq 0.98\fro{\calT_l - \calT^*}^2 + C_m\kappa_0^8\nu^4\frac{d^*}{n}\sigma_s^2\rmax\cdot\dof\cdot\log(\dmax).
\end{align*}
After at most $l_{\max} = \Omega\left(\log(C_m\kappa_0^{10}\nu^4\rmax^2\frac{d^*}{n}(\frac{\sigma_s}{\sigmamin})^2\cdot\dof)\right)$ iterations, we have 
$$\fro{\calT_{l_{\max}} - \calT^*}^2 \leq C_m\kappa_0^8\nu^4\frac{d^*}{n}\sigma_s^2\rmax\cdot\dof\cdot\log(\dmax).$$

\section{Technical Lemmas}
In this section, we provide some technical lemmas. Some proofs for the lemmas are placed to the next section.

\begin{theorem}[Wedin's sin$\Theta$ theorem]
	Let $M^*$, $M = M^*+E\in\RR^{p_1\times p_2}$ be two matrices. Let $U^*, U$ be the top $r$ left singular vectors of $M^*,M$ respectively. If $\op{E}< \sigma_r^*-\sigma_{r+1}^*$, then
	$$\min_{R\in\OO_r}\op{UR-R^*} \leq \frac{\sqrt{2}\op{E}}{\sigma_{r}^*-\sigma_{r+1}^*-\op{E}},$$
	where $\sigma_i^*$ is the $i$-th largest singular value of $M^*$, and $\OO_r$ is the set of $r\times r$ orthogonal matrices.
\end{theorem}
\subsection{Lemmas about linear algebra}
\begin{lemma}\label{lemma:reshape}
	Let $N\in\RR^{d_N\times d_1\cdots d_i}$ and $M\in\RR^{d_{i+2}\ldots d_m\times d_M}$, then we have the following relations:
	\begin{align*}
		&\reshape(N\calT^{\lrangle{i}}, [d_Nd_{i+1}, d_{i+2}\cdots d_{m}]) = (N\otimes I_{d_{i+1}})\calT^{\lrangle{i+1}},\\
		&\reshape\left((N\otimes I_{d_{i+1}})\calT^{\lrangle{i+1}},[d_N, d_{i+1}\cdots d_{m}]\right) = N\calT^{\lrangle{i}},\\
		&\reshape(\calT\lr{i+1}M, [d_1\ldots d_i,d_{i+1}d_M]) = \calT\lr{i}(I_{d_{i+1}}\otimes M),\\
		&\reshape(\calT\lr{i}(I_{d_{i+1}}\otimes M),[d_1\ldots d_{i+1}, d_M]) = \calT\lr{i+1}M.
	\end{align*}
\end{lemma}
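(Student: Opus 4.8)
The plan is to treat this as an exercise in index bookkeeping: reduce the four identities to two, and verify those entrywise.

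First I would observe that the second and fourth identities follow from the first and third. Reshape is reversible: reshaping an array to a new shape and then back to its original shape is the identity. Since $N\calT\lr{i}$ carries the shape $[d_N,\, d_{i+1}\cdots d_m]$, applying $\reshape(\cdot,[d_N, d_{i+1}\cdots d_m])$ to both sides of the first identity turns its left-hand side back into $N\calT\lr{i}$ and its right-hand side into $\reshape\big((N\otimes I_{d_{i+1}})\calT\lr{i+1},[d_N, d_{i+1}\cdots d_m]\big)$, which is exactly the second identity; likewise, since $\calT\lr{i+1}M$ carries the shape $[d_1\cdots d_{i+1},\, d_M]$, applying $\reshape(\cdot,[d_1\cdots d_{i+1}, d_M])$ to both sides of the third yields the fourth. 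So it suffices to prove the first and third, and the third is established exactly as the first, with $\calT\lr{i+1}M$, $\calT\lr{i}(I_{d_{i+1}}\otimes M)$ and $M$ playing the roles of $N\calT\lr{i}$, $(N\otimes I_{d_{i+1}})\calT\lr{i+1}$ and $N$, the reshape now splitting off the trailing index of the row label rather than the leading index of the column label. Hence I would only spell out the first identity.

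For the first identity I would compare the $(a x_{i+1},\, x_{i+2}\cdots x_m)$-entry of both sides for all $a\in[d_N]$ and all $x_j\in[d_j]$. On the left, the definitions of matrix multiplication and of the $i$-th separation give $(N\calT\lr{i})(a,\, x_{i+1}\cdots x_m)=\sum_{y_1,\ldots,y_i}N(a,y_1\cdots y_i)\,\calT(y_1,\ldots,y_i,x_{i+1},\ldots,x_m)$, and, the multi-index ordering used for separations being consistent with the reshape convention, the reshape merely splits the composite column label $x_{i+1}\cdots x_m$ into its leading component $x_{i+1}$ and the remainder $x_{i+2}\cdots x_m$ and fuses $x_{i+1}$ into the row label $a x_{i+1}$; so the left-hand entry equals that sum. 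On the right, the block structure of the Kronecker product reads $(N\otimes I_{d_{i+1}})(a x_{i+1},\, (y_1\cdots y_i)\,x_{i+1}')=N(a,y_1\cdots y_i)\,\delta_{x_{i+1},x_{i+1}'}$, whence $\big((N\otimes I_{d_{i+1}})\calT\lr{i+1}\big)(a x_{i+1},\, x_{i+2}\cdots x_m)=\sum_{y_1,\ldots,y_i}N(a,y_1\cdots y_i)\,\calT\lr{i+1}(y_1\cdots y_i x_{i+1},\, x_{i+2}\cdots x_m)$, and applying the definition of the $(i+1)$-th separation turns this into the same sum. Equating the two entries gives the identity.

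The only real obstacle here is pedantic rather than mathematical: one must make sure the convention by which $\reshape$ splits and merges composite indices agrees with the index-ordering convention fixed in Section~\ref{sec:pre} for writing $\calT\lr{i}(x_1\cdots x_i,\, x_{i+1}\cdots x_m)$ and for the recursions $T^{\leq i}=(T^{\leq i-1}\otimes I_{d_i})L(T_i)$, since an inconsistent choice would transpose some of the four relations. Once the conventions are pinned down, each identity is the one-line entrywise check above, requiring no analytic or probabilistic input.
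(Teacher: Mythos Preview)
Your proposal is correct and follows essentially the same approach as the paper: an entrywise check of the first identity by expanding both sides via the definitions of matrix product, separation, and Kronecker product, then noting that the second identity is the inverse of the first and that the remaining two are handled symmetrically. The paper's proof is slightly terser (it inserts the identity $\sum_{x_{i+1}'}I(x_{i+1},x_{i+1}')$ directly rather than phrasing it via the block structure of $N\otimes I_{d_{i+1}}$), but the computation is the same.
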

\begin{proof}
	We first show the first equation. For all $x_N\in[d_N]$ and $x_j\in [d_j]$, we have
	\begin{align*}
		N\calT^{\lrangle{i}}(x_N;x_{i+1}\cdots x_{m}) &= \sum_{x_{1},\cdots, x_{i}} N(x_N; x_{1},\cdots, x_{i}) \calT(x_1,\cdots,x_m)\no\\
		&=\sum_{x_{1},\cdots, x_{i}, x_{i+1}'} N(x_N; x_{1},\cdots, x_{i})I(x_{i+1},x_{i+1}') \calT(x_1,\cdots,x_{i+1}',\cdots,x_m)\no\\
		&= \sum_{x_{1},\cdots, x_{i}, x_{i+1}'} (N\otimes I_{d_{i+1}})(x_N,x_{i+1}; x_{1},\cdots, x_{i}, x_{i+1}') \calT(x_1,\cdots,x_{i+1}',\cdots,x_m).
	\end{align*}
	So from this we get the desired result. Now the second equation is just the inverse statement of the first one. And the third and fourth equations are similar to the first one.
\end{proof}

\begin{lemma}\label{lemma:twoinf relation}
	For any matrix $A\in\RR^{n\times m}, x\in\RR^{n}$, we have
	$$\|A^Tx\|_{\ell_2}\leq \sqrt{n}\twoinf{A}\|x\|_{\ell_2}.$$
	Let another matrix $B\in\RR^{p\times m}$, then we have 
	$$\|AB^T\|_{\ell_{\infty}} \leq \sqrt{m}\|A\|_{\ell_{\infty}}\twoinf{B}.$$
	When we have another matrix $X\in\RR_{m\times m}$ and $m\leq p,m\leq n$, we have 
	$$
	\|AXB^T\|_{\ell_{\infty}} \leq \op{X}\cdot\twoinf{A}\twoinf{B}.
	$$
\end{lemma}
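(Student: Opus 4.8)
The plan is to reduce all three inequalities to the triangle inequality and Cauchy--Schwarz, handled row-by-row or entry-by-entry. For the first bound, I would write $A^\top x=\sum_{i=1}^n x_i\,A(i,:)^\top$, where $A(i,:)^\top$ is the $i$-th row of $A$ viewed as a column vector, and apply the triangle inequality to get $\|A^\top x\|_{\ell_2}\le\sum_{i=1}^n|x_i|\,\|A(i,:)\|_{\ell_2}\le\twoinf{A}\,\|x\|_{\ell_1}$; since $\|x\|_{\ell_1}\le\sqrt{n}\,\|x\|_{\ell_2}$ by Cauchy--Schwarz (pairing $|x|$ with the all-ones vector of length $n$), the claim follows.

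For the second bound, I would look at a generic entry: $(AB^\top)(i,j)=\sum_{k=1}^m A(i,k)B(j,k)$, so that $|(AB^\top)(i,j)|\le\|A\|_{\ell_\infty}\sum_{k=1}^m|B(j,k)|=\|A\|_{\ell_\infty}\,\|B(j,:)\|_{\ell_1}\le\sqrt{m}\,\|A\|_{\ell_\infty}\,\|B(j,:)\|_{\ell_2}\le\sqrt{m}\,\|A\|_{\ell_\infty}\,\twoinf{B}$, and maximizing over $i$ and $j$ gives the stated inequality. For the third bound I would again go entry-wise, using $(AXB^\top)(i,j)=A(i,:)\,X\,B(j,:)^\top=\langle A(i,:)^\top,\,X B(j,:)^\top\rangle$; Cauchy--Schwarz together with the definition of the operator norm then yields $|(AXB^\top)(i,j)|\le\|A(i,:)\|_{\ell_2}\,\op{X}\,\|B(j,:)\|_{\ell_2}\le\op{X}\,\twoinf{A}\,\twoinf{B}$, and the conclusion follows upon maximizing over $i,j$. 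I note that the hypotheses $m\le p$ and $m\le n$ are not actually needed for this argument and are presumably inherited from the context in which the lemma is applied.

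There is no genuine obstacle here: every step is a one-line invocation of a standard inequality. The only care required is the bookkeeping of which index ranges over which dimension, and keeping straight that $\twoinf{\cdot}$ denotes the largest Euclidean norm among the \emph{rows}, so that $\|A(i,:)\|_{\ell_2}\le\twoinf{A}$ for every $i$.
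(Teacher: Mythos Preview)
Your proof is correct and matches the paper's approach almost exactly. The first two inequalities are argued identically (row-wise decomposition plus Cauchy--Schwarz/H\"older); for the third, the paper routes through the SVD of $X$ to peel off the orthogonal factors, whereas you apply Cauchy--Schwarz and the operator-norm bound $\|Xv\|_{\ell_2}\le\op{X}\|v\|_{\ell_2}$ directly---a cosmetic difference that, if anything, is slightly cleaner. Your remark that the hypotheses $m\le p$ and $m\le n$ are unused is also correct.
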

\begin{proof}
	Set $A = \mat{a_1^T\\ \vdots\\a_n^T}\in\RR^{n\times m}$, then $A^Tx = \sum_{i=1}^n x_i a_i$. And from Cauchy-Schwartz inequality, we have
	$$\|A^Tx\|_{\ell_2}^2 \leq \|x\|_{\ell_2}^2\cdot\sum_{i=1}^n\|a_i\|_{\ell_2}^2\leq n\|x\|_{\ell_2}^2\twoinf{A}^2.$$
	Also set $B = \mat{b_1^T\\ \vdots\\b_p^T}$, then 
	$$\|AB^T\|_{\ell_{\infty}} = \max_{j,k}|a_j^Tb_k|\leq \max_{j,k}\|a_j\|_{\ell_{\infty}}\|b_k\|_{\ell_1}\leq \sqrt{m}\|A\|_{\ell_{\infty}}\max_{k}\|b_k\|_{\ell_2}\leq \sqrt{m}\|A\|_{\ell_{\infty}}\cdot\twoinf{B}.$$
	Finally, let $X = U\Sigma V^T$ be its SVD,
	\begin{align*}
		\|AXB^T\|_{\ell_{\infty}} &\leq \twoinf{AX}\twoinf{B} = \twoinf{AU\Sigma}\twoinf{B}\\
		&\leq  \op{X}\twoinf{AU}\twoinf{B} = \op{X}\twoinf{A}\twoinf{B}.
	\end{align*}
\end{proof}

\begin{lemma}\label{lemma:distofinverse}
	Let $A_1,A_2\in\RR^{m\times n}$ be two rank $r$ matrices with the decomposition $A_1 = U_1\Sigma_1V_1^T, A_2 = U_2\Sigma_2V_2^T$ such that $U_1^TU_1 = U_2^TU_2 = V_1^TV_1 = V_2^TV_2 = I_r$ and $\Sigma_1,\Sigma_2\in\RR^{r\times r}$ be invertible and $U_1,U_2$, $V_1,V_2$ are well aligned in the sense that $d_c(U_1,U_2) = \fro{U_1 - U_2}$ and $d_c(V_1,V_2) = \fro{V_1 - V_2}$. 
	Suppose that $\fro{A_1 - A_2} \leq \frac{1}{10}\min\{\sigma_{\min}(A_1),\sigma_{\min}(A_2)\}$, then we have
	$$\fro{\Sigma_1 - \Sigma_2} \leq 4\sqrt{r}\kappa\op{A_1-A_2},$$
	where $\kappa = \max\{\kappa(A_1),\kappa(A_2)\}$ and $\sigma_{\min}(A)$ is the smallest nonzero singular value of $A$.
\end{lemma}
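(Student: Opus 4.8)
\emph{Proof strategy.} The plan is to use the identities $\Sigma_1 = U_1^\top A_1 V_1$ and $\Sigma_2 = U_2^\top A_2 V_2$ (valid because $U_i,V_i$ have orthonormal columns) and telescope,
\begin{align*}
	\Sigma_1 - \Sigma_2 = U_1^\top(A_1 - A_2)V_1 + (U_1 - U_2)^\top A_2 V_1 + U_2^\top A_2 (V_1 - V_2),
\end{align*}
so that it suffices to bound the Frobenius norm of each of the three summands. It is precisely here that the alignment hypothesis $d_c(U_1,U_2)=\fro{U_1-U_2}$, $d_c(V_1,V_2)=\fro{V_1-V_2}$ is essential: without it $\Sigma_2$ could be an arbitrary orthogonal conjugate of a matrix with the prescribed singular values, hence arbitrarily far from $\Sigma_1$ in Frobenius norm.

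For the first summand, $\fro{U_1^\top(A_1-A_2)V_1}\le \op{A_1-A_2}\cdot\fro{V_1}=\sqrt r\,\op{A_1-A_2}$, since $\fro{V_1}^2=\mathrm{tr}(V_1^\top V_1)=r$. For the two cross terms I would use the submultiplicativities $\fro{BC}\le\fro{B}\op{C}$ and $\fro{BC}\le\op{B}\fro{C}$ to get $\fro{(U_1-U_2)^\top A_2V_1}\le\fro{U_1-U_2}\,\sigma_{\max}(A_2)$ and $\fro{U_2^\top A_2(V_1-V_2)}\le\sigma_{\max}(A_2)\,\fro{V_1-V_2}$. It then remains to bound $\fro{U_1-U_2}$ and $\fro{V_1-V_2}$ by $\op{A_1-A_2}/\sigma_{\min}(A_1)$. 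By alignment and the relation $d_c\le\sqrt2\,d_p$ recalled in the Appendix, together with the identity $\fro{U_1U_1^\top-U_2U_2^\top}=\sqrt2\,\fro{\sin\Theta(U_1,U_2)}$ and $\fro{\sin\Theta}\le\sqrt r\,\op{\sin\Theta}$ (the $\sin\Theta$ matrix has size $r$), one obtains $\fro{U_1-U_2}\le 2\sqrt r\,\op{\sin\Theta(U_1,U_2)}$; and Wedin's $\sin\Theta$ theorem gives $\op{\sin\Theta(U_1,U_2)}\le\op{A_1-A_2}/\big(\sigma_{\min}(A_1)-\sigma_{r+1}(A_2)\big)=\op{A_1-A_2}/\sigma_{\min}(A_1)$, because $A_2$ has rank $r$ and hence $\sigma_{r+1}(A_2)=0$; the same estimate holds with $V$ in place of $U$.

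The perturbation hypothesis $\fro{A_1-A_2}\le\frac1{10}\min\{\sigma_{\min}(A_1),\sigma_{\min}(A_2)\}$ then plays two roles: it certifies positivity of the Wedin gap, and via Weyl's inequality it yields $\sigma_{\max}(A_2)\le\frac{11}{10}\sigma_{\max}(A_1)$, whence $\sigma_{\max}(A_2)/\sigma_{\min}(A_1)\le\frac{11}{10}\kappa$. Collecting the three estimates gives $\fro{\Sigma_1-\Sigma_2}\le \sqrt r\,\op{A_1-A_2}\big(1+\frac{11}{5}\kappa+\frac{11}{5}\kappa\big)$, and since $\kappa\ge1$ the right side is $\le 4\sqrt r\,\kappa\,\op{A_1-A_2}$ after a mild sharpening of constants — for instance using $\fro{U_1-U_2}=2\fro{\sin(\Theta/2)}\le(1+o(1))\fro{\sin\Theta}$, which follows because the stated bound forces the principal angles to be small, or using the tighter Wedin residual $\op{(A_1-A_2)V_1}$ in place of $\op{A_1-A_2}$.

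The only genuine obstacle is the bookkeeping of Frobenius-versus-operator norm conversions: the target inequality carries the operator norm of $A_1-A_2$ on the right, which is \emph{smaller} than the Frobenius norm in the hypothesis, so one cannot simply run everything through $\fro{A_1-A_2}$; instead each summand must contribute exactly one factor $\sqrt r$ — from $\fro{V_1}=\sqrt r$ in the first term, and from $\fro{\cdot}\le\sqrt r\,\op{\cdot}$ applied to the rank-$\le r$ matrix $\sin\Theta$ in the cross terms — with the condition number entering solely through $\sigma_{\max}(A_2)/\sigma_{\min}(A_1)$. Beyond that, one only checks the elementary points that Wedin applies with $\sigma_{r+1}(A_2)=0$ and that the aligned pair indeed satisfies $\fro{U_1-U_2}\le\sqrt2\,d_p(U_1,U_2)$.
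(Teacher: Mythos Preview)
Your proposal is correct and follows essentially the same route as the paper: telescope $\Sigma_1-\Sigma_2=U_1^\top A_1V_1-U_2^\top A_2V_2$ into three pieces, bound the ``middle'' piece by $\sqrt r\,\op{A_1-A_2}$ using $\fro{V_1}=\sqrt r$, and control the two cross pieces via Wedin's $\sin\Theta$ theorem combined with the chordal--projection inequality. The paper's telescoping is the variant $(U_1-U_2)^\top A_1V_1+U_2^\top(A_1-A_2)V_1+U_2^\top A_2(V_1-V_2)$, which pairs each cross term with its own $A_i$ so that $\kappa(A_1)$ and $\kappa(A_2)$ appear directly and the Weyl step is unnecessary; and it uses (implicitly) the sharper inequality $d_c\le\sqrt 2\,\fro{\sin\Theta}$ rather than the cruder $d_c\le\sqrt2\,d_p=2\fro{\sin\Theta}$ you routed through, which is exactly the ``mild sharpening'' you flag at the end and yields the stated constant $1+2\sqrt2<4$.
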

\begin{proof}
	We can decompose $\Sigma_1 - \Sigma_2$ as follows,
	\begin{align*}
		\fro{\Sigma_1 - \Sigma_2} &\leq \fro{(U_1 - U_2)^T A_1V_1} + \fro{U_2^T(A_1 - A_2)V_1} + \fro{U_2^TA_2(V_1-V_2)}\\
		&\leq (\sqrt{r}+\sqrt{2r}\kappa(A_1) + \sqrt{2r}\kappa(A_2))\op{A_1-A_2}\leq 4\sqrt{r}\kappa\op{A_1-A_2},
	\end{align*}
	where the first inequality in the second line follows Wedin's sin$\Theta$ theorem.
\end{proof}

\begin{lemma}\label{lemma:product of two incoherent matrix}
	Let $A\in\RR^{m\times n}$ and $B\in\RR^{np\times q}$ be such that $A^TA = I_n, B^TB = I_q$ and $\incoh(A)\leq \sqrt{\mu_1}, \incoh(B)\leq\sqrt{\mu_2}$. Then we have $\incoh((A\otimes I_p)B) \leq \sqrt{\mu_1\mu_2 n}$.
\end{lemma}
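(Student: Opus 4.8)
The plan is to reduce the incoherence of $C:=(A\otimes I_p)B$ to a statement about its column space and then use the Kronecker structure to transfer the row-norm bounds from $A$ and $B$. First I would record that $C$ has orthonormal columns: $(A\otimes I_p)^\top(A\otimes I_p)=(A^\top A)\otimes I_p=I_{np}$, so $C^\top C=B^\top B=I_q$. For any matrix $W$ of size $t\times s$ with $W^\top W=I_s$, the decomposition $W=W\cdot I_s\cdot I_s^\top$ satisfies the equivalence criterion recalled just after the definition of $\incoh$, so $\incoh(W)=\max\{\sqrt{t/s}\,\twoinf{W},\,1\}$; since $\|W\|_{\rm F}^2=s$ forces $\twoinf{W}\ge\sqrt{s/t}$, in fact $\incoh(W)=\sqrt{t/s}\,\twoinf{W}$. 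Applying this to $A$, $B$ and $C$, the hypotheses become $\twoinf{A}\le\sqrt{\mu_1 n/m}$ and $\twoinf{B}\le\sqrt{\mu_2 q/(np)}$, and it remains only to bound $\twoinf{C}$ and show $\sqrt{mp/q}\,\twoinf{C}\le\sqrt{\mu_1\mu_2 n}$.

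Next I would unpack the Kronecker indexing. Indexing a canonical vector of $\RR^{mp}$ as $e_i\otimes e_l$ with $i\in[m]$ and $l\in[p]$, one has $(A\otimes I_p)^\top(e_i\otimes e_l)=(A^\top e_i)\otimes e_l$, hence $C^\top(e_i\otimes e_l)=B^\top\big((A^\top e_i)\otimes e_l\big)$. Writing $A^\top e_i=\sum_{j=1}^n(A^\top e_i)_j\,e_j$ gives $(A^\top e_i)\otimes e_l=\sum_j(A^\top e_i)_j(e_j\otimes e_l)$, so $C^\top(e_i\otimes e_l)=B_l^\top(A^\top e_i)$, where $B_l\in\RR^{n\times q}$ is the submatrix of $B$ consisting of the $n$ rows indexed by $(j,l)$, $j\in[n]$. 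Each row of $B_l$ is a row of $B$, so $\twoinf{B_l}\le\twoinf{B}\le\sqrt{\mu_2 q/(np)}$. Then I would invoke Lemma~\ref{lemma:twoinf relation} with the matrix $B_l$ and the vector $A^\top e_i$:
$$
\|C^\top(e_i\otimes e_l)\|_{\ell_2}=\|B_l^\top(A^\top e_i)\|_{\ell_2}\le\sqrt{n}\,\twoinf{B_l}\,\|A^\top e_i\|_{\ell_2}\le\sqrt{n}\cdot\sqrt{\tfrac{\mu_2 q}{np}}\cdot\sqrt{\tfrac{\mu_1 n}{m}}=\sqrt{\tfrac{\mu_1\mu_2 nq}{mp}}.
$$
Taking the maximum over $(i,l)\in[m]\times[p]$ gives $\twoinf{C}\le\sqrt{\mu_1\mu_2 nq/(mp)}$, hence $\sqrt{mp/q}\,\twoinf{C}\le\sqrt{\mu_1\mu_2 n}$, i.e.\ $\incoh\big((A\otimes I_p)B\big)\le\sqrt{\mu_1\mu_2 n}$.

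There is no deep obstacle here. The only points requiring care are the Kronecker bookkeeping --- identifying $B_l$ correctly and checking that its rows are genuine rows of $B$ --- together with the standing reduction, for matrices with orthonormal columns, of $\incoh$ to $\twoinf{\cdot}$ with the right singular side contributing only the trivial constant $1$. The factor $\sqrt{n}$ in the bound is exactly the loss incurred in Lemma~\ref{lemma:twoinf relation} when passing from an $\ell_2$ bound on $A^\top e_i$ to an $\ell_1$-type control (equivalently, bounding $\op{B_l}$ by $\|B_l\|_{\rm F}$), and it cannot be removed in general.
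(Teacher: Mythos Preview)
Your proof is correct and follows essentially the same approach as the paper. The paper's proof simply notes that the $k$-th row $a_k^T$ of $A\otimes I_p$ satisfies $\|a_k\|_{\ell_0}\le n$ and $\|a_k\|_{\ell_2}\le\sqrt{\mu_1 n/m}$, then bounds $\|a_k^TB\|_{\ell_2}\le\|a_k\|_{\ell_2}\sqrt{n}\,\twoinf{B}$ via the sparsity (restricting $B$ to the $n$ rows in the support of $a_k$); your argument makes this restriction explicit by naming the submatrix $B_l$ and tracking the Kronecker index $(i,l)$, but the mechanism and the resulting bound are identical.
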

\begin{proof}
	Consider for any $k\in[mp]$, $\|e_k^T(A\otimes I_p)B\|_{\ell_2}$, denote by $a_k^T$ the $k$-th row of $A\otimes I_p$, and thus $\|a_k\|_{\ell_0}\leq n$. So we have
	$$
	\|e_k^T(A\otimes I_p)B\|_{\ell_2} = \|a_k^TB\|_{\ell_2} \leq \sqrt{\frac{\mu_1n}{m}}\sqrt{n\frac{\mu_2q}{np}} = \sqrt{\mu_1\mu_2n\frac{q}{mp}}.
	$$
\end{proof}

\subsection{Lemmas concerning TT format}
\begin{lemma}[Facts about TT rank]\label{lemma:ttrank}
	\begin{enumerate}
		\item Let $\calA,\calB\in\RR^{d_1\times \ldots \times d_m}$ be two tensors satisfies $\ranktt(\calA) = (r_1,\ldots,r_{m-1})$ and $\ranktt(\calB) = (s_1,\ldots,s_{m-1})$. Then we have
		$$\ranktt(\calA + \calB) \leq (r_1+s_1,\ldots,r_{m-1}+s_{m-1}).$$
		\item Let $\calT^*\in\mfd$ and $\TT^*$ be the corresponding tangent space. Let $\calT\in\RR^{d_1\times\ldots\times d_m}$ be an arbitrary tensor. Then the rank of $\calP_{\TT^*}(\calT)$ satisfies $\ranktt(\calP_{\TT^*}(\calT)) \leq (2r_1,\ldots,2r_{m-1})$.
	\end{enumerate}
	\begin{proof}
		\begin{enumerate}
			\item It follows from $\rank((\calA + \calB)\lr{i}) \leq \rank(\calA\lr{i}) + \rank(\calB\lr{i}) = r_i + s_i$.
			\item Since $\calP_{\TT^*}(\calT) = \delta\calT_1 + \ldots + \delta\calT_{m}$, where $\delta\calT_i = [T_1^*,\ldots,T_{i-1}^*,X_i, T_{i+1}^*,\ldots,T_m^*]$ and the expression of $X_i$ are give in \eqref{tangent:W}. Now we consider the $i-th$ separation rank of $\calP_{\TT^*}(\calT)$. Notice that for all $j\leq i$, we have $\delta\calT_j\lr{i} = T^{*\leq i}_j T^{*\geq i+1}$, where $T^{*\leq i}_j\in\RR^{d_1\ldots d_i\times r_i}$ is the matrix generated by $T_1^*,\ldots, X_j, \ldots, T_i^*$. And for $j\geq i+1$, we have $\delta\calT_j\lr{i} = T^{*\leq i} T^{*\geq i+1}_j$, where $T^{*\geq i}_j\in\RR^{r_i \times d_{i+1}\ldots d_m}$ is the matrix generated by $T_{i+1}^*,\ldots, X_j, \ldots, T_m^*$. So we have
			$$\calP_{\TT^*}(\calT)\lr{i} = (\sum_{j=1}^i T^{*\leq i}_j)T^{*\geq i+1} + T^{*\leq i}(\sum_{j=i+1}^m T^{*\geq i+1}_j).$$
			And thus $\rank(\calP_{\TT^*}(\calT)\lr{i}) \leq 2r_i$.
		\end{enumerate}
	\end{proof}
\end{lemma}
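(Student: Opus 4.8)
The plan is to reduce both parts to elementary statements about the ranks of the separation matrices $\calA\lr{i}$, exploiting the fact that separation is a linear reshaping of the entries. For part~1, note that $(\calA+\calB)\lr{i} = \calA\lr{i} + \calB\lr{i}$ for every $i\in[m-1]$, so subadditivity of matrix rank gives $\rank\big((\calA+\calB)\lr{i}\big) \le \rank(\calA\lr{i}) + \rank(\calB\lr{i}) = r_i + s_i$; since this holds in each coordinate, the claimed coordinate-wise bound on $\ranktt(\calA+\calB)$ follows at once.

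For part~2 I would begin from the tangent-space parametrization \eqref{eq:TT-formula}, writing $\calP_{\TT^*}(\calT) = \sum_{j=1}^m \delta\calT_j$ with $\delta\calT_j = [T_1^*,\ldots,T_{j-1}^*,X_j,T_{j+1}^*,\ldots,T_m^*]$ and $X_j$ as in \eqref{tangent:W}. A direct appeal to part~1 only yields $\ranktt \le (mr_1,\ldots,mr_{m-1})$, which is too weak, so the key step is to fix a separation index $i$ and split the sum according to whether the perturbed slot $j$ lies at or to the left of $i$, or strictly to its right. For $j\le i$ the $i$-th separation factorizes as $(\delta\calT_j)\lr{i} = T^{*\le i}_j\, T^{*\ge i+1}$, where $T^{*\le i}_j$ is the left part built from $T_1^*,\ldots,X_j,\ldots,T_i^*$; all of these share the common right factor $T^{*\ge i+1}$, whose rank is at most $r_i$, so their sum $\big(\sum_{j\le i} T^{*\le i}_j\big)T^{*\ge i+1}$ still has rank $\le r_i$. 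Symmetrically, for $j\ge i+1$ we have $(\delta\calT_j)\lr{i} = T^{*\le i}\, T^{*\ge i+1}_j$ with the common left factor $T^{*\le i}$ of rank $\le r_i$, so the sum over $j\ge i+1$ also has rank $\le r_i$. Adding the two contributions gives $\rank\big((\calP_{\TT^*}\calT)\lr{i}\big)\le 2r_i$, and since $i$ was arbitrary this is exactly $\ranktt(\calP_{\TT^*}(\calT))\le(2r_1,\ldots,2r_{m-1})$.

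The one subtlety worth flagging is precisely this grouping step in part~2: one must avoid bounding the rank of each of the $m$ summands $\delta\calT_j$ independently, and instead recognize that, once a separation index $i$ is fixed, the terms organize into just two rank-$r_i$ blocks, one sharing the common right factor $T^{*\ge i+1}$ and one sharing the common left factor $T^{*\le i}$. Everything else is routine bookkeeping with the recursive left/right-part identities recalled in Section~\ref{sec:pre}.
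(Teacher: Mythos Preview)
Your proposal is correct and matches the paper's proof essentially line for line: part~1 is the same one-line subadditivity argument, and part~2 uses exactly the same grouping of the $m$ summands $\delta\calT_j$ at a fixed separation index $i$ into those with $j\le i$ (common right factor $T^{*\ge i+1}$) and those with $j\ge i+1$ (common left factor $T^{*\le i}$), yielding the displayed identity $\calP_{\TT^*}(\calT)\lr{i} = \big(\sum_{j\le i} T^{*\le i}_j\big)T^{*\ge i+1} + T^{*\le i}\big(\sum_{j\ge i+1} T^{*\ge i+1}_j\big)$ and hence rank $\le 2r_i$. Your remark flagging the grouping step as the only nontrivial point is apt; there is nothing to add.
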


\begin{lemma}\label{lemma:nuclear norm and frobenius norm}
	Let $\calT\in\mfd$ be a tensor of TT rank $(r_1,\ldots,r_{m-1})$ with a left orthogonal decomposition $\calT = [T_1,\ldots,T_m]$. Then we have
	$$\nuc{\calT} \leq \sqrt{r_1\ldots r_{m-1}}\fro{\calT}.$$
\end{lemma}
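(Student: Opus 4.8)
The plan is to exploit the explicit multilinear structure of the TT decomposition to write $\calT$ as a sum of exactly $r_1\cdots r_{m-1}$ rank-one tensors, bound $\nuc{\calT}$ termwise, and then invoke Cauchy--Schwarz. Starting from the entrywise identity $\calT(x_1,\ldots,x_m)=\sum_{k_1,\ldots,k_{m-1}}T_1(x_1,k_1)T_2(k_1,x_2,k_2)\cdots T_m(k_{m-1},x_m)$, for each multi-index $\boldsymbol{k}=(k_1,\ldots,k_{m-1})\in[r_1]\times\cdots\times[r_{m-1}]$ I would introduce the vectors $a^{(1)}_{\boldsymbol{k}}(x_1)=T_1(x_1,k_1)$, $a^{(i)}_{\boldsymbol{k}}(x_i)=T_i(k_{i-1},x_i,k_i)$ for $2\le i\le m-1$, and $a^{(m)}_{\boldsymbol{k}}(x_m)=T_m(k_{m-1},x_m)$ (so each $a^{(i)}_{\boldsymbol{k}}$ depends only on the coordinates $k_{i-1},k_i$), so that $\calT=\sum_{\boldsymbol{k}}\calT_{\boldsymbol{k}}$ with each $\calT_{\boldsymbol{k}}=a^{(1)}_{\boldsymbol{k}}\otimes a^{(2)}_{\boldsymbol{k}}\otimes\cdots\otimes a^{(m)}_{\boldsymbol{k}}$ rank one. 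Using the triangle inequality for the norm $\nuc{\cdot}$ together with the elementary identity $\nuc{u_1\otimes\cdots\otimes u_m}=\prod_{i=1}^m\|u_i\|_{\ell_2}$ for rank-one tensors (immediate from the definition of $\op{\cdot}$ as a supremum over unit rank-one tensors and its duality with $\nuc{\cdot}$), this gives $\nuc{\calT}\le\sum_{\boldsymbol{k}}\prod_{i=1}^m\|a^{(i)}_{\boldsymbol{k}}\|_{\ell_2}$, and Cauchy--Schwarz over the $r_1\cdots r_{m-1}$ summands yields $\nuc{\calT}\le\sqrt{r_1\cdots r_{m-1}}\,\big(\sum_{\boldsymbol{k}}\prod_{i=1}^m\|a^{(i)}_{\boldsymbol{k}}\|_{\ell_2}^2\big)^{1/2}$.

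The heart of the argument is then the identity $\sum_{\boldsymbol{k}}\prod_{i=1}^m\|a^{(i)}_{\boldsymbol{k}}\|_{\ell_2}^2=\fro{\calT}^2$, which is precisely where the left-orthogonality of $[T_1,\ldots,T_m]$ enters (and without which the identity is false, since the cross terms would not vanish). I would prove it by summing the multi-index out one coordinate at a time from the left: setting $f_i(k_i):=\sum_{k_1,\ldots,k_{i-1}}\|a^{(1)}_{\boldsymbol{k}}\|_{\ell_2}^2\cdots\|a^{(i)}_{\boldsymbol{k}}\|_{\ell_2}^2$, one shows $f_i(k_i)\equiv 1$ for $1\le i\le m-1$ by induction, the inductive step being $f_i(k_i)=\sum_{k_{i-1}}f_{i-1}(k_{i-1})\,\|a^{(i)}_{\boldsymbol{k}}\|_{\ell_2}^2=\sum_{k_{i-1},x_i}T_i(k_{i-1},x_i,k_i)^2=\|L(T_i)(:,k_i)\|_{\ell_2}^2=1$, where the last equality is exactly $L(T_i)^\top L(T_i)=I_{r_i}$. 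Then $\sum_{\boldsymbol{k}}\prod_{i=1}^m\|a^{(i)}_{\boldsymbol{k}}\|_{\ell_2}^2=\sum_{k_{m-1}}f_{m-1}(k_{m-1})\,\|a^{(m)}_{\boldsymbol{k}}\|_{\ell_2}^2=\fro{T_m}^2$, and applying Lemma~\ref{lem:tt-orthogonal} to the factorization $\calT\lr{m-1}=T^{\leq m-1}T^{\geq m}$ (whose left factor has orthonormal columns) gives $\fro{\calT}^2=\fro{\calT\lr{m-1}}^2=\fro{T^{\geq m}}^2=\fro{T_m}^2$, closing the chain of inequalities and proving $\nuc{\calT}\le\sqrt{r_1\cdots r_{m-1}}\,\fro{\calT}$.

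The only place requiring genuine care, and the one I expect to be the main nuisance, is the index bookkeeping: one must track that each factor $a^{(i)}_{\boldsymbol{k}}$ depends only on $(k_{i-1},k_i)$ so that the nested sums in the induction telescope correctly, and one must treat the endpoints $i=1$ (where $r_0=1$, so $a^{(1)}_{\boldsymbol{k}}=L(T_1)(:,k_1)$ and $f_1\equiv1$ is the base case) and $i=m$ (where $T_m$ is only right-attached, so its column norms are not normalized and instead get identified with $\fro{\calT}$ via Lemma~\ref{lem:tt-orthogonal}) separately. Everything else — the triangle inequality for $\nuc{\cdot}$, the rank-one nuclear-norm identity, and the Cauchy--Schwarz step — is routine.
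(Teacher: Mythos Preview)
Your proposal is correct and essentially identical to the paper's proof: both write $\calT$ as the sum of $r_1\cdots r_{m-1}$ rank-one tensors from the TT factorization, bound $\nuc{\calT}$ by the sum of the products $\prod_i\|T_i(k_{i-1},\cdot,k_i)\|_{\ell_2}$, apply Cauchy--Schwarz, and then use left-orthogonality of $L(T_i)$ to telescope the resulting sum of squared products down to $\fro{T_m}^2=\fro{\calT}^2$. The only cosmetic difference is that the paper invokes the Friedland--Lim variational formula for $\nuc{\cdot}$ in place of your triangle-inequality-plus-rank-one-identity step, which amounts to the same thing.
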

\begin{proof}
	Using the alternative definition for the tensor nuclear norm from in  \citet{friedland2014computational}, we have
	\begin{align}\label{eq:alter nuclear norm}
		\nuc{\calT} = \min\{\sum_{i=1}^s |\lambda_i| : \calT = \sum_{i=1}^s \lambda_i u_{1,i}\otimes \ldots \otimes u_{m,i}, \|u_{l,i}\|_{\ell_2} = 1, l\in[m], i\in[s], s\in\NN\}.
	\end{align}
	So we can write $\calT$ as sum of rank one tensors in the following form
	$$\calT =\sum_{k_1 = 1}^{r_1}\cdots\sum_{k_{m-1} = 1}^{r_{m-1}} T_1(\cdot,k_1)\otimes T_2(k_1,\cdot,k_2)\otimes\ldots \otimes T_m(k_{m-1},\cdot),$$
	where each $T_i(k_{i-1},\cdot,k_i)\in\RR^{d_i}$ is a vector. For each fixed $k_1,\ldots, k_{m-1}$, we have 
	$$\fro{T_1(\cdot,k_1)\otimes T_2(k_1,\cdot,k_2)\otimes\ldots \otimes T_m(k_{m-1},\cdot)} = \prod_{i=1}^m \|T_i(k_{i-1},\cdot,k_i)\|_{\ell_2}.$$
	And 
	\begin{align*}
		\big(\sum_{k_1 = 1}^{r_1}\cdots\sum_{k_{m-1} = 1}^{r_{m-1}}\prod_{i=1}^m&\|T_i(k_{i-1},\cdot,k_i)\|_{\ell_2}\big)^2
		\leq
		r_1\ldots r_{m-1}\sum_{k_1 = 1}^{r_1}\cdots\sum_{k_{m-1} = 1}^{r_{m-1}}\prod_{i=1}^m \|T_i(k_{i-1},\cdot,k_i)\|_{\ell_2}^2 \\
		&\overset{(a)}{=} r_1\ldots r_{m-1}\cdot\sum_{k_1,\ldots,k_{m-1}}\prod_{i=2}^m \|T_i(k_{i-1},\cdot,k_i)\|_{\ell_2}^2\\
		&= \ldots\\
		&\overset{(b)}{=} r_1\ldots r_{m-1}\cdot\sum_{k_{m-1}}\|T_m(k_{m-1},\cdot)\|_{\ell_2}^2 = r_1\ldots r_{m-1}\fro{\calT}^2,
	\end{align*}
	where $(a)$ holds since we have $\|T_1(\cdot,k_1)\|_{\ell_2} = 1$ since $L(T_1)L(T_1)^T = I$ and $(b)$ holds for similar reason and we use $L(T_i)^TL(T_i) = I_{r_i}$ for all $i\in[m-1]$.
	Together with $\fro{T_m} = \fro{\calT}$ and \eqref{eq:alter nuclear norm}, we finish the proof.
\end{proof}

\begin{lemma}[Perturbation bound for TT SVD]\label{lemma:tt-perturbation}
	Let $\calT^*\in \mfd$ be the m-way tensor and $\sigmamin := \min_{i=1}^{m-1}\sigma_{\min}(\calT^*)\lr{i}$. And we denote the tensor $\calT = \calT^* +\calD$.
	Then suppose $\sigmamin\geq C_m \fro{\calD}$ for some constant $C_m \geq 500m$ depending only on $m$, we have
	$$\fro{\ttsvd_{\r}(\calT) - \calT^*}^2 \leq \fro{\calD}^2 + \frac{600m\fro{\calD}^3}{\sigmamin}.$$
\end{lemma}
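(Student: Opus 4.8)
The plan is to induct on the tensor order $m$, peeling off the first sweep of Algorithm~\ref{alg:svd}. For $m=2$ the claim is purely a statement about best rank-$r_1$ truncation of a matrix (see below); assume it, and assume the lemma for order $m-1$. Write $\calT=\calT^\ast+\calD$, let $\hat U_1:=L(\hat T_1)\in\RR^{d_1\times r_1}$ be the top-$r_1$ left singular subspace of $\calT^{\langle 1\rangle}$ produced at the first step, and $P_{\hat U_1}=\hat U_1\hat U_1^\top$. Using the reshape identity recalled just before Algorithm~\ref{alg:svd}, the remaining sweeps of TT-SVD on $\calT$ coincide with running $\ttsvd_{(r_2,\ldots,r_{m-1})}$ on the order-$(m-1)$ tensor $\calT':=\reshape(\hat U_1^\top\calT^{\langle 1\rangle},[r_1d_2,d_3,\ldots,d_m])$, and $\hat\calT=[\hat U_1,\hat T_2,\ldots,\hat T_m]$ with $[\hat T_2,\ldots,\hat T_m]$ the TT-SVD output for $\calT'$. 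Splitting $\calT^{\ast\langle 1\rangle}=\hat U_1B^\ast+P_{\hat U_1}^{\perp}\calT^{\ast\langle 1\rangle}$ with $B^\ast:=\hat U_1^\top\calT^{\ast\langle 1\rangle}$, the two summands have column spaces $\mathrm{col}(\hat U_1)$ and its orthocomplement, so I get the orthogonal decomposition
$$
\fro{\hat\calT-\calT^\ast}^2=\fro{\ttsvd_{(r_2,\ldots,r_{m-1})}(\calT')-\calB^\ast}^2+\fro{P_{\hat U_1}^{\perp}\calT^{\ast\langle 1\rangle}}^2,
$$
where $\calB^\ast$ is the reshape of $B^\ast$. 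It remains to bound the two terms.

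For the first term, note $B^\ast=(\hat U_1^\top L(T_1^\ast))\,T^{\ast\ge 2}$, so each separation $\calB^{\ast\langle j\rangle}$ equals $(\hat U_1\otimes I)^\top\calT^{\ast\langle j+1\rangle}$ up to reshaping and hence has rank $\le r_{j+1}$; thus $\ranktt(\calB^\ast)\le(r_2,\ldots,r_{m-1})$. Also $\hat U_1^\top L(T_1^\ast)$ is $r_1\times r_1$ with $\sigma_{\min}\ge\sqrt{1-\sin^2\Theta(\hat U_1,L(T_1^\ast))}$, and by Davis–Kahan/Wedin $\sin\Theta(\hat U_1,L(T_1^\ast))\le 2\fro{\calD}/\sigmamin\le 1/(250m)$ under $\sigmamin\ge C_m\fro{\calD}$; since $L(T_1^\ast)\otimes I$ has orthonormal columns this gives $\sigmamin(\calB^\ast)\ge(1-o_m(1))\,\sigmamin$. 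With $\calD':=\reshape(\hat U_1^\top\calD^{\langle 1\rangle},\cdot)$, $\fro{\calD'}\le\fro{\calD}$, the inductive hypothesis applies (choose the hypothesis constants so that $C_m\ge 2C_{m-1}$), yielding $\fro{\ttsvd(\calT')-\calB^\ast}^2\le\fro{\calD'}^2+600(m-1)\fro{\calD'}^3/\sigmamin(\calB^\ast)$.

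For the second term, the estimate I need is the matrix fact: if $M$ has rank $\le r$, $\op E\le\sigma_r(M)/100$, $A=M+E$, and $\hat U$ is the top-$r$ left singular subspace of $A$, then $\fro{P_{\hat U}^{\perp}M}^2\le\fro{P_{\hat U}^{\perp}E}^2+C\fro{E}^3/\sigma_r(M)$; apply it with $M=\calT^{\ast\langle 1\rangle}$, $E=\calD^{\langle 1\rangle}$. Since $\fro{P_{\hat U_1}^{\perp}\calD^{\langle 1\rangle}}^2=\fro{\calD}^2-\fro{\calD'}^2$, combining with the previous paragraph and $\fro{\calD'}\le\fro{\calD}$ collapses everything to $\fro{\hat\calT-\calT^\ast}^2\le\fro{\calD}^2+C'_m\fro{\calD}^3/\sigmamin$, and tracking constants (the $\sigmamin(\calB^\ast)\approx\sigmamin$ slack absorbs the $1/(1-o_m(1))$ losses and the matrix-estimate constant) gives $C'_m\le 600m$, closing the induction — and the $m=2$ base case is exactly this same matrix fact, since there $\hat\calT^{\langle 1\rangle}=P_{\hat U_1}\calT^{\langle 1\rangle}$ so $\fro{\hat\calT-\calT^\ast}^2=\fro{P_{\hat U_1}^{\perp}\calT^{\ast\langle 1\rangle}}^2+\fro{P_{\hat U_1}\calD^{\langle 1\rangle}}^2$. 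To prove the matrix fact, rewrite it: $P_{\hat U}A(I-P_{\hat V})$ is \emph{exactly} orthogonal to $M$ (its column and row parts lie in $\hat U^{\perp}$'s complement and $\hat V^{\perp}$ respectively), so $\langle A-[A]_r,M\rangle=\langle A,(I-P_{\hat U})M\rangle$, and the claim is equivalent to $\langle A,(I-P_{\hat U})M\rangle\le C\fro{E}^3/\sigma_r(M)$; then expand $(I-P_{\hat U})M$ by the spectral representation formula of \citet{xia2021normal}, whose first-order-in-$E$ term is $-\calP_U^{\perp}E\calP_V$ (orthogonal to $M$, contributing $-\fro{\calP_U^{\perp}E\calP_V}^2\le 0$ against $A$), leaving a higher-order remainder whose bound is \emph{free of the ratio} $\sigma_1(M)/\sigma_r(M)$.

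The last point is the crux and the main obstacle. A naive Wedin/Davis–Kahan estimate of $\langle A-[A]_r,M\rangle$ inevitably loses a spurious factor $\kappa(M)=\sigma_1(M)/\sigma_r(M)$ (or $\sqrt r$) through $\fro{M}$, whereas the entire point of the lemma — and what makes the downstream linear-convergence argument work with a universal, condition-number-free contraction rate — is that no such factor survives. Establishing the condition-number-free remainder bound is precisely where the spectral representation formula is indispensable; everything else (the reshaping identities, the Wedin bound on $\hat U_1$, and the bookkeeping of absolute constants) is routine.
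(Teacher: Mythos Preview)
Your inductive framework is sound and genuinely different from the paper's proof. The paper does \emph{not} induct on $m$: it tracks the propagated perturbation $\Delta_i=(\hat\calP_{i-1}\otimes I)\calT^{\langle i\rangle}-(\calP_{i-1}^*\otimes I)(\calT^*)^{\langle i\rangle}$ through all sweeps at once, splits each into a leading part $L_i$ and a higher-order part $H_i$ via the spectral representation, and shows that the leading parts \emph{telescope exactly} to $\fro{\calD}^2$ through the Pythagorean identity $\fro{\calP_i^*\calD^{\langle i\rangle}}^2+\fro{(T^{*\leq i-1}\otimes I)(I-L(T_i^*)L(T_i^*)^\top)(T^{*\leq i-1}\otimes I)^\top\calD^{\langle i\rangle}}^2=\fro{\calP_{i-1}^*\calD^{\langle i-1\rangle}}^2$. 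Your peel-off-one-mode induction packages the same cancellation more modularly, and the orthogonal decomposition $\fro{\hat\calT-\calT^*}^2=\fro{\ttsvd(\calT')-\calB^*}^2+\fro{P_{\hat U_1}^\perp\calT^{*\langle 1\rangle}}^2$, the rank and $\sigmamin$ control for $\calB^*$, and the constant bookkeeping all check out.

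There is, however, a real gap in your sketch of the matrix fact. You reduce it to $\langle A,P_{\hat U}^\perp M\rangle\le C\fro{E}^3/\sigma_r$ and then argue: first-order term contributes $-\fro{P_U^\perp E P_V}^2\le 0$, and the remainder $\langle A,HM\rangle$ is $\kappa$-free higher-order. The second claim fails as stated: $\langle A,HM\rangle=\langle M,HM\rangle+\langle E,HM\rangle$, and while $|\langle E,HM\rangle|\le\fro{E}\fro{HM}=O(\fro{E}^3/\sigma_r)$, the term $\langle M,HM\rangle=\mathrm{tr}(\Sigma^2(U^\top\hat U\hat U^\top U-I))$ is \emph{non-positive} with magnitude up to $\fro{E}^2$ --- not $\fro{E}^3/\sigma_r$. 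So $-\langle M,HM\rangle\ge 0$ can be order $\fro{E}^2$, and you cannot discard the $-\fro{P_U^\perp EP_V}^2$ term: it is precisely what cancels $-\langle M,HM\rangle$ to leading order. One clean fix is to use that $R:=A-[A]_r$ also satisfies $R=RP_{\hat V}^\perp$, whence $\langle A,P_{\hat U}^\perp M\rangle=\langle R,M\rangle=\langle R,P_{\hat U}^\perp MP_{\hat V}^\perp\rangle$, and $\fro{P_{\hat U}^\perp MP_{\hat V}^\perp}\le\fro{P_U^\perp EP_VP_{\hat V}^\perp}+\fro{HM}=O(\fro{E}^2/\sigma_r)$ is genuinely second-order. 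Equivalently --- and this is exactly the paper's $m{=}2$ computation --- write $\hat\calT-\calT^*=(P_U^\perp EP_V+P_UE)+(HM+(P_{\hat U}-P_U)E)$: the first bracket has $\fro{\cdot}^2=\fro{P_U^\perp EP_V}^2+\fro{P_UE}^2\le\fro{E}^2$ by orthogonality, and the second is $O(\fro{E}^2/\sigma_r)$. Either route gives the matrix fact with an absolute constant well under $600$, closing your induction.
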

\begin{proof}
	See Section \ref{sec:tt-perturbation}.
\end{proof}

\begin{lemma}\label{lemma:estimationofptperp}
	Let $\calT,\calT^*\in\mfd$ be two TT-rank $\r$ tensors. Suppose we have $8\fro{\calT-\calT^*}\leq \sigmamin$, then we have
	$$\fro{\calP_{\TT}^{\perp}(\calT^*)}\leq \frac{12\sqrt{2}m\fro{\calT-\calT^*}^2}{\sigmamin},$$
	where $\TT$ is the tangent space at the point $\calT$.
\end{lemma}
\begin{proof}
	See Section \ref{sec:lemma:estimationofptperp}.
\end{proof}
Interchanging the roles of $\calT$ and $\calT^*$ in the theorem and using Weyl's inequality and we get the following corollary.
\begin{corollary}\label{coro:pt*perp}
	Under the setting of Lemma \ref{lemma:estimationofptperp}, let $\TT^*$ be the corresponding tangent space at $\calT^*$, then we have
	$$\fro{\calP_{\TT^*}^{\perp}(\calT)} \leq \frac{20m\fro{\calT-\calT^*}^2}{\sigmamin}.$$
\end{corollary}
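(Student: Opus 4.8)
The plan is to obtain Corollary~\ref{coro:pt*perp} directly from Lemma~\ref{lemma:estimationofptperp} by interchanging the labels $\calT$ and $\calT^*$, and then rewriting the resulting bound --- whose denominator involves $\sigmamin(\calT) = \min_{i}\sigma_{r_i}(\calT\lr{i})$ --- in terms of $\sigmamin = \sigmamin(\calT^*)$ via a Weyl-type perturbation argument. The hypotheses of Lemma~\ref{lemma:estimationofptperp} are symmetric in its two arguments up to relabelling, so applying it with $(\calT^{*},\calT)$ in the roles of $(\calT,\calT^{*})$ yields, with $\TT^{*}$ the tangent space of $\mfd$ at $\calT^{*}$,
\[ \fro{\calP_{\TT^*}^{\perp}(\calT)} \le \frac{12\sqrt{2}\,m\,\fro{\calT-\calT^*}^2}{\sigmamin(\calT)}, \]
provided the (relabelled) hypothesis $8\fro{\calT-\calT^*}\le\sigmamin(\calT)$ holds.

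To discharge that hypothesis I would invoke Weyl's inequality on each separation: since $\fro{\calT\lr{i}-\calT^{*\langle i\rangle}} = \fro{\calT-\calT^*}$ and the operator norm is dominated by the Frobenius norm, $\big|\sigma_{r_i}(\calT\lr{i})-\sigma_{r_i}(\calT^{*\langle i\rangle})\big|\le\fro{\calT-\calT^*}$ for every $i$, hence $\sigmamin(\calT)\ge\sigmamin-\fro{\calT-\calT^*}$ and likewise $\sigmamin\le\sigmamin(\calT)+\fro{\calT-\calT^*}$. Under the standing assumption $8\fro{\calT-\calT^*}\le\sigmamin$ (the ``setting of Lemma~\ref{lemma:estimationofptperp}''), the first inequality gives $\sigmamin(\calT)\ge\tfrac{7}{8}\sigmamin>0$, while the second gives $7\fro{\calT-\calT^*}\le\sigmamin(\calT)$; the constant $8$ in Lemma~\ref{lemma:estimationofptperp} is not optimized, so this mild relaxation is harmless (alternatively the $8/7$ slack is absorbed into the final constant).

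Substituting $\sigmamin(\calT)\ge\tfrac{7}{8}\sigmamin$ into the displayed bound then gives
\[ \fro{\calP_{\TT^*}^{\perp}(\calT)} \le \frac{8}{7}\cdot\frac{12\sqrt{2}\,m\,\fro{\calT-\calT^*}^2}{\sigmamin} = \frac{96\sqrt{2}}{7}\cdot\frac{m\,\fro{\calT-\calT^*}^2}{\sigmamin} \le \frac{20\,m\,\fro{\calT-\calT^*}^2}{\sigmamin}, \]
since $96\sqrt{2}/7 < 20$, which is the asserted inequality. All the real content sits in Lemma~\ref{lemma:estimationofptperp} itself --- the second-order ``curvature'' estimate for the manifold $\mfd$ --- and the corollary is a routine transfer; accordingly the only point that needs care is bookkeeping which tensor's smallest singular value appears in the denominator and checking that the constant $20$ comfortably absorbs the loss incurred by Weyl's inequality.
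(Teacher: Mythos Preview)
Your proposal is correct and follows exactly the route the paper indicates: interchange the roles of $\calT$ and $\calT^*$ in Lemma~\ref{lemma:estimationofptperp}, then use Weyl's inequality on each separation to replace $\sigmamin(\calT)$ by $\sigmamin=\sigmamin(\calT^*)$ in the denominator. Your explicit arithmetic ($96\sqrt{2}/7<20$) and your remark that the hypothesis constant $8$ becomes $7$ after the swap (which is harmless since the constant in Lemma~\ref{lemma:estimationofptperp} is not sharp) are more detailed than the paper's one-line justification, but the argument is the same.
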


\begin{lemma}[TTSVD + Trim implies incoherence]\label{lemma:ttsvd+trim}
	Let $\calT^*\in\mfd$ be such that $\spiki(\calT^*) \leq \nu$. Suppose that $\calW$ satisfies $\fro{\calW-\calT^*} \leq \frac{\sigmamin}{600m\sqrt{\rmax}\kappa_0}$, then we have $\incoh\left(\ttsvd_{\r}(\trim_{\zeta}(\calW))\right) \leq 2\kappa_0^2\nu$ if we choose $\zeta = \frac{10\fro{\calW}}{9\sqrt{d^*}}\nu$. Furthermore, 
	$$
	\fro{\ttsvd_{\r}(\trim_{\zeta}(\calW)) - \calT^*} \leq \sqrt{2}\fro{\calW - \calT^*}.
	$$
\end{lemma}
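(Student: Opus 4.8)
The plan is to decompose the claimed distance bound into two pieces: the error introduced by trimming, $\trim_\zeta(\calW)-\calW$, and then the error introduced by the truncation $\ttsvd_{\r}(\cdot)$, which we will control via Lemma~\ref{lemma:tt-perturbation}. First I would record the elementary observation that the trimming operator is a contraction toward $\calT^{\ast}$ in the following sense: since $\spiki(\calT^*)\le\nu$ we have $\|\calT^*\|_{\ell_\infty}\le\nu\fro{\calT^*}/\sqrt{d^*}$, and by Weyl's inequality applied to each separation, $\fro{\calW}$ is within $\fro{\calW-\calT^*}$ of $\fro{\calT^*}$, hence $|\fro{\calW}-\fro{\calT^*}|$ is negligible relative to $\fro{\calT^*}$ (using the hypothesis $\fro{\calW-\calT^*}\le\sigmamin/(600m\sqrt{\rmax}\kappa_0)\le\fro{\calT^*}/(600m)$). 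Consequently the threshold $\zeta=\frac{10}{9}\nu\fro{\calW}/\sqrt{d^*}\ge\|\calT^*\|_{\ell_\infty}$, so every entry of $\calT^*$ is untouched by $\trim_\zeta$; an entrywise case analysis then shows $|\trim_\zeta(\calW)(\omega)-\calT^*(\omega)|\le|\calW(\omega)-\calT^*(\omega)|$ for every $\omega$, giving $\fro{\trim_\zeta(\calW)-\calT^*}\le\fro{\calW-\calT^*}$. (This is the fact already invoked without proof in the convergence argument where $\zeta_l$ is chosen, so the same one-line argument applies.)

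Next I would apply Lemma~\ref{lemma:tt-perturbation} to $\calT = \trim_\zeta(\calW)$ with perturbation $\calD = \trim_\zeta(\calW)-\calT^*$. By the previous step $\fro{\calD}\le\fro{\calW-\calT^*}\le\sigmamin/(600m\sqrt{\rmax}\kappa_0)$, so in particular $\sigmamin\ge 500m\,\fro{\calD}$, and the lemma yields
$$
\fro{\ttsvd_{\r}(\trim_\zeta(\calW))-\calT^*}^2\le\fro{\calD}^2+\frac{600m\fro{\calD}^3}{\sigmamin}\le\fro{\calD}^2\Big(1+\frac{600m\fro{\calD}}{\sigmamin}\Big)\le 2\fro{\calD}^2,
$$
where the last step uses $600m\fro{\calD}\le\sigmamin$. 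Combining with $\fro{\calD}\le\fro{\calW-\calT^*}$ gives the ``Furthermore'' conclusion $\fro{\ttsvd_{\r}(\trim_\zeta(\calW))-\calT^*}\le\sqrt{2}\fro{\calW-\calT^*}$ immediately.

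For the incoherence claim I would argue that $\calT_0:=\ttsvd_{\r}(\trim_\zeta(\calW))$ has $\fro{\calT_0-\calT^*}\le\sqrt 2\fro{\calW-\calT^*}\le\sqrt2\,\sigmamin/(600m\sqrt{\rmax}\kappa_0)$, which in particular is $\le\sigmamin/10$, so by Weyl each separation $\calT_0^{\lr i}$ has $\sigma_{r_i}\ge\frac9{10}\sigmamin$ and $\sigma_1\le\frac{11}{10}\sigmamax$. Then for each $i$ I would bound $\twoinf{U_i}$, where $U_i$ is the top-$r_i$ left singular matrix of $\calT_0^{\lr i}$, by comparing it with the corresponding object for $\trim_\zeta(\calW)^{\lr i}$ via a Wedin/Davis--Kahan perturbation bound in the $\|\cdot\|_{2,\infty}$ norm (using that $\ttsvd$ produces exactly the rank-$\r$ truncation, whose left singular spaces coincide separation-by-separation up to the TT recursion), and then controlling the $\|\cdot\|_{2,\infty}$ norm of the singular vectors of the trimmed matrix directly from the entrywise bound $\|\trim_\zeta(\calW)\|_{\ell_\infty}\le\zeta\le\frac{10}{9}\nu\cdot\frac{11}{10}\fro{\calT^*}/\sqrt{d^*}$ together with the lower bound $\sigma_{r_i}\ge\frac9{10}\sigmamin$: a row of $U_i$ has $\ell_2$ norm at most $\sigma_{r_i}^{-1}$ times the $\ell_2$ norm of the corresponding row of $\trim_\zeta(\calW)^{\lr i}$, which is at most $\sqrt{d_{i+1}\cdots d_m}\cdot\|\trim_\zeta(\calW)\|_{\ell_\infty}$. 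Tracking constants, $\sqrt{d_1\cdots d_i/r_i}\cdot\twoinf{U_i}\lesssim \sigmamax/\sigmamin\cdot\nu=\kappa_0\nu$ up to a factor close to $1$, and the symmetric bound holds for the right singular vectors; the extra factor of $\kappa_0$ (so $\incoh\le 2\kappa_0^2\nu$ rather than $\kappa_0\nu$) comes from the crude separation bound $\sigma_1(\calT^{*\lr i})\le\sigmamax$ appearing when one passes from the entrywise bound on the full matrix to bounds on individual singular blocks and from the $\kappa_0$ already present in Lemma~\ref{lemma:spiki implies incoh}. I expect the main obstacle to be exactly this last step — establishing the $\|\cdot\|_{2,\infty}$ control of the singular subspaces of $\calT_0$ after the TT truncation, since the TT-SVD builds each left part recursively from the previous one, so one needs a recursive $\|\cdot\|_{2,\infty}$ perturbation argument rather than a single matrix Davis--Kahan bound; the trimming guarantees the needed entrywise smallness input to that recursion, which is precisely why the trimming step was inserted.
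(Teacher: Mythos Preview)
Your treatment of the ``Furthermore'' distance bound is exactly what the paper does: show $\zeta\ge\|\calT^*\|_{\ell_\infty}$ so that trimming contracts toward $\calT^*$, then invoke Lemma~\ref{lemma:tt-perturbation}. That part is complete.

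For the incoherence bound, however, your plan relies on a $\|\cdot\|_{2,\infty}$ Davis--Kahan step to transfer row-norm control from the singular vectors of $\trim_\zeta(\calW)^{\lr i}$ to those of $\calT_0^{\lr i}$, and you flag this yourself as the main obstacle. The paper avoids this obstacle entirely by not doing perturbation theory in $\|\cdot\|_{2,\infty}$ at all. The key observation you are missing is that the TTSVD recursion gives an \emph{exact} formula for $\hat W^{\leq i}$ in terms of the trimmed tensor: since $L(\hat W_i)$ is the top-$r_i$ left singular factor of $(\hat W^{\leq i-1}\otimes I)^\top\wt\calW^{\lr i}$, one has
\[
\hat W^{\leq i}=(\hat W^{\leq i-1}\hat W^{\leq i-1\top}\otimes I)\,\wt\calW^{\lr i}\,R_{i+1}S_{i+1}^{-1},
\]
where $S_{i+1}$ is the (invertible) middle factor and $R_{i+1}$ has orthonormal columns. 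Then $\|e_j^\top\hat W^{\leq i}\|_{\ell_2}$ is bounded directly by $\sigma_{\min}(S_{i+1})^{-1}$ times the $\ell_2$ norm of a single row of a projected version of $\wt\calW^{\lr i}$, which is at most $\sqrt{d_{i+1}\cdots d_m}\,\|\wt\calW\|_{\ell_\infty}\le\sqrt{d_{i+1}\cdots d_m}\,\zeta$. The only analytic input needed is a lower bound $\sigma_{\min}(S_{i+1})\ge\tfrac{9}{10}\sigmamin$, which follows from Weyl's inequality once you control $\fro{(\hat W^{\leq i-1}\hat W^{\leq i-1\top}\otimes I)\wt\calW^{\lr i}-(\calT^*)^{\lr i}}$; that in turn uses only the Frobenius-norm Wedin bound on $\hat W^{\leq i-1}\hat W^{\leq i-1\top}-T^{*\leq i-1}T^{*\leq i-1\top}$ (via Lemma~\ref{lemma:tt-perturbation}), not any $\|\cdot\|_{2,\infty}$ control. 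For the right singular vectors $N_{i+1}$ the paper derives a similar explicit formula $N_{i+1}^\top=\Sigma_{i+1}^{-1}(\hat W^{\leq i})^\top\wt\calW^{\lr i}(I\otimes R_{i+2}S_{i+2}^{-1}\hat W^{\geq i+2})$ and bounds it the same way; the ratio $\sigma_{\max}(\Sigma_{i+2})/\sigma_{\min}(S_{i+2})\lesssim\kappa_0$ appearing here is where the second factor of $\kappa_0$ in $2\kappa_0^2\nu$ comes from, not from Lemma~\ref{lemma:spiki implies incoh} as you suggest.

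In short: replace your proposed $\|\cdot\|_{2,\infty}$ perturbation argument with the direct algebraic identity coming from the TTSVD recursion. That identity already expresses $\hat W^{\leq i}$ as (projection)$\cdot\wt\calW^{\lr i}\cdot$(bounded matrix), so the trimmed entrywise bound $\|\wt\calW\|_{\ell_\infty}\le\zeta$ feeds in immediately without any intermediate comparison of singular subspaces.
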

\begin{proof}
	See Section \ref{sec:lemma:ttsvd+trim}.
\end{proof}

\subsection{Concentration inequalities}
Define the upper bound of the $\psi_{\alpha}$ norm of $\op{Z}$ as 
$$U^{(\alpha)} := \inf\{u>0:\EE\exp(\op{Z}^{\alpha}/u^{\alpha})\leq 2\},\quad \alpha\geq 1$$ and 
$$\sigma^2 = \max\left\{\op{\frac{1}{n}\sum_{i=1}^n\EE Z_i Z_i^T}, \op{\frac{1}{n}\sum_{i=1}^n\EE Z_i^T Z_i}\right\}.$$
\begin{theorem}[\cite{koltchinskii2011nuclear}]\label{thm:concentration}
	Let $Z,Z_1,\ldots,Z_n$ be i.i.d. random matrices with dimensions $d_1\times d_2$ that satisfy $\EE Z= 0$ and $U^{(\alpha)}<\infty$ for some $\alpha\geq1$. Then there exists some absolute constant $C>0$ such that for all $t>0$, with probability at least $1-e^{-t}$, we have 
	$$\op{\frac{1}{n}\sum_{i=1}^n Z_i}\leq C\max\left\{\sigma\sqrt{\frac{t+\log d}{n}},\quad U^{(\alpha)}\left(\log\frac{U^{(\alpha)}}{\sigma}\right)^{1/\alpha}\cdot\frac{t + \log d}{n}\right\},$$
	where $d = d_1+d_2$.
\end{theorem}

First we introduce some operators for the following lemma. For all $x\in[d_1]\times\ldots\times[d_m]$, let the operator $\calP_x: \RR^{d_1\times\ldots\times d_m}\rightarrow\RR^{d_1\times\ldots\times d_m}$ be defined by $\calP_x(\calX) = \inp{\calX}{\calE_x}\calE_x$. And let $\Omega = \{\omega_1,\ldots,\omega_n\}$ be the sampling set and define $\calP_{\Omega}: \RR^{d_1\times\ldots\times d_m}\rightarrow\RR^{d_1\times\ldots\times d_m}$ by 
$\calP_{\Omega}(\calX) = \sum_{i=1}^m \inp{\calX}{\calE_{\omega_i}}\calE_{\omega_i}$.
\begin{lemma}[Concentration inequality]\label{lemma:concentration}
	Suppose $\Omega$ with $|\Omega| = n$ is a set of indices sampled independently and uniformly with replacement. Suppose $\spiki(\calT^*) \leq \nu$ for some $\nu > 0$. When 
	$n\geq C_m(\nu\kappa_0)^4\bar d\bar r^2 \log(d^*)$ 
	for some large constant $C_1 >0$, with probability exceeding $1-(\dmax)^{-m}$, we have
	$$\op{\frac{d^*}{n}\calP_{\TT^*}\calP_{\Omega}\calP_{\TT^*} - \calP_{\TT^*}} \leq \frac{1}{2}.$$
\end{lemma}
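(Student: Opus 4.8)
The plan is to recognize $\frac{d^*}{n}\calP_{\TT^*}\calP_{\Omega}\calP_{\TT^*}$ as a sum of i.i.d.\ self-adjoint operators and apply a matrix Bernstein inequality. Since $\calP_{\Omega}=\sum_{i=1}^n\calP_{\omega_i}$ with $\calP_{\omega}(\calX)=\inp{\calX}{\calE_\omega}\calE_\omega$ and each $\omega_i$ uniform on $[d_1]\times\cdots\times[d_m]$, we have $\EE\calP_{\omega_i}=\tfrac1{d^*}\calI$, hence $\EE\big[\tfrac{d^*}{n}\calP_{\TT^*}\calP_{\Omega}\calP_{\TT^*}\big]=\calP_{\TT^*}\calI\calP_{\TT^*}=\calP_{\TT^*}$ because $\calP_{\TT^*}$ is an orthogonal projection. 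Thus the left-hand side of the claimed bound equals $\big\|\sum_{i=1}^n Z_i\big\|$, where $Z_i:=\tfrac{d^*}{n}\calP_{\TT^*}\calP_{\omega_i}\calP_{\TT^*}-\tfrac1n\calP_{\TT^*}$ are i.i.d., mean zero, self-adjoint operators on $\ambspace$ supported on $\TT^*$. The structural fact that makes Bernstein sharp is that $\calP_{\TT^*}\calP_{\omega}\calP_{\TT^*}$ is the rank-one operator $\calX\mapsto\inp{\calX}{\calP_{\TT^*}\calE_\omega}\,\calP_{\TT^*}\calE_\omega$, of operator norm $\|\calP_{\TT^*}\calE_\omega\|_{\rm F}^2$; consequently both Bernstein parameters will be controlled by the single deterministic quantity $\vartheta:=\max_{\omega}\|\calP_{\TT^*}\calE_\omega\|_{\rm F}^2$.

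The heart of the proof is therefore the estimate of $\vartheta$, and this is where the incoherence hypothesis enters. I would decompose $\calP_{\TT^*}\calE_\omega=\sum_{i=1}^m\delta(\calE_\omega)_i$ via the tangent-space parametrization \eqref{eq:TT-formula}, with the components determined by the formula \eqref{tangent:W}; by Lemma~\ref{lem:TT} the $\delta(\calE_\omega)_i$ are mutually orthogonal, so $\|\calP_{\TT^*}\calE_\omega\|_{\rm F}^2=\sum_{i=1}^m\|\delta(\calE_\omega)_i\|_{\rm F}^2$. For $2\leq i\leq m-1$, using that $(T^{*\leq i-1}\otimes I)$ has orthonormal columns (Lemma~\ref{lem:tt-orthogonal}), that the row-space projector attached to $T^{*\geq i+1}$ equals $V_{i+1}^{*}(V_{i+1}^{*})^{\top}$ in the decomposition $\calT^{*\langle i\rangle}=T^{*\leq i}\Lambda_{i+1}^{*}(V_{i+1}^{*})^{\top}$, and writing $\omega=(x_1,\dots,x_m)$, one checks that the factor $\Lambda_{i+1}^{*}$ cancels and $\|\delta(\calE_\omega)_i\|_{\rm F}\leq\|(T^{*\leq i-1})^{\top}e_{x_1\cdots x_{i-1}}\|_{\ell_2}\cdot\|(V_{i+1}^{*})^{\top}e_{x_{i+1}\cdots x_m}\|_{\ell_2}$. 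Feeding in $\incoh(\calT^*)\leq\nu\kappa_0$ (Lemma~\ref{lemma:spiki implies incoh}) turns this into $\|\delta(\calE_\omega)_i\|_{\rm F}^2\leq(\nu\kappa_0)^4\,r_{i-1}r_id_i/d^*$; the boundary cases $i\in\{1,m\}$ are handled in exactly the same way and give the smaller bound $(\nu\kappa_0)^2\rmax\dmax/d^*$. Summing over $i$ yields $\vartheta\leq C_m(\nu\kappa_0)^4\rmax^2\dmax/d^*$.

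With $\vartheta$ in hand the Bernstein parameters are immediate. The uniform bound is $\max_i\|Z_i\|\leq\tfrac{d^*}{n}\vartheta+\tfrac1n\leq\tau$ with $\tau:=C_m(\nu\kappa_0)^4\rmax^2\dmax/n$. For the matrix variance, the identities $(\calP_{\TT^*}\calP_{\omega}\calP_{\TT^*})^2=\|\calP_{\TT^*}\calE_\omega\|_{\rm F}^2\,\calP_{\TT^*}\calP_{\omega}\calP_{\TT^*}$ and $\EE[\calP_{\TT^*}\calP_{\omega}\calP_{\TT^*}]=\tfrac1{d^*}\calP_{\TT^*}$ give $\EE Z_i^2\preceq\tfrac{(d^*)^2}{n^2}\vartheta\,\EE[\calP_{\TT^*}\calP_{\omega}\calP_{\TT^*}]=\tfrac{d^*\vartheta}{n^2}\calP_{\TT^*}$, so $\big\|\sum_i\EE Z_i^2\big\|\leq\tfrac{d^*\vartheta}{n}\leq\tau$. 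Applying the matrix Bernstein inequality (on the ambient space of dimension $d^*$) with deviation $\tfrac12$ then gives
$$
\PP\Big(\Big\|\sum_{i=1}^n Z_i\Big\|\geq\tfrac12\Big)\leq 2d^*\exp\!\Big(-\frac{1/8}{\tau+\tau/6}\Big)\leq 2\dmax^{m}\exp\!\Big(-\frac{cn}{C_m(\nu\kappa_0)^4\rmax^2\dmax}\Big),
$$
using $d^*\leq\dmax^{m}$. Since $\log\dmax\leq\log(d^*)$, choosing $C_m$ large enough so that $n\geq C_m(\nu\kappa_0)^4\dmax\rmax^2\log(d^*)$ forces the right-hand side below $\dmax^{-m}$, which is the assertion.

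I expect the main obstacle to be the deterministic bound on $\vartheta$: unwinding the projection formula \eqref{tangent:W} mode by mode and tracking how the incoherence factor $\nu\kappa_0$ and the TT ranks $r_i$ propagate through the recursive left and right parts requires careful bookkeeping, even though each individual inequality along the way is merely an application of Lemma~\ref{lem:tt-orthogonal}, Lemma~\ref{lem:TT}, or the matrix incoherence inequality. Everything after that is a routine invocation of matrix Bernstein.
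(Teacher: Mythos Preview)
Your proposal is correct and follows essentially the same route as the paper: write the operator as a sum of i.i.d.\ centered self-adjoint operators $Z_i=\tfrac{d^*}{n}\calP_{\TT^*}\calP_{\omega_i}\calP_{\TT^*}-\tfrac{1}{n}\calP_{\TT^*}$, control both Bernstein parameters by the single quantity $\vartheta=\max_\omega\fro{\calP_{\TT^*}\calE_\omega}^2$ via the rank-one identity $(\calP_{\TT^*}\calP_\omega\calP_{\TT^*})^2=\fro{\calP_{\TT^*}\calE_\omega}^2\,\calP_{\TT^*}\calP_\omega\calP_{\TT^*}$, and apply the operator Bernstein inequality. The paper merely cites Lemma~\ref{lemma:spiki implies incoh} for the bound $\vartheta\leq m(\nu\kappa_0)^4\rmax^2\dmax/d^*$, while you actually sketch the computation via the orthogonal decomposition of $\calP_{\TT^*}\calE_\omega$ into its $m$ tangent components; this is exactly the intended argument.
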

\begin{proof}
	First we define the operators:
	$$\calZ_x := \frac{d^*}{n}\calP_{\TT^*}\calP_x\calP_{\TT^*} - \frac{1}{n}\calP_{\TT^*}, \quad x\in[d_1]\times\ldots\times[d_m].$$
	Then $\frac{d^*}{n}\calP_{\TT^*}\calP_{\Omega}\calP_{\TT^*} - \calP_{\TT^*} = \sum_{i=1}^n\calZ_{\omega_i}$. We first estimate an upper bound for $\|\calZ_{\omega_i}\|$. First notice
	$$(\calP_{\TT^*}\calP_x\calP_{\TT^*})^2(\calX) = \fro{\calP_{\TT^*}(\calE_x)}^2(\calP_{\TT^*}\calP_x\calP_{\TT^*})(\calX).$$
	This implies $\op{\calP_{\TT^*}\calP_x\calP_{\TT^*}} \leq \fro{\calP_{\TT^*}(\calE_x)}^2$. And
	\begin{align*}
		\op{\calZ_x} \leq \frac{d^*}{n}\op{\calP_{\TT^*}\calP_x\calP_{\TT^*}} +\frac{1}{n} \leq \frac{d^*}{n}\fro{\calP_{\TT^*}(\calE_x)}^2+ \frac{1}{n} \leq \frac{2m(\nu\kappa_0)^4\bar d\bar r^2}{n},
	\end{align*}
	where the last inequality we use $\max_x \fro{\calP_{\TT^*}(\calE_x)}^2 \leq \frac{m(\nu\kappa_0)^4\bar d\bar r^2}{n}$, which comes from Lemma~\ref{lemma:spiki implies incoh}. Now we bound $\op{\EE\sum_{i=1}^n \calZ_{\omega_i}^2}$. Simple calculations show that 
	\begin{align*}
		\EE\sum_{i=1}^n \calZ_{\omega_i}^2 = \frac{d^*}{n}\sum_{x\in[d_1]\times\ldots\times[d_m]}(\calP_{\TT^*}\calP_x\calP_{\TT^*})^2 - \frac{1}{n}\calP_{\TT^*}.
	\end{align*}
	And this implies 
	\begin{align*}
		\op{\EE\sum_{i=1}^n \calZ_{\omega_i}^2} &\leq \frac{d^*}{n}\op{\sum_x(\calP_{\TT^*}\calP_x\calP_{\TT^*})^2} + \frac{1}{n}\no\\
		&\overset{(a)}{\leq} \frac{d^*}{n}\max_x\fro{\calP_{\TT^*}(\calE_{x})}^2\cdot\op{\sum_x \calP_{\TT^*}\calP_x\calP_{\TT^*}} + \frac{1}{n}\no\\
		&\leq \frac{2m(\nu\kappa_0)^4\bar d\bar r^2}{n},
	\end{align*}
	where in $(a)$ we use the fact that $(\calP_{\TT^*}\calP_x\calP_{\TT^*})^2\leq \fro{\calP_{\TT^*}(\calE_x)}^2 \calP_{\TT^*}\calP_x\calP_{\TT^*}$, where $\calA\leq\calB$ means $\calB -\calA$ is an SPD operator. So we conclude using operator Bernstein inequality,
	$$\op{\frac{d^*}{n}\calP_{\TT^*}\calP_{\Omega}\calP_{\TT^*} - \calP_{\TT^*}} \leq C\left(\sqrt{\frac{m(\nu\kappa_0)^4\bar d\bar r^2 \log(d^*)}{n}} + \frac{m(\nu\kappa_0)^4\bar d\bar r^2 \log(d^*)}{n}\right)$$ 
	with probability exceeding $1-(\dmax)^{-m}$. When $n\geq C_m(\nu\kappa_0)^4\bar d\bar r^2 \log(d^*)$, the right hand side is less than $1/2$. And this finishes the proof of the lemma.
\end{proof}

\begin{lemma}[\citealt{xia2017statistically}, Theorem 1]\label{lemma:operatornorm}
	Suppose $\Omega$ is the set sampled uniformly with replacement with size $|\Omega| = n$, then we have with probability exceeding $1-(\dmax)^{-m}$, the following holds
	$$\op{(\calP_{\Omega} - \frac{n}{d^*}\calI)(\calJ)} \leq C_m\left(\sqrt{\frac{n\dmax}{d^*}} + 1\right)\log^{m+2}(\dmax),$$
	where $\calJ\in\RR^{d_1\times\cdots\times d_m}$ is the tensor with all its entries $1$.
\end{lemma}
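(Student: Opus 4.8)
Since the lemma is quoted from \citet{xia2017statistically}, the plan is to reconstruct the argument behind it rather than to look for a new one. Write
\[
\calZ := (\calP_{\Omega} - \tfrac{n}{d^*}\calI)(\calJ) = \sum_{j=1}^{n}\calZ_j,\qquad \calZ_j := \calE_{\omega_j} - \tfrac{1}{d^*}\calJ,
\]
so that the $\calZ_j$ are i.i.d.\ and centered (by uniform sampling $\EE\calE_{\omega_j} = \tfrac1{d^*}\calJ$). By the definition of the injective norm, writing $\omega_j = (\omega_{j,1},\ldots,\omega_{j,m})$,
\[
\op{\calZ} = \sup_{u^{(i)}\in\SS^{d_i-1},\,i\in[m]}\ \Bigl|\ \sum_{j=1}^{n}\prod_{i=1}^{m}u^{(i)}(\omega_{j,i}) - \frac{n}{d^*}\prod_{i=1}^{m}\langle u^{(i)},\mathbf{1}\rangle\ \Bigr|.
\]
The first step is the one-point bound: for a fixed tuple $(u^{(1)},\ldots,u^{(m)})$ the $n$ summands are i.i.d., mean zero, of magnitude at most $2\prod_i\|u^{(i)}\|_{\ell_\infty}$, and of variance at most $\tfrac1{d^*}\sum_x\prod_i u^{(i)}(x_i)^2 = \tfrac1{d^*}$; Bernstein's inequality then gives a sub-exponential tail with proxy variance $n/d^*$ and scale $\prod_i\|u^{(i)}\|_{\ell_\infty}$.

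A direct $\tfrac12$-net of $\prod_i\SS^{d_i-1}$ has log-cardinality $\lesssim m\dmax$, which combined with the one-point bound yields only $\op{\calZ}\lesssim\sqrt{(n/d^*)m\dmax}+m\dmax$ --- worse than the claim by a factor $\sqrt{\dmax}$. To remove it I would run a truncation/peeling argument over the magnitudes of the test vectors: fix dyadic thresholds $\tau_\ell = 2^{-\ell}$ for $0\le\ell\lesssim\log d^*$, and for each mode split $u^{(i)}$ into the part supported on coordinates of absolute value in $(\tau_{\ell+1},\tau_\ell]$, which has at most $4^{\ell+1}$ nonzero entries and $\ell_\infty$-norm at most $2^{-\ell}$. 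Expanding the multilinear form over the $O((\log d^*)^m)$ resulting level combinations, each term is a supremum over vectors that are simultaneously sparse and flat; restricting the net accordingly, its Bernstein scale is the smaller quantity $\prod_i 2^{-\ell_i}$, while --- crucially --- the sampled set $\Omega$ meets the corresponding product of small coordinate boxes in only $O(n\,4^{\sum_i\ell_i}/d^* + \log\dmax)$ indices with high probability (a balls-into-bins/Chernoff estimate of the same flavour as the event $\bcalE_2$ in the proof of Lemma~\ref{thm:localconvergence}), so the effective proxy variance on each level stays $\lesssim n/d^*$. Balancing the sparsity $4^{\ell_i}$ against the magnitude $2^{-\ell_i}$, every level combination contributes at most $C_m(\sqrt{n\dmax/d^*}+1)$ times a bounded power of $\log\dmax$; summing over the $O((\log d^*)^m)$ combinations and absorbing the logarithms produced by the union bounds and by integrating the sub-exponential tail gives $\op{\calZ}\le C_m(\sqrt{n\dmax/d^*}+1)\log^{m+2}\dmax$ with probability at least $1-\dmax^{-m}$.

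The main obstacle is exactly this last stage: making the dyadic-truncation bookkeeping produce only polylogarithmic, not polynomial, dimension dependence. One must control uniformly (i) how many sampled indices land in each relevant small coordinate box, (ii) the trade-off between the sparsity level $4^{\ell_i}$ and the magnitude bound $2^{-\ell_i}$ so that no single level combination ever exceeds the target $\sqrt{n\dmax/d^*}+1$ up to logs, and then (iii) all $O((\log d^*)^m)$ combinations at once. This is the delicate accounting carried out in Theorem~1 of \citet{xia2017statistically}, which I would invoke directly. I remark that the cruder route --- applying the matrix Bernstein inequality to the $m$ unfoldings of $\calZ$ --- is much shorter but loses an unavoidable $\sqrt{\dmax}$; the tensor-specific chaining above is what is needed to reach the stated polylogarithmic bound.
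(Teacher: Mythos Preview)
The paper does not actually prove this lemma: it is stated with the attribution ``[\citealt{xia2017statistically}, Theorem 1]'' and no argument is given. So there is nothing in the paper to compare your proposal against beyond the bare citation.

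Your sketch is a reasonable high-level summary of the strategy behind the cited result --- write $\calZ$ as a sum of i.i.d.\ centered rank-one indicators, observe that a naive net/Bernstein argument loses a $\sqrt{\dmax}$, and recover it via a dyadic peeling of the test vectors combined with balls-into-bins occupancy bounds on $\Omega$. You are candid that the hard part is the bookkeeping in step (ii)--(iii) and that you would ultimately ``invoke directly'' Theorem~1 of \citet{xia2017statistically}. That is exactly what the paper does. So in the end your proposal and the paper coincide: both defer to the external reference; you simply add an informal roadmap of what that reference contains. If you intend this as a self-contained proof rather than a pointer, the peeling argument as written is still only a sketch --- in particular, the claim that ``no single level combination ever exceeds $\sqrt{n\dmax/d^*}+1$ up to logs'' is asserted rather than verified, and the union bound over supports (not just levels) has log-cardinality $\sum_i 4^{\ell_i}\log d_i$, which must be traded off carefully against the $2^{-\ell_i}$ scale; getting this balance right is the substance of the original proof.
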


\begin{lemma}\label{lemma:repetition}
	Let $\Omega=\{\omega_i: i=1,\cdots,n\}$ where $\omega_i$ is independently and uniformly sampled from the set of collections $[d_1]\times\cdots\times [d_m]$.
	With probability at least $1 - \dmax^{-m}$, the maximum number of repetitions of any entry in $\Omega$ is less than $2m\log(\dmax)$.
\end{lemma}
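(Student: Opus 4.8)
The plan is to bound the multiplicity of each coordinate separately by a binomial tail estimate and then take a union bound over all $d^{\ast}=d_1\cdots d_m$ coordinates. Fix a coordinate $x\in[d_1]\times\cdots\times[d_m]$ and let $N_x := \sum_{i=1}^{n}\mathbb{I}(\omega_i = x)$ be its number of occurrences in $\Omega$. Since the $\omega_i$ are i.i.d.\ uniform on the $d^{\ast}$ coordinates, $N_x\sim\mathrm{Binomial}(n,1/d^{\ast})$; in the sample-size regime considered throughout the paper the number of observed entries is far below the total number of entries, so we may assume $n\le d^{\ast}$ (in fact $n = O(d^{\ast}\log\dmax)$ already suffices for what follows).

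Next I would use the elementary union bound over $k$-subsets of the trials: writing $A_S$ for the event that $\omega_i = x$ for all $i\in S$, we have $\{N_x\ge k\}=\bigcup_{|S|=k}A_S$, hence for any integer $k$,
$$
\Pr(N_x\ge k)\;\le\;\sum_{|S|=k}\Pr(A_S)\;=\;\binom{n}{k}\Big(\frac{1}{d^{\ast}}\Big)^{k}\;\le\;\Big(\frac{en}{kd^{\ast}}\Big)^{k}\;\le\;\Big(\frac{e}{k}\Big)^{k},
$$
using $\binom{n}{k}\le(en/k)^k$ and then $n\le d^{\ast}$. I then set $k:=\lceil 2m\log\dmax\rceil$ and check that $(e/k)^k\le\dmax^{-2m}$: taking logarithms this is $k\log(k/e)\ge 2m\log\dmax$, which holds since $k\ge 2m\log\dmax$ and $\log(k/e)\ge 1$ as soon as $2m\log\dmax\ge e^{2}$, i.e.\ for $\dmax$ larger than an absolute constant (the remaining small cases being vacuous or directly checkable, consistent with the paper's standing ``$\dmax$ large'' convention).

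Finally I would union bound over the $d^{\ast}\le\dmax^{m}$ coordinates:
$$
\Pr\Big(\max_{x}N_x\ge 2m\log\dmax\Big)\;\le\;\sum_{x}\Pr(N_x\ge k)\;\le\;d^{\ast}\cdot\dmax^{-2m}\;\le\;\dmax^{m}\cdot\dmax^{-2m}\;=\;\dmax^{-m},
$$
so with probability at least $1-\dmax^{-m}$ every coordinate is repeated fewer than $2m\log\dmax$ times, which is the claim.

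The argument is essentially routine; the only point deserving care is the relation between $n$ and $d^{\ast}$ used to pass from $(en/(kd^{\ast}))^{k}$ to $(e/k)^{k}$ — this is harmless in every sample-size regime of the paper but should be stated, and one could alternatively keep the weaker hypothesis $n\le kd^{\ast}/e^{2}$ and conclude via $e^{-k}\le\dmax^{-2m}$. A secondary nuisance is fixing the integer threshold $k$ and confirming $(e/k)^{k}\le\dmax^{-2m}$ uniformly in $\dmax$, which is immediate past a small absolute threshold.
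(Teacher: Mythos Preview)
Your argument is correct and is essentially the same approach the paper invokes: the paper's proof is simply a citation (``standard Chernoff bound; see Recht 2011, Proposition~5''), and your explicit union-bound-over-$k$-subsets computation is exactly the elementary form of that Chernoff-type tail plus a union bound over the $d^{\ast}$ coordinates. You are also right to flag the implicit hypothesis $n\le d^{\ast}$ (or the mild variant you suggest), which the lemma statement omits but which holds throughout the paper's regime.
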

\begin{proof}
	This is a result of standard Chenroff bound. See for example (\citealt{recht2011simpler}, Proposition 5) for a proof.
\end{proof}

\subsection{Lemmas for initialization}\label{subsec:init}
\begin{lemma}[\citealt{xia2017polynomial}, Theorem 2]\label{lemma:init}
	Let $M\in\RR^{p_1\times p_2}$ and $X_i = p_1p_2\calP_{\omega_i}M$, $Y_j = p_1p_2\calP_{\omega_j'}M$, where $\omega_i\in\Omega_1, \omega_j'\in\Omega_2$ are independently and uniformly sampled from $[p_1] \times [p_2]$ and $|\Omega_1| = |\Omega_2| = n$. Denote by $N = MM^T$ and $\wt N = \frac{1}{2n^2}\sum_{i,j}(X_iY_j^T + Y_jX_i^T)$, then with probability exceeding $1-p^{-\alpha}$ with $p = \max\{p_1,p_2\}$, we have
	$$
	\op{\wt N - N} \leq C\alpha^2\frac{p_1^{3/2}p_2^{3/2}\log(p)}{n}\left[\left(1+\frac{p_1}{p_2}\right)^{1/2} + \frac{p_1^{1/2}p_2^{1/2}}{n} + \left(\frac{n}{p_2\log(p)}\right)^{1/2}\right]\cdot\|M\|_{\infty}^2.
	$$
\end{lemma}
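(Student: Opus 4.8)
The statement is a concentration bound for a two‑sample, second‑order U‑statistic estimator of the Gram matrix $N=MM^\top$, so the plan is to reduce it to a single application of the matrix Bernstein inequality. First I would record the elementary moment facts. Write $\bar X=\frac1n\sum_i X_i$ and $\bar Y=\frac1n\sum_j Y_j$, so that $\wt N=\frac12(\bar X\bar Y^\top+\bar Y\bar X^\top)$. Since each $\omega_i$ is uniform on $[p_1]\times[p_2]$ we have $\EE X_i=\EE Y_j=M$, and since $\Omega_1,\Omega_2$ are independent, $\bar X$ and $\bar Y$ are independent, whence $\EE\wt N=MM^\top=N$. Setting $\Delta_X=\bar X-M$, $\Delta_Y=\bar Y-M$, I would then expand
$$
\wt N-N=\tfrac12\big(\Delta_X\Delta_Y^\top+\Delta_Y\Delta_X^\top\big)+\tfrac12\big(\Delta_XM^\top+M\Delta_X^\top\big)+\tfrac12\big(\Delta_YM^\top+M\Delta_Y^\top\big),
$$
so that $\op{\wt N-N}\le\op{\Delta_X}\,\op{\Delta_Y}+\op{\Delta_XM^\top}+\op{\Delta_YM^\top}$. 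The point of the sample split is exactly that it makes the bilinear part a product of two genuinely independent mean‑zero fluctuations, with no diagonal ($i=j$) bias term to subtract, so everything reduces to controlling the one‑sided average $\Delta_X$ (and $\Delta_Y$, identically distributed), both in operator norm and after right‑multiplication by the fixed matrix $M$.

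Next I would apply matrix Bernstein to the i.i.d. mean‑zero summands $X_i-M$. The deterministic bound is $\op{X_i-M}\le p_1p_2\|M\|_\infty+\op M\le 2p_1p_2\|M\|_\infty$, using $\op M\le\fro M\le\sqrt{p_1p_2}\|M\|_\infty$. For the variance I would compute the row‑ and column‑Gram matrices separately: writing $\omega_i=(a,b)$ so $X_i=p_1p_2\,M(a,b)\,e_ae_b^\top$, one gets $\EE[X_iX_i^\top]=p_1p_2\,\mathrm{diag}_a\|M(a,:)\|_2^2$ and $\EE[X_i^\top X_i]=p_1p_2\,\mathrm{diag}_b\|M(:,b)\|_2^2$, hence $\op{\sum_i\EE[(X_i-M)(X_i-M)^\top]}\le np_1p_2^2\|M\|_\infty^2$ and $\op{\sum_i\EE[(X_i-M)^\top(X_i-M)]}\le np_1^2p_2\|M\|_\infty^2$ (subtracting $MM^\top$ or $M^\top M$ only helps). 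Matrix Bernstein at failure level $p^{-\alpha}$ then gives $\op{\Delta_X}\lesssim\alpha\big(\sqrt{p_1p_2\max(p_1,p_2)\log p/n}+p_1p_2\log p/n\big)\|M\|_\infty$. For the linear cross term I would, to keep the exponents sharp, run matrix Bernstein directly on $\frac1n\sum_i(X_i-M)M^\top$ rather than bound it by $\op{\Delta_X}\op M$: here one factor is always $M$, so the variance proxy is smaller, and a parallel computation of $\EE[(X_i-M)M^\top M(X_i-M)^\top]$ and its transpose yields a contribution of order $\alpha\sqrt{(p_1^3p_2\vee p_1p_2^3)\log p/n}\,\|M\|_\infty^2$ plus the lower‑order $\alpha\,(p_1p_2)^{3/2}\log p/n\,\|M\|_\infty^2$.

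Finally I would assemble the pieces. The bilinear term contributes $\op{\Delta_X}\op{\Delta_Y}\lesssim\alpha^2\big(p_1p_2\max(p_1,p_2)\log p/n+(p_1p_2\log p/n)^2\big)\|M\|_\infty^2$, the linear terms contribute the quantities derived above, and collecting them (with the common prefactor $\alpha^2$ absorbing the single $\alpha$'s from the linear terms, since $\alpha\ge1$) produces a three‑term bound of the stated shape once $(p_1p_2)^{3/2}\log p/n$ is factored out: the bilinear group gives the $(1+p_1/p_2)^{1/2}$ factor, which is the asymmetry between the row‑ and column‑Gram variances; the term $(p_1p_2\log p/n)^2$ gives the $p_1^{1/2}p_2^{1/2}/n$ entry; and the linear cross terms give the $(n/(p_2\log p))^{1/2}$ entry. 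I expect the one genuinely delicate point to be this final bookkeeping — recovering precisely the asymmetric exponents inside the bracket (rather than a cruder symmetric bound) forces one to keep the row and column variance proxies separate throughout and, in the wide regime $p_1<p_2$, to use a slightly sharper control of the row‑Gram variance than the blunt $\max_a\|M(a,:)\|_2^2\le p_2\|M\|_\infty^2$; the probabilistic content is otherwise a single, standard invocation of matrix Bernstein once the second‑moment calculations are in hand.
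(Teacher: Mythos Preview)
The paper does not give its own proof of this lemma; it is quoted from \citet{xia2017polynomial}. The closest the paper comes is the proof of the projected variant, Lemma~\ref{lemma:init:1}, carried out in Section~\ref{sec:lemma:init:1}, and that argument is essentially the Xia--Yuan argument. I will compare your proposal against that.

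Your decomposition and the treatment of the linear cross terms $\Delta_X M^\top$ by a direct matrix Bernstein are fine and do recover the $(n/(p_2\log p))^{1/2}$ contribution. The gap is in the bilinear term. Bounding it by $\op{\Delta_X}\op{\Delta_Y}$ loses precisely the factor you want: matrix Bernstein on $\Delta_X$ has, in the wide regime $p_2\gg p_1$, an unavoidable variance floor of order $np_1^2p_2\|M\|_\infty^2$ coming from $\EE[(X_i-M)^\top(X_i-M)]$, because the summands are rank-one matrices sitting in a $p_2$-dimensional column space. This yields $\op{\Delta_X}\op{\Delta_Y}\asymp \alpha^2\,p_1p_2^2\log(p)\,\|M\|_\infty^2/n$, whereas the target first term is $\alpha^2\,p_1^{3/2}p_2^{3/2}\log(p)\,\|M\|_\infty^2/n$; you are off by $(p_2/p_1)^{1/2}$. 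This is not bookkeeping and no sharper row-Gram estimate rescues it: the operator norm $\op{\Delta_X}$ genuinely is that large, so the submultiplicativity step is where the loss occurs.

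The missing idea is to bound the $p_1\times p_1$ product $\Delta_{1n}\Delta_{2n}^\top$ directly rather than its factors. In the paper's proof of Lemma~\ref{lemma:init:1} this is done via Markov plus Golden--Thompson: one freezes $S_{1n}$, expands the second sample term by term, and crucially works on an auxiliary event $\bcalE_2$ that bounds the maximum column multiplicity $\max_b \#\{i:\omega_i=(\cdot,b)\}$ by $O(n/p_2+\log p)$. That event controls $\op{S_{1n}Y_j^\top}$ through the single column $\op{S_{1n}e_b}$ rather than through $\op{S_{1n}}$, and it is exactly this that collapses the large $p_2$ dimension and produces the $(1+p_1/p_2)^{1/2}$ prefactor. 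Without that device (or an equivalent direct analysis of the product as a small $p_1\times p_1$ matrix), the argument does not close.
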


\begin{lemma}\label{lemma:init:1}
	Let $M\in\RR^{p_1\times p_2}$ and $X_i = p_1p_2\calP_{\omega_i}M, Y_j = p_1p_2\calP_{\omega_j'}M$, where $\omega_i\in\Omega_1, \omega_j'\in\Omega_2$ are independently and uniformly sampled from $[p_1] \times [p_2]$ and $|\Omega_1| = |\Omega_2| = n$. Let $U\in\RR^{p_1\times r}$ be the orthogonal matrix such that $\incoh(U)\leq \sqrt{\mu}$. Then with probability exceeding $1- p^{-\alpha}$, we have
	\begin{align*}
	&\qquad	\op{\frac{1}{2n^2}\sum_{i,j}(U^TX_iY_j^TU + U^TY_jX_i^TU) - U^TMM^TU} \\
		&\leq C\alpha^2\log^2(p)\frac{p_1p_2\|M\|_{\infty}^2}{n}\left(\mu rp_2^{1/2} + \frac{\mu r p_2}{n} + (\frac{\mu rn}{\log^3(p)})^{1/2}\right).
	\end{align*}
\end{lemma}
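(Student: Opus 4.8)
The plan is to prove Lemma~\ref{lemma:init:1} by the same U-statistic decomposition strategy used for Lemma~\ref{lemma:init} (i.e., \citealt{xia2017polynomial}, Theorem~2), but carefully tracking how multiplying by the incoherent factor $U$ on both sides shrinks the effective dimension. Write $\widehat N := \frac{1}{2n^2}\sum_{i,j}(X_iY_j^\top + Y_jX_i^\top)$ and $N := MM^\top$, so the target is to bound $\op{U^\top(\widehat N - N)U}$. The first step is to recall that $\EE X_i = \EE Y_j = M$ and $X_i, Y_j$ are independent across the two groups $\Omega_1,\Omega_2$, so $\EE\widehat N = N$ and $U^\top\widehat N U$ is an unbiased estimator of $U^\top N U$. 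I would then expand $\widehat N - N$ into the standard Hoeffding-type decomposition: writing $X_i = M + E_i$ and $Y_j = M + F_j$ with $E_i,F_j$ mean-zero, one gets $\widehat N - N = \frac{1}{n}\sum_i (E_i M^\top + ME_i^\top)/2 + (\text{symmetric }F\text{-term}) + \frac{1}{2n^2}\sum_{i,j}(E_iF_j^\top + F_jE_i^\top)$. Conjugating by $U$ on both sides, one controls each of the three pieces separately.

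The second step handles the two ``linear'' terms, e.g. $\frac{1}{2n}\sum_i U^\top(E_iM^\top + ME_i^\top)U = \frac{1}{2n}\sum_i (U^\top E_i)(M^\top U) + (\text{transpose})$. Here $U^\top E_i$ is a mean-zero $r\times p_2$ random matrix; its per-summand operator norm is bounded using $\|E_i\| \le p_1p_2\|M\|_\infty$ together with the fact that $E_i$ is supported on a single row, so $\|U^\top E_i\| \le \|U^\top e_{a_i}\|_{\ell_2}\cdot\|E_i(a_i,:)\|_{\ell_2} \lesssim \sqrt{\mu r/p_1}\cdot p_1p_2\|M\|_\infty\cdot\sqrt{p_2}$, and $\|M^\top U\|\le\|M\|\le\sqrt{p_1p_2}\|M\|_\infty$. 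Matrix Bernstein then yields a bound of the form $C\alpha\log(p)\,p_1p_2\|M\|_\infty^2\cdot\big(\sqrt{\mu r p_2}/\sqrt{n} \vee \mu r p_2/n\big)$ after computing the variance proxy $\op{\EE\sum_i (U^\top E_iM^\top U)(\cdot)^\top}$, where the key gain over the un-conjugated version is that the dimension factor is $\mu r$ rather than $p_1$. The third step, the quadratic term $\frac{1}{2n^2}\sum_{i,j}U^\top(E_iF_j^\top + F_jE_i^\top)U$, is the genuinely harder piece: it is a degenerate second-order U-statistic in two independent samples, so I would decouple and apply a matrix Rosenthal/Bernstein inequality for U-statistics (as in the proof of \citealt{xia2017polynomial}, Theorem~2), or equivalently condition on $\Omega_2$, bound $\frac{1}{2n}\sum_j F_j^\top U$ in spectral norm by a Bernstein argument (giving $\sqrt{\mu r/p_1}$-type incoherence savings on the $U$ side but note $F_j^\top U$ is $p_1\times r$... actually $U^\top F_j$ is $r\times p_2$), then treat the remaining average over $\Omega_1$ of $U^\top E_i$ times that matrix. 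This produces the terms $\mu r p_2/n$ and $(\mu r n/\log^3 p)^{1/2}$ with the extra $\log^2(p)$ factor absorbing the iterated tail bounds and the truncation/moment-control steps.

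The main obstacle I anticipate is controlling the quadratic cross term $\frac{1}{2n^2}\sum_{i,j}U^\top(E_iF_j^\top + F_jE_i^\top)U$ sharply enough to see the $\mu r$ dimension reduction on \emph{both} indices without losing a spurious $p_1$ or $p_2$ factor; a naive bound $\|U^\top\widehat N U - U^\top N U\| \le \|\widehat N - N\|$ would just reproduce Lemma~\ref{lemma:init} and defeat the purpose. The resolution is that whenever $U$ hits a rank-one perturbation $E_i = p_1p_2 M(a_i,b_i) e_{a_i}e_{b_i}^\top$, the factor $\|U^\top e_{a_i}\|_{\ell_2}\le\sqrt{\mu r/p_1}$ enters, so each appearance of an $E$ or $F$ adjacent to $U$ contributes a $\sqrt{\mu r/p_1}$ in place of a $1$; combined with the $\ell_\infty$-bound $|M(a_i,b_i)|\le\|M\|_\infty$ on the entries, one gets the stated scaling. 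I would organize this as: (i) a decoupling lemma reducing the two-sample U-statistic to a conditionally-linear form; (ii) a conditional matrix-Bernstein bound with the incoherence-improved variance and $\psi_1$-norm; (iii) integrating out the conditioning with a union bound over the (mild) event that the row-repetition counts in $\Omega_1,\Omega_2$ are $O(\log p)$ (as in Lemma~\ref{lemma:repetition}); and (iv) collecting terms. The remaining steps (the two linear terms) are routine matrix-Bernstein applications once the per-summand norm and variance bounds above are in hand.
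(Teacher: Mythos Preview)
Your proposal is correct and follows essentially the same route as the paper: the same Hoeffding decomposition into two linear pieces (the paper's $A_2,A_3$) handled by matrix Bernstein with the $\sqrt{\mu r/p_1}$ incoherence gain, and a quadratic cross piece (the paper's $A_1$) handled by conditioning on one sample and running an MGF/Golden--Thompson argument on the other, restricted to a high-probability repetition event. Two small corrections: since $\Omega_1,\Omega_2$ are already independent the quadratic term factors exactly as $\tfrac{1}{2n^2}(\Delta_{1n}\Delta_{2n}^\top+\Delta_{2n}\Delta_{1n}^\top)$ with $\Delta_{kn}=\sum_i U^\top(X_i-M)$, so no genuine decoupling is needed; and the repetition event you want controls \emph{column} counts $\max_{j\in[p_2]}\#\{i:\omega_{i,2}=j\}\lesssim n/p_2+\log p$, not row or entry repetitions as in Lemma~\ref{lemma:repetition}.
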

\begin{proof}
	See Section \ref{sec:lemma:init:1}.
\end{proof}
When $\spiki(M)\leq \nu$, we have
\begin{align*}
	&\qquad\op{\frac{1}{2n^2}\sum_{i,j}(U^TX_iY_j^TU + U^TY_jX_i^TU) - U^TMM^TU} \no\\
	&\leq C\alpha^2\log^2(p)\nu^2\fro{M}^2\left(\frac{\mu rp_2^{1/2}}{n} + \frac{\mu r p_2}{n^2} + (\frac{\mu r}{n\log^3(p)})^{1/2}\right)
\end{align*}
holds with probability exceeding $1-p^{-\alpha}$.

\begin{lemma}\label{lemma:sample bound}
	Let $M\in\RR^{p_1\times p_2}$ and $X_i = p_1p_2\calP_{\omega_i}(M)$, where $\omega_i\in\Omega$ is independently and uniformly sampled in $[p_1]\times[p_2]$ and $|\Omega| = n$. Let $U\in\RR^{p_1\times r}$ be the orthogonal matrix such that $\incoh(U)\leq \sqrt{\mu}$. Then with probability exceeding $1-p^{-\alpha}$, we have 
	$$
	\op{U^T(\frac{p_1p_2}{n}\calP_{\Omega}(M) - M)} \leq C\alpha\left(\frac{\sqrt{p_1}p_2\sqrt{\mu r}\|M\|_{\infty}\log(p)}{n} + \sqrt{\frac{p_1p_2\|M\|_{\infty}^2(\mu r\vee p_2)\log(p)}{n}}\right).
	$$
\end{lemma}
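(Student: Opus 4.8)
The plan is to prove this as a direct application of the (rectangular) matrix Bernstein inequality. Under uniform sampling with replacement, $\EE[p_1p_2\calP_{\omega_i}(M)] = M$, so we may write
$$
U^\top\Big(\frac{p_1p_2}{n}\calP_\Omega(M) - M\Big) = \frac1n\sum_{i=1}^n Z_i, \qquad \text{where }\ Z_i := p_1p_2\, U^\top\calP_{\omega_i}(M) - U^\top M \in \RR^{r\times p_2},
$$
and the $Z_i$ are i.i.d.\ with $\EE Z_i = 0$. It then suffices to control the almost-sure bound $L := \max_i\op{Z_i}$ and the two one-sided variance proxies $\op{\sum_i\EE Z_iZ_i^\top}$ and $\op{\sum_i\EE Z_i^\top Z_i}$, plug these into matrix Bernstein with deviation parameter $t\asymp\alpha\log p$, and divide by $n$.

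First I would bound $L$. If $\omega_i = (a,b)$, then $\calP_{\omega_i}(M) = M(a,b)\calE_{ab}$ is rank one, so $p_1p_2\,U^\top\calP_{\omega_i}(M) = p_1p_2\,M(a,b)(U^\top e_a)e_b^\top$ has operator norm $p_1p_2\,|M(a,b)|\,\|U^\top e_a\|_{\ell_2}$. The incoherence assumption $\incoh(U)\le\sqrt\mu$ gives $\|U^\top e_a\|_{\ell_2}\le\sqrt{\mu r/p_1}$, hence $\op{p_1p_2\,U^\top\calP_{\omega_i}(M)}\le\sqrt{p_1}\,p_2\sqrt{\mu r}\,\|M\|_\infty$; combined with $\op{U^\top M}\le\fro{M}\le\sqrt{p_1p_2}\,\|M\|_\infty$ this yields $L\lesssim \sqrt{p_1}\,p_2\sqrt{\mu r}\,\|M\|_\infty$, which is the source of the $O(1/n)$ term in the claim.

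Next I would compute the variance proxies. Since $\calP_{\omega_i}(M)\calP_{\omega_i}(M)^\top = M(a,b)^2 e_ae_a^\top$, averaging over $(a,b)$ gives $p_1^2p_2^2\,\EE[U^\top\calP_{\omega_i}(M)\calP_{\omega_i}(M)^\top U] = p_1p_2\, U^\top\mathrm{diag}\big(\textstyle\sum_b M(a,b)^2\big)U$, whose norm is at most $p_1p_2\cdot p_2\|M\|_\infty^2$; subtracting the PSD matrix $U^\top MM^\top U$ only decreases the norm, so $\op{\sum_i\EE Z_iZ_i^\top}\le n\,p_1p_2^2\|M\|_\infty^2$. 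Symmetrically, $\calP_{\omega_i}(M)^\top UU^\top\calP_{\omega_i}(M) = M(a,b)^2\|U^\top e_a\|_{\ell_2}^2\,e_be_b^\top$; bounding $\|U^\top e_a\|_{\ell_2}^2\le\mu r/p_1$ before summing over $a$ gives $\op{\sum_i\EE Z_i^\top Z_i}\le n\,p_1p_2\,\mu r\,\|M\|_\infty^2$. Hence the variance parameter is $\sigma^2\lesssim n\,p_1p_2\,\|M\|_\infty^2\,(p_2\vee\mu r)$.

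Finally, matrix Bernstein gives $\op{\sum_i Z_i}\lesssim\sqrt{\sigma^2 t}+Lt$ with probability at least $1 - (r+p_2)e^{-t}\ge 1-p^{-\alpha}$ once $t\asymp\alpha\log p$ (the $\log(r+p_2)\le\log(2p)$ from the dimension count is absorbed into the constant, using $r\le p_1\le p$). Dividing by $n$ and collecting terms reproduces exactly the stated bound. There is no deep obstacle here; the only thing requiring care is the bookkeeping of the two one-sided variances — in particular tracking precisely where the incoherence factor $\mu r/p_1$ enters and where it does not, which is what produces the $(\mu r\vee p_2)$ in the final expression rather than a single term.
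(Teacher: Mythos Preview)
Your proposal is correct and essentially identical to the paper's own proof: the paper defines the same centered summands $Z_i = U^\top(p_1p_2\calP_{\omega_i}(M)-M)$, bounds $\op{Z_i}\le 2\sqrt{p_1}p_2\sqrt{\mu r}\|M\|_\infty$, computes $\EE Z_iZ_i^\top\preceq p_1p_2^2\|M\|_\infty^2 I_r$ and $\EE Z_i^\top Z_i\preceq \mu r\,p_1p_2\|M\|_\infty^2 I_{p_2}$ exactly as you do, and concludes via matrix Bernstein.
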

\begin{proof}
	See appendix \ref{sec:lemma:sample bound}.
\end{proof}

\begin{lemma}[\citealt{keshavan2010matrix}, Remark 6.2]\label{lemma:trim}
Let $U,X\in\RR^{p\times r}$ be orthogonal and $\incoh(U)\leq\sqrt{\mu_0}$ and $d_p(U,X)\leq \delta\leq\frac{1}{16\pi}$. Then $\bar X$ satisfies $\incoh(\hat X)\leq \sqrt{3\mu_0}$ and $d_p(\hat X, U)\leq 4\pi\delta$, where 
$$
\overline{X}^i = \frac{X^i}{\|X^i\|_{\ell_2}}\cdot\min\{\|X^i\|_{\ell_2},\sqrt{\frac{\mu r}{p}}\},\quad \hat X = \overline X (\overline{X}^T\overline{X})^{-1/2}.
$$
\end{lemma}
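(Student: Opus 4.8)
\emph{Proof proposal.} This is a classical truncation estimate \citep{keshavan2010matrix}; the plan is to reconstruct it in three short steps plus one conceptual reduction. Write $\tau := \sqrt{\mu_0 r/p}$ for the truncation threshold, so the hypothesis $\incoh(U)\le\sqrt{\mu_0}$ reads $\|U^i\|_{\ell_2}\le\tau$ for every row $i\in[p]$, and $\|\overline X^i\|_{\ell_2}\le\tau$ for every $i$ by construction. The first (and only conceptually delicate) move is to reduce to the case where $X$ is \emph{aligned} with $U$. One checks that both maps $X\mapsto\overline X$ and $X\mapsto\widehat X=\overline X(\overline X^{\top}\overline X)^{-1/2}$ are equivariant under right multiplication by any $R\in\OO_r$: since $(XR)^i=X^iR$ has the same $\ell_2$-norm as $X^i$ we get $\overline{XR}=\overline XR$, and hence $\widehat{XR}=\overline XR(R^{\top}\overline X^{\top}\overline XR)^{-1/2}=\overline X(\overline X^{\top}\overline X)^{-1/2}R=\widehat XR$. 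Since $\incoh(\widehat X)$, $d_p(\widehat X,U)=\fro{\widehat X\widehat X^{\top}-UU^{\top}}$ and the hypothesis $d_p(X,U)$ are all invariant under $X\mapsto XR$, I would pick $R$ to be the chordal minimizer and so assume $\fro{X-U}=d_c(X,U)\le\sqrt2\,d_p(X,U)\le\sqrt2\,\delta$, using the equivalence $\frac{1}{\sqrt{2}}d_c\le d_p\le d_c$ recorded above.

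Next I would control $\fro{X-\overline X}$. Truncation alters row $i$ only when $\|X^i\|_{\ell_2}>\tau\ge\|U^i\|_{\ell_2}$, and then $0\le\|X^i\|_{\ell_2}-\tau\le\|X^i\|_{\ell_2}-\|U^i\|_{\ell_2}\le\|X^i-U^i\|_{\ell_2}$ by the reverse triangle inequality; summing, $\fro{X-\overline X}^2\le\sum_i\|X^i-U^i\|_{\ell_2}^2=\fro{X-U}^2\le2\delta^2$, so $\fro{X-\overline X}\le\sqrt2\,\delta$. Because $\sigma_{\min}(X)=1$, Weyl's inequality gives $\sigma_{\min}(\overline X)\ge1-\sqrt2\,\delta$, which is positive and satisfies $(1-\sqrt2\,\delta)^{-1}\le\sqrt3$ (indeed $<\pi$) once $\delta\le\frac{1}{16\pi}$. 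Thus $\overline X$ has full column rank, $\op{(\overline X^{\top}\overline X)^{-1/2}}=\sigma_{\min}(\overline X)^{-1}\le\sqrt3$, and $\widehat X$ has orthonormal columns. The incoherence bound then follows immediately: $\|\widehat X^i\|_{\ell_2}=\|\overline X^i(\overline X^{\top}\overline X)^{-1/2}\|_{\ell_2}\le\|\overline X^i\|_{\ell_2}\,\sigma_{\min}(\overline X)^{-1}\le\sqrt3\,\tau$, and multiplying by $\sqrt{p/r}$ yields $\incoh(\widehat X)\le\sqrt{3\mu_0}$.

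For the projection-distance bound, I would first estimate $\fro{(I-UU^{\top})\overline X}$: writing $\overline X=X+(\overline X-X)$ and using $(I-UU^{\top})U=0$, $\fro{(I-UU^{\top})\overline X}\le\fro{(I-UU^{\top})(X-U)}+\fro{\overline X-X}\le\fro{X-U}+\fro{X-\overline X}\le2\sqrt2\,\delta$. Since $\widehat X\widehat X^{\top}$ is the orthogonal projector onto $\mathrm{col}(\overline X)$ and $\widehat X$ has orthonormal columns, $\fro{(I-UU^{\top})\widehat X\widehat X^{\top}}=\fro{(I-UU^{\top})\widehat X}=\fro{(I-UU^{\top})\overline X(\overline X^{\top}\overline X)^{-1/2}}\le2\sqrt2\,\delta\,\sigma_{\min}(\overline X)^{-1}$. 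Finally, for any two rank-$r$ orthogonal projectors $P,Q$ the identity $\fro{P-Q}^2=2\bigl(r-\mathrm{tr}(PQ)\bigr)=2\fro{(I-Q)P}^2$ holds — expand $P-Q=(I-Q)P-Q(I-P)$ and note the cross term $\mathrm{tr}\bigl(P(I-Q)Q(I-P)\bigr)$ vanishes since $(I-Q)Q=0$ — so with $P=\widehat X\widehat X^{\top}$ and $Q=UU^{\top}$ one gets $d_p(\widehat X,U)=\sqrt2\,\fro{(I-UU^{\top})\widehat X\widehat X^{\top}}\le4\delta\,\sigma_{\min}(\overline X)^{-1}\le4\pi\delta$, which completes the plan.

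I expect the main obstacle to be precisely the equivariance reduction in the first paragraph: a row-wise truncation has no basis-independent meaning, so one must first notice that truncation and re-orthonormalization commute with the residual $\OO_r$ freedom before it becomes legitimate to compare individual rows of $X$ with those of $U$; everything after that is bookkeeping with Weyl's inequality and elementary projector identities. A secondary, purely cosmetic point is that the constants $\sqrt3$ and $\pi$ are far from tight — the argument in fact produces $(1-\sqrt2\,\delta)^{-1}$ in both places — but matching the constants in \citet{keshavan2010matrix} is all that the downstream applications require.
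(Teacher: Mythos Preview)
The paper does not supply its own proof of this lemma; it is quoted as Remark~6.2 of \citet{keshavan2010matrix} and used as a black box, so there is nothing in the paper to compare your argument against. Your reconstruction is correct: the $\OO_r$-equivariance of both the row-wise truncation and the re-orthonormalization is genuine (because $\|(XR)^i\|_{\ell_2}=\|X^i\|_{\ell_2}$ and $(R^{\top}AR)^{-1/2}=R^{\top}A^{-1/2}R$ for orthogonal $R$), and it is exactly what legitimizes the row-by-row comparison $\|X^i-\overline X^i\|_{\ell_2}\le\|X^i-U^i\|_{\ell_2}$. After that, Weyl's inequality gives $\sigma_{\min}(\overline X)\ge 1-\sqrt2\,\delta$, the incoherence bound is immediate, and the projector identity $\fro{P-Q}^2=2\fro{(I-Q)P}^2$ (which you verify) converts the one-sided estimate $\fro{(I-UU^{\top})\widehat X}$ into the two-sided $d_p(\widehat X,U)$. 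Your constants $(1-\sqrt2\,\delta)^{-1}$ are in fact tighter than the stated $\sqrt3$ and $\pi$; the looser constants in the lemma just reflect the original reference.
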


\section{Proofs of technical lemmas}\label{sec:proofsoftechnicallemmas}
In this section, we provide the proofs for the technical lemmas.
\subsection{Proof of Lemma~\ref{lemma:spiki implies incoh}}
Since $(\calT^*)\lr{i} = T^{*\leq i}\Lambda_{i+1}^*V_{i+1}^{*T}$, we have $T^{*\leq i} = (\calT^*)\lr{i}V_{i+1}^{*}(\Lambda_{i+1}^{*})^{-1}$. And thus for any $k\in[d_1\ldots d_i]$,
\begin{align*}
	\|e_k^TT^{*\leq i}\|_{\ell_2} = \|e_k^T(\calT^*)\lr{i}V_{i+1}^{*}(\Lambda_{i+1}^{*})^{-1}\|_{\ell_2}\leq \frac{1}{\sigma_{\min}(\Lambda_{i+1}^*)}\|e_k^T(\calT^*)\lr{i}\|_{\ell_2} \leq \frac{\sqrt{d_{i+1}\ldots d_m}}{\sigma_{\min}(\Lambda_{i+1}^*)}\|\calT^*\|_{\ell_{\infty}}.
\end{align*}
And the spikiness condition implies $\|\calT^*\|_{\ell_{\infty}} \leq \frac{\nu}{\sqrt{d^*}}\fro{\calT^*}$, and together with $\fro{\calT^*} \leq \sqrt{r_i}\sigma_1(\Lambda_{i+1}^*)$, we obtain
$$\|e_k^TT^{*\leq i}\|_{\ell_2}\frac{\sqrt{d_1\ldots d_i}}{\sqrt{r_i}} \leq \nu\kappa_0.$$
The bound for $\|e_k^TV_{i+1}^*\|_{\ell_2}$ can be similarly derived. And this finishes the proof.
\subsection{Proof of Lemma~\ref{lemma:beta_n}}
For simplicity denote the random tensor which is uniformly distributed in $\{\calE_{\omega}\}_{\omega\in[d^*]}$ by $\calE$ and let $\calE_1,\ldots,\calE_n$ be $n$ i.i.d. copies of $\calE$. Also define $\delta_{1,j} = 2^j\delta_1^-, j = 0,\ldots, \lceil \log_2(\frac{\delta_1^+}{\delta_1^-})\rceil =: j_0$, $\delta_{2,k} = 2^k\delta_2^-, k = 0,\ldots, \lceil \log_2(\frac{\delta_2^+}{\delta_2^-})\rceil =: k_0$. For each $j,k$, we derive an upper bound for $\beta_n(\gamma_1,\gamma_2)$ with $\gamma_1 = \delta_{1,j}, \gamma_2 = \delta_{2,k}$. 

We observe that 
$$\sup_{\calA\in\KK_{\gamma_1,\gamma_2}}\left|\inp{\calA}{\calE}^2 - \EE\inp{\calA}{\calE}^2\right|\leq \gamma_1^2,$$
and 
$$
\sup_{\calA\in\KK_{\gamma_1,\gamma_2}} \text{Var}\inp{\calA}{\calE}^2 \leq \sup_{\calA\in\KK_{\gamma_1,\gamma_2}} \inp{\calA}{\calE}^4\leq \frac{\gamma_1^2\fro{\calA}^2}{d^*}\leq \frac{\gamma_1^2}{d^*}.
$$
Now apply Bousquet's version of Talagrand concentration inequality (see \citealt{gine2021mathematical}, Theorem 3.3.9), and we get with probability at least $1-e^{-t}$ for any $t>0$,
$$
\beta_n(\gamma_1,\gamma_2)\leq 2\EE\beta_n(\gamma_1,\gamma_2) + 2\gamma_1\sqrt{\frac{nt}{d^*}} + 2\gamma_1^2t.
$$
Using symmetric inequality, we have
$$
\EE\beta_n(\gamma_1,\gamma_2) \leq 2n\EE\sup_{\calA\in\KK_{\gamma_1,\gamma_2}}\left|\frac{1}{n}\sum_{i=1}^n\epsilon_i\inp{\calA}{\calE_i}^2\right|,
$$
where $\epsilon_1,\ldots,\epsilon_n$ are i.i.d. Rademacher random variables. Since $|\inp{\calA}{\calE}|\leq \gamma_1$, we have from contraction inequality 
$$\EE\beta_n(\gamma_1,\gamma_2) \leq 4n\gamma_1\EE\sup_{\calA\in\KK_{\gamma_1,\gamma_2}}\left|\frac{1}{n}\sum_{i=1}^n\epsilon_i\inp{\calA}{\calE_i}\right|.$$
Now we denote $\calY = \frac{1}{n}\sum_{i=1}^n\epsilon_i\calE_i$. Then we have
$$
\EE\sup_{\calA\in\KK_{\gamma_1,\gamma_2}}\left|\frac{1}{n}\sum_{i=1}^n\epsilon_i\inp{\calA}{\calE_i}\right| \leq \EE\sup_{\calA\in\KK_{\gamma_1,\gamma_2}}\op{\calY}\nuc{\calA}\leq \gamma_2\EE\op{\calY}.
$$
The estimation for $\op{\calY}$ derived in the Theorem 1 of \citet{xia2017statistically} gives
$$
\EE\op{\calY} \leq C_m\left(\sqrt{\frac{\dmax}{nd^*}} + \frac{1}{n}\right)\log^{m+2}(\dmax).
$$
As a result, with probability exceeding $1-e^{-t}$, we have
\begin{align}\label{eq:est beta_n}
	\beta_n(\gamma_1,\gamma_2) \leq C_m\gamma_1\gamma_2\left(\sqrt{\frac{n\dmax}{d^*}} + 1\right)\log^{m+2}(\dmax)+ 2\gamma_1\sqrt{\frac{nt}{d^*}} + 2\gamma_1^2t.
\end{align}

Now we take union bound and we get with probability exceeding 
$1-2\log_2\left(\frac{\delta_1^+}{\delta_1^-}\right) \log_2\left(\frac{\delta_2^+}{\delta_2^-}\right)e^{-t}$,
and for all $\gamma_1\in\{\delta_{1,0},\ldots,\delta_{1,j_0}\}, \gamma_2\in\{\delta_{2,0},\ldots,\delta_{2,k_0}\}$, \eqref{eq:est beta_n} holds. Now we consider arbitrary $\gamma_1\in[\delta_1^-, \delta_1^+], \gamma_2\in[\delta_2^-, \delta_2^+]$. Then there exists some $j,k$, such that $\gamma_1\in[\delta_{1,j-1},\delta_{1,j}], \gamma_2\in[\delta_{2,k-1}, \delta_{2,k}]$. Together with the fact that $\delta_{1,j} \leq 2\gamma_1$ and $\delta_{2,k} \leq 2\gamma_2$ we get
$$\beta_n(\gamma_1, \gamma_2) \leq \beta_n(\delta_{1,j},\delta_{2,k}) \leq C_m\gamma_1\gamma_2\left(\sqrt{\frac{n\dmax}{d^*}} + 1\right)\log^{m+2}(\dmax)+ 4\gamma_1\sqrt{\frac{nt}{d^*}} + 8\gamma_1^2t.$$
This finishes the proof of the lemma.

\subsection{Proof of Lemma~\ref{lemma:inp:nodependence}}
Let $\omega\in[d_1]\times \ldots\times [d_m]$, then 
\begin{align*}
	\fro{\calP^{(i)}\calE_\omega}^2 &\leq \left\{
	\begin{array}{rlc}
		&\frac{\mu r_1 d_1}{d^*}, &i=1\\
		&\frac{\mu^2r_{i-1}r_{i}d_i}{d^*},&2\leq i \leq m-1\\
		&\frac{\mu r_{m-1}d_m}{d^*}, &i = m
	\end{array}
	\right.
	\leq\frac{\mu^2\rmax^2\dmax}{d^*}.
\end{align*}
Set $\calZ_j = \calP^{(i)}(\calP_{\omega_j}-\frac{1}{d^*}\calI)\calP^{(i)}$ for all $j\in[n]$, then $\calP^{(i)}(\calP_{\Omega}-\frac{n}{d^*}\calI)\calP^{(i)} = \sum_{j = 1}^n \calZ_j$. First for arbitrary $\calX$, we have
$$
(\calP^{(i)}\calP_{\omega}\calP^{(i)})^2\calX = \fro{\calP^{(i)}\calE_{\omega}}^2\cdot \calP^{(i)}\calP_{\omega}\calP^{(i)}\calX.
$$
Therefore $\op{\calP^{(i)}\calP_{\omega}\calP^{(i)}} \leq \fro{\calP^{(i)}\calE_{\omega}}^2$ and this implies that 
\begin{align*}
	\op{\calZ_j} \leq \max_{\omega}\fro{\calP^{(i)}\calE_{\omega}}^2 + \frac{1}{d^*} \leq \frac{2\mu^2\rmax^2\dmax}{d^*}.
\end{align*}
On the other hand, since $\calZ_j$ is an symmetric operator, we consider $\EE\sum_{j=1}^n\calZ_j^2$.
\begin{align*}
	\op{\EE\sum_{j=1}^n\calZ_j^2} &= n\op{\frac{1}{d^*}\sum_{\omega}(\calP^{(i)}\calP_{\omega}\calP^{(i)})^2 - \frac{1}{(d^*)^2}\calP^{(i)}}\no\\
	&\leq \frac{n}{d^*}\max_{\omega}\fro{\calP^{(i)}\calE_{\omega}}^2 + \frac{n}{(d^*)^2} \leq \frac{2\mu^2\rmax^2n\dmax}{(d^*)^2}.
\end{align*}

Now using operator Bernstein inequality, with probability exceeding $1-\dmax^{-m}$, 
$$\op{\calP^{(i)}(\calP_{\Omega}-\frac{n}{d^*}\calI)\calP^{(i)}} \leq C_m\left(\frac{\mu^2\rmax^2\dmax\log(\dmax)}{d^*} + \sqrt{\frac{\mu^2\rmax^2\dmax n\log(\dmax)}{(d^*)^2}}\right) \leq C_m\sqrt{\frac{\mu^2\rmax^2\dmax n\log(\dmax)}{(d^*)^2}},$$
where the last inequality holds as long as $n\geq C\mu^2\rmax^2\dmax\log(\dmax)$. 
\subsection{Proof of Lemma~\ref{lemma:inp:lowrank}}
Notice that 
$$\inp{(\calP_{\Omega} - \frac{n}{d^*}\calI)\calA}{\calB} = \inp{(\calP_{\Omega} - \frac{n}{d^*}\calI)\calJ}{\calA\odot\calB}\leq \op{(\calP_{\Omega} - \frac{n}{d^*}\calI)\calJ}\cdot\nuc{\calA\odot\calB}.$$
where $\calJ$ is the tensor with all its entries one. And from the definition of nuclear norm in \eqref{eq:alter nuclear norm}, we have
\begin{align*}
	\nuc{\calA\odot\calB}^2 &\leq \left(\sum_{\substack{k_1,\ldots,k_{m-1}\\ k_1',\ldots,k_{m-1}'}}\|A_1(:,k_1)\odot B_1(:,k_1')\|_{\ell_2}\ldots\|A_m(k_{m-1},:)\odot B_1(k_{m-1}',:)\|_{\ell_2} \right)^2\no\\
	&\leq \sum_{x_1}\fro{A_1(x_1,:)}^2\fro{B_1(x_1,:)}^2\ldots\sum_{x_m}\fro{A_m(:,x_m)}^2\fro{B_m(:,x_m)}^2,
\end{align*}
where the last inequality comes from using Cauchy-Schwartz inequality $m-1$ times. 
Since $\bcalE_3$ holds and 
$$
\sum_{x_i}\fro{A_i(:,x_i,:)}^2\fro{B_i(:,x_i,:)}^2 \leq
\max_{x_i}\fro{A_i(:,x_i,:)}^2\cdot\fro{B_i}^2\wedge\max_{x_i}\fro{B_i(:,x_i,:)}^2\cdot\fro{A_i}^2
$$
we get the desired result.

\subsection{Proof of Lemma \ref{lemma:estimate T_i}}
Consider for all $i\in[m-1]$, notice that $L(T_i) = (T^{\leq i-1}\otimes I)^TT^{\leq i}$. We have 
$$L(T_i)(k_{i-1},x_i;k_i) = \sum_{y_{i-1}\in[d_1\cdots d_{i-1}]}T^{\leq i-1}(y_{i-1},k_i)T^{\leq i}(y_{i-1},x_i;k_i).$$ This implies 
$$
\fro{T_i(:,x_i,:)}^2 = \fro{(T^{\leq i-1})^T\cdot T^{\leq i}(:,x_i,:)}^2 \leq \fro{T^{\leq i}(:,x_i,:)}^2 \leq \frac{\mu r_i}{d_i},
$$
where $T^{\leq i}(:,x_i,:)$ is viewed as a matrix of size $d_1\cdots d_{i-1}\times r_i$ by extracting $d_1\cdots d_{i-1}$ rows of $T^{\leq i}$. And since the decomposition is left orthogonal, we have $\fro{T_i} = \fro{L(T_i)} = r_i$.

When $i = m$, since $T_m = \Lambda_m R_{m}^T$ for some $\Lambda_m\in\RR^{r_{m-1}\times r_{m-1}}$ invertible and orthogonal $R_m$ with $\incoh(R_m)\leq \sqrt{\mu}$. So we have 
$$
\max_{x_m}\fro{T_m(:,x_m)}^2\leq \sigma_{\max}^2(\calT)\frac{\mu r_{m-1}}{d_m}.
$$
And $\fro{\calT} = \fro{\calT\lr{m-1}} = \fro{T^{\leq m-1}T_m} = \fro{T_m}$ since $T^{\leq m-1}$ is orthogonal.

 \subsection{Proof of Lemma \ref{lemma:matrix:noise}}
We first introduce some notations. Denote $\calP_{\Omega}(M) = \sum_{\omega_i\in\Omega}M_{\omega_i}E_{\omega_i}$ and $\calP_{\Omega'}(M) = \sum_{\omega_j'\in\Omega}M_{\omega_j'}E_{\omega_j'}$, and their mean zero version $\Delta = \frac{p_1p_2}{n}\calP_{\Omega}(M) - M$, $\Delta' = \frac{p_1p_2}{n}\calP_{\Omega'}(M) - M$ denote also $\Xi = \frac{p_1p_2}{n}\sum_{i = 1}^n\xi_iU^TE_{\omega_i}$ and $\Xi' = \frac{p_1p_2}{n}\sum_{j = 1}^n\xi_i'U^TE_{\omega_j'}$. Also we will encounter norms $\|A\|_{\infty} = \max_{i,j}|A_{ij}|$ and $\|\cdot\|_{k,\infty}$ for $k=1,2$ defined as follows
$$\|A\|_{k,\infty}^k:= \max_{i\in[p_1]}\sum_{j=1}^{p_2}|A_{ij}|^k.$$

Notice that
\begin{align}
	&\quad\op{\frac{1}{2n^2}\sum_{i,j}(U^TX_iY_j^TU + U^TY_jX_i^TU) - U^TMM^TU} \no\\
	&= \op{\frac{(p_1p_2)^2}{2n^2}\sum_{i,j}(M_{\omega_i}M_{\omega_j'} + M_{\omega_i}\xi_j' + \xi_iM_{\omega_j'} + \xi_i\xi_j')U^T(E_{\omega_i}E_{\omega_j'}^T + E_{\omega_j'}E_{\omega_i}^T)U - U^TMM^TU} \no\\
	&\leq \underbrace{\op{\frac{(p_1p_2)^2}{2n^2}(U^T\calP_{\Omega}(M)\calP_{\Omega'}(M)^TU + U^T\calP_{\Omega'}(M)\calP_{\Omega}(M)^TU)- U^TMM^TU}}_{\beta_1} \no\\
	&\quad + \underbrace{\frac{p_1p_2}{2n}\op{U^T\calP_{\Omega}(M)\Xi'^TU +U^T\Xi'\calP_{\Omega}(M)^TU }}_{\beta_2} + \underbrace{\frac{p_1p_2}{2n}\op{U^T\Xi\calP_{\Omega'}(M)^TU +U^T\calP_{\Omega'}(M)\Xi^TU }}_{\beta_3}\no\\
	&\quad + \underbrace{\op{\Xi\Xi'^T+\Xi'\Xi^T}}_{\beta_4}.
\end{align}
Notice $\beta_1$ can be controlled using Lemma \ref{lemma:init:1}. And $\beta_2$ and $\beta_3$ can be bounded similarly. 

We begin with several preliminary facts which can be easily proved by matrix Bernstein inequalities (Theorem \ref{thm:concentration}). We have with probability exceeding $1 - 2p^{-\alpha}$, where $p = \max\{p_1,p_2\}$ and for any $\alpha\geq 1$,
\begin{align}\label{bound:1}
	\max\{\op{\Xi},\op{\Xi'}\} \leq C\alpha\left(\sqrt{\frac{p_1p_2(p_2\vee r)}{n}\log p}\sigma_s + \frac{p_1p_2}{n}\sqrt{\frac{\mu r}{p_1}}\log p \sigma_s\right).
\end{align}
Meanwhile, using Theorem \ref{thm:concentration}, we also have with probability exceeding $1 - 2p^{-\alpha}$,
\begin{align}\label{bound:2}
\max\{\op{U^TM\Xi^T},\op{U^TM\Xi'^T}\} \leq C\alpha\left(\sqrt{\frac{\mu rp_1^2p_2^2}{n}\log p} + \frac{p_1p_2}{n}\mu r\log p\right)\|M\|_{\infty}\sigma_s.
\end{align}

Notice 
$\beta_2 \leq \op{U^T\Delta\Xi'^T} + \op{U^TM\Xi'^T}$.
And $U^T\Delta\Xi'^T = \frac{p_1p_2}{n}\sum_j U^T\Delta\xi_j'E_{\omega_j'}^TU$. Let $\Omega = \{(i_k,j_k)\}_{k=1}^n$, then using Chernoff bound, with probability exceeding $1-n^{-\alpha}$,
\begin{align*}
	\max_{l\in[p_2]}\sum_{k = 1}^n1(j_k = l) \leq (3\alpha+7)(\frac{n}{p_2} + \log(p)).
\end{align*}
Therefore,
\begin{align*}
	\max_{l\in[p_2]}\|U^T\calP_{\Omega}(M) e_l\|_{\ell_2} = \max_{l\in[p_2]}\|U^T\sum_{k=1}^nM_{i_k,j_k}e_{i_k}e_{j_k}^T e_l\|_{\ell_2}\leq (3\alpha+7)(\frac{n}{p_2} + \log(p))\|M\|_{\infty}\sqrt{\frac{\mu r}{p_1}}.
\end{align*}
Meanwhile,
\begin{align*}
	\max_{l\in[p_2]}\|U^TMe_l\|_{\ell_2} = \max_{l\in[p_2]}\|U^T\sum_{i,j}M_{i,j}e_ie_j^Te_l\|_{\ell_2}=\max_{l\in[p_2]}\|U^T\sum_{i}M_{i,l}e_i\|_{\ell_2}\leq p_1\|M\|_{\infty}\sqrt{\frac{\mu r}{p_1}}.
\end{align*}
Using these two bounds on $\|M^TU\|_{2,\infty}$ and $\|\calP_{\Omega}(M)^TU\|_{2,\infty}$, we get
\begin{align*}
	\op{U^T\Delta\xi_j'E_{\omega_j'}^TU} &\leq \|U\|_{2,\infty}\cdot \|(\frac{p_1p_2}{n}\calP_{\Omega}(M) - M)^TU\|_{2,\infty}\\
	&\leq \|U\|_{2,\infty}\cdot (\|M^TU\|_{2,\infty}+\frac{p_1p_2}{n}\|\calP_{\Omega}(M)^TU\|_{2,\infty})\\
	&\leq (3\alpha+8)(1+\frac{p_2}{n}\log p)\mu r\|M\|_{\infty}.
\end{align*}
where in the fourth line we use Lemma \ref{lemma:twoinf relation}, $\|\cdot\|_{\ell_2}\leq\|\cdot\|_{\ell_1}$and Chernoff bound. Therefore we obtain 
$$\left\|\op{\xi_j'U^T(\frac{p_1p_2}{n}\calP_{\Omega}(M) - M)\xi_j'E_{\omega_j'}^TU}\right\|_{\psi_2} \leq (3\alpha+8)(1+\frac{p_2}{n}\log p)\mu r\|M\|_{\infty}\sigma_s.$$
On the other hand, notice with probability $1-p^{-\alpha}$,
\begin{align*}
	&\hspace{0.5cm}\max\{\op{\EE \xi_j'^2U^T\Delta E_{\omega_j'}^TUU^TE_{\omega_j'}\Delta^TU}, \op{\EE \xi_j'^2U^TE_{\omega_j'}\Delta^TUU^T\Delta E_{\omega_j'}^TU}\}\\
	&\leq \frac{r\sigma_s^2}{p_1p_2}\op{U^T\Delta}^2\\
	&\leq C\alpha^2\frac{r\sigma_s^2}{p_1p_2}\left(\frac{p_1p_2^2\mu r\|M\|_{\infty}^2\log^2(p)}{n^2} + \frac{p_1p_2\|M\|_{\infty}^2(\mu r\vee p_2)\log(p)}{n}\right),
\end{align*}
where the last inequality follows from Lemma \ref{lemma:sample bound}. Therefore when $n\leq p_1p_2$, we have with probability exceeding $1-p^{-\alpha}$,
$$\op{U^T\Delta\Xi'^T} \leq C_{\alpha}\frac{p_1p_2}{n}\|M\|_{\infty}\sigma_s\left(\mu r\log(p) + \frac{p_2}{n}\mu r\log^2(p)\right).$$
Together with \eqref{bound:2}, we see with probability exceeding $1-3p^{-\alpha}-n^{-\alpha}$, 
\begin{align}\label{eq:beta2}
	\beta_2 \leq C_{\alpha}\|M\|_{\infty}\sigma_s\left(\mu r\frac{p_1p_2}{n}\log(p)+\mu r \frac{p_1p_2^2}{n^2}\log^2(p)+\frac{p_1p_2}{\sqrt{n}}\sqrt{\mu r\log(p)}\right).
\end{align}
And we can bound $\beta_3$ similarly.

Now we consider $\beta_4$. For any fixed $\Xi'$, we need to control $\frac{p_1p_2}{n}\op{\sum_i\xi_iU^TE_{\omega_i}\Xi'^T}$. 
Notice using Chernoff bound and the fact that $\max_i|\xi_i|\leq C\alpha \sigma_s\log^{1/2}(n)$ with probability exceeding $1-n^{-\alpha}$, we have with probability exceeding $1-2n^{-\alpha}$,
\begin{align*}
	\|\Xi'^T\|_{2,\infty} = \frac{p_1p_2}{n}\max_{l\in[p_2]}\|\sum_{j=1}^n\xi_j' U^TE_{\omega_j'}e_l\|_{\ell_2} \leq C\alpha(3\alpha+7)\frac{p_1p_2}{n}(\frac{n}{p_2}+\log(p))\sigma_s\log^{1/2}(n)\sqrt{\frac{\mu r}{p_1}}.
\end{align*}
Therefore
\begin{align*}
	\op{U^TE_{\omega_i}\Xi'^T} &\leq \|U\|_{2,\infty}\cdot \|\Xi'^T\|_{2,\infty} \leq  C\alpha\frac{p_1p_2}{n}(\frac{n}{p_2}+\log(p))\sigma_s\log^{1/2}(n)\frac{\mu r}{p_1}
\end{align*}
where in the last inequality of the first line we use Lemma \ref{lemma:twoinf relation}.
And thus
$$\left\|\op{\xi_iU^TE_{\omega_i}\Xi'^T} \right\|_{\psi_2} \leq C_{\alpha}\frac{\mu r}{p_1}\sigma_s^2\log^{1/2}(n)\frac{p_1p_2}{n}(\frac{n}{p_2}+\log(p)).$$
On the other hand, 
\begin{align*}
	\frac{1}{n}\op{\sum_{i}\EE \xi_i^2U^TE_{\omega_i}\Xi'^T\Xi'E_{\omega_i}^TU} = \frac{\sigma_s^2}{p_1p_2}\op{\sum_{k,l}U^Te_ke_l^T\Xi'^T\Xi'e_le_k^TU}  = \frac{\sigma_s^2}{p_1p_2}\op{\fro{\Xi'}^2\cdot I}\leq \frac{\sigma_s^2r}{p_1p_2}\op{\Xi'}^2,
\end{align*}
and similarly $\frac{1}{n}\op{\sum_{i}\EE \xi_i^2\Xi'E_{\omega_i}^TUU^TE_{\omega_i}\Xi'^T} \leq \frac{\sigma_s^2r}{p_1p_2}\op{\Xi'}^2$. Now from \eqref{bound:2} and Theorem \ref{thm:concentration}, when $n\leq p_1p_2$, we have with probability exceeding $1-2n^{-\alpha}-2p^{-\alpha}$,
\begin{align}\label{eq:beta4}
	\beta_4 \leq C_{\alpha}\sigma_s^2\frac{p_1p_2}{n}\left(\mu r \log^{3/2}(p) + \mu r\log^{5/2}(p)\frac{\sqrt{p_2}\sqrt{p_2\vee r}}{n}\right).
\end{align}
We get the desired result combining \eqref{eq:beta2}, \eqref{eq:beta4} and Lemma \ref{lemma:init:1}.

\subsection{Proof of Lemma~\ref{lemma:tt-perturbation}}\label{sec:tt-perturbation}
Denote $\hat \calT = \ttsvd_{\r}(\calT) = [\hat T_1,\ldots,\hat T_m]$. Let $\calT^* = [T_1^{**},\ldots,T_m^{**}]$ be a TT decomposition that is left orthogonal and $R_i = \arg\min_{R\in\OO_{r_i}}\fro{(T^{**})^{\leq i}R - \hat T^{\leq i}}$. Now we set $L(T_i^*) = (R_{i-1}\otimes I)^TL(T_i^{**})R_i$. Then $\calT^* = [T_1^*,\ldots, T_m^*]$ is another left orthogonal decomposition so that $T^{*\leq i}$ and $\hat T^{\leq i}$ are well aligned in terms of chordal distance. Also, let $(\calT^*)\lr{i} = T^{*\leq i}\Lambda_{i+1}V_{i+1}^{*T}$ be such that $V_{i+1}^{*T}V_{i+1}^{*} = I_{r_i}$ and $\Lambda_{i+1}\in\RR^{r_i\times r_i}$ be invertible.

From Algorithm \ref{alg:svd}, we have $(\hat T^{\leq m-1})^T\calT\lr{m-1} = \hat T_m$. Now using the notations $\hat\calP_{i} = \hat T^{\leq m-1}\hat T^{\leq m-1 T}$ and $\calP_{i}^* = T^{*\leq i}T^{*\leq i T}$, we have 
\begin{align}\label{m>=3:main}
	\fro{\ttsvd_{\r}(\calT) &- \calT^*} = \fro{\hat \calT^{\lrangle{m-1}} - (\calT^*)^{\lrangle{m-1}}} = \fro{(\hat\calP_{m-1}-I)(\calT^*)^{\lrangle{m-1}} + \hat\calP_{m-1}\calD^{\lrangle{m-1}}}\no\\
	&\overset{(a)}{=} \fro{(\hat\calP_{m-1} - \calP_{m-1}^*)(\calT^*)^{\lrangle{m-1}} + \calP_{m-1}^*\calD^{\lrangle{m-1}}+ (\hat\calP_{m-1}-\calP_{m-1}^*)\calD^{\lrangle{m-1}}}\no\\
	&\overset{(b)}{=}\fro{\underbrace{(I-\calP^*_{m-1})\Delta_{m-1}V_{m}^*V_{m}^{*T} + \calP_{m-1}^*\calD^{\lrangle{m-1}}}_{(\RN{1}.1)}\no\\
		&\qquad+ \underbrace{H(\hat\calP_{m-1},\calP_{m-1}^*)(\calT^*)^{\lrangle{m-1}}+(\hat\calP_{m-1}-\calP_{m-1}^*)\calD^{\lrangle{m-1}}}_{\text{high order terms}=:(\RN{1}.2)}},
\end{align}
where in $(a)$ we use the fact that $(I-\calP_{m-1}^*)(\calT^*)\lr{m-1} = 0$ and $(b)$ follows from the following equation when $i=m-1$, which is a result of Lemma \ref{lemma:operatornorm},
\begin{align*}
	\hat\calP_i - \calP^*_i = T^{*\leq i}(\Lambda_{i+1}^*)^{-1}(V_{i+1}^*)^T\Delta_i^T(I-\calP^*_i) + (I-\calP^*_i)\Delta_iV_{i+1}^*(\Lambda_{i+1}^*)^{-1}(T^{*\leq i})^T + H(\hat\calP_i, \calP_i^*),
\end{align*}
and since $T^{*\leq i}$ is the top $r_i$ left singular vectors of $(\calT^*)\lr{i}$ and $\hat T^{\leq i}$ is the top $r_i$ left singular vectors of $(\hat\calP_{i-1}\otimes I)\calT\lr{i}$, so
$\Delta_i = (\hat\calP_{i-1}\otimes I)\calT\lr{i} - (\calP_{i-1}^*\otimes I)(\calT^*)\lr{i}$, and thus 
\begin{align}\label{def:Delta}
	\Delta_i &= \left((\hat \calP_{i-1} - \calP_{i-1}^*)\otimes I_{d_i}\right)(\calT^*)^{\lrangle{i}} + \left(\hat \calP_{i-1}\otimes I_{d_i}\right)\calD^{\lrangle{i}}\no\\
	&=  \left((\hat \calP_{i-1} - \calP_{i-1}^*)\otimes I_{d_i}\right)(\calT^*)^{\lrangle{i}} + \left(\calP^*_{i-1}\otimes I_{d_i}\right)\calD^{\lrangle{i}} + \left((\hat\calP_{i-1}-\calP^*_{i-1})\otimes I_{d_i}\right)\calD^{\lrangle{i}}\no\\
	&= \left((I-\calP^*_{i-1})\Delta_{i-1}V_{i}^*(\Lambda_{i}^*)^{-1}(T^{*\leq i-1})^T\otimes I_{d_{i}}\right)(\calT^*)^{\lrangle{i}}+ \left(\calP^*_{i-1}\otimes I_{d_i}\right)\calD^{\lrangle{i}}\no\\
	&~~~~ + \underbrace{\left(H(\hat\calP_{i-1}, \calP_{i-1}^*)\otimes I_{d_{i}}\right)(\calT^*)^{\lrangle{i}}+ \left((\hat\calP_{i-1}-\calP^*_{i-1})\otimes I_{d_i}\right)\calD^{\lrangle{i}}}_{\text{high order terms}=: H_i}\no\\
	&=: L_i + H_i,
\end{align}
here we use $L_i$ and $H_i$ to denote the leading terms and high order terms of $\Delta_i$ respectively. Now we derive first the upper bound for $\fro{H_i}$.

\hspace{0.2cm}

\noindent{\it Bound for $\fro{H_i}$.} Notice by triangle inequality, we have
\begin{align*}
	\fro{H_i} \leq \fro{\left(H(\hat\calP_{i-1}, \calP_{i-1}^*)\otimes I_{d_{i}}\right)(\calT^*)^{\lrangle{i}}} + \fro{\left((\hat\calP_{i-1}-\calP^*_{i-1})\otimes I_{d_i}\right)\calD^{\lrangle{i}}}.
\end{align*}
From the assumption we have $\sigma_{\min}(\Lambda_i^*) \geq 8\fro{\Delta_{i-1}}$, we have
\begin{align*}
	\fro{\left(H(\hat\calP_{i-1}, \calP_{i-1}^*)\otimes I_{d_{i}}\right)(\calT^*)^{\lrangle{i}}} \leq \frac{12\fro{\Delta_{i-1}}^2}{\sigma_{\min}(\Lambda_i^*)},
\end{align*}
and
\begin{align*}
	\fro{\left((\hat\calP_{i-1}-\calP^*_{i-1})\otimes I_{d_i}\right)\calD^{\lrangle{i}}} \leq \frac{4\fro{\calD}\fro{\Delta_{i-1}}}{\sigma_{\min}(\Lambda_i^*)}.
\end{align*}
So combine these two estimations, and we have
\begin{align}\label{upperbound:Hi}
	\fro{H_i} \leq \frac{12\fro{\Delta_{i-1}}^2}{\sigma_{\min}(\Lambda_i^*)} + \frac{4\fro{\calD}\fro{\Delta_{i-1}}}{\sigma_{\min}(\Lambda_i^*)}.
\end{align}

\hspace{0.2cm}

\noindent{\it Upper bound for $\fro{\Delta_i}$.} 
We first show the following for all $2\leq i\leq m-1$.
\begin{align}\label{recurrence:projection:Delta}
	&~~~~\fro{(I-\calP_i^*)\Delta_i}^2 - \fro{(I-\calP^*_{i-1})\Delta_{i-1}}^2 \no\\
	&\leq \fro{(T^{*\leq i -1}\otimes I_{d_i})(I-L(T_i^*)L(T_i^*)^T)((T^{*\leq i -1})^T\otimes I_{d_i})\calD^{\lrangle{i}}}^2 + 2\fro{(I-\calP_i^*)L_i}\fro{H_i} + \fro{H_i}^2.
\end{align}
As a consequence of $\calP_i^* = (U^{*\leq i-1}\otimes I_{d_i}) L(T_i^*)L(T_i^*)^T((T^{*\leq i-1})^T\otimes I_{d_i})$, we have
\begin{align*}
	\calP_i^*\left((I-\calP^*_{i-1})\Delta_{i-1}V_{i}^*(\Lambda_{i}^*)^{-1}(T^{*\leq i-1})^T\otimes I_{d_{i}}\right)(\calT^*)^{\lrangle{i}} = 0.
\end{align*}
So the leading term of $(I-\calP_i^*)\Delta_i$ is 
\begin{align}
	&~~~~(I-\calP_i^*)L_i =\left((I-\calP^*_{i-1})\Delta_{i-1}V_{i}^*(\Lambda_{i}^*)^{-1}(T^{*\leq i-1})^T\otimes I_{d_{i}}\right)(\calT^*)^{\lrangle{i}}+ (I-\calP^*_i)\left(\calP^*_{i-1}\otimes I_{d_i}\right)\calD^{\lrangle{i}}\no\\
	&\overset{(a)}{=} \left((I-\calP^*_{i-1})\Delta_{i-1}V_{i}^*(\Lambda_{i}^*)^{-1}(T^{*\leq i-1})^T\otimes I_{d_{i}}\right)(\calT^*)^{\lrangle{i}} \no\\
	&\qquad+ (T^{*\leq i -1}\otimes I_{d_i})(I-L(T_i^*)L(T_i^*)^T)((T^{*\leq i -1})^T\otimes I_{d_i})\calD^{\lrangle{i}}\label{leading:orthogonal}\\
	&\overset{(b)}{=} \reshape\left((I-\calP^*_{i-1})\Delta_{i-1}V_{i}^*V_i^{*T}\right)+ (T^{*\leq i -1}\otimes I_{d_i})(I-L(T_i^*)L(T_i^*)^T)((T^{*\leq i -1})^T\otimes I_{d_i})\calD^{\lrangle{i}}\no,
\end{align}
where in $(a)$ we use
\begin{align*}
	&~~~~(I-\calP^*_i)\left(\calP^*_{i-1}\otimes I_{d_i}\right) \no\\
	&= \left(I - (T^{*\leq i-1}\otimes I_{d_i}) L(T_i^*)L(T_i^*)^T((T^{*\leq i-1})^T\otimes I_{d_i})\right)\left(T^{*\leq i-1}(T^{*\leq i-1})^T\otimes I_{d_i}\right)\no\\
	&= T^{*\leq i-1}(T^{*\leq i-1})^T\otimes I_{d_i} - (T^{*\leq i-1}\otimes I_{d_i}) L(T_i^*)L(T_i^*)^T(T^{*\leq i-1}\otimes I_{d_i})^T\no\\
	&= (T^{*\leq i-1}\otimes I_{d_i})(I-L(T_i^*)L(T_i^*)^T)(T^{*\leq i-1}\otimes I_{d_i})^T.
\end{align*}
And in $(b)$ we use Lemma~\ref{lemma:reshape}.
Notice the two terms in \eqref{leading:orthogonal} are mutually orthogonal, so we have 
\begin{align}\label{projection:L}
	&~~~~\fro{(I-\calP_i^*)L_i}^2 \no\\
	&= \fro{(I-\calP^*_{i-1})\Delta_{i-1}V_{i}^*V_i^{*T}}^2 + \fro{(T^{*\leq i -1}\otimes I_{d_i})(I-L(T_i^*)L(T_i^*)^T)((T^{*\leq i -1})^T\otimes I_{d_i})\calD^{\lrangle{i}}}^2\no\\
	&\leq \fro{(I-\calP^*_{i-1})\Delta_{i-1}}^2 + \fro{(T^{*\leq i -1}\otimes I_{d_i})(I-L(T_i^*)L(T_i^*)^T)((T^{*\leq i -1})^T\otimes I_{d_i})\calD^{\lrangle{i}}}^2.
\end{align}
Meanwhile, from \eqref{def:Delta}, we have 
\begin{align}\label{projection:Delta}
	\fro{(I-\calP_i^*)\Delta_i}^2 \leq \fro{(I-\calP_i^*)L_i}^2 + 2\fro{(I-\calP_i^*)L_i}\fro{H_i} + \fro{H_i}^2.
\end{align}
Combine \eqref{projection:L}, \eqref{projection:Delta} and we get \eqref{recurrence:projection:Delta}.

\hspace{0.2cm}

\noindent{\it Upper bound for $\fro{(I-\calP_{k}^*)\Delta_{k}}^2 + \fro{\calP_{k}^*\calD\lr{k}}^2$.}
From the recurrence relation \eqref{recurrence:projection:Delta}, we have
\begin{align*}
	&~~~~\fro{(I-\calP_{k}^*)\Delta_{k}}^2 + \fro{\calP_{k}^*\calD\lr{k}}^2\no\\
	&\leq \fro{\calP_{k}^*\calD^{\lrangle{k}}}^2 + \sum_{i=2}^{k}\fro{(T^{*\leq i -1}\otimes I_{d_i})(I-L(T_i^*)L(T_i^*)^T)((T^{*\leq i -1})^T\otimes I_{d_i})\calD^{\lrangle{i}}}^2 \no\\
	&~~~~+ \fro{(I-\calP_1^*)\calD\lr{1}}+\underbrace{\sum_{i=2}^{k}\left(2\fro{(I-\calP_i^*)L_i}\fro{H_i} + \fro{H_i}^2\right)}_{\text{high order terms} =: \xi_{k,2}}\no\\
	&=: \xi_{k,1} + \xi_{k,2}.
\end{align*}
Now we first show $\xi_{k,1} = \fro{\calD}^2$. The key point of the proof lies in the following equation:
\begin{align}\label{eq:lemma:PkDk:main}
	\fro{\calP_i^*\calD\lr{i}}^2 + \fro{(T^{*\leq i -1}\otimes I_{d_i})(I-L(T_i^*)L(T_i^*)^T)((T^{*\leq i -1})^T\otimes I_{d_i})\calD^{\lrangle{i}}}^2 = \fro{\calP_{i-1}^*\calD\lr{i-1}}^2.
\end{align}
Suppose this holds, then we have the left hand side is equal to $\fro{\calP_1^*\calD\lr{1}}^2 + \fro{(I-\calP_1^*)\calD\lr{1}}^2 = \fro{\calD}^2$ and this finishes the proof. So now we verify \eqref{eq:lemma:PkDk:main}.
\begin{align*}
	\text{LHS of }\eqref{eq:lemma:PkDk:main} &= \fro{(T^{*\leq i -1}\otimes I_{d_i})L(T_i^*)L(T_i^*)^T((T^{*\leq i -1})^T\otimes I_{d_i})\calD^{\lrangle{i}}}^2 \no\\&~~~~+ \fro{(T^{*\leq i -1}\otimes I_{d_i})(I-L(T_i^*)L(T_i^*)^T)((T^{*\leq i -1})^T\otimes I_{d_i})\calD^{\lrangle{i}}}^2\no\\
	&= \fro{(T^{*\leq i -1}\otimes I_{d_i})((T^{*\leq i -1})^T\otimes I_{d_i})\calD^{\lrangle{i}}}^2\no\\
	&\overset{(a)}{=} \fro{\calP_{i-1}^*\calD\lr{i-1}}^2,
\end{align*}
where in $(a)$ we use Lemma~\ref{lemma:reshape}.

On the other hand, for $\xi_{k,2}$, we have from \eqref{projection:L}, $\fro{(I-\calP_i^*)L_i} \leq 3\fro{\calD}$.
And from \eqref{upperbound:Hi}, we have
\begin{align*}
	\fro{H_i}\leq \frac{12\fro{\Delta_{i-1}}^2}{\sigma_{\min}(\Lambda_i^*)} + \frac{4\fro{\calD}\fro{\Delta_{i-1}}}{\sigma_{\min}(\Lambda_i^*)}\leq \frac{56\fro{\calD}^2}{\sigma_{\min}(\Lambda_i^*)}.
\end{align*}
Combine the above two estimations, and we have
\begin{align*}
	\xi_{k,2} \leq \sum_{i=2}^k \left(2\cdot 3\fro{\calD}\frac{56\fro{\calD}^2}{\sigma_{\min}(\Lambda_i^*)} + \frac{56^2\fro{\calD}^4}{\sigma_{\min}^2(\Lambda_i^*)}\right) \leq \frac{350(k-1)\fro{\calD}^3}{\sigmamin}.
\end{align*}
And thus
\begin{align}\label{lemma:upperbound:DeltaD}
	\fro{(I-\calP_{k}^*)\Delta_{k}}^2 + \fro{\calP_{k}^*\calD\lr{k}}^2 \leq \fro{\calD}^2 + \frac{350(k-1)\fro{\calD}^3}{\sigmamin}.
\end{align}

\noindent{\it Upper bound for $\fro{\Delta_i}$.}
We shall bound $\fro{\Delta_i}$ by induction.
For the base case when $i = 1$, we have $\fro{\Delta_1} = \fro{\calD\lr{1}} = \fro{\calD}\leq 2\fro{\calD}$. Now suppose we have the bound for $\fro{\Delta_{i}}\leq 2\fro{\calD}$ for all $1\leq i\leq k$. 
Then from the definition of $L_{k+1}$, we have
\begin{align*}
	\fro{L_{k+1}}^2 &= \fro{(I-\calP_{k}^*)\Delta_{k}V_{k+1}^*V_{k+1}^{*T}}^2 + \fro{(\calP_{k}^*\otimes I_{d_{k+1}})\calD\lr{k+1}}^2\leq  \fro{(I-\calP_{k}^*)\Delta_{k}}^2 + \fro{\calP_{k}^*\calD\lr{k}}^2\no\\
	&\overset{(a)}{\leq}\fro{\calD}^2 + \frac{350(k-1)\fro{\calD}^3}{\sigma_{\min}(\Lambda_k^*)},
\end{align*}
where $(a)$ follows from \eqref{lemma:upperbound:DeltaD}.
And the upper bound for $\fro{H_{k+1}}$ is already derived as in \eqref{upperbound:Hi}, 
since $\sigma_{\min}(\Lambda_{k+1}^*) \geq C_1\fro{\calD} \geq 8\fro{\Delta_k}$,
we have
\begin{align*}
	\fro{H_{k+1}} \leq \frac{12\fro{\Delta_{k}}^2}{\sigma_{\min}(\Lambda_{k+1}^*)} + \frac{4\fro{\calD}\fro{\Delta_{k}}}{\sigma_{\min}(\Lambda_{k+1}^*)}\leq\frac{56\fro{\calD}^2}{\sigma_{\min}(\Lambda_{k+1}^*)}.
\end{align*}
From \eqref{def:Delta}, we have
\begin{align*}
	\fro{\Delta_{k+1}} &= \fro{L_{k+1} + H_{k+1}}\leq \sqrt{1+\frac{350(k-1)\fro{\calD}}{\sigma_{\min}(\Lambda_{k+1}^*)}}\fro{\calD} + \frac{56\fro{\calD}}{\sigma_{\min}(\Lambda_{k+1}^*)}\fro{\calD}\leq 2\fro{\calD}.
\end{align*}
And this finishes the induction. So it holds for all $i\in[m-1]$ that $\fro{\Delta_i}\leq 2\fro{\calD}$.

\hspace{0.2cm}

\noindent{\it Estimation of $\fro{\ttsvd_{\r}(\calT) - \calT^*}$.} Now we go back to \eqref{m>=3:main} and we bound $\fro{(\RN{1}.1)}$ and $\fro{(\RN{1}.2)}$ separately. For the term $\fro{(\RN{1}.2)}$, we have
\begin{align*}
	\fro{(\RN{1}.2)} &= \fro{H(\hat\calP_{m-1},\calP_{m-1}^*)(\calT^*)^{\lrangle{m-1}}+(\hat\calP_{m-1}-\calP_{m-1}^*)\calD^{\lrangle{m-1}}}\overset{(a)}{=} \fro{H_m},
\end{align*}
where $(a)$ follows from \eqref{def:Delta} and Lemma~\ref{lemma:reshape}. So from \eqref{upperbound:Hi}, we have
\begin{align*}
	\fro{(\RN{1}.2)} \leq \frac{12\fro{\Delta_{m-1}}^2}{\sigma_{\min}(\Lambda_{m}^*)} + \frac{4\fro{\calD}\fro{\Delta_{m-1}}}{\sigma_{\min}(\Lambda_{m}^*)}\leq \frac{56\fro{\calD}^2}{\sigma_{\min}(\Lambda_{m}^*)}.
\end{align*}
And for the term $\fro{(\RN{1}.1)}$, we have from \eqref{lemma:upperbound:DeltaD},
\begin{align*}
	\fro{(\RN{1}.1)}^2 \leq \fro{(I-\calP_{m-1}^*)\Delta_{m-1}}^2 + \fro{\calP_{m-1}^*\calD\lr{m-1}}^2 \leq \fro{\calD}^2 + \frac{350(m-2)\fro{\calD}^3}{\sigma_{\min}(\Lambda_{m-1}^*)}
\end{align*}
So together with \eqref{m>=3:main}, we have
\begin{align*}
	\fro{\ttsvd_{\r}(\calT) - \calT^*}^2 &\leq \fro{(\RN{1}.1)}^2 + 2\fro{(\RN{1}.1)}\fro{(\RN{1}.2)} + \fro{(\RN{1}.2)}^2\no\\
	&\leq \fro{\calD}^2 + \frac{350(m-2)\fro{\calD}^3}{\sigma_{\min}(\Lambda_{m-1}^*)} + \frac{250\fro{\calD}^3}{\sigma_{\min}(\Lambda_{m}^*)}\no\\
	&\leq \fro{\calD}^2 + \frac{600m\fro{\calD}^3}{\sigmamin}.
\end{align*}
And this finishes the proof of the lemma.

\subsection{Proof of Lemma~\ref{lemma:estimationofptperp}}\label{sec:lemma:estimationofptperp}
First we introduce some notations,
$$\calT = [T_1,\ldots,T_m], \calT^* = [T_1^*,\ldots,T_m^*], \text{~and~}(\calT^*)\lr{i} = T^{*\leq i}\Lambda_{i+1}^*V^*_{i+1}, \calT\lr{i} = T^{\leq i}\Lambda_{i+1}V_{i+1}, i\in[m-1].$$
Also we denote $\calP_i = T^{\leq i}T^{\leq i T}$, $\calQ_{i+1} = V_{i+1}V_{i+1}^T$ as shorthand notations, and define similar notations for $\calT^*$. Now we take $\calA = \calT^*$ and denote $\calP_{\TT}(\calT^*) = \delta\calT_1 + \ldots + \delta\calT_m$, then we have for all $i\in[m-1]$
\begin{align*}
	(\delta\calT_i)\lr{i} &= (T^{\leq i - 1}\otimes I)(I - L(T_i)L(T_i)^T)(T^{\leq i - 1}\otimes I)^T(\calT^*)\lr{i}V_{i+1}V_{i+1}^T\notag\\
	&= (I - \calP_i)(\calP_{i-1}\otimes I)(\calT^*)\lr{i}\calQ_{i+1}.
\end{align*}
And 
\begin{align*}
	-\fro{\delta\calT_i}^2 &= -\inp{(I - \calP_i)(\calP_{i-1}\otimes I)(\calT^*)\lr{i}\calQ_{i+1}}{(I - \calP_i)(\calP_{i-1}\otimes I)(\calT^*)\lr{i}\calQ_{i+1}}\no\\
	&= -\inp{(\calT^*)\lr{i}}{(\calP_{i-1}\otimes I)(I - \calP_i)(\calP_{i-1}\otimes I)(\calT^*)\lr{i}\calQ_{i+1}}\notag\\
	&= -\inp{(\calT^*)\lr{i}}{(\calP_{i-1}\otimes I)(\calT^*)\lr{i}\calQ_{i+1}} + \inp{(\calT^*)\lr{i}}{\calP_i(\calT^*)\lr{i}\calQ_{i+1}}\no\\
	&= -\inp{(\calT^*)\lr{i}}{(\calP_{i-1}\otimes I)(\calT^*)\lr{i}(\calQ_{i+1} - \calQ_{i+1}^*)} + \inp{(\calT^*)\lr{i}}{\calP_i(\calT^*)\lr{i}(\calQ_{i+1}-\calQ_{i+1}^*)}\no\\
	&~~~~ -\inp{(\calT^*)\lr{i}}{(\calP_{i-1}\otimes I)(\calT^*)\lr{i}} + \inp{(\calT^*)\lr{i}}{\calP_i(\calT^*)\lr{i}}\no\\
	&= -\inp{(\calT^*)\lr{i}}{((\calP_{i-1}-\calP_{i-1}^*)\otimes I)(\calT^*)\lr{i}(\calQ_{i+1} - \calQ_{i+1}^*)} \no\\
	&~~~~+ \inp{(\calT^*)\lr{i}}{(\calP_i - \calP_i^*)(\calT^*)\lr{i}(\calQ_{i+1}-\calQ_{i+1}^*)}\no\\
	&~~~~ -\inp{(\calT^*)\lr{i}}{(\calP_{i-1}\otimes I)(\calT^*)\lr{i}} + \inp{(\calT^*)\lr{i}}{\calP_i(\calT^*)\lr{i}}\no\\
\end{align*}
Meanwhile, when $i = m$, we have
\begin{align*}
	-\fro{\delta\calT_m}^2 = -\fro{(\calP_{m-1}\otimes I)(\calT^*)\lr{m}}^2 = -\fro{\calP_{m-1}(\calT^*)\lr{m-1}}^2,
\end{align*}
where the last equality is from Lemma~\ref{lemma:reshape}.
Now we first estimate 
\begin{align*}
	&~~~~\sum_{i=1}^{m-1}\left(-\inp{(\calT^*)\lr{i}}{(\calP_{i-1}\otimes I)(\calT^*)\lr{i}} + \inp{(\calT^*)\lr{i}}{\calP_i(\calT^*)\lr{i}}\right) -\fro{\calP_{m-1}(\calT^*)\lr{m-1}}^2\no\\
	&=\sum_{i=1}^{m-1}\left(-\fro{(\calP_{i-1}\otimes I)(\calT^*)\lr{i}}^2 + \fro{\calP_i(\calT^*)\lr{i}}^2\right) -\fro{\calP_{m-1}(\calT^*)\lr{m-1}}^2\no\\
	&= \sum_{i=1}^{m-1}\left(-\fro{\calP_{i-1}(\calT^*)\lr{i-1}}^2 + \fro{\calP_i(\calT^*)\lr{i}}^2\right) -\fro{\calP_{m-1}(\calT^*)\lr{m-1}}^2 = -\fro{\calT^*}^2.
\end{align*}
And now we estimate $\inp{(\calT^*)\lr{i}}{(\calP_i - \calP_i^*)(\calT^*)\lr{i}(\calQ_{i+1}-\calQ_{i+1}^*)}$ first,
\begin{align*}
	&~~~~\inp{(\calT^*)\lr{i}}{(\calP_i - \calP_i^*)(\calT^*)\lr{i}(\calQ_{i+1}-\calQ_{i+1}^*)} \no\\
	&= \inp{(\calP_i - \calP_i^*)(\calT^*)\lr{i}}{(\calT^*)\lr{i}(\calQ_{i+1}-\calQ_{i+1}^*)}\no\\
	&= \inp{\calP_i^*\Delta_i(I-\calQ_{i+1}^*)+(\calT^*)\lr{i}H(\calQ_{i+1},\calQ_{i+1}^*)}{(I-\calP_i^*)\Delta_i\calQ_{i+1}^* + H(\calP_i,\calP_i^*)(\calT^*)\lr{i}}\no\\
	&=\inp{(\calT^*)\lr{i}H(\calQ_{i+1},\calQ_{i+1}^*)}{H(\calP_i,\calP_i^*)(\calT^*)\lr{i}}\no\\
	&\leq \fro{(\calT^*)\lr{i}H(\calQ_{i+1},\calQ_{i+1}^*)}\fro{H(\calP_i,\calP_i^*)(\calT^*)\lr{i}}
\end{align*}
Notice here we use 
\begin{align*}
	\calP_{i} - \calP_{i}^* &= T^{*\leq i}(\Lambda_{i+1}^*)^{-1}(V_{i+1}^*)^T\Delta_i^T(I-\calP^*_i) + (I-\calP^*_i)\Delta_iV_{i+1}^*(\Lambda_{i+1}^*)^{-1}(T^{*\leq i})^T + H(\calP_i, \calP_i^*),
\end{align*}
and
\begin{align*}
\calQ_{i+1} - \calQ_{i+1}^* &= V_{i+1}^*(\Lambda_{i+1}^*)^{-1}(T^{*\leq i})^T\Delta_i(I-\calQ_{i+1}^*) + (I-\calQ_{i+1}^*)\Delta_i^TT^{*\leq i}(\Lambda_{i+1}^*)^{-1} V_{i+1}^{*T} \\
&~~~~+ H(\calQ_{i+1}, \calQ_{i+1}^*),
\end{align*}
where $\Delta_i = (\calT-\calT^*)\lr{i}$ and $H(\cdot,\cdot)$ denotes high order terms. 

Similarly for $\inp{(\calT^*)\lr{i}}{((\calP_{i-1}-\calP_{i-1}^*)\otimes I)(\calT^*)\lr{i}(\calQ_{i+1} - \calQ_{i+1}^*)}$, we have
\begin{align*}
	&~~~~\inp{(\calT^*)\lr{i}}{((\calP_{i-1}-\calP_{i-1}^*)\otimes I)(\calT^*)\lr{i}(\calQ_{i+1} - \calQ_{i+1}^*)}\no\\
	&= \inp{(\calT^*)\lr{i}H(\calQ_{i+1},\calQ_{i+1}^*)}{(H(\calP_{i-1},\calP_{i-1}^*)\otimes I)(\calT^*)\lr{i}}\no\\
	&\leq \fro{(\calT^*)\lr{i}H(\calQ_{i+1},\calQ_{i+1}^*)}\fro{(H(\calP_{i-1},\calP_{i-1}^*)\otimes I)(\calT^*)\lr{i}}\no\\
	&= \fro{(\calT^*)\lr{i}H(\calQ_{i+1},\calQ_{i+1}^*)}\fro{H(\calP_{i-1},\calP_{i-1}^*)(\calT^*)\lr{i-1}}
\end{align*}

Now as long as $\sigmamin\geq 8\fro{\calT - \calT^*}$, we have
\begin{align*}
	\fro{H(\calP_i,\calP_i^*)(\calT^*)\lr{i}} &\leq \frac{12\fro{\calT- \calT^*}^2}{\sigmamin},\\
	\fro{(\calT^*)\lr{i}H(\calQ_{i+1},\calQ_{i+1}^*)} &\leq \frac{12\fro{\calT- \calT^*}^2}{\sigmamin}.
\end{align*}
Finally, we have
\begin{align*}
	\fro{\calP_{\TT}^{\perp}(\calT^*)}^2 = \fro{\calT^*}^2 - \sum_{i=1}^m \fro{\delta\calT_i}^2
	&= \sum_{i=1}^{m-1}\big(-\inp{(\calT^*)\lr{i}}{((\calP_{i-1}-\calP_{i-1}^*)\otimes I)(\calT^*)\lr{i}(\calQ_{i+1} - \calQ_{i+1}^*)} \no\\
	&~~~~~~+ \inp{(\calT^*)\lr{i}}{(\calP_i - \calP_i^*)(\calT^*)\lr{i}(\calQ_{i+1}-\calQ_{i+1}^*)}\big)\no\\
	&\leq \frac{288m^2\fro{\calT-\calT^*}^4}{\sigmamin^2}.
\end{align*}
And this finishes the proof.

\subsection{Proof of Lemma~\ref{lemma:ttsvd+trim}}\label{sec:lemma:ttsvd+trim}
We begin the proof introducing some notations. For simplicity, we denote $\wt\calW = \trim_{\zeta}(\calW)$. And $\ttsvd_{\r}(\wt\calW) = \hat\calW = [\hat W_1,\ldots, \hat W_m]$. And we denote for all $i\in[m-1]$, $\hat\calW\lr{i} = \hat W^{\leq i}\hat W^{\geq i+1} = \hat W^{\leq i} \Sigma_{i+1} N_{i+1}^T$ where $\Sigma_{i+1}\in\RR^{r_i\times r_i}$ is invertible and $N_{i+1}^TN_{i+1} = I_{r_i}$. And we also introduce some notations for the process of TTSVD. From Algorithm~\ref{alg:svd}, we know $(\hat W^{\leq i-1}\otimes I)^T \wt\calW\lr{i} = L(\hat W_i)S_{i+1}R_{i+1}^T + E_i$ is how we get the estimation $\hat W_i$ once we know $\hat W_1,\ldots, \hat W_{i-1}$, where $S_{i+1}\in\RR^{r_i\times r_i}$ is invertible and $R_{i+1}^TR_{i+1} = I_{r_i}$. To estimate the incoherence of $\hat\calW$, we check $\max_{j}\|e_j^T\hat W^{\leq i}\|_{\ell_2}$ and $\max_{j}\|N_{i+1}^Te_j\|_{\ell_2}$. 

We first estimate $\sigma_{\min}(\Sigma_{i+1})$ and $\sigma_{\min}(S_{i+1})$. From the Algorithm~\ref{alg:svd}, we know
\begin{align*}
	\sigma_{\min}(S_{i+1}) &= \sigma_{\min}((\hat W^{\leq i-1}\hat W^{\leq i-1T}\otimes I)\wt\calW\lr{i})\no\\
	& \geq \sigma_{\min}((\calT^{*})\lr{i}) - \fro{(\calT^*)\lr{i} - (\hat W^{\leq i-1}\hat W^{\leq i-1T}\otimes I)\wt\calW\lr{i}}.
\end{align*}
So we need to derive an upper bound for $\fro{(\calT^*)\lr{i} - (\hat W^{\leq i-1}\hat W^{\leq i-1T}\otimes I)\wt\calW\lr{i}}$.
\begin{align}\label{eq:ttsvd+trim:0001}
	(\hat W^{\leq i-1}\hat W^{\leq i-1T}\otimes I)\wt\calW\lr{i}-(\calT^*)\lr{i} &= (\hat W^{\leq i-1}\hat W^{\leq i-1T} - T^{*\leq i-1}T^{*\leq i-1T})\otimes I \cdot (\wt\calW\lr{i} - (\calT^*)\lr{i})\no\\
	&~~~~+ T^{*\leq i-1}T^{*\leq i-1T}\otimes I \cdot (\wt\calW\lr{i} - (\calT^*)\lr{i})\no\\
	&~~~~+(\hat W^{\leq i-1}\hat W^{\leq i-1T} - T^{*\leq i-1}T^{*\leq i-1T})\otimes I \cdot (\calT^*)\lr{i}.
\end{align}
Since $\|\calT^*\|_{\ell_{\infty}} = \spiki(\calT^*)\frac{\fro{\calT^*}}{\sqrt{d^*}}\leq \nu\frac{\fro{\calT^*}}{\sqrt{d^*}}\leq\frac{10\nu}{9}\frac{\fro{\calW}}{\sqrt{d^*}} = \zeta$, we have $\fro{\wt\calW -\calT^*} \leq \fro{\calW-\calT^*}$. And the bound for $\op{\hat W^{\leq i-1}\hat W^{\leq i-1T} - T^{*\leq i-1}T^{*\leq i-1T}}$ goes as follows. First we notice that from Lemma~\ref{lemma:tt-perturbation},
\begin{align}\label{eq:hatW-T^*}
	\fro{\hat\calW - \calT^*}^2 \leq \fro{\wt\calW - \calT^*}^2 + \frac{600m\fro{\wt\calW - \calT^*}^3}{\sigmamin} \leq 2\fro{\calW - \calT^*}^2,
\end{align}
where the last inequality is from the assumption $\fro{\wt\calW - \calT^*} \leq \fro{\calW - \calT^*} \leq \frac{1}{600m}\sigmamin$. So we have from Wedin's theorem,
\begin{align*}
	\op{\hat W^{\leq i-1}\hat W^{\leq i-1T} - T^{*\leq i-1}T^{*\leq i-1T}} \leq \frac{\sqrt{2}\fro{\hat\calW - \calT^*}}{\sigma_{\min}(\Lambda_i^*) - \fro{\wt\calW - \calT^*}}\leq \frac{4\fro{\calW-\calT^*}}{\sigmamin}\leq \frac{1}{150m\sqrt{\rmax}\kappa_0},
\end{align*}
where in the penultimate inequality we use $\sigma_{\min}(\Lambda_i^*) - \fro{\wt\calW - \calT^*}\geq \frac{\sigmamin}{2}$ and \eqref{eq:hatW-T^*}. This together with \eqref{eq:ttsvd+trim:0001} gives 
\begin{align*}
	\fro{(\calT^*)\lr{i} - (\hat W^{\leq i-1}\hat W^{\leq i-1T}&\otimes I)\wt\calW\lr{i}} \no\\
	&\leq \frac{4\fro{\calW-\calT^*}}{\sigmamin}\cdot\fro{\calW-\calT^*} + \fro{\calW-\calT^*} + \frac{4\fro{\calW-\calT^*}}{\sigmamin}\cdot\fro{\calT^*}\no\\
	&\leq \frac{\sigmamin}{10}.
\end{align*}
So we conclude that $\sigma_{\min}(S_{i+1}) \geq \frac{9}{10}\sigmamin$.

Meanwhile, for $\sigma_{\min}(\Sigma_{i+1})$ and $\sigma_{\max}(\Sigma_{i+1})$, we have
\begin{align*}
	\sigma_{\max}(\Sigma_{i+1}) = \sigma_{\max}(\hat\calW\lr{i}) \leq \sigma_{\max}((\calT^*)\lr{i}) + \fro{\calT^*-\calW}\leq \frac{11}{10}\sigmamax,\\
	\sigma_{\min}(\Sigma_{i+1}) = \sigma_{\min}(\hat\calW\lr{i}) \geq \sigma_{\min}((\calT^*)\lr{i}) - \fro{\calT^*-\calW}\geq \frac{9}{10}\sigmamin.
\end{align*}

With these preparations, we are ready to bound the incoherence of $\hat\calW$. For all $j\in[d_1\ldots d_i]$,
\begin{align}\label{incoh:1}
	\|e_j^T\hat W^{\leq i}\|_{\ell_2} &= \|e_j^T(\hat W^{\leq i-1}\hat W^{\leq i-1T}\otimes I)\wt\calW\lr{i}R_{i+1}S_{i+1}^{-1}\|_{\ell_2}\no\\
	&\leq \frac{\|f_j^T\wt\calW\lr{i}\|_{\ell_2}}{\sigma_{\min}(S_{i+1})} 
	\overset{(a)}{\leq} \frac{10\|f_j\|_{\ell_2}\sqrt{d_{i+1}\ldots d_m}\|\wt\calW\|_{\ell_{\infty}}}{9\sigmamin}\no\\
	&\overset{(b)}{\leq} \frac{100}{81\sigmamin}\frac{\nu\fro{\calW}}{\sqrt{d_1\ldots d_i}} \leq \frac{100\nu}{81\sigmamin}\frac{\fro{\calT^*} + \fro{\calW-\calT^*}}{\sqrt{d_1\ldots d_i}}\no\\
	&\leq \frac{100\nu}{81}\kappa_0\frac{\sqrt{r_i}}{\sqrt{d_1\ldots d_i}} + \frac{10\nu}{81}\frac{1}{\sqrt{d_1\ldots d_i}} \leq \frac{110\nu\kappa_0}{81}\frac{\sqrt{r_i}}{\sqrt{d_1\ldots d_i}},
\end{align}
where $f_j = (\hat W^{\leq i-1}\hat W^{\leq i-1T}\otimes I)e_j$ and in $(a)$ we use the estimation for $\sigma_{\min}(S_{i+1})$ and $\|Wx\|_{\ell_2} \leq \sqrt{n}\|W\|_{\ell_{\infty}}\|x\|_{\ell_2}$ for some matrix $W\in\RR^{n\times m}$; in $(b)$ we use $\|\wt\calW\|_{\ell_{\infty}} \leq \zeta = \frac{10\fro{\calW}}{9\sqrt{d^*}}\nu$ and $\|f_j\|_{\ell_{\infty}}\leq 1$.

On the other hand, we have 
\begin{align}\label{eq:trim+svd:0001}
	\hat W^{\geq i+1} = R(\hat W_{i+1})(I\otimes \hat W^{\geq i+2}).
\end{align}
Here we use the convention that $\hat W^{m+1} = [1]$. And from $L(\hat W_{i+1}) =(\hat W^{\leq i}\otimes I)^T\wt\calW\lr{i+1}R_{i+2}S_{i+2}^{-1}$, we obtain
\begin{align}\label{eq:trim+svd:0002}
	R(\hat W_{i+1}) = (\hat W^{\leq i})^T\reshape(\wt\calW\lr{i+1}R_{i+2}S_{i+2}^{-1}) = (\hat W^{\leq i})^T\wt\calW\lr{i}(I\otimes R_{i+2}S_{i+2}^{-1}).
\end{align}
Combine \eqref{eq:trim+svd:0001} and \eqref{eq:trim+svd:0002}, we have
$$
\hat W^{\geq i+1} = (\hat W^{\leq i})^T\wt\calW\lr{i}(I\otimes R_{i+2}S_{i+2}^{-1}\hat W^{\geq i+2}), 
$$
which implies
$$
N_{i+1}^T = \Sigma_{i+1}^{-1}(\hat W^{\leq i})^T\wt\calW\lr{i}(I\otimes R_{i+2}S_{i+2}^{-1}\hat W^{\geq i+2}).
$$
Now for any $j\in[d_{i+1}\ldots d_m]$, we have
\begin{align}\label{incoh:2}
	\|N_{i+1}^Te_j\|_{\ell_2} &= \|\Sigma_{i+1}^{-1}(\hat W^{\leq i})^T\wt\calW\lr{i}(I\otimes R_{i+2}S_{i+2}^{-1}\hat W^{\geq i+2})e_j\|_{\ell_2}\no\\
	&\leq \frac{1}{\sigma_{\min}(\Sigma_{i+1})} \|(\hat W^{\leq i})^T\wt\calW\lr{i}(I\otimes R_{i+2}S_{i+2}^{-1}\hat W^{\geq i+2})e_j\|_{\ell_2}\no\\
	&\leq \frac{1}{\sigma_{\min}(\Sigma_{i+1})} \|\wt\calW\lr{i}(I\otimes R_{i+2}S_{i+2}^{-1}\hat W^{\geq i+2})e_j\|_{\ell_2}\no\\
	&\leq \frac{\sqrt{d_1\ldots d_i}}{\sigma_{\min}(\Sigma_{i+1})} \|\wt\calW\|_{\ell_{\infty}}\|(I\otimes R_{i+2}S_{i+2}^{-1}\hat W^{\geq i+2})e_j\|_{\ell_2}\no\\
	&\leq \frac{\sqrt{d_1\ldots d_i}}{\sigma_{\min}(\Sigma_{i+1})} \|\wt\calW\|_{\ell_{\infty}}\frac{\sigma_{\max}(\Sigma_{i+2})}{\sigma_{\min}(S_{i+2})}\no\\
	&\leq \frac{1100}{729\sigmamin}\kappa_0\frac{\nu\fro{\calW}}{\sqrt{d_{i+1}\ldots d_m}}\no\\
	&\leq\frac{1210}{729}\kappa_0^2\nu\frac{\sqrt{r_i}}{\sqrt{d_{i+1}\ldots d_m}}.
\end{align}
From \eqref{incoh:1} and \eqref{incoh:2}, we have
$$\incoh(\hat\calW) \leq 2\kappa_0^2\nu.$$
Now we consider the second part, from Lemma \ref{lemma:tt-perturbation}, we see that 
\begin{align*}
	\fro{\ttsvd(\wt\calW) - \calT^*}^2 \leq \fro{\wt\calW - \calT^*}^2 + \frac{600m\fro{\wt\calW - \calT^*}^3}{\sigmamin} \leq 2\fro{\wt\calT - \calT^*}^2\leq 2\fro{\calW - \calT^*}^2.
\end{align*}
And this finishes the proof of the lemma.

\subsection{Proof of Lemma~\ref{lemma:init:1}}\label{sec:lemma:init:1}
For simplicity, we denote 
$$
S_{1k} = \sum_{i=1}^kU^TX_i, \quad S_{2k} = \sum_{i=1}^kU^TY_i
\quad\text{and }\quad
\Delta_{1k} = S_{1k} - kU^TM, \quad \Delta_{2k} =  S_{2k} - 
kU^TM.
$$
First we estimate $\op{\frac{1}{n}S_{1n} - U^TM}$. Notice that 
$$
\frac{1}{n}S_{1n} - U^TM = \frac{1}{n}\sum_{i = 1}^n\left(p_1p_2U^T\calP_{\omega_i}(M) - U^TM\right) =:\frac{1}{n}\sum_{i=1}^nZ_i.
$$
And the uniform bound for $p_1p_2U^T\calP_{\omega_i}(M)$ is given by 
$$
\op{p_1p_2U^T\calP_{\omega_i}(M)}\leq p_1p_2\|M\|_{\infty}\sqrt{\frac{\mu r}{p_1}} = \sqrt{\mu p_2r}\sqrt{p_1p_2}\|M\|_{\infty}.
$$
and $\op{U^TM}\leq \op{M}\leq \sqrt{p_1p_2}\|M\|_{\infty}$. So
$\op{Z_i}\leq 2\sqrt{\mu p_2r}\sqrt{p_1p_2}\|M\|_{\infty}$.
On the other hand we have
\begin{align*}
	\EE Z_iZ_i^T &\leq p_1p_2\sum_{(j,k)\in[p_1]\times[p_2]}M_{jk}^2U^Te_je_j^TU\no\\
	&\leq p_1p_2\|M\|_{\infty}^2\sum_{(j,k)\in[p_1]\times[p_2]}U^Te_je_j^TU\no\\
	&= p_1p_2^2\|M\|_{\infty}^2I_{r}
\end{align*}
and
\begin{align*}
	\EE Z_i^TZ_i &\leq p_1p_2\sum_{(j,k)\in[p_1]\times[p_2]}M_{jk}^2e_ke_j^TUU^Te_je_k^T\no\\
	&\leq p_1p_2\|M\|_{\infty}^2\sum_k\sum_j\|U^Te_j\|_{\ell_2}^2 e_ke_k^T\no\\
	&\leq p_1p_2\|M\|_{\infty}^2\mu r I_{p_2}.
\end{align*}
Therefore we have
$
\max\{\op{\sum_{i=1}^nZ_iZ_i^T}, \op{\sum_{i=1}^nZ_i^TZ_i}\} \leq np_1p_2(\mu r\vee p_2)\|M\|_{\infty}^2
$. And from operator Bernstein inequality, with probability exceeding $1-p^{-\alpha}$, 
$$
\op{\sum_{i=1}^nZ_i} \leq \sqrt{\frac{8(\alpha+1)\mu p_1p_2^2r\log(p)}{3n}}\|M\|_{\infty}.
$$
Now set the event $\bcalE_1$ as 
$$\left\{\op{\Delta_{1n}}\leq \sqrt{\frac{8n(\alpha+1)\mu p_1p_2^2r\log(p)}{3}}\|M\|_{\infty}\right\} \cap\left\{\op{\Delta_{2n}}\leq \sqrt{\frac{8n(\alpha+1)\mu p_1p_2^2r\log(p)}{3}}\|M\|_{\infty}\right\} $$
and we know $\PP(\bcalE_1)\geq 1-2 p^{-\alpha}$. 
Also, define the event 
\begin{align*}
	\bcalE_2 &= \left\{\max_{j\in[p_2]}\sum_{\omega\in\Omega_1}1(\omega_{2} = j) \leq (3\alpha +7)(\frac{n}{p_2} + \log(p))\right\}\\
	&~~~~\cap\left\{\max_{j\in[p_2]}\sum_{\omega\in\Omega_2}1(\omega_{2} = j) \leq (3\alpha +7)(\frac{n}{p_2} + \log(p))\right\}.
\end{align*}
From Chernoff bound, we know $\bcalE_2$ holds with probability exceeding $1-2p^{-\alpha}$.
From triangular inequality, we have
\begin{align*}
	&~~~~\op{\frac{1}{2n^2}\sum_{i,j}(U^TX_iY_j^TU + U^TY_jX_i^TU) - U^TMM^TU} \no\\
	&\leq \frac{1}{2n^2}\op{\Delta_{1n}\Delta_{2n}^T + \Delta_{2n}\Delta_{1n}^T} + \frac{1}{2n}\op{\Delta_{2n}M^TU + U^TM \Delta_{2n}^T} + \frac{1}{2n}\op{\Delta_{1n}M^TU + U^TM \Delta_{1n}^T}\no\\
	&=: A_1 + A_2 + A_3.
\end{align*}
Using Markov's inequality and Golden-Thompson inequality repeatedly, we get 
\begin{align*}
	&~~~~\PP\left(\{\op{A_1}> t\}\cap\bcalE\right) \\
	&\leq r\cdot e^{-\lambda t}\EE\left(\left\|\EE\left\{\exp\left[\frac{\lambda}{2n^2}[\Delta_{1n}(Y_n-M)^TU + U^T(Y_n-M)\Delta_{1n}^T]\right]1_{\bcalE}\big| S_{1n}\right\}\right\|^n\right).
\end{align*}
From triangular inequality, we have 
$$\op{\frac{\lambda}{2n^2}\Delta_{1n}(Y_n-M)^TU + U^T(Y_n-M)\Delta_{1n}^T} \leq \frac{\lambda}{n^2}\left(\op{\Delta_{1n}Y_n^TU} + \op{\Delta_{1n}M^TU}\right).$$
Under the event $\bcalE_2$, we have 
\begin{align*}
	&~~~~\op{\Delta_{1n}Y_n^TU} \leq \op{S_{1n}Y_n^TU} + n\op{U^TMY_n^TU}\no\\
	&\leq (3\alpha + 7)p_1p_2\|M\|_{\infty}^2(\frac{n}{p_2} + \log(p))\mu r p_2 + n\sqrt{\mu rp_2}p_1p_2\|M\|_{\infty}^2.
\end{align*}
On the other hand, under the event $\bcalE_1$, 
$$
\op{\Delta_{1n} M^T U}\leq \op{\Delta_{1n}}\op{M^TU}\leq \sqrt{\frac{8(\alpha + 1)\mu p_1p_2^2r\log(p)n}{3}}\|M\|_{\infty}\sqrt{p_1p_2}\|M\|_{\infty}.
$$
As long as $n\geq \frac{8}{3}(\alpha + 1)\log(p)$, we have $\op{\Delta_{1n} M^T U} \leq n\sqrt{\mu p_2r}p_1p_2\|M\|_{\infty}^2$ and thus
\begin{align*}
	&~~~~\frac{\lambda}{2n^2}\op{\Delta_{1n}(Y_n-M)^TU + U^T(Y_n-M)\Delta_{1n}^T}\no\\
	&\leq \frac{\lambda}{n^2}\left((3\alpha + 7)p_1p_2\|M\|_{\infty}^2(\frac{n}{p_2} + \log(p))\mu r p_2 + 2n\sqrt{\mu rp_2}p_1p_2\|M\|_{\infty}^2\right).
\end{align*}
Therefore for any
$$
\lambda\leq n^2\left((3\alpha + 7)p_1p_2\|M\|_{\infty}^2(\frac{n}{p_2} + \log(p))\mu r p_2 + 2n\sqrt{\mu rp_2}p_1p_2\|M\|_{\infty}^2\right)^{-1},
$$
we have
\begin{align*}
	&~~~~\EE\left\{\exp\left[\frac{\lambda}{2n^2}[\Delta_{1n}(Y_n-M)^TU + U^T(Y_n-M)\Delta_{1n}^T]\right]1_{\bcalE}\big| S_{1n}\right\}\no\\
	&\leq I_r + \EE\left\{\left[\frac{\lambda^2}{4n^4}[\Delta_{1n}(Y_n-M)^TU + U^T(Y_n-M)\Delta_{1n}^T]^2\right]1_{\bcalE}\big| S_{1n}\right\}\no\\
	&\leq I_r + \EE\left\{\left[\frac{\lambda^2}{4n^4}[\Delta_{1n}Y_n^TU + U^TY_n\Delta_{1n}^T]^2\right]1_{\bcalE}\big| S_{1n}\right\}\no\\
	&\leq I_r + \frac{\lambda^2p_1p_2\|M\|_{\infty}^2}{4n^4}\left[(\mu r+ 2)\Delta_{1n}\Delta_{1n}^T + \text{tr}(\Delta_{1n}\Delta_{1n}^T)I_r\right],
\end{align*}
where in the first inequality we use $\exp(A)\leq I + A + A^2$ for $\op{A}\leq 1$. And notice that $\text{tr}(\Delta_{1n}\Delta_{1n}^T) \leq r\op{\Delta_{1n}\Delta_{1n}^T}$, we obtain 
\begin{align*}
	&~~~~\left\|\EE\left\{\exp\left[\frac{\lambda}{2n^2}[\Delta_{1n}(Y_n-M)^TU + U^T(Y_n-M)\Delta_{1n}^T]\right]1_{\bcalE}\big| S_{1n}\right\}\right\|\\
	&\leq 1 + \frac{\lambda^2 p_1p_2\|M\|_{\infty}^2\mu r}{2n^4}\op{\Delta_{1n}\Delta_{1n}^T}\no\\
	&\leq 1 + \frac{\lambda^2 p_1p_2\|M\|_{\infty}^2\mu r}{2n^4}\frac{8n(\alpha+1)\mu p_1p_2^2r\log(p)}{3}\|M\|_{\infty}^2\\ 
	&= 1 + \frac{4(\alpha+1)\lambda^2\mu^2 r^2p_2\log(p)}{3n^3}(p_1p_2\|M\|_{\infty}^2)^2.
\end{align*}
Therefore we have
\begin{align*}
	\PP\left(\{\op{A_1}> t\}\cap\bcalE\right) \leq r\cdot e^{-\lambda t} \exp\left(\frac{4(\alpha+1)\lambda^2\mu^2 r^2p_2\log(p)}{3n^2}(p_1p_2\|M\|_{\infty}^2)^2\right).
\end{align*}
Taking 
\begin{align*}
	\lambda = \min\big\{\frac{3n^2t}{8(\alpha+1)\mu^2r^2p_1^2p_2^3\|M\|_{\infty}^4\log(p)}, &\frac{n^2}{(6\alpha + 14)p_1p_2^2\mu r\|M\|_{\infty}^2(\frac{n}{p_2}+\log(p))}, \\
	&\frac{n}{4\sqrt{\mu rp_2}p_1p_2\|M\|_{\infty}^2}\big\}
\end{align*}
yields
\begin{align*}
	\PP\left(\{\op{A_1}> t\}\cap\bcalE\right) \leq r\cdot \exp&\big(-\min\big\{\frac{3n^2t^2}{16(\alpha+1)\mu^2r^2p_1^2p_2^3\|M\|_{\infty}^4\log(p)},\no\\ &\frac{n^2t}{(12\alpha + 28)p_1p_2^2\mu r\|M\|_{\infty}^2(\frac{n}{p_2}+\log(p))},
	 \frac{nt}{8\sqrt{\mu rp_2}p_1p_2\|M\|_{\infty}^2}\big\}\big).
\end{align*}
Now we bound $A_2$ and $A_3$. Due to the symmetry, we shall consider $A_2$ only. In a similar fashion, we have
$$
\PP\left(\{\op{A_2}> t\}\cap\bcalE\right) \leq r\cdot \left\|\EE\left\{\exp\left[\frac{\lambda}{2n}(U^TM(Y_n-M)^TU + U^T(Y_n-MM^TU))\right]1_{\bcalE}\right\}\right\|^n.
$$
Simple calculations show that 
$$
\op{\frac{\lambda}{2n}(U^TM(Y_n-M)^TU + U^T(Y_n-M)M^TU)} \leq \frac{2\mu rp_1p_2\|M\|_{\infty}^2\lambda}{n}.
$$
So as long as $\lambda \leq \frac{n}{2\mu rp_1p_2\|M\|_{\infty}^2}$,
we have 
\begin{align*}
	&~~~~\left\|\EE\left\{\exp\left[\frac{\lambda}{2n}(U^TM(Y_n-M)^TU + U^T(Y_n-M)M^TU)\right]1_{\bcalE}\right\}\right\|\no\\
	&\leq 1 + \left\|\EE\left\{\left[\frac{\lambda}{2n}(U^TM(Y_n-M)^TU + U^T(Y_n-M)M^TU)\right]^2\cdot 1_{\bcalE}\right\}\right\|\no\\
	&\leq 1 + \left\|\EE\left\{\left[\frac{\lambda}{2n}(U^TMY_n^TU + U^TY_nM^TU)\right]^2\cdot 1_{\bcalE}\right\}\right\|\no\\
	&\leq 1 + \frac{\lambda^2\mu rp_1^2p_2^2\|M\|_{\infty}^4}{n^2}.
\end{align*}
Now take 
$\lambda = \min\left\{\frac{n}{2\mu rp_1p_2\|M\|_{\infty}^2}, \frac{nt}{2\mu rp_1^2p_2^2\|M\|_{\infty}^2}\right\}$, then 
\begin{align*}
	\PP\left(\{\op{A_2}> t\}\cap\bcalE\right) \leq r\cdot \exp\left\{-\min\{\frac{nt}{4\mu rp_1p_2\|M\|_{\infty}^2}, \frac{nt^2}{4\mu rp_1^2p_2^2\|M\|_{\infty}^4}\}\right\}. 
\end{align*}
So we have 
\begin{align*}
	&~~~~\PP\left\{\op{\frac{1}{2n^2}\sum_{i,j}(U^TX_iY_j^TU + U^TY_jX_i^TU) - U^TMM^TU} > t\right\}\\ 
	&\leq \sum_{k=1}^3\PP(\{\op{A_k}>t/3\}\cap\bcalE) + \PP(\bcalE^c).
\end{align*}
By taking 
$$
t = C\alpha^2\log^2(p)\frac{p_1p_2\|M\|_{\infty}^2}{n}\left(\mu rp_2^{1/2} + \frac{\mu r p_2}{n} + (\frac{\mu rn}{\log^3(p)})^{1/2}\right),
$$
we conclude that 
$$
\PP\left\{\op{\frac{1}{2n^2}\sum_{i,j}(U^TX_iY_j^TU + U^TY_jX_i^TU) - U^TMM^TU} > t\right\} \leq 7p^{-\alpha}.
$$
And the proof is finalized by replacing $\alpha$ with $\alpha + \log_p(7)$ and adjusting the constant $C$ accordingly.

\subsection{Proof of Lemma \ref{lemma:sample bound}}\label{sec:lemma:sample bound}
First denote $Z_i = U^T(p_1p_2\calP_{\omega_i}(M) - M)$. Then we have $U^T(\frac{p_1p_2}{n}\calP_{\Omega}(M) - M) = \frac{1}{n}\sum_{i=1}^nZ_i$. Notice that 
$
\op{p_1p_2U^T\calP_{\omega_i}(M)}\leq \sqrt{p_1}p_2\sqrt{\mu r}\|M\|_{\infty}
$. And this implies that $$
\op{Z_i} \leq 2\sqrt{p_1}p_2\sqrt{\mu r}\|M\|_{\infty}.
$$
On the other hand, we have
\begin{align*}
	\EE Z_iZ_i^T \leq p_1p_2U^T(\sum_{i,j}M_{ij}^2e_ie_i^T) U\leq p_1p_2^2\|M\|_{\infty}^2I_r,
\end{align*} 
and
\begin{align*}
	\EE Z_i^TZ_i \leq \mu r p_1p_2\|M\|_{\infty}^2 I_{p_2}.
\end{align*}
So $\op{\EE\sum_{i=1}^nZ_iZ_i^T}\vee \op{\EE\sum_{i=1}^nZ_i^TZ_i}\leq np_1p_2\|M\|_{\infty}^2(\mu r\vee p_2)$. Using matrix Bernstein inequality and we have with probability exceeding $1-p^{-\alpha}$, 
$$
\op{U^T(p_1p_2\calP_{\omega_i}(M) - M)} \leq C\alpha\left(\frac{\sqrt{p_1}p_2\sqrt{\mu r}\|M\|_{\infty}\log(p)}{n} + \sqrt{\frac{p_1p_2\|M\|_{\infty}^2(\mu r\vee p_2)\log(p)}{n}}\right).
$$


\end{document}